\newtheorem{theorem}{Theorem}
\newtheorem{assumption}{Assumption}
\newtheorem{definition}{Definition}
\newtheorem{proposition}{Proposition}
\newtheorem{corollary}{Corollary}
\newtheorem{lemma}{Lemma}
\theoremstyle{definition}
\newtheorem{remark}{Remark}
\newcommand{\cA}{{\mathcal A}}
\newcommand{\cX}{\ensuremath{\mathcal{X}}}
\newcommand{\cF}{\ensuremath{\mathcal{F}}}
\newcommand{\R}{\ensuremath{\mathbb{R}}}
\renewcommand{\P}{\ensuremath{\mathbb{P}}}
\newcommand{\cN}{\ensuremath{\mathcal{N}}}
\DeclareMathOperator*{\rE}{{\mathbb E}}
\newcommand{\rV}{{\mathbb V}}
\DeclareMathOperator*{\argmax}{\text{argmax}}
\newcommand{\order}{O}
\newcommand{\otil}{\ensuremath{\widetilde{\order}}}
\newcommand{\cW}{\ensuremath{\mathcal{W}}}
\renewcommand{\dim}{\ensuremath{\mathrm{dim}}}
\newcommand{\E}{\mathbb{E}}
\newcommand{\calD}{\mathcal{D}}
\newcommand{\norm}[1]{\left\|{#1}\right\|} %
\newcommand{\calE}{\mathcal{E}}
\newcommand{\defeq}{:=}
\newcommand{\calF}{\mathcal{F}}
\newcommand{\D}{\mathsf{D}}
\newcommand{\bsigma}{\bar{\sigma}}
\newcommand{\frob}{\mathsf{F}}
\newcommand{\calI}{\mathcal{I}}
\newcommand{\calH}{\mathcal{H}}
\newcommand{\calG}{\mathcal{G}}
\newcommand{\1}{\mathbf{1}}
\newcommand{\calT}{\mathcal{T}}
\newcommand{\calN}{\mathcal{N}}
\newcommand{\eps}{\epsilon}
\newcommand{\tf}{\tilde{f}}
\newcommand{\calA}{\mathcal{A}}
\newcommand{\bbeta}{\bar{\beta}}
\newcommand{\calS}{\mathcal{X}}
\newcommand{\calZ}{\mathcal{Z}}
\newcommand{\Z}{\mathbb{Z}}
\newcommand{\calW}{\mathcal{W}}
\newcommand{\mmu}{\bm{\mu}}
\newcommand{\oracle}{\mathcal{B}}
\newcommand{\calTo}{\calT_{\mathrm{o}}}
\newcommand{\calToo}{\calT_{\mathrm{oo}}}
\newcommand{\xid}{\xi_{\mathrm{dif}}}
\newcommand{\lin}{\mathsf{lin}}
\newcommand{\calM}{\mathcal{M}}
\newcommand{\calP}{\mathcal{P}}
\newcommand{\calR}{\mathcal{R}}
\newcommand{\Par}[1]{\left(#1\right)}
\newcommand{\epsc}{\eps_\mathrm{b}}
\newcommand{\epscov}{\eps_\mathrm{c}}
\newcommand{\epsb}{\eps_\mathrm{b}}
\newcommand{\alg}{VO$Q$L\xspace}
\newcommand{\alglong}{Variance-weighted Optimistic $Q$-Learning\xspace}
\title{VO$Q$L: Towards Optimal Regret in Model-free RL with\\  Nonlinear Function Approximation} %
\author{%
    Alekh Agarwal\textsuperscript{$\ast$1}
	~~
	Yujia Jin\textsuperscript{$\ast$2} ~~
	Tong Zhang\thanks{alphabetical order}~\textsuperscript{1,3} \\
	\\
\textsuperscript{1}	Google Research\quad \textsuperscript{2} Stanford University\quad\textsuperscript{3} HKUST\\
	\texttt{\{alekhagarwal,tozhang\}@google.com}~~~\texttt{yujiajin@stanford.edu}
}
\date{}
\begin{document}

\maketitle

\begin{abstract}
We study time-inhomogeneous episodic reinforcement learning (RL) under general function approximation and sparse rewards. We design a new algorithm, \alglong (\alg), based on  $Q$-learning and bound its regret assuming completeness and bounded Eluder dimension for the regression function class. As a special case, \alg achieves $\widetilde{O}(d\sqrt{HT}+d^6H^{5})$ regret over $T$ episodes for a horizon $H$ MDP under ($d$-dimensional) linear function approximation, which is asymptotically optimal. Our algorithm incorporates weighted regression-based upper and lower bounds on the optimal value function to obtain this improved regret. The algorithm is computationally efficient given a regression oracle over the function class, making this the first computationally tractable and statistically optimal approach for linear MDPs.
\end{abstract}

\section{Introduction}\label{sec:intro}

Optimally trading off exploration and exploitation to achieve a low regret is a fundamental question in Reinforcement Learning (RL) research. The last few years have seen some significant advances on this front, with the development of algorithms that achieve optimal regret guarantees when the underlying state space is finite~\citep{azar2017minimax,zanette2019tighter,zhang2019regret,simchowitz2019non,zhang2020almost,he2020minimax}. Remarkably, a line of work has shown that when the overall reward of each trajectory is bounded independent of the horizon, then the regret has no explicit polynomial horizon dependence~\citep{zanette2019tighter,zhang2021reinforcement,zhang2020nearly,ren2021nearly,tarbouriech2021stochastic}. These developments have been enabled by a combination of careful algorithmic design as well as analysis. However, the resulting insights typically fall some way short of being applicable to real-world RL settings which typically involve large state spaces, and where we rely on the use of function approximation to generalize across related states in the learned policies, valued functions and/or models. Some recent works~\citep{zhang2021improved,kim2021improved,zhou2022computationally} do generalize ideas from the tabular setting for a special class of RL problems with linear function approximation, called linear mixture MDPs, using a model-based approach that is more amenable to ideas from the tabular setting. Motivated by this landscape our paper asks if we can develop model-free techniques that attain optimal regret guarantees with general function approximation?

For studying settings with function approximation, several frameworks~\citep[see e.g.][]{jiang2017contextual, jin2021bellman, du2021bilinear, agarwal2022non, foster2021statistical} have been developed to capture the structural properties of the underlying MDP and function classes. A primary distinction arises between the so-called $V$-type and $Q$-type settings. While settings with linear function approximation are admissible in both the frameworks, $V$-type settings also allow representation learning more generally, but typically require more complex algorithms in order to incorporate optimistic exploration. The $Q$-type settings, on the other hand, enable optimistic reasoning using pointwise bonuses, and admit problems with linear function approximation, as well as bounded Eluder dimension~\citep{russo2013eluder,wang2020reinforcement,jin2021bellman}. Such bonus schemes have also been adapted with some success in the empirical literature~\citep{feng2021provably,burda2018exploration,henaff2022exploration}. Motivated by the simpler algorithms, we focus on $Q$-type settings where the underlying problem satisfies admits a bound on a generalized Eluder dimension. 

\begin{table}[!t]
    \centering
    \renewcommand{\arraystretch}{1.75}
    \begin{tabular}{{c}{c}{c}{c}}
    \toprule
Setting & Method &  Regret\\
\midrule
\multirow{3}{*}{linear mixture MDPs} & UCRL-VTR~\citep{ayoub2020model} & $dH\sqrt{T}~^\dagger$ \\
 & UCRL-VTR+~\citep{zhou2021nearly}  &  $\Par{d\sqrt{H}+\sqrt{d}H}\sqrt{T}~^\dagger$\\
 & HF-UCRL-VTR+~\citep{zhou2022computationally} &  $d\sqrt{T}~^\star$\\
\midrule
\multirow{4}{*}{linear MDPs} 
& LSVI-UCB~\citep{jin2020provably} & $d^{\frac{3}{2}}H\sqrt{T}$ \\
& ELEANOR~\citep{zanette2020learning} & $dH\sqrt{T}$ \\
& VO$Q$L (\textbf{our work,~\Cref{thm:regret-general-linear}}) & $d\sqrt{HT}$ \\
&\textbf{Lower bound}~\citep{zhou2021nearly} & $d\sqrt{HT}$ \\
\midrule
\multirow{3}{*}{general function class}
& $\calF$-LSVI~\citep{wang2020reinforcement} & $\dim(\calF)\sqrt{\log\calN\log\cN(\cX\times\cA)}\cdot H\sqrt{T}$ \\
& GOLF~\citep{jin2021bellman} & $\sqrt{\dim(\calF)\log\calN}\cdot H\sqrt{T}$ \\

& VO$Q$L (\textbf{our work,~\Cref{thm:regret-general-online}}) & $\sqrt{\dim(\calF)\log\calN\cdot HT}$\\
\bottomrule
\end{tabular}
    \caption{\textbf{Comparison of regret:} In-homogeneous episodic RL with horizon length $H$, trajectory number $T$, sparse rewards $\sum_{h\in[H]}r^h\le 1$. We use $^\dagger$ to indicate methods which need an additional uniform reward assumption $r^h\in[0,1/H]$, $h\in[H]$ and use $^\star$ to indicate work that considers a homogeneous setting instead. We only state the leading $O(\sqrt{T})$ term and hide poly-logarithmic factors in $T$, $H$, $\epsilon$ and $\delta$. In general function approximation, $\dim(\calF)$ and $\log\calN$ refer to properties of the function class $\calF$, i.e. the generalized Eluder dimension (see~\Cref{def:general-eluder-RL}) and $\calN = |\calF|$.\protect\footnotemark~ In \citet{wang2020reinforcement}, $\calN(\cX\times\cA)$ refers to the covering number of the state-action space. We omit a comparison with $V$-type settings pioneered in~\citet{jiang2017contextual} and subsequently refined in many works, as the dimension and horizon dependencies are generally sub-optimal using these approaches, when translated to $Q$-type settings such as linear MDPs.}
    \label{tbl:results}
\end{table}
\footnotetext{Formally, we allow $\calF^h$ to vary as a function of $h$, in which case the precise result can be found in Theorem~\ref{thm:regret-genera.}. Here we discuss the setting of $\calF$ being shared across $h$ for simplicity.}

\paragraph{Model and Our Results.} We study time-inhomogenous finite horizon MDPs with a horizon $h$, meaning that the transition dynamics and rewards at each step $h=1,2,\ldots,H$ can be different. We focus on model-free and value-based approaches, where the goal is to learn the optimal value function $Q_\star$ by searching over some function class $\cF$. We assume that $\cF$ satisfies standard realizability and completeness assumptions up to an accuracy $\epsilon$ (\Cref{ass:eps-realizability-RL}), and analyze the exploration complexity of $\cF$ in terms of a generalization of the Eluder dimension (\Cref{def:general-eluder-RL}). These assumptions capture settings with a (generalized) linear function approximation, and have also been used empirically with neural network-based function approximation in prior work~\citep{feng2021provably}. Following prior works on sparse-reward setting, we assume that the overall reward of each trajectory is bounded in $[0,1]$ independent of the horizon.\footnote{Note that we can reduce the uniform rewards setting when $r^h\in[0,1]$ to this sparse reward setting by considering $r^h\gets r^h/H$, which leads to an additional $H$-factor in the final regret bound.} Note, however, that we do not expect a completely horizon-free result despite this assumption, due to the time inhomogeneous nature of the setting (as demonstrated by the lower bound for linear MDPs in Table~\ref{tbl:results}).

Under these assumptions, our main result shows that the regret of our algorithm \alglong(\alg)\footnote{pronounced vocal} scales as
\begin{equation}
    \otil\left(\sqrt{TH\dim(\cF)\log \cN}\right),
    \label{eq:regret-final-simple}
\end{equation}
for $T$ large enough, where $\cN$ is the cardinality of $\cF$ and $\dim(\cF)$ is the generalized Eluder dimension of $\cF$. The key difference in our definition of the generalized Eluder dimension from the standard one is a generalization to weighted regression settings, which is essential to our approach as we explain shortly. The resulting sample complexity has a linear dependence on the horizon $H$, arising from the time-inhomogenous setting. The bounds depend on the existence of a certain bonus oracle $\oracle$ (see~\Cref{def:bonus-conditions}), that captures the predictive range of the version space given some data, on a new query point. For intuition, we instantiate our bounds in two special cases with concrete specifications of the bonus oracle.

For linear MDPs, the bonus oracle uses the standard elliptical bonus using the covariance of the observed data in the MDP features. With this choice, the regret asymptotically scales as $\otil(d\sqrt{HT})$, where $d$ is the dimension of the underlying features. This matches the lower bound in~\citet{zhou2021nearly}, by rescaling their lower bound by a factor of $H$ to account for the normalization of rewards.\footnote{The quantity $T$ in~\citet{zhou2021nearly} refers to the total number of samples, which is $TH$ in our setup. The bounds match with this translation.} For linear MDPs, this also matches a recent result of~\citet{hu2022nearly}, but their analysis unfortunately suffers from a technical issue which we explain in \Cref{app:comp}.

For more general function approximation, the main comparison points are the prior results of~\citet{wang2020reinforcement}, ~\citet{jin2021bellman}, and \citet{DMZZ2021-neurips}. We instantiate the bonus oracle using the sensitivity sampling approach of~\citet{kong2021online}, and obtain the bound~\eqref{eq:regret-final-simple} with an additional lower order term that depends on a covering number of the state-action space. Notably, the dependence on the covering number of the state-action space occurs in a lower order term only, unlike in~\citet{wang2020reinforcement} who incur it in the leading order term of their regret bound. We also improve in the dimension and/or horizon dependence relative to~\citet{wang2020reinforcement}, ~\citet{jin2021bellman}, and \citet{DMZZ2021-neurips}. The improvement in pushing the state-action covering number to a lower order term arises due to a careful analysis that does not incur the complexity of the possible bonuses in the leading term of the regret. This is also critical to attaining optimal regret in linear MDPs, where the class of bonuses has a complexity of $\Omega(d^2)$~\citep{jin2020provably}.

We summarize our result in the context of other relevant prior works in Table~\ref{tbl:results}.

\subsection{Overview of Techniques}

\paragraph{Optimal $d$ dependence through refined bonus construction.} Our algorithm \alg is based on optimistic $Q$-learning in a finite horizon setting, where we maintain an optimistic approximation $f_t$ for $Q_\star$, and update $f_t^h(x,a)$ to approximate the Bellman backup $r(x,a) + \E[\max_{a'\in\cA} f_t^{h+1}(x',a') | x,a]$. The main design choices are how to perform this approximate Bellman backup, and how to design an optimistic $f_t$. In prior works for both linear~\cite{jin2020provably} and non-linear~\cite{wang2020reinforcement} settings, the backups are implemented using a squared regression objective, while optimism is incorporated by adding an explicit bonus term to the regression estimate, which captures the range of predictions that good regression solutions can make at a new state-action pair. Since the bonus is data dependent, we need a uniform convergence argument over this random bonus in analyzing the regression, which results in a sub-optimal $d$ scaling. Some prior works~\citep{zanette2020learning,jin2021bellman} circumvent this issue by not using bonus, and use a more complicated constrained optimization problem to enforce optimism only in the initial state of the MDP, as opposed to a pointwise bonus. While theoretically attractive, this results in a impractical algorithms, relative to the bonus based approaches. In our work, we tackle this difficulty in the analysis, where the leading term in regret only depends on the statistical complexity of $\cF$. A key insight is to capture the leading order term in the regression analysis not in terms of the joint complexity of the random functions $f_t^h$ and $f_t^{h+1}$, but instead using $f_t^h$ and a conditional expectation of $r + f_t^{h+1}$, which always lies in $\cF$ by completeness. Additionally, we analyze the variance of our changing regression target $f_t^{h+1}$, in terms of the variance of $Q_\star^{h+1}$ and that of $f_t^{h+1} - Q_\star^{h+1}$. Through a careful argument, we only incur the first variance in the leading order term in our bonus, and hence regret, which is smaller due to the absence of any optimistic bias.

\paragraph{Variance-weighted regression with monotone variance structure.} Crucial to the above simplification and also our improved horizon dependence relative to prior works is the use of variance-weighted regression. \citet{gheshlaghi2013minimax, azar2017minimax} pioneered the use of the Bellman property of variance in improving the horizon dependence in tabular RL algorithms, and recent works~\citep{zhang2021improved,kim2021improved,zhou2022computationally} have extended this to linear model-based RL settings, known as linear mixture MDPs. In particular, \citet{zhang2021improved} show that leveraging this Bellman property requires changing the unweighted regression as in~\citet{ayoub2020model} to a weighted regression, where the sample weights correspond to an appropriate notion of variance. In the model-based setting studied previously, this appropriate variance is the variance of the current policy's value function under the true model, but we would like to instead use variance of the optimistic regression target $r + f_t^{h+1}$ in our model-free setting. However, this definition of variance creates additional challenges, where we need the variance estimate associated with a prior sample from some round $s < t$ to remain valid, even when the target function $f_t^h \ne f_s^h$. Indeed, an error in ensuring the monotonicity of variance breaks an important attempt from~\citet{hu2022nearly} to obtain near-optimal bounds for linear MDPs. We carry out a careful construction of additional \emph{overly optimistic} and \emph{overly pessimistic} estimates on $Q_\star$, and design monotone variance estimates using these quantities. The use of unweighted regression in the overly optimistic and pessimistic function estimation, together with a carefully designed greedy and safely exploration policy, avoids the technical mistake in~\citet{hu2022nearly} (see Appendix~\ref{app:comp} for more details).%

A careful combination of these ingredients, along with refined concentration bounds from recent works to handle the regression objectives and standard arguments for bounding regret in terms of Eluder dimension like quantities gives our results. It is natural to ask if the results can be further improved to be horizon independent (in the leading order term), if we restrict to time-homogenous MDPs. This is not immediate from our current analysis, and an interesting direction for future work. 
\section{Related Work}\label{ssec:related}

\paragraph{Improving regret bounds in RL problems.} \citet{gheshlaghi2013minimax,azar2017minimax} first studied the Bellman property of variance in tabular RL problems and used it for improving the horizon dependence in the sample complexity and regret bounds. In the tabular setting, \citet{zanette2019tighter} later improved the result to further achieve problem-dependent regret bounds which match the classical regret~\citep{azar2017minimax} in the worst-case, but obtain horizon-free guarantees in the sparse reward case when the cumulative reward is at most 1 in any trajectory. ~\cite{foster2021efficient} also studied problem-dependent regret bounds for tabular MDPs. Additionally, there are works on obtaining fine-grained regret bounds in other RL settings, e.g. the data-dependent regret bounds for adversarial bandits and MDPs~\citep{lee2020bias} and problem-dependent regret bounds for RL under linear function approximation~\citep{wagenmaker2022first}.

Horizon-free bounds for the sparse reward setting (when total reward of each trajectory is independent of horizon length $H$) has received increased attention recently, starting from the COLT open problem posed in~\citet{jiang2018open}. In the tabular setting,  a line of work~\citep{zanette2019tighter,zhang2021reinforcement,zhang2020nearly,ren2021nearly,tarbouriech2021stochastic} designs algorithms that incur a poly-logarithmic in $H$ regret, using tighter concentration bounds in their analysis. Another line of work further studies how to obtain completely horizon-free methods, at the cost of paying extra exponential~\citep{li2022settling} or polynomial factors~\cite{zhang2022horizon} in other problem parameters like state or action size. These ideas have been further generalized to linear mixture MDPs~\citep{zhang2021improved,kim2021improved,zhou2022computationally}. However, the model-based approach they rely on is challenging to extend to model-free settings with function approximation. In particular, the challenge of designing a montonic variance upper bound in the face of a changing regression target does not arise in the model-based setting.

\paragraph{Linear function approximation.} Linear MDPs have become a popular simple model for understanding function approximation beyond the tabular setting. Many works, such as~\citet{jin2020provably, yang2020reinforcement,zanette2020learning} obtain $O(\sqrt{T})$ regret bounds, and \citet{zanette2020learning} obtain a $O(dH\sqrt{T})$ bound, which is optimal in the scaling with $d$, but sub-optimal in the scaling with $H$ by a $\sqrt{H}$ factor. \Cref{tbl:results} provides more detailed comparisons. In terms of techniques, our approach is closest to that of \citet{jin2020provably} in using a bonus based approach in a model-free setting, but incorporates weighted regression and other analysis improvements to obtain optimal guarantees. More closely related is the very recent result of~\citet{hu2022nearly}, who provide near-optimal regret bound for linear MDPs. Unfortunately, their analysis suffers from a technical issue that we explain in Appendix~\ref{app:comp}, but their algorithm nevertheless contains many important design elements that we also incorporate. A key tool is the use of an \emph{over-optimistic} and \emph{over-pessimistic} value function estimates, in addition to a standard optimistic estimate, to help with bounding the variance of our regression targets. Like them, we also use the greedy policy of the over-optimistic function on certain rounds, as opposed to that of the optimistic function. Overall, these over-optimistic and over-pessimistic values, which we learn using unweighted regression, provide a safety net for our estimates and allow us to trade-off the amount of exploration with some extra regret by acting greedily with respect to the over-optimistic function.

\paragraph{Nonlinear function approximation.} RL under nonlinear function approximation has gained increasing emphasis to model complex function spaces like neural networks, which are routinely used in empirical works. Several works have developed rank-based measures to capture the hardness of RL in such settings, in frameworks such as Bellman rank~\citep{jiang2017contextual}, Bellman-Eluder dimension~\citep{jin2021bellman}, Bilinear classes~\citep{du2021bilinear} and DEC~\citep{foster2021statistical}. Many of the upper bounds in these frameworks, however, yield sub-optimal guarantees when specialized to linear or tabular MDPs owing to their generality. An alternative approach builds on the Eluder dimension framework of~\citet{russo2013eluder}, which has been extended to model-free RL in~\citet{wang2020reinforcement}. A related class of problems is that of small $Q$-type Bellman-Eluder dimension studied in~\citet{jin2021bellman}. Among these, the work of~\citet{wang2020reinforcement} is closest to ours. Like their work, we also use model-free regression to estimate value functions, use an Eluder dimension style argument to control the exploration complexity, and use their sensitivity sampling argument to create a bonus oracle. However, we use weighted regression for function fitting, and correspondingly use a generalized Eluder dimension to handle such weighted objectives. Our generalization of the Eluder dimension is based on recent ideas from active learning in~\citet{GWZ22}, although their work does not consider weighted settings. A related analysis of RL with non-linear function approximation with a similar definition of the Eluder dimension is also carried out in~\citet{TZ23-lt}, but they do not consider weighted regression and the guarantees are sub-optimal in $d$ and $H$ factors.
Compared with~\citet{wang2020reinforcement}, we also avoid paying a state-action covering number in the leading order term in the regret in Table~\ref{tbl:results}.

\section{Preliminaries}\label{sec:prelims}

We consider the following time-inhomogeneous episodic Markov Decision Process (MDP) $\calM = (\calS, \calA, \calP \defeq \{P^h\}_{h\in[H]}, \calR\defeq \{r^h\}_{h\in[H]})$ with horizon length $H\in\Z_{>0}$, where $[H]$ is a shorthand for the set $\{1,2,\ldots,H\}$. Here we let $P^h:\calS\times\calA\rightarrow \Delta^\calS$ and $r^h:\calS\times\calA\times \calS\rightarrow [0,1]$ characterize the transition kernel and instantaneous reward at a given level $h\in[H]$ respectively. We consider a sparse reward setting where $\sum_{h\in[H]}r^h\in[0,1]$ under the realization of any policy. We use $(x^h,a^h)\in\calS\times\calA$ 
to denote an arbitrary state-action pair at level $h$ (omitting $h$ when clear from context), and write $z=(x,a)$ as shorthand. A policy $\pi:\calS\rightarrow \calA$ is a mapping from state space to action space.\footnote{Many works also consider randomized policies $\pi:\calS\rightarrow\Delta^\calA$ in reinforcement learning. In this paper it suffices to constrain to the class of deterministic policies.} Since the optimal policy is non-stationary in an episodic MDP, we use $\pi$ to refer to the $H$-tuple $\{\pi^h\}_{h\in[H]}$

Given an episodic MDP $\calM$ and some policy $\pi\defeq \{\pi^h\}_{h\in[H]}$, the $V$-value and $Q$-value functions are defined as the expected cumulative rewards, starting at level $h$ from state-action pair $z^h = (x^h,a^h)$ or state $x^h$, when following the policy $\pi$, i.e.\
\begin{equation}\label{def:value-function}
\begin{aligned}
    Q_\pi^h\Par{x^h, a^h} & = \E\bigg[\sum_{h'\ge h}r^{h'}\Par{x^{h'}, a^{h'}}~|~a^{h'} = \pi^{h'}(x^{h'})~\text{for}~h'\ge h+1\bigg], \quad
    V_\pi^h\Par{x^h} & = Q_\pi^h\Par{x^h, \pi^h\Par{x^h}}%
    .
\end{aligned}
\end{equation}
Given some initial distribution $q\in\Delta^\calS$, the optimal policy $\pi_\star \in \arg\max_{\pi^h, h\in[H]}\E_{x^1\sim q}V_\pi^1(x^1)$. For simplicity we write $Q_\star^h = Q_{\pi_\star}^h = \sup_{\pi} Q_\pi^h$ and $V_\star^h = V_{\pi_\star}^h = \sup_{\pi} V_{\pi}^h$ when clear from context.

We define the Bellman optimality operator $\calT$ on functions $f:\calS\rightarrow \R$ so that $\calT f (x^h,a^h) = \E_{r^h, x^{h+1}}[r^h+f(x^{h+1})|x^h,a^h]$. We frequently use the shorthand $f(x) = \max_{a\in\calA} f(x,a)$. The definition of value functions ensures that they satisfy the \emph{Bellman equation} such that $Q_\star^h(x^h,a^h) = \calT V_\star^{h+1} (x^h, a^h)$. We also define the Bellman operator for second moment as $\calT_2 f (x^h,a^h) = \E_{r^h, x^{h+1}}\left[\left(r^h+f(x^{h+1})\right)^2|x^h,a^h\right]$. 

We consider a class of episodic MDPs such that the value functions satisfy the (approximate) completeness assumption under a general function class $\calF \defeq \{\calF^h\}_{h\in[H]}$. More concretely, we introduce the following assumption:

\begin{assumption}[$\epsilon$-completeness under general function approximation]\label{ass:eps-realizability-RL}
Given $\{\calF^h\}_{h\in[H]}$ where each set $\calF^h$ is composed of functions $f^h:\calS\times\calA\rightarrow [0,L]$. We assume for each $h\in[H]$, and any $V:\calS\rightarrow [0,1]$ there exists $f^h\in\calF^h$ such that 
\begin{align*}
    \max_{x,a\in\calS\times\calA}\big|f^h(x,a) - \calT V(x,a)\big|\le \epsilon,
    ~~\text{and}~\max_{x,a\in\calS\times\calA}\big|f^h(x,a) - \calT_2 V(x,a)\big|\le \epsilon.
\end{align*}
Also we assume there exists some $f_\star^h\in\calF^h$ such that $\norm{f_\star^h-Q_\star^h}_\infty\le \epsilon$, for all $h\in[H]$. We assume $L=O(1)$ throughout the paper.
\end{assumption}

When $\epsilon = 0$, the assumption states that the function class is complete and well-specified for the MDP. 
Such a completeness assumption is standard for analyzing value-based methods relying on squared regression~\citep[see e.g.][]{chen2019information,wang2020reinforcement,jin2021bellman}. While this assumption can be avoided in an information-theoretic sense using ideas developed in~\citet{jiang2017contextual} and follow-ups, we make this assumption in the interest of obtaining sharper guarantees. The assumption naturally holds for tabular and linear MDPs. More generally, $\epsilon$ allows us to capture a bounded misspecification. When we instantiate the function class as a cover of a larger infinite class, the covering might also induce a non-zero $\epsilon$ in Assumption~\ref{ass:eps-realizability-RL}. 

Since we use linear MDPs as a running example to illustrate our key definitions and assumptions, we define them formally next.

\begin{definition}[Linear MDPs~\citep{yang2020reinforcement,jin2020provably}]\label{def:linear}
An MDP $\calM = (\calS,\calA,\calP,\calR)$ is a linear MDP if there exists a known feature mapping $\phi^h:\calS\times\calA\rightarrow\R^d$ for every $h\in[H]$, such that for any $h\in[H]$, and any $(x^h,a^h)\in\calS\times\calA$, we have
\[
P^h(\cdot|x^h, a^h) = \langle \phi^h(x^h,a^h), \mu^h(\cdot)\rangle~\text{and}~\E\left[ r^h|x^h, a^h\right] = \langle \phi^h(x^h, a^h),\theta^h\rangle,
\]
for some unknown measures $\mmu^h = \{\mu^h(x)\}_{x\in\calS}$ where each $\mu^h(x)\in\R^d$ and $\theta^h \in\R^d$.

Without loss of generality we make the assumptions that $\sup_{x,a\in\calS\times\calA}\norm{\phi(x,a)}_2\le 1$, $\norm{\sum_{x\in\calS}\mu^h(x)}_2+\norm{\theta^h}_2\le B^h$, and $\sum_{h\in[H]}r^h\in[0,1]$.
\end{definition}

\citet{jin2020provably} show that linear MDPs satisfy the completeness assumption under the linear function class $\calF_\lin^h$ defined as
\begin{equation}\label{def:linear-F-h}
\begin{aligned}
    \calF_\lin^h & \defeq\{\langle w^h, \phi^h(\cdot,\cdot) \rangle: w^h\in\R^d,\|w^h\|_2\le B^h\},~\text{for any}~h\in[H].\\
\end{aligned}
\end{equation}
We also define $\calF_\lin^h(\epscov)$ be an $\epscov$-cover of $\calF_\lin^h$ under the $\ell_\infty$ norm, so that $\log|\calF_\lin^h(\epscov)| = O(d\log(B^h/\epscov))$.

While the completeness assumption allows us to control the error of our regression solution to $Q_\star$ under the data distribution used in regression, it does not control the complexity of exploration in the MDP, when the learner uses the classes $\{\calF^h\}_{h\in[H]}$. To capture this complexity, we now define an additional quantity which we call a \emph{generalized Eluder dimension}, which extends the original definition of~\cite{russo2013eluder} to weighted regression settings, based on recent work of~\citet{GWZ22} (also see \cite{TZ23-lt}).

\begin{definition}[Generalized Eluder dimension]\label{def:general-eluder-RL}
Let $\lambda > 0$, a sequence of state-action pairs $Z = \{z_i\}_{i\in[T]}$ and $\bm{\sigma} = \{\sigma_i\}_{i\in[T]}$ be given. The \emph{generalized Eluder dimension} of a function class $\calF~:~\cX\times\cA\to [0,L]$ is given by $\dim_{\alpha,T}(\calF)  \defeq\sup_{Z,\bm{\sigma}:|Z|=T, \bm{\sigma}\ge \alpha}\dim(\calF,Z,\bm{\sigma})$, where
\begin{align*}
\dim(\calF,Z,\bm{\sigma}) & \defeq\sum_{i=1}^T \min\left(1,\frac{1}{\sigma_i^2}D^2_\calF(z_i;z_{[i-1]}, \sigma_{[i-1]})\right),\\
    ~~\text{where}~~D^2_\calF(z;z_{[t-1]}, \sigma_{[t-1]}) & \defeq  \sup_{f_1,f_2\in\calF}\frac{\left(f_1(z)-f_2(z)\right)^2}{\sum_{s\in[t-1]}\frac{1}{\sigma_s^2}\left(f_1(z_{s})-f_2(z_s)\right)^2+\lambda}.
\end{align*}
We also use the notation $d_\alpha \defeq \frac{1}{H}\sum_{h\in[H]}\dim_{\alpha,T}(\calF^h)$ when function class $\{\calF^h\}_{h\in[H]}$ is clear from context.
\end{definition}

For linear MDPs, the definition of generalized Eluder dimension can be simplified as follows:
\begin{restatable}[Eluder dimension of linear MDPs]{lemma}{remarkeluderlinear}\label{remark:eluder-linear}
For linear MDPs and the class $\calF_\lin^h$ defined in~\eqref{def:linear-F-h}, letting $\calF_\lin^h(\epscov)$ be the $\epscov$-cover of $\calF_\lin^h$ for some $\epscov>0$, we have $\dim_{\alpha, T}(\calF^h_\lin(\epscov))\le \dim_{\alpha, T}(\calF^h_\lin) =$  \mbox{$O\big(d\log\big(1+\frac{(B^h)^2T}{\alpha^2 d\lambda}\big)\big)$} $= \widetilde{O}(d)$.
\end{restatable}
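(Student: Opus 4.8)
The plan is to reduce the weighted generalized Eluder dimension to the standard elliptical-potential argument for linear function classes. The key observation is that for $\calF_\lin^h$, every function is of the form $\langle w, \phi^h(\cdot) \rangle$, so the difference $f_1(z) - f_2(z) = \langle w_1 - w_2, \phi^h(z) \rangle$ depends on $f_1, f_2$ only through the single vector $v := w_1 - w_2 \in \R^d$, with $\|v\|_2 \le 2B^h$. This collapses the supremum over $f_1, f_2 \in \calF$ in the definition of $D^2_\calF$ into a supremum over $v$, turning the weighted squared prediction difference into a quadratic form. First I would define the weighted covariance matrix $\Lambda_{t-1} := \lambda \mI + \sum_{s \in [t-1]} \frac{1}{\sigma_s^2} \phi^h(z_s) \phi^h(z_s)^\top$, and then show
\begin{equation*}
D^2_{\calF_\lin^h}(z_i; z_{[i-1]}, \sigma_{[i-1]}) = \sup_{v \in \R^d} \frac{\langle v, \phi^h(z_i) \rangle^2}{v^\top \Lambda_{i-1} v} = \phi^h(z_i)^\top \Lambda_{i-1}^{-1} \phi^h(z_i),
\end{equation*}
where the last equality is the standard variational characterization of the quadratic form (Cauchy–Schwarz in the $\Lambda_{i-1}$ inner product). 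Here the boundedness constraint $\|v\|_2 \le 2B^h$ is inactive because the ratio is scale-invariant in $v$, so the unconstrained supremum is attained in the feasible set (one can always rescale); this is a point worth stating carefully.

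Next I would substitute this identity back into the definition of $\dim(\calF_\lin^h, Z, \bm\sigma)$, yielding $\sum_{i=1}^T \min\left(1, \frac{1}{\sigma_i^2} \phi^h(z_i)^\top \Lambda_{i-1}^{-1} \phi^h(z_i)\right)$. Writing $\psi_i := \phi^h(z_i)/\sigma_i$, this is exactly $\sum_{i=1}^T \min(1, \psi_i^\top \Lambda_{i-1}^{-1} \psi_i)$ with $\Lambda_{i-1} = \lambda \mI + \sum_{s<i} \psi_s \psi_s^\top$, which is the canonical elliptical potential sum. I would then invoke the elliptical potential lemma: using $\min(1, x) \le 2\log(1+x)$ for $x \ge 0$, we bound the sum by $2 \sum_i \log\left(1 + \psi_i^\top \Lambda_{i-1}^{-1} \psi_i\right) = 2\log\frac{\det \Lambda_T}{\det \Lambda_0}$ via the matrix determinant lemma telescoping. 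Finally, I would bound $\det \Lambda_T / \det \Lambda_0$ using the AM–GM / trace bound: since $\trace(\Lambda_T) \le d\lambda + \sum_i \|\psi_i\|_2^2 \le d\lambda + T (B^h)^2/\alpha^2$ (using $\|\phi^h\| \le 1$ and $\sigma_i \ge \alpha$), the determinant ratio is at most $\left(1 + \frac{T(B^h)^2}{\alpha^2 d \lambda}\right)^d$, giving the stated $O\big(d\log(1 + \frac{(B^h)^2 T}{\alpha^2 d \lambda})\big)$.

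The inequality $\dim_{\alpha,T}(\calF_\lin^h(\epscov)) \le \dim_{\alpha,T}(\calF_\lin^h)$ is immediate and should be dispatched first: since $\calF_\lin^h(\epscov) \subseteq \calF_\lin^h$ (a cover is a subset of the class), taking the supremum over a subset of function pairs can only decrease $D^2$, hence decrease the dimension; alternatively, if the cover is not literally a subset one notes each cover element still lies in $\calF_\lin^h$, so restricting the inner supremum to $\calF_\lin^h(\epscov)$ is dominated by the supremum over all of $\calF_\lin^h$. I expect the main obstacle to be stating the collapse of the function-pair supremum cleanly while correctly handling the norm constraint $\|w\|_2 \le B^h$ — one must verify that imposing $\|w_1 - w_2\|_2 \le 2B^h$ does not change the value of the scale-invariant ratio, and that the version-space functions entering $D^2$ are genuinely of the linear form (which holds by construction of $\calF_\lin^h$). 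Everything after the quadratic-form reduction is the textbook elliptical potential argument and requires no new ideas, so I would keep that portion terse and cite the standard lemma rather than rederiving the determinant telescoping in full.
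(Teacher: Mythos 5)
Your overall route is the same as the paper's: collapse the pair supremum to a single direction $v = w_1 - w_2$, recognize the resulting quantity as an elliptical bonus, and finish with the standard elliptical potential / log-determinant argument; your treatment of the cover ($\calF_\lin^h(\epscov)\subseteq\calF_\lin^h$ so the supremum only shrinks) also matches the paper. However, the step you yourself flagged as delicate is exactly where your argument breaks. In the definition of $D^2_{\calF}$ the regularizer is the \emph{additive scalar} $\lambda$, not the quadratic form $\lambda\|v\|_2^2$. The ratio $\langle v,\phi^h(z)\rangle^2\big/\bigl(\sum_{s}\sigma_s^{-2}\langle v,\phi^h(z_s)\rangle^2+\lambda\bigr)$ is therefore \emph{not} scale-invariant in $v$ (scaling $v$ leaves $\lambda$ fixed), so your claims that the constraint $\|v\|_2\le 2B^h$ is inactive and that $D^2_{\calF_\lin^h}(z_i;z_{[i-1]},\sigma_{[i-1]}) = \phi^h(z_i)^\top\Lambda_{i-1}^{-1}\phi^h(z_i)$ with $\Lambda_{i-1}=\lambda \mI+\sum_{s<i}\sigma_s^{-2}\phi^h(z_s)\phi^h(z_s)^\top$ are both false. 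Concretely, at $i=1$ (empty history) the definition gives $D^2 = \sup_{\|v\|\le 2B^h}\langle v,\phi^h(z_1)\rangle^2/\lambda = 4(B^h)^2\|\phi^h(z_1)\|_2^2/\lambda$, whereas your identity would give $\|\phi^h(z_1)\|_2^2/\lambda$ — off by a factor $4(B^h)^2$, which can be arbitrarily large.

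The repair is what the paper does, and it is precisely where the norm constraint earns its keep: for $\|v\|_2\le 2B^h$ one has $\lambda \ge \frac{\lambda}{4(B^h)^2}\|v\|_2^2$, so the scalar $\lambda$ in the denominator can be replaced (as a lower bound) by the quadratic form $v^\top\frac{\lambda}{4(B^h)^2}\mI\, v$, yielding the \emph{upper bound} $D^2_{\calF_\lin^h} \le \phi^h(z_i)^\top(\Sigma_i^h)^{-1}\phi^h(z_i)$ with the rescaled ridge $\Sigma_i^h = \frac{\lambda}{4(B^h)^2}\mI+\sum_{s<i}\sigma_s^{-2}\phi^h(z_s)\phi^h(z_s)^\top$ (an inequality, not an identity — but an upper bound is all the lemma needs). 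Running your elliptical potential argument with this matrix gives determinant ratio at most $\Par{1+\frac{4(B^h)^2T}{\alpha^2 d\lambda}}^d$, which is exactly how $(B^h)^2$ enters the logarithm in the stated bound. Note that in your write-up the $(B^h)^2$ instead appears out of nowhere in the trace bound (with $\Lambda_0=\lambda\mI$, $\|\phi^h\|_2\le 1$ and $\sigma_i\ge\alpha$ give trace at most $d\lambda+T/\alpha^2$, with no $B^h$) — a symptom of the ridge having been mis-scaled upstream. Your $\min(1,x)\le 2\log(1+x)$ variant of the potential lemma is fine and is an acceptable substitute for the paper's $\min(1,x)\le 2x/(1+x)$ plus Sherman--Morrison telescoping.
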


\begin{remark}[Relation to standard Eluder dimension]\label{remark:comp-standard-Eluder}
When $\bm{\sigma} \equiv 1$, we have  $\max_{Z:|Z|=T}\dim(\calF,Z,\1) \leq \dim_E(\calF, \sqrt{\lambda/T})+1$, where $\dim_E(\calF,\varepsilon)$ is the standard Eluder dimension of $\calF$ as defined in~\cite{russo2013eluder}. The unweighted version of our definition has also appeared in~\cite{GWZ22}. We note the generalized definition we give also takes supremum over weight sequence $\bm{\sigma}$ for any $\bm{\sigma}\ge \alpha$, and thus is incomparable with the standard Eluder dimension even when $\alpha=1$.
\end{remark}

\begin{proof}[Proof of~\Cref{remark:comp-standard-Eluder}]
Suppose $\dim_{E}(\calF,\varepsilon) = n$. By definition of Eluder dimension, for any length $T$ sequence $Z$, there are at most $n$ distinct (sorted) indices $t_i\in[T]$, $i\in[n]$ such that for $z_{t_i}$, $D_{\calF}^2(z_{t_i},1; z_{[t_i-1]}, \1_{[t_i-1]})\ge \frac{\varepsilon^2}{\varepsilon^2+\lambda}$. We then bound 
\[
 \max_{Z:|Z|=T}\dim(\calF,Z,\1) \le \max_{Z:|Z|=T}\Par{n+\frac{\varepsilon^2}{\varepsilon^2+\lambda}T} = n+\frac{\varepsilon^2}{\varepsilon^2+\lambda}T\le \dim_{E}(\calF,\varepsilon)+\varepsilon^2\cdot\frac{T}{\lambda}.
\]
Here for the first inequality we use the definition of $\dim(\calF,Z,\1)$ and the fact that only $n$ terms in the  summation of $\min(1,D_\calF^2)$ are upper bounded by $1$ instead of  $\frac{\varepsilon^2}{\varepsilon^2+\lambda}$ as argued above. Plugging in the choice of $\varepsilon = \sqrt{\lambda/T}$ concludes the proof.
\end{proof}
We use the learning protocol in episodic reinforcement learning where at every episode $t\in[T]$ and horizon level $h\in[H]$, the learner explores the trajectory based on some exploration rule that only depends on the historical data. The learner then generates new data $\{x_t^h, a_t^h, r_t^h\}_{h\in[H]}$ based on her data exploration rule, chooses action $a_t^h$, transitions to the next state $x_{t}^{h+1}\sim P^h(x_t^h, a_t^h)$ and receives reward $r_t^h = r^h(x_t^h, a_t^h, x_{t}^{h+1})$. The goal of learner is to optimize her \emph{regret} while interacting with the environment in $T$ episodes, where the initial distribution $x^1\sim\mu$ generates from some given fixed initial distribution. Formally, we define the regret as:
\begin{align}\label{def:regret}
    \textup{Regret}(T) &~= \sum_{t\in[T]}\E_{x^1\sim \mu}\left[V^1_\star\Par{x^1}-V_t^1\right],~~\text{where}~V_t^1~\defeq \E\left[\sum_{h\in[H]}r_t^h|x^1, f_{t,1}^{[h]}, f_{t,2}^{[h]}\right].
\end{align}
Here $V_t^1$ denotes the expected cumulative reward in the $t_{th}$ trajectory, where the exploration policy may depend on some functions $f_{t,1}^{[H]}$, $f_{t,2}^{[H]}$ based on the history (see exploration rule~\eqref{eq:greedy-policy-informal}). 
\section{Algorithm}\label{sec:algo}

We now discuss our algorithm, \alglong (\alg) in detail. The pseudocode for the algorithm is presented in Algorithm~\ref{alg:fitted-Q-simpler}. At a high-level, the algorithm performs optimistic $Q$-learning style updates (\Cref{line:optimistic-start} to~\Cref{line:optimistic-end}), where we repeatedly apply the empirical Bellman optimality operator at each level $h$ to an optimistic value function for $h+1$, and add an additional bonus to the resulting function to account for the regression uncertainty in the empirical Bellman operator. We additionally maintain over-optimistic and over-pessimistic estimates (\Cref{line:overly-start} to~\Cref{line:overly-end}), which are combined to form both a variance estimate in reweighting our regression objective, as well as in defining the data collection policy in~\Cref{eq:greedy-policy-informal} (used in \Cref{line:greedy-policy}). We first describe some of the key elements of the algorithm in isolation, before discussing how they fit together in the pseudocode. 

For brevity, we only specify the parameters of the algorithm somewhat informally in the following discussion, focusing on the dependency on the key parmeters. Precise parameter settings can be found in Table~\ref{tbl:parameters} in the Appendix.

\begin{algorithm}[t!]
\caption{\alglong (\alg)}\label{alg:fitted-Q-simpler}
\DontPrintSemicolon
\textbf{Input:} function class $\{\calF^h\}_{h\in[H]}$, a consistent bonus oracle $\oracle$, $\epsilon>0$\;
\textbf{Parameters:} $\{u_t\}_{t\in[T]}$, $\lambda$, bonus error $\epsc$, $\alpha$, $\delta$, $\{\beta_{t,1}^h,\beta_{t,2}^h,\bbeta_t^h\}_{t\in[T]}^{h\in[H]}$\;
\textbf{Initialize} $\calD_{[0]}^h = \emptyset$ for all $h\in[H]$\;
\For{episode $t=1,2,\cdots, T$}{
Initialize last step $f^{H+1}_{t,j}(\cdot)\gets0$, for all $j = 1,2,-2$\;
\If{$t>1$}{
\For{$h=H,H-1,\cdots,1$}{
\tcp{constructing CI for optimistic $Q$-value functions}
When $t>1$, define $\bsigma_{t-1}^h$ as in~\Cref{eq:def-bsigma-informal}\;
Solve $\hat{f}^h_{t,1} = \arg\min_{f^h\in\calF^h}\sum_{s\in[t-1]}\frac{1}{(\bsigma_s^{h})^2}\left(f^h(x_s^{h}, a_s^{h})-r^{h}_s-f_{t,1}^{h+1}(x_s^{h+1})\right)^2$\label{line:optimistic-start}
Set $b_{t,1}^h \gets \oracle\Par{\{\bsigma_s^h\}_{s\in[t-1]},\calD_{[t-1]}^h, \calF^h,\hat{f}_{t,1}^h,\beta_{t,1}^h,\lambda, \epsc}$\tcp*{see~\Cref{def:bonus-conditions}}\label{line:bonus-one}
Update $f_{t,1}^h(\cdot) =\min\left(\hat{f}_{t,1}^h(\cdot)+b_{t,1}^h(\cdot)+\epsilon,1\right)$ \label{line:def-b}\; 
Update optimistic $V$-value $f^h_{t,1}(x) = \max_{a} f^h_{t,1}(x,a)$ for all $x\in\calS$\;\label{line:optimistic-end}
\tcp{constructing CI for overly optimistic, pessimistic $Q$-values}
Solve $\hat{f}^h_{t,j} = \arg\min_{f^h\in\calF^h}\sum_{s\in[t-1]}\left(f^h(x_s^{h}, a_s^{h})-r^{h}_s-f_{t,j}^{h+1}(x_s^{h+1})\right)^2$~for~$j = \pm 2$\label{line:overly-start}\;
Set $b_{t,2}^h \gets \oracle\Par{\1_{[t-1]}, \calD_{[t-1]}^h,\calF^h, \hat{f}_{t,2}^h,\beta_{t,2}^h,\lambda, \epsc}$ \tcp*{see~\Cref{def:bonus-conditions}}\label{line:bonus-two}
Update $f_{t,2}^h(\cdot) = \min\left(\hat{f}_{t,2}^h(\cdot)+2b_{t,1}^h(\cdot)+b_{t,2}^h(\cdot)+3\epsilon,2\right)$\label{line:def-b-overly-opti}\;
Update $f_{t,-2}^h(\cdot) = \max\left(\hat{f}_{t,-2}^h(\cdot)-b_{t,2}^h(\cdot)-\epsilon,0\right)$\label{line:def-b-overly-pess}\;
Update $f^h_{t,\pm2}(x) = \max_{a} f^h_{t,\pm2}(x,a)$ for all $x\in\calS$\label{line:overly-end}\; 

\label{line:greedy-policy}

\tcp{constructing CI for estimating variance uing overly optimistic pessimistic $Q-value$}
Solve $\hat{g}^h_t = \arg\min_{g^h\in\calF^h}\sum_{s\in[t-1]}\left(g^h(x^{h}_s,a^{h}_s)-\left(r^{h}_s+f^{h+1}_{t,2}(x_s^{h+1})\right)^2\right)^2$\;\label{line:second-start}

}
}
Receive initial state $x^1_t\sim\mu$\;
\For{$h=1,2,\cdots, H$}{
Generate $\calD_t^{[H]}$ from $x_t^1$ according to $u_t$ and data collection policy~(\Cref{eq:greedy-policy-informal})\;
\tcp{define variance using overly optmistic, pessimistic $Q$-values}
\eIf{$t=1$}{
$\left(\sigma^{h}_t\right)^2 = 4$\;}
{%
\setlength{\abovedisplayskip}{-20pt}
\setlength{\belowdisplayskip}{-10pt}
\begin{flalign}
\left(\sigma^h_t\right)^2 &=\min\left(4, \hat{g}_t^h(z_t^h)-\left(\hat{f}_{t,-2}^h(z_t^{h})\right)^2\right.\nonumber &\\
&\left.+D_{\calF^h}(z_t^h; z_{[t-1]}^h, \1_{[t-1]}^h)\cdot\left(\sqrt{\left(\bbeta_t^h\right)^2+\lambda}+2L\sqrt{\left(\beta_{t,2}^h\right)^2+\lambda}\right)+2(1+L)\epsilon\right) &
\label{eq:sigma-def-alg}
\end{flalign}\label{line:def-variance}
}
}
}
\end{algorithm}

\paragraph{Regression and weighted regression.}
In episodic reinforcement learning, many online algorithms iteratively solve the following (weighted) regression problem to fit the past dataset: At episode $t$, given a target function $f_t^{h+1}\approx V^{h+1}_\star$, we define
\begin{equation}\label{def:confidence-crude-H}
\begin{aligned}
\text{least square estimator}~\hat{f}_{t}^h & = \arg\min_{f^h\in\calF^h} \sum_{s\in[t-1]}\frac{1}{\Par{\bsigma_s^h}^2}\left(f^h\left(x^{h}_s,a^{h}_s\right)-r^{h}_s-f_t^{h+1}(x^{h+1}_s)\right)^2,\\
\text{and version space}~\calF^h_{t} & \defeq \left\{f^h\in\calF^h:\sum_{s\in[t-1]}\frac{1}{\left(\bsigma_s^h\right)^2}\left(f^h(x_s^{h}, a_s^{h})-\hat{f}^h_t(x_{s}^{h},a_s^{h})\right)^2\le \left(\beta_{t}^h\right)^2\right\}.
\end{aligned}
\end{equation}
Standard approaches in $Q$-learning solve least squares problems like~\Cref{def:confidence-crude-H} by solving an unweighted regression with $\bsigma^h \equiv 1$ for all $h \in [H]$, where $f_t^{h+1} \approx Q_\star^{h+1}$. Note that we take a backup of the iterate $f_t^{h+1}$ instead of $f_{t-1}^{h+1}$ as it is natural to do bottom-up approximate dynamic programming in a finite horizon setting. Optimistic variants of $Q$-learning~\citep{jin2020provably,wang2020reinforcement} use some optimistic estimate $f_t^{h+1}$ of $Q_\star^{h+1}$ instead. More recently, a line of work has studied the benefits of using weights informed by the variance to satisfy $\rV\left[r^h+f_t^{h+1}(x^{h+1})|z_s^h\right]\le \Par{\bsigma_s^h}^2\le \widetilde{O}\Par{1}$, for obtaining stronger guarantees in terms of their horizon dependence in linear bandits and linear mixture MDP settings~\citep{zhou2021nearly,zhou2022computationally}, as well as in linear MDPs~\citep{hu2022nearly}. Our formulation of weighted regression here is motivated by these works, extending such techniques to a non-linear and model-free setting.

The radius of confidence interval $\beta_t^h$ is properly chosen to ensure that $\calT f_t^{h+1}\in\calF^h_t$  with high probability (up to small additive error element-wise due to~\Cref{ass:eps-realizability-RL}). The solution of this regression problem also admits pointwise confidence bounds on the error to $Q_\star^h$ under the \emph{bounded Eluder dimension} condition (see~\Cref{def:general-eluder-RL}).

\paragraph{Optimistic value estimation and bonus oracle.}

We now concretely describe how we use weighted regression to construct an optimistic estimate of $Q_\star^h$. Since $Q_\star^h(x,a) = \E[r^h + V_\star^{h+1}(x') | x,a]$, there are two source of uncertainty which need to be upper bounded. First is from the error in our estimates of $V_\star^{h+1}$, that is addressed by using a regression target $f_t^{h+1}$ which is optimistic for $V_\star^{h+1}$. The second source is the estimation error in the conditional expectation using samples at step $h$. In simple settings where the space $\cX$ is discrete or when the class $\cF$ is linear, this error is quantified as an optimistic bonus using either the number of samples for $x,a$ or the standard elliptical bonus~\citep[see e.g.][]{abbasi2011improved,jin2020provably}. An optimistic function at time $h$ is then defined as $f_{t,1}^h = \hat{f}_t^h + b_t^h$, where $b_t^h$ is the optimistic bonus and $\hat{f}_t^h$ is as defined  in~\eqref{def:confidence-crude-H} with an optimistic target $f_{t,1}^{h+1}$. The reason for denoting the optimistic function as $f_{t,1}^h$ instead of $f_t^h$ will be shortly clarified when we define an additional overly optimistic sequence.

For a general function class, we can capture the regression uncertainty as $b_t^h(z^h) = \max_{f^h\in\calF^h_t} f^h(z^h) - \min_{f^h\in\calF^h_t} f^h(z^h)$ for all $z^h\in\calS\times\calA$~\citep{feng2021provably}. However, this uncertainty bonus has a high complexity in that the maximizing and minimizing functions can differ arbitrarily for each $z^h$. Consequently, the target function $f_{t,1}^{h+1}$ defined using $\hat{f}_t^{h+1} + b_t^{h+1}$ is very complex.  Since $f_t^{h+1}$ is random in the regression objective~\eqref{def:confidence-crude-H}, this high complexity induces a potentially poor confidence bound for the solution of regression. To circumvent this issue, we assume the existence of a low complexity bonus oracle which roughly dominates the value obtained by the pointwise maximization over $\calF_t^h$ for now. We subsequently provide concrete instantiations of this bonus oracle for linear MDPs and general settings with a low Eluder dimension in Section~\ref{sec:example}. 

\begin{definition}[Oracle $\oracle$ for bonus function $b_t^h$]\label{def:bonus-conditions}
Given index $h\in[H]$, $t\in[T]$, sequence of $\{\bsigma_{s}^h\}_{s\in[t-1]}$ and dataset $\calD_{[t-1]}^h = \{(x_s^h, a_s^h, r_s^h, x_s^{h+1})\}_{s\in[t-1]}$, function class $\calF^h$ with $\hat{f}^h\in\calF^h$, $\beta^h,\lambda\ge0$, error parameter $\epsc \ge0$, the \emph{bonus oracle} $\oracle(\{\bsigma_{s}^h\}_{s\in[t-1]}, \calD_{[t-1]}^h, \calF^h, \hat{f}^h,\beta^h,\lambda,\epsc)$ outputs a bonus function $b^h(\cdot)$ such that 
\begin{itemize}
    \item $b^h:\calS\times\calA\rightarrow \R_{\ge 0}$ belongs to function class $\calW$.
    \item $b^h(z^h)\ge \max\left\{|f^h(z^h)-\hat{f}^h(z^h)|,~f^h\in\calF^h:\sum_{s\in[t-1]}\frac{\Par{f^h(z_s^h)-\hat{f}_t^h(z_s^h)}^2}{\Par{\bsigma_s^h}^2}\le \Par{\beta^h}^2\right\}$~~for any~$z^h\in\calS\times\calA$.
    \item $b^h(z^h)\le C\cdot\bigl( D_{\calF^h}(z^h;z_{[t-1]}^{h},\bsigma_{[t-1]}^{h})\cdot\sqrt{\left(\beta^h\right)^2+\lambda}+\epsc\cdot\beta^h\bigr)$  for all $z^h\in\calS\times\calA$ with constant $0<C<\infty$.
\end{itemize}
Further we say the oracle $\oracle$ is \emph{consistent} if for any $t<t'$ with consistent $\{\bsigma_s^h\}_{s\in[t-1]}\subseteq \{\bsigma_s^h\}_{s\in[t'-1]}$, $\calD_{[t-1]}^h\subseteq\calD_{[t'-1]}^h$, $\beta_t^h$ non-decreasing in $t$ for each $h\in[H]$ and $\calF^h_t, \hat f_t^h$ as defined in~\eqref{def:confidence-crude-H}, it holds that $\oracle(\{\bsigma_{s}^h\}_{s\in[t-1]}, \calD_{[t-1]}^h, \calF_t^h, \hat{f}_t^h,\beta_t^h,\lambda,\epsc)\ge \oracle(\{\bsigma_{s}^h\}_{s\in[t'-1]}, \calD_{[t'-1]}^h, \calF_{t'}^h, \hat{f}_{t'}^h,\beta_{t'}^h,\lambda,\epsc)$ element-wise.
\end{definition}

With such an oracle $\oracle$, we define the optimistic value function sequence $f_{t,1}^{h} \approx \hat{f}_{t,1}^h+b_{t,1}^h$ (approximation due to additive $\eps$ term and truncation), where $b_{t,1}^h = \oracle(\{\bsigma_{s}^h\}_{s\in[t-1]}, \calD_{[t-1]}^h, \calF_t^h, \hat{f}_t^h,\beta_{t,1}^h,\lambda,\epsc)$, $\beta_{t,1}^h \approx \sqrt{\log \cN}$ and $\hat{f}_{t,1}^h$ is a solution to~\eqref{def:confidence-crude-H} with $f_{t,1}^{h+1}$ as the target function.

\paragraph{Overly optimistic and overly pessimistic value estimates.} 

A sharp analysis of the convergence of the optimistic estimates $f_{t,1}^h$ requires appropriately estimated variances for weighting the regression examples, which satisfy $\rV\left[r^h+f_{t,1}^{h+1}(x^{h+1})|z_s^h\right]\le \Par{\bsigma_s^h}^2$ as mentioned before. In order to produce such variance estimates, we first define an auxiliary set of value function estimates, which are then used in variance estimation. Specifically, we define an \emph{overly optimistic} estimate $f_{t,2}^h$, as well as an \emph{overly pessimistic} estimate $f_{t,-2}^h$. At a high level, these functions are designed to ensure that 

\begin{equation}\label{eq:mono-informal}
\underbrace{f_{s,-2}^h(z^h)}_{\text{overly-pessimistic}}\le \underbrace{f_\star^h(z^h)}_{\text{$\defeq Q_\star^h$, true optimal}} \le \underbrace{f_{t,1}^h(z^h)}_{\text{optimistic}} \le \underbrace{f_{s,2}^h(z^h)}_{\text{overly optimistic}}~\text{for any}~z^h\in\calS\times\calA~\text{and}~s\le t.
\end{equation}

Concretely, the functions $f_{t,\pm2}^h$ are defined by finding an unweighted regression solution $\hat{f}_{t,\pm2}^h$ to~\eqref{def:confidence-crude-H} with target function $f_{t,\pm 2}^{h+1}$, and defining $f_{t,\pm2}^h \approx \hat{f}_{t,\pm2}^h \pm 2b_{t,2}^h\pm b_{t,1}^h$ (approximation due to $\eps$ term and truncation), for some bonus $b_{t,2}^h$ defined using our bonus oracle. A key difference, however, is that the weight sequence $\bsigma \equiv 1$ in defining $\hat{f}_{t,\pm2}^h$, so that we use standard unweighted regression in~\eqref{def:confidence-crude-H}. We note that the idea of having an overly optimistic sequence for variance estimation is first introduced in~\citet{hu2022nearly}. However, they do not perform unweighted regression like us, which leads to some technical problems in their analysis as described in Appendix~\ref{app:comp}. The bonuses $b_{t,2}^h$ are given by $\oracle(\1_{[t-1]}, \calD_{[t-1]}^h, \calF_t^h, \hat{f}_t^h,\beta_{t,2}^h,\lambda,\epsc)$ and $\beta_{t,2}^h \approx \sqrt{\log \cN\cN_b}$. We note that the bonus multiplier $\beta_{t,2}$ also incorporates the complexity of the bonus oracle class $\cW$. For intuition, $\beta_{t,1} = \otil(\sqrt{d})$ in a linear MDP, while $\beta_{t,2} = \otil(d)$, and making this distinction is crucial to obtaining the asymptotically optimal $d$ dependence in our bounds.

\paragraph{Estimating variance.} Next we discuss how to construct an appropriate variance upper bound $(\bsigma_s^h)^2$. Unlike the convenient condition of $(\bsigma_s^h)^2 \geq \rV_{r^h, x^{h+1}}[r^h  + f_{t,1}^{h+1}(x^{h+1}) | x_s^h, a_s^h]$, which necessitates reasoning about a changing target, we perform a more careful analysis and leverage properties of the overly optimistic function $f_{s,2}^h$ to show that it suffices to ensure that $(\bsigma_s^h)^2 \geq \rV[r^h  + f_{\star}^{h+1} (x^{h+1})| x_s^h, a_s^h]$ at all rounds $s \leq t$. This change, which crucially relies on \eqref{eq:mono-informal}, fixes the target function to be $f_\star^h$, and enables the creation of a valid variance estimate.

Since variance involves second moment and squared expectation, we estimate the two separately. We estimate the second moment directly using an \emph{unweighted regression} described below.
\begin{equation}\label{eq:C-update-rule-RL-second-informal}
\begin{aligned}
\forall~t\in[T], h\in[H],~~~\hat{g}_t^h & = \arg\min_{g^h\in\calF^h} \sum_{s\in[t-1]}\left(g^h\left(x^{h}_s,a^{h}_s\right)-\left(r^{h}_s+f_{t,2}^{h+1}\left(x_s^{h+1}\right)\right)^2\right)^2,
\end{aligned}
\end{equation}
and choose $\bbeta_t^h \approx \sqrt{\log \calN\calN_b}$.

A natural variance upper bound at step $t$ can then be obtained as $\hat{g}_t^h - (f_{t,-2}^h)^2$, but we need additional terms to account for the estimation errors and obtain a valid upper bound. This is achieved through the sequence $\bsigma_t^h$ defined as (informally here, see~\Cref{eq:def-bsigma} for the precise setting)
\begin{align}
    \sigma_t^h~&~\text{is as defined in \Cref{eq:sigma-def-alg}},\nonumber\\
    \bsigma_{t}^h&=  \max\left\{\sigma_{t}^h, \alpha,\sqrt{\widetilde{\Theta}\Par{\log \Par{\calN\calN_b}}\cdot\Par{f_{t,2}^h(z_t^h)-f_{t,-2}^h(z_t^h)}},\sqrt{\widetilde{\Theta}\Par{\log \Par{\calN\calN_b}}\cdot D_{\calF^h}\Par{z_t^h; z_{[t-1]}^h,\bsigma_{[t-1]}^h}}\right\}  ,\label{eq:def-bsigma-informal}
\end{align} 
where $\widetilde{\Theta}(\cdot)$ contains logarithmic factors in $T,H,L,1/\alpha,1/\delta$ that are precisely defined in ~\Cref{eq:def-bsigma}.

\paragraph{Design of exploration policy.} A natural exploration policy, given the optimistic sequence $f_{t,1}^h$ is to be greedy with respect to it. However, it is beneficial to sometimes act greedily with respect to the overly optimistic function $f_{t,2}^h$ if the two sequences begin to differ by a lot. At iteration $t$ we choose actions as per the following rule:
\begin{equation}\label{eq:greedy-policy-informal}
a_t^h =\left\{\begin{array}{cc}
     \argmax_{a\in\cA} f_{t,1}^h(x_t^h,a) & \mbox{if $f_{t,1}^h(x_t^{h'}) \geq f_{t,2}^h(x_t^{h'}) - u_t$ for all $h' \leq h$},\\
    \argmax_{a\in\cA} f_{t,2}^h(x_t^h,a) & \mbox{otherwise},
\end{array}    \right.
\end{equation}
where $u_t$ is an appropriately chosen threshold. Note that the action sequence defined this way is not a Markovian policy since it depends on the entire prefix trajectory at each step. However, it does constitute a valid exploration scheme, which suffices for regret minimization. In comparison, \citet{hu2022nearly} also use overly optimistic function in exploration, but there the agent only acts greedily with respect to the overly optimistic function.

\section{Main Result and Applications}\label{sec:example}

We first provide the theoretical guarantee of~\Cref{alg:fitted-Q-simpler}, which is the main result of the paper. We state the in-expectation guarantee and high-probability guarantee respectively, in the next two theorems.

\begin{theorem}[Regret bound for \alg, complete version in~\Cref{thm:regret} and~\Cref{thm:regret-hp}]\label{thm:regret-genera.}
Suppose we are given function classes $\{\calF^h\}_{h\in[H]}$ satisfy~\Cref{ass:eps-realizability-RL} with $\epsilon\in[0,1]$ and~\Cref{def:general-eluder-RL} with $\lambda=1$, and access to a consistent bonus oracle $\oracle$ satisfying~\Cref{def:bonus-conditions} with $TH \epsc = O(1)$. Let $ d_\alpha = \frac{1}{H}\cdot \sum_{h\in[H]}\dim_{\alpha, T}(\calF^h)$ with $\alpha= \sqrt{1/TH}$, and set 
\begin{align*}
    u_t= \order\left(\frac{\sqrt{\log(\calN TH/\delta)+T^2H\epsilon}\cdot(\log(\calN\calN_bTH/\delta) \sqrt{H^5 d_\alpha})}{\sqrt{t}}+H^2\eps+H\delta\right).
\end{align*}
 For any $\delta<1/(T+H^2+11)$ the regret of \alg satisfies 
\begin{align*}
\E R_T = & O\bigg(\sqrt{\log(\calN TH/\delta)+T^2H\epsilon}\cdot\sqrt{THd_\alpha} +\bigl(\log(\calN TH/\delta)+T^2H\eps\bigr)\cdot\log^2(\calN\calN_b TH/\delta) H^{5} d_\alpha\bigg).
\end{align*}
The above regret bound also holds with high probability $1-\delta$.
\end{theorem}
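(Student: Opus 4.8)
The plan is to prove the bound on a single high-probability \emph{good event} carrying three families of confidence guarantees, then convert a telescoped sum of bonus widths into the stated regret via the generalized Eluder dimension and the law of total variance. First I would define the event $\event$ on which, for every $t\in[T]$ and $h\in[H]$, all the regressions in Algorithm~\ref{alg:fitted-Q-simpler} concentrate simultaneously: using completeness (\Cref{ass:eps-realizability-RL}) together with standard (weighted) regression concentration, the Bellman backup of each target lies in its version space, i.e.\ $\calT f_{t,1}^{h+1}\in\calF_t^h$ for the variance-weighted regression with radius $\beta_{t,1}^h\approx\sqrt{\log\calN}$, and the analogous unweighted statements hold for $f_{t,\pm2}^{h+1}$ and for the second-moment target $\hat g_t^h$ with the larger radii $\beta_{t,2}^h,\bbeta_t^h\approx\sqrt{\log(\calN\calN_b)}$. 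The crucial point for the optimal $d$ dependence is that the leading-order concentration for the optimistic sequence is phrased in terms of $\hat f_{t,1}^h$ and the \emph{fixed} conditional expectation $\calT f_{t,1}^{h+1}$, which belongs to $\calF^h$ by completeness, so the bonus-class complexity $\calN_b$ enters only the auxiliary radii $\beta_{t,2}^h,\bbeta_t^h$ and never the leading term.

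Next, on $\event$ I would establish the sandwich \eqref{eq:mono-informal} and optimism by backward induction on $h$ (with monotonicity in $t$). The base case at $h=H+1$ is trivial. For the step, $Q_\star^h\le f_{t,1}^h$ follows because the bonus $b_{t,1}^h$ dominates the regression error guaranteed by the lower-bound property of the bonus oracle (\Cref{def:bonus-conditions}); the over-optimistic and over-pessimistic bounds $f_{t,1}^h\le f_{s,2}^h$ and $f_{s,-2}^h\le Q_\star^h$ for $s\le t$ follow from the extra $2b_{t,2}^h+b_{t,1}^h$ margins built into lines~\ref{line:def-b-overly-opti}--\ref{line:def-b-overly-pess}, combined with oracle consistency, which keeps the bonuses monotone in $t$. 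This sandwich is exactly what legitimizes replacing the \emph{moving} variance target: because $f_{s,-2}^{h+1}\le f_\star^{h+1}\le f_{s,2}^{h+1}$ holds uniformly, I can bound $\rV[r^h+f_{t,1}^{h+1}\mid z_s^h]$ by $\rV[r^h+f_\star^{h+1}\mid z_s^h]$ plus a gap term $f_{s,2}^{h+1}-f_{s,-2}^{h+1}$ that is absorbed into $\bsigma_s^h$ through \eqref{eq:def-bsigma-informal}. I would then verify that the weights are valid, $(\bsigma_s^h)^2\ge \rV[r^h+f_\star^{h+1}(x^{h+1})\mid z_s^h]$ on $\event$, by combining $\hat g_t^h$, $(\hat f_{t,-2}^h)^2$, and the width-scaled corrections of \eqref{eq:sigma-def-alg} with the pointwise oracle bounds.

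With optimism in hand, the per-episode regret $V_\star^1(x^1)-V_t^1$ is at most $f_{t,1}^1(x^1)-V_t^1$ (up to $O(H\epsilon)$), which telescopes into $\sum_h b_{t,1}^h(z_t^h)$ along the collected trajectory, where the exploration rule \eqref{eq:greedy-policy-informal} contributes only the controlled slack $u_t$ when it switches to the over-optimistic greedy action. Applying the oracle upper bound replaces each $b_{t,1}^h(z_t^h)$ by $C\,(D_{\calF^h}(z_t^h;z_{[t-1]}^h,\bsigma_{[t-1]}^h)\sqrt{(\beta_{t,1}^h)^2+\lambda}+\epsc\beta_{t,1}^h)$. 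Summing and using Cauchy--Schwarz after pulling out $\bsigma_t^h$ gives $\sum_{t,h}b_{t,1}^h(z_t^h)\lesssim \sqrt{\big(\sum_{t,h}(\bsigma_t^h)^2\big)\cdot\big(\sum_{t,h}\min(1,(\bsigma_t^h)^{-2}D_{\calF^h}^2)\big)}\cdot\sqrt{\log\calN}$. Since the weights are clamped at $\alpha$, the second factor is exactly $\sum_h\dim(\calF^h,Z^h,\bsigma^h)\le Hd_\alpha$ by \Cref{def:general-eluder-RL}; for the first, the law of total variance (the Bellman property of variance) bounds the summed trajectory variances $\sum_h\rV[r^h+f_\star^{h+1}\mid z^h]$ by the variance of the total return, which is $O(1)$ under sparse rewards, so $\sum_{t,h}(\bsigma_t^h)^2=\otil(T+\text{lower order})$ rather than $O(TH)$. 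Multiplying yields the leading term $\otil(\sqrt{THd_\alpha\log\calN})$, while the $\epsc\beta$, gap, and second-moment corrections collect into the stated $H^5d_\alpha$ lower-order term. The in-expectation bound follows by integrating the high-probability bound against the trivial $O(H)$ regret on $\event^c$.

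I expect the main obstacle to be the coupling, developed across the second paragraph, between optimism, the variance weights, and the Eluder summation: one must maintain the sandwich \eqref{eq:mono-informal} across all $s\le t$ and all $h$ \emph{simultaneously}, ensure that oracle consistency keeps every earlier weight $\bsigma_s^h$ a valid upper bound on the \emph{same} fixed-target variance despite the changing regression target $f_{t,1}^{h+1}$, and check that the gap term $f_{s,2}-f_{s,-2}$ folded into the weights does not feed back to inflate the leading Eluder sum. This circular dependency is precisely where the prior attempt of~\citet{hu2022nearly} breaks, and resolving it through the \emph{unweighted} over-optimistic and over-pessimistic regressions is the delicate core of the argument.
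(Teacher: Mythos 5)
Your plan reconstructs the paper's proof skeleton faithfully: the simultaneous good event with the asymmetric radii ($\beta_{t,1}\approx\sqrt{\log\calN}$ via completeness of the conditional expectation, $\beta_{t,2},\bbeta_t\approx\sqrt{\log\calN\calN_b}$), the backward-induction sandwich with oracle consistency and \emph{unweighted} over-optimistic/over-pessimistic regressions (the fix to \citet{hu2022nearly}), the validity of the variance weights against the fixed target $f_\star$, and the Cauchy--Schwarz/Eluder/LTV accounting for the bonus sum. This is the same route the paper takes in its Steps 1--3.

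There is, however, one concrete step where your argument as written would fail: you assert that the exploration rule ``contributes only the controlled slack $u_t$ when it switches to the over-optimistic greedy action.'' That inverts the role of $u_t$. On a switching round $t\in\calToo$, the regret telescopes through $f_{t,2}$ from level $h_t$ onward, so you pay the full over-optimistic bonus sum $\sum_{t\in\calToo}\sum_{h\ge h_t}b_{t,2}^h(z_t^h)$, which carries the \emph{larger} multiplier $\beta_{t,2}\approx\sqrt{\log(\calN\calN_b)}$ and unweighted widths; without a separate bound on $|\calToo|$ this term scales as $\widetilde{O}\bigl(\sqrt{\log(\calN\calN_b)}\,H\sqrt{|\calToo|\,d_\alpha}\bigr)$ and would re-inject $\log\calN_b$ into the leading order. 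The paper's Step 2 (\Cref{lem:size-of-Too}) uses the threshold in the opposite direction: $t\in\calToo$ forces $f_{t,2}^{h_t}(x_t^{h_t})-f_{t,1}^{h_t}(x_t^{h_t})\ge u_t$, and pairing this lower bound with the upper bound on $f_{t,2}-V_t$ (\Cref{lem:RL-f-tilde-oo}) yields $\E[|\calToo|\,|\,\calE_{\le T}]=\widetilde{O}\bigl(T/(H^3\log(\calN\calN_b))\bigr)$, which is what keeps the $b_{t,2}$ sum in lower order. Your plan never derives this bound, and it cannot be absorbed silently into the ``corrections'' bucket because it gates the leading term.

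Relatedly, your law-of-total-variance claim is on-policy only: the identity $\E\bigl[\sum_h \rV[r^h+f_\star^{h+1}\mid z^h]\bigr]=\rV\bigl[\sum_h r^h\bigr]\le 1$ holds for trajectories drawn from $\pi_\star$, whereas the data follow the exploration policy, and moreover the upper bound on $\sigma_t^h$ is stated against the $f_{t,1}$-variance (\Cref{lem:RL-variance-bounds-upper}), not the $f_\star$-variance. One therefore needs the adapted decomposition of \Cref{prop:LTV-2}, whose correction terms are squared sums of optimistic Bellman residuals; these are bounded on $\calTo$ rounds by the gaps $f_{t,2}-f_{t,-2}$ (this is where your $u_t$ slack actually enters, through \Cref{lem:difference-f-bar}) and crudely by $H^2\1(t\in\calToo)$ otherwise --- so the missing $|\calToo|$ bound is needed a second time to conclude $\sum_{t,h}(\bsigma_t^h)^2=O(T)$ plus lower-order terms. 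With these two repairs your outline closes along exactly the paper's lines.
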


We remark that we don't pay attention to optimize the low-order $\mathrm{poly}(H)$ terms, which may be easily improved through more careful analysis. When $\epsilon = 0$ in Assumption~\ref{ass:eps-realizability-RL} and $T =\widetilde\Omega\left(d_\alpha H^9 \log^4(\calN\calN_b)\log\calN\right) $, we see that the regret scales as $\otil(\sqrt{THd_\alpha\log\cN})$. The dependence on $T$, $\log\cN$ and the generalized Eluder dimension $d_\alpha$ is standard and unimprovable, as we will see in the special case of a linear MDP shortly.  Also, we note that the regret scales as $\sqrt{H}$. This might appear sub-optimal since we assume that the trajectory level rewards, and hence values, are normalized in $[0,1]$. This scaling, however, captures the model complexity of an inhomogeneous process and is unavoidable here due to a matching lower bound in linear MDPs as we discuss in the next section. 

It is natural to ask if the horizon dependence can be removed if the process is homogenous. We suspect that this is not feasible with the current approach as the regression functions across the levels do not share any parameters, and hence fail to leverage the homogenous structure of the problem. Exploring this further is an important direction for future research. 

We give a high-level proof sketch of the theorem in Section~\ref{sec:analysis}, with details deferred to~\Cref{app:proofs}. First we present some specific consequences of the general result using concrete instantiations of the bonus oracle $\oracle$ for both linear and nonlinear function approximation.

\subsection{Linear Function Approximation}\label{ssec:linear}

We apply the theorem to the specific setting of linear MDPs (see~\Cref{def:linear}). Recall that in this case, the finite function class $\cF^h$ is defined as an $\epscov$-cover $\calF_\lin^h(\epscov)$ at each level $h$ for the linear function class $\{\langle w^h, \phi^h(\cdot,\cdot) \rangle: w^h\in\R^d,\|w^h\|_2\le B^h\}$. We define $B = \max_{h\in[H]} B^h$ as a constant (same effect as $L$), so that $\log |\calF_\lin^h(\epscov)| = O\big(d\log(1+B/\epscov)\big)$. Furthermore, we know from Lemma~\ref{remark:eluder-linear} that in this case $d_\alpha = O\big(d\log\big(1+\frac{B^2 T}{\alpha^2d\lambda}\big)\big)$.

The bonus oracle for linear MDPs is naturally instantiated using the standard elliptical bonus, and satisfies all our properties as we show below. We refer readers to~\Cref{app:algo} for a complete proof.

\begin{restatable}[Bonus oracle $\oracle$ for linear MDPs]{lemma}{lemlinearbonus}\label{lem:linear-bonus}
Given $T, H\in\calZ_{+}$, suppose all $\beta_{t}^h\le \beta$ and $\beta_{t}^h$ is non-decreasing in $t\in[T]$ for each $h\in[H]$. For any $t \geq 1$, $h\in[H]$, variance sequence $\{\bsigma_{s}^h\}_{s\in[t-1]}$, dataset $\calD_{[t-1]}^h = \{(\phi^h(z_s^h), a_s^h, r_s^h, \phi(z_s^{h+1}))\}_{s\in[t-1]}$, function class $\calF_t^h$ and $\hat{f}_t^h\in\calF_t^h$ correspondingly defined via weighted regression~\eqref{def:confidence-crude-H}, and parameters $\lambda, \epscov > 0$, define $\oracle(\{\bsigma_{s}^h\}_{s\in[t-1]}, \calD_{[t-1]}^h, \calF_t^h, \hat{f}_t^h,\beta_t^h,\lambda,\epscov) = \|\phi^h(x,a)\|_{(\Sigma_t^h)^{-1}}\sqrt{(\beta^h_t)^2 + \lambda}$, where $\Sigma_t^h = \frac{\lambda}{4(B^h)^2} I + \sum_{s\in[t-1]} \phi^h(z_s^h)\phi^h(z_s^h)^\top$. For any choice of covering radius $\epscov\le \sqrt{\lambda/8T}$, the oracle satisfies all the properties of Definition~\ref{def:bonus-conditions} with
\[\log\calN_b = \log|\calW| = O\biggl(d^2\log\biggl(1+\frac{\sqrt{d}\beta^2}{\lambda\epscov^2}\biggr)\biggr).\]
\end{restatable}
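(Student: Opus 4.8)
The plan is to verify, in turn, the three defining bullets of the bonus oracle in \Cref{def:bonus-conditions} together with the consistency requirement, and to bound the covering number $\log\calN_b$ of the resulting bonus class $\calW$. The technical linchpin is an \emph{exact} identity for the linear class. Writing each $f\in\calF_\lin^h$ as $\langle w,\phi^h(\cdot)\rangle$ with $\norm{w}_2\le B^h$ and setting $v\defeq w_1-w_2$ (so $\norm{v}_2\le 2B^h$), I claim that for the regularized covariance $\Sigma_t^h=\frac{\lambda}{4(B^h)^2}I+\sum_{s\in[t-1]}\frac{\phi^h(z_s^h)\phi^h(z_s^h)^\top}{(\bsigma_s^h)^2}$ (which reduces to the displayed unweighted matrix when the oracle is invoked with $\bsigma\equiv1$, as in \Cref{line:bonus-two}) one has $D_{\calF_\lin^h}(z;z_{[t-1]}^h,\bsigma_{[t-1]}^h)=\norm{\phi^h(z)}_{(\Sigma_t^h)^{-1}}$. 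Given this, the oracle output is literally $D_{\calF_\lin^h}(z)\sqrt{(\beta_t^h)^2+\lambda}$, so the third (upper-bound) bullet holds for the full class with $C=1$ and zero additive slack.

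I would establish the identity by exploiting scale-invariance of the Rayleigh quotient. Writing $\Lambda_t^h\defeq\sum_s\frac{\phi^h(z_s^h)\phi^h(z_s^h)^\top}{(\bsigma_s^h)^2}$, the squared width is $\sup_{\norm{v}_2\le 2B^h}\frac{\langle v,\phi^h(z)\rangle^2}{v^\top\Lambda_t^h v+\lambda}$. For any feasible $v$ the norm bound gives $\lambda\ge\frac{\lambda}{4(B^h)^2}\norm{v}_2^2$, hence $v^\top\Lambda_t^h v+\lambda\ge v^\top\Sigma_t^h v$ and the quotient is at most $\sup_v\frac{\langle v,\phi^h(z)\rangle^2}{v^\top\Sigma_t^h v}=\norm{\phi^h(z)}_{(\Sigma_t^h)^{-1}}^2$. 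Conversely, the unconstrained maximizer $(\Sigma_t^h)^{-1}\phi^h(z)$ may be rescaled to norm exactly $2B^h$ without changing the scale-invariant value, and at norm exactly $2B^h$ the additive $\lambda$ coincides with $\frac{\lambda}{4(B^h)^2}\norm{v}_2^2$, so $v^\top\Lambda_t^h v+\lambda=v^\top\Sigma_t^h v$ and the bound is attained. The same bookkeeping yields the second (optimism) bullet: any $f=\langle w,\phi^h\rangle$ in the version space obeys $v^\top\Lambda_t^h v\le(\beta_t^h)^2$ and $\norm{v}_2\le 2B^h$, hence $\norm{v}_{\Sigma_t^h}^2\le(\beta_t^h)^2+\lambda$, and Cauchy--Schwarz gives $|f(z)-\hat f_t^h(z)|=|\langle v,\phi^h(z)\rangle|\le\norm{\phi^h(z)}_{(\Sigma_t^h)^{-1}}\sqrt{(\beta_t^h)^2+\lambda}$, exactly the bonus.

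The remaining work is to transfer these statements from the full class $\calF_\lin^h$ to the finite cover $\calF_\lin^h(\epscov)$ used by the algorithm, which is where the additive $\epscov\beta^h$ slack in the third bullet appears and where I expect the main obstacle to lie. Replacing each optimal $f_1,f_2$ by $\ell_\infty$-close cover elements perturbs the numerator of $D$ by $O(\epscov)$ and, more delicately, perturbs each of the up to $T$ weighted summands in its denominator; I would control the accumulated denominator perturbation using the radius constraint $\epscov\le\sqrt{\lambda/8T}$, keeping the total distortion comparable to the regularizer $\lambda$ and hence preserving the identity up to a multiplicative constant and an $O(\epscov\beta^h)$ additive term. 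The delicate point is that the weights $1/(\bsigma_s^h)^2$ can be as large as $1/\alpha^2$, so this step requires either a weight-aware covering in the parameter $w$ (covering at $\ell_2$-radius $\epscov$, so $\norm{\phi^h}_2\le1$ yields $\ell_\infty$ control) together with a spectral bound on the perturbation of $\Lambda_t^h$, or an argument that the excess is absorbed by the additive $\lambda$; tracking this accounting carefully is the crux.

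Finally, consistency and the covering bound are comparatively routine. Enlarging the dataset while keeping the variance weights monotone only adds positive semidefinite terms to $\Sigma_t^h$, so $\Sigma_t^h\preceq\Sigma_{t'}^h$ and $\norm{\phi^h(z)}_{(\Sigma_t^h)^{-1}}\ge\norm{\phi^h(z)}_{(\Sigma_{t'}^h)^{-1}}$ element-wise for $t<t'$; since the hypotheses supply a uniform bound $\beta$ on the non-decreasing $\beta_t^h$, I would instantiate the oracle with the constant radius $\beta$, which still satisfies all three bullets (the version space in the second bullet only enlarges, and the identity handles the third) and renders consistency immediate from the $\preceq$-monotonicity. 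For $\log\calN_b$, the class $\calW$ consists of the maps $z\mapsto\norm{\phi^h(z)}_{M}\sqrt{\beta^2+\lambda}$ indexed by the inverse covariance $M=(\Sigma_t^h)^{-1}$, a $d\times d$ symmetric positive semidefinite matrix with eigenvalues in $[\Omega(1/T^2H),\,4(B^h)^2/\lambda]$; a standard net over such matrices at the Frobenius resolution dictated by $\epscov$ (the $\sqrt{d}$ arising through $|\phi^\top\Delta M\,\phi|\le\norm{\phi}_2^2\norm{\Delta M}$) has log-cardinality $O\!\left(d^2\log\!\left(1+\frac{\sqrt{d}\,\beta^2}{\lambda\epscov^2}\right)\right)$, giving the claimed bound and certifying the first bullet $b^h\in\calW$.
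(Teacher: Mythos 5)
Your proposal is correct and, at its core, takes the same route as the paper's proof: the optimism bullet via Cauchy--Schwarz ($|\langle v,\phi^h(z)\rangle|\le\|v\|_{\Sigma_t^h}\|\phi^h(z)\|_{(\Sigma_t^h)^{-1}}$ with $\|v\|_{\Sigma_t^h}^2\le(\beta_t^h)^2+\lambda$ on the version space), the width bullet via rescaling the maximizer $(\Sigma_t^h)^{-1}\phi^h(z)$ to norm exactly $2B^h$ so that the ridge term equals $\lambda$ --- the paper's ``there exists $\Delta w_\star$ with $\|\Delta w_\star\|=2B^h$'' step is precisely your scale-invariance identity --- followed by the same transfer to the $\epscov$-cover with per-summand denominator perturbation $(2\epscov)^2$ absorbed into $\lambda$ under $\epscov^2\le\lambda/8T$, and the same $O\bigl(d^2\log\bigl(1+\sqrt{d}\beta^2/(\lambda\epscov^2)\bigr)\bigr)$ Frobenius-net count over the quadratic-form parameter.

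Two of your local choices diverge from the paper, both defensibly. On consistency, the paper disposes of it in one sentence (``follows from $\beta_t^h$ non-decreasing''), which as stated is backwards: the bonus is a product of the shrinking factor $\|\phi^h(\cdot)\|_{(\Sigma_t^h)^{-1}}$ and the growing factor $\sqrt{(\beta_t^h)^2+\lambda}$, so element-wise monotonicity is not automatic. Your repair --- freezing the radius at the uniform bound $\beta$ hypothesized in the lemma, so monotonicity follows from $\Sigma_t^h\preceq\Sigma_{t'}^h$ --- is sound; the only wrinkle is that the third bullet then holds with $\sqrt{\beta^2+\lambda}$ rather than $\sqrt{(\beta_t^h)^2+\lambda}$, which is harmless because the regret analysis only invokes that bullet through $\max_{t,h}\beta_t^h$. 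On the weighted-versus-unweighted covariance, your reading with weights $1/(\bsigma_s^h)^2$ is in fact forced for the call from the weighted regression (with the unweighted matrix of the lemma's display, the second bullet picks up a $\max_s\bsigma_s^h$ factor, since $\bsigma_s^h$ can exceed $1$ by logarithmic factors), whereas the paper's written proof silently does the cover transfer with \emph{unweighted} denominator sums, for which $\epscov\le\sqrt{\lambda/8T}$ suffices. So the ``crux'' you flag is a genuine gap in the paper's own accounting, not yours: with weights up to $1/\alpha^2=TH$ the accumulated perturbation is $O(T\epscov^2/\alpha^2)=O(T^2H\epscov^2)$, and absorbing it into $\lambda$ needs $\epscov\lesssim\sqrt{\lambda/(T^2H)}$ --- stronger than the stated radius, but implied by the choice $\epsc=\epscov\le 1/(8HT)$ actually made in \Cref{thm:regret-general-linear}, so your weight-aware covering route closes exactly as you anticipate. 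One small point you share with the paper: since $\calW$ is a net over quadratic forms, the literal first bullet requires outputting the nearest net element (or enlarging $C$ by the $\epscov$ rounding error), which neither you nor the paper spells out.
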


Combined with~\Cref{thm:regret-genera.}, we obtain the following result when applying~\alg to linear MDPs with aforementioned function class $\calF_\lin^h(\epscov), h\in[H]$ and bonus oracle $\oracle$.

\begin{restatable}
[Regret of \alg for linear MDPs]{theorem}{thmexplin}\label{thm:regret-general-linear}
Under conditions of Theorem~\ref{thm:regret-genera.}, suppose that the underlying MDP is linear, so that the original function class $\calF^h_\lin$ satisfies Assumption~\ref{ass:eps-realizability-RL} with $\epsilon = 0$ and the $\epscov$-cover $\calF^h_\lin(\epscov)$ satisfies Assumption~\ref{ass:eps-realizability-RL} with $\eps = \epscov$. Choosing $\lambda=1$, $u_t = \widetilde{\Theta}(d^3H^{5/2}/\sqrt{t})$, $\alpha=\sqrt{1/HT}$, $\epsc = \epscov \le 1/8HT$ and $\delta<1/(T+H^2+11)$, \alg with the bonus oracle defined in~\Cref{lem:linear-bonus}, achieves a total regret of
\[
\E R_T\ = \widetilde{O}\left(d\sqrt{HT}+d^6H^5\right).
\]
The above regret bound also holds with high probability $1-\delta$.
\end{restatable}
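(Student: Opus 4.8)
The plan is to obtain \Cref{thm:regret-general-linear} as a direct specialization of the general regret bound in \Cref{thm:regret-genera.}, by (i) verifying that the linear function class satisfies all the structural hypotheses of the general theorem, (ii) instantiating the generalized Eluder dimension and the bonus oracle complexity for the linear case, and (iii) plugging these quantities into the general regret expression and simplifying. The conceptual work has already been done in the earlier results; the remaining task is to connect the pieces with the correct parameter choices and to track how the various complexity measures $d_\alpha$, $\log\calN$, and $\log\calN_b$ collapse into powers of $d$.

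First I would discharge the realizability/completeness hypothesis. By the result of \citet{jin2020provably}, the exact class $\calF^h_\lin$ from~\eqref{def:linear-F-h} satisfies \Cref{ass:eps-realizability-RL} with $\epsilon=0$. Since the algorithm operates on a finite class, I pass to the $\epscov$-cover $\calF^h_\lin(\epscov)$; covering in $\ell_\infty$ perturbs each relevant quantity ($\calT V$, $\calT_2 V$, and $Q_\star^h$) by at most $\epscov$, so the cover satisfies \Cref{ass:eps-realizability-RL} with $\eps=\epscov$. I then record the two complexity inputs: from the covering bound $\log\calN = \max_h\log|\calF^h_\lin(\epscov)| = O(d\log(1+B/\epscov))$, and from \Cref{remark:eluder-linear} the generalized Eluder dimension is $d_\alpha = O\bigl(d\log(1+B^2 T/(\alpha^2 d\lambda))\bigr) = \widetilde O(d)$. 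The bonus oracle is supplied by \Cref{lem:linear-bonus}, which both verifies \Cref{def:bonus-conditions} (including consistency, since the elliptical bonus is monotone as data accumulates and $\beta_t^h$ is nondecreasing) and gives $\log\calN_b = O\bigl(d^2\log(1+\sqrt{d}\beta^2/(\lambda\epscov^2))\bigr) = \widetilde O(d^2)$ for the chosen covering radius $\epscov\le\sqrt{\lambda/8T}$.

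With these in hand I would fix the parameters as stated: $\lambda=1$, $\alpha=\sqrt{1/HT}$, and $\epsc=\epscov\le 1/(8HT)$, which forces the condition $TH\epsc=O(1)$ and the condition $\eps=\epscov$ to be simultaneously small, so all the $\epsilon$-dependent correction terms are negligible. I would then substitute into the general bound of \Cref{thm:regret-genera.}. In the leading term, $\sqrt{\log(\calN TH/\delta)+T^2 H\epsilon}\cdot\sqrt{THd_\alpha}$, the first factor is $\widetilde O(1)$ after absorbing logs (the $T^2H\epsilon = T^2 H\epscov = O(1)$ term is controlled by the choice of $\epscov$), and $\sqrt{THd_\alpha}=\widetilde O(\sqrt{dHT})$, yielding the claimed $\widetilde O(d\sqrt{HT})$; one must be a little careful that the $d$ inside $d_\alpha$ combines with the $\log\calN$-type factor to give the clean $d$ rather than $d^{3/2}$, which is exactly where the separation between $\beta_{t,1}=\widetilde O(\sqrt d)$ and $\beta_{t,2}=\widetilde O(d)$ pays off. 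In the lower-order term $\bigl(\log(\calN TH/\delta)+T^2H\eps\bigr)\cdot\log^2(\calN\calN_b TH/\delta)\,H^5 d_\alpha$, I would count the powers of $d$: $\log\calN_b=\widetilde O(d^2)$ contributes $d^2$ under the square (giving $d^4$ from $\log^2(\calN\calN_b\cdots)$ if one tracks polynomial-in-$d$ hidden factors the way the table entry $d^6H^5$ suggests), times $d_\alpha=\widetilde O(d)$ and the $\widetilde O(d)$ from $\log\calN$, for a total of $\widetilde O(d^6 H^5)$. I would also check that the stated $u_t=\widetilde\Theta(d^3 H^{5/2}/\sqrt t)$ is consistent with the general formula for $u_t$ upon substituting $d_\alpha,\log\calN,\log\calN_b$, since $u_t$ must be set self-consistently for the theorem to apply.

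The main obstacle is the bookkeeping in the lower-order term: matching the general bound's product of $\log\calN_b$, $\log\calN$, and $d_\alpha$ factors to the advertised $d^6$ requires correctly interpreting which of the $\log\calN_b=\widetilde O(d^2)$ factors appear polynomially versus inside a logarithm, and ensuring the $\epscov\le 1/(8HT)$ choice makes $T^2 H\eps$ and $TH\epsc$ harmless while keeping $\log(1/\epscov)=\widetilde O(1)$ absorbable into the tilde. This is purely a matter of careful substitution and power-counting rather than any new idea; once the complexity quantities from \Cref{remark:eluder-linear} and \Cref{lem:linear-bonus} are plugged into \Cref{thm:regret-genera.} with the prescribed parameters, the bound $\widetilde O(d\sqrt{HT}+d^6 H^5)$ follows, and the high-probability version transfers verbatim because \Cref{thm:regret-genera.} already provides both forms.
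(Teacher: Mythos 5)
Your proposal is correct and takes essentially the same route as the paper, which proves this theorem exactly by instantiating \Cref{thm:regret-genera.} with $d_\alpha=\widetilde{O}(d)$ from \Cref{remark:eluder-linear}, $\log\calN=\widetilde{O}(d)$ from the $\ell_\infty$-cover, and $\log\calN_b=\widetilde{O}(d^2)$ from \Cref{lem:linear-bonus}; your power counting (leading term $\sqrt{\log\calN\cdot d_\alpha\cdot HT}=\widetilde{O}(d\sqrt{HT})$, lower-order term $\log\calN\cdot\log^2(\calN\calN_b)\cdot H^5 d_\alpha=\widetilde{O}(d^6H^5)$, and $u_t=\widetilde{\Theta}(d^3H^{5/2}/\sqrt{t})$) matches that instantiation. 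Your point of care about $\epscov$ is also the right one: the stated constraint $\epscov\le 1/(8HT)$ only guarantees $TH\epsc=O(1)$, so one must additionally pick $\epscov$ small enough (e.g., $\widetilde{O}(1/(T^2H))$, which the ``$\le$'' permits) so that $T^2H\eps$ is absorbed while $\log(1/\epscov)$ remains a logarithmic factor, exactly as the paper implicitly does.
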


The regret bound is \emph{asymptotically optimal} in the leading order term by adapting the construction of~\citet{zhou2021nearly} to our setting. Specifically, we take the construction for linear MDP mentioned in Remark 5.8 of their paper, and rescale the rewards to be $1/H$ in the absorbing state to satisfy our normalization assumption. Their proof is based on embedding $H$ independent linear bandit instances in a horizon $H$ MDP, where the learner needs to solve $\Omega(H)$ of the bandit instances. It can be checked that Lemma C.8 of their analysis, which specifies the regret incurred in each bandit instance, now simply scales to be $\Omega(1/H)$, and the rest of the argument remains unchanged, leading to an overall lower bound of $\Omega(
d\sqrt{HT})$ after $T$ episodes for a horizon $H$ time-inhomogenous linear MDP. 

We note that all prior bonus-based methods suffer from a sub-optimal horizon and dimension scaling for linear MDPs. Given that this comes out of a consequence of a more general result here shows that our algorithm and analysis indeed handle the uncertainty in our predictions in a sharp manner. 

We also note that the Theorem~\ref{thm:regret-general-linear} does not strictly require a linear MDP assumption, since we only require~\Cref{ass:eps-realizability-RL} to hold for $\calF_\lin^h$. This is closely related to the weaker condition on low \emph{inherent Bellman error} studied in~\citet{zanette2019tighter}, but we additionally require closure under the conditional second moment for our function class. The error term $\epsilon$ further allows handling a small model misspecification, like in prior linear MDP and inherent Bellman error results~\citep{jin2020provably,zanette2019tighter}. As a minor point, the instantiation of our general result from Theorem~\ref{thm:regret-genera.} requires us to instantiate $\cF^h_{\lin}$ to be a finite $\epscov$-cover of the linear function class, and perform regression over this cover. It is also possible to directly analyze both the general and this special case in terms of a covering argument, which allows us to run our regressions directly on the original function class, which is preferable in practice. Here we choose the discrete setting for ease of presentation.

\subsection{Nonlinear Function Approximation}\label{ssec:nonlinear}

In the general setting of non-linear $\calF$ with a bound on the generalized Eluder dimension, the ideas from the linear case can be extended by leveraging the techniques of~\citet{wang2020reinforcement}. They define a bonus function roughly as $b_t^h(z) = \max_{f \in \calF^h_t} f(z) - \hat{f}_t^h(z)$. To avoid the high complexity arising with this definition, as remarked in Section~\ref{sec:algo} they instead consider an approximation to the class $\calF_t^h$ in the maximization, defined using a subsampled set of the data. By using standard online subsampling arguments~\cite{kong2021online}, we can argue that the predictive differences between functions are preserved up to constant factors, while the amount of data required is significantly smaller. We state the main guarantees here and refer the readers to~\Cref{app:bonus-condition}  and Algorithm~\ref{alg:online-sensitivity-weighted} for additional details of its implementation.

\begin{restatable}[Implementing $\oracle$ using online-subsampling]{corollary}{onlinesubsample}\label{coro:bonus-oracle}
There exists an algorithm (see~\Cref{alg:online-sensitivity-weighted}) that with probability $1-\delta$ implements a consistent bonus oracle $\oracle$ with $\epsc=0$ for all iterations $t\in[T]$, $h\in[H]$ where 
\[\log|\calW|\le O\Par{\max_{h\in[H]}\dim_{\alpha,T}(\calF^h)\cdot\log\frac{T\calN}{\delta}\log\frac{T|\calS\times\calA|}{\delta}}.\]
\end{restatable}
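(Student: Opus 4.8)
The plan is to implement the bonus oracle $\oracle$ by maintaining an online subsampled dataset $\tilde{\calD}_{[t-1]}^h$ whose weighted predictive geometry approximates that of the full dataset $\calD_{[t-1]}^h$, and then defining the bonus through a pointwise range maximization over a version space built on the subsample. Concretely, I would follow the online sensitivity sampling scheme of~\citet{kong2021online}: for each incoming point $z_s^h$ we compute its \emph{sensitivity} relative to the subsample collected so far, i.e.\ a quantity of the order of $\min\bigl(1, D^2_{\calF^h}(z_s^h; \cdot)\bigr)$ measured against the weighted version space, and include $z_s^h$ in the subsample (with an appropriate integer multiplicity) with probability proportional to this sensitivity times $\log(\calN/\delta)$. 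The resulting subsample has size controlled by the sum of sensitivities, which is exactly a generalized Eluder dimension quantity $\max_{h}\dim_{\alpha,T}(\calF^h)$ up to logarithmic factors, giving the claimed bound on $\log|\calW|$ after taking a union over the $\calN$ functions and the state-action cover of size $|\calS\times\calA|$.

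The key steps, in order, would be: first, argue a \emph{multiplicative preservation} guarantee, that with probability $1-\delta$ the subsampled weighted empirical norms $\sum_{s\in\tilde{\calD}}\frac{1}{(\bsigma_s^h)^2}(f_1-f_2)^2(z_s^h)$ are within constant factors of the full weighted norms simultaneously for all $f_1,f_2\in\calF^h$; this is where the bulk of the work lies and where I would invoke a Freedman/martingale concentration argument over the $\calN$ covering elements, uniformized over the cover of $\calS\times\calA$. Second, define the oracle output as $b^h(z^h) = \max\{|f^h(z^h)-\hat f^h(z^h)| : f^h \in \calF^h, \sum_{s\in\tilde{\calD}}\frac{1}{(\bsigma_s^h)^2}(f^h(z_s^h)-\hat f^h(z_s^h))^2 \le C'(\beta^h)^2\}$ over the \emph{subsampled} constraint set, and verify the three bullet points of~\Cref{def:bonus-conditions}: membership in a low-complexity class $\calW$ (since $b^h$ is determined by the finite subsample and $\hat f^h$, yielding the covering/cardinality bound), the lower bound property (the subsampled version space contains the full version space up to the constant-factor norm distortion, so its range dominates), and the upper bound in terms of $D_{\calF^h}$ with $\epsc=0$ (again using the two-sided norm preservation to relate the subsampled range to $D_{\calF^h}(z^h;z_{[t-1]}^h,\bsigma_{[t-1]}^h)\sqrt{(\beta^h)^2+\lambda}$). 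Third, verify \emph{consistency}: since subsampling is performed online, the subsample for $t$ is a sub-multiset of that for $t'>t$ and the version space shrinks as data accumulates, so the pointwise bonus is monotonically non-increasing in $t$, matching the element-wise inequality in~\Cref{def:bonus-conditions}.

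The main obstacle I anticipate is the interaction between the \emph{weighting} $\{\bsigma_s^h\}$ and the subsampling: the original sensitivity sampling analysis of~\citet{kong2021online} is for unweighted regression, whereas here the sensitivities and the preservation guarantee must hold in the $\bsigma$-weighted geometry, and moreover the weights $\bsigma_s^h$ are themselves data-dependent and evolve online. I would handle this by folding the weights into the definition of the effective points (treating $\frac{1}{(\bsigma_s^h)^2}$ as example multiplicities, truncated since $\bsigma_s^h \ge \alpha$) and by ensuring the sensitivity computation uses the same weighted discrepancy $D_{\calF^h}$ that appears in the generalized Eluder dimension of~\Cref{def:general-eluder-RL}, so that the size bound collapses to $\dim_{\alpha,T}(\calF^h)$. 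The remaining care is that the martingale concentration must be adapted so that the filtration respects the online arrival of both the data and the weights; since the exploration rule and the weights depend only on historical data, the required predictability holds and a standard multiplicative Freedman bound applies. Routine union bounds over $t\in[T]$, $h\in[H]$, the $\calN$ cover elements, and the $|\calS\times\calA|$ cover then deliver the stated $\log|\calW|$ bound and the high-probability guarantee.
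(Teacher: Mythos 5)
Your proposal is correct and follows essentially the same route as the paper: the paper implements $\oracle$ via exactly this weighted online sensitivity-subsampling scheme (\Cref{alg:online-sensitivity-weighted}, generalizing \citet{kong2021online} by folding the weights $1/(\bsigma_s^h)^2$ into the sensitivity scores, truncated using $\bsigma_s^h\ge\alpha$), relies on the same two-sided range-preservation guarantee (\Cref{prop:online-sensitivity-weighted}), and obtains the $\log|\calW|$ bound by the same counting argument---at most $O\bigl(\log\frac{T\calN}{\delta}\cdot\max_{h\in[H]}\dim_{\alpha,T}(\calF^h)\bigr)$ distinct subsampled elements, each ranging over the state-action space with multiplicities polynomially bounded in $T/\delta$. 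One minor caveat: your one-line consistency argument (``the version space shrinks as data accumulates'') is slightly too quick, since $\beta_t^h$ is non-decreasing and $\hat f_t^h$ changes so the later constraint set need not be nested in the earlier one, but the paper asserts consistency at the same level of brevity, so this is not a substantive divergence from its treatment.
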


Note that when applying this approach to linear MDPs, it provides an alternative method to build $\calW$ so that $\log\calN_b = \log|\calW| = \widetilde{O}(d^2\cdot\log|\calS\times\calA|)$.

\begin{restatable}[Regret of \alg using subsampling based bonus oracle $\oracle$]{theorem}{thmexp}\label{thm:regret-general-online}
Under conditions of Theorem~\ref{thm:regret-genera.}, suppose the original function class $\calF^h$ satisfies~\Cref{ass:eps-realizability-RL} with $T^2H\epsilon=O(1)$, choosing $\lambda = 1$, $\alpha= \sqrt{1/TH}$, $\delta<1/(T+H^2+11)$, $\epsc = 0$, , \alg with the bonus oracle $\oracle$ as in~\Cref{coro:bonus-oracle}, and
    $u_t= \widetilde{\Theta}\Par{\frac{\log^{1.5}\calN\cdot\log|\calS\times\calA|\sqrt{d_\alpha}\cdot\max_{h\in[H]}\dim_{\alpha,T}(\calF^h)H^{5/2}}{\sqrt{t}}}$,
 achieves a total regret of~\footnote{Similar to the linear case, high-probability bound has slightly worse low-order dependence on $H$ with different $u_t$ choices.}
\begin{align*}
\E R_T = & \widetilde{O}\bigg(\sqrt{\log\calN d_\alpha H T} +\log^3\calN\cdot\log^2|\calS\times\calA|d_\alpha\cdot\bigg(\max_{h\in[H]}\dim_{\alpha,T}(\calF^h)\bigg)^2H^{5}\bigg).
\end{align*}
The above regret bound also holds with high probability $1-\delta$.
\end{restatable}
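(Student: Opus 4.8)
The plan is to obtain this theorem as a direct specialization of the general bound \Cref{thm:regret-genera.}, substituting the guarantees of the online-subsampling oracle from \Cref{coro:bonus-oracle} and then collapsing the resulting nested logarithmic and polynomial factors under the stated parameter choices. First I would invoke \Cref{coro:bonus-oracle} to obtain, with probability at least $1-\delta$, a consistent bonus oracle $\oracle$ with $\epsc = 0$ whose bonus class $\calW$ satisfies $\log\calN_b = \log|\calW| = \widetilde{O}\big(\max_{h}\dim_{\alpha,T}(\calF^h)\cdot\log\calN\cdot\log|\calS\times\calA|\big)$, absorbing the poly-logarithmic factors in $T$ and $1/\delta$ using $\delta < 1/(T+H^2+11)$. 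On the success event all hypotheses of \Cref{thm:regret-genera.} hold: \Cref{ass:eps-realizability-RL} with $T^2H\epsilon = O(1)$, \Cref{def:general-eluder-RL} with $\lambda = 1$, and a consistent oracle with $TH\epsc = 0 = O(1)$.

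Next I would verify that the prescribed $u_t$ is precisely the threshold required by \Cref{thm:regret-genera.} after substituting $\calN_b$. Using $T^2H\epsilon = O(1)$ gives $\sqrt{\log(\calN TH/\delta)+T^2H\epsilon} = \widetilde{O}(\sqrt{\log\calN})$, while $\log(\calN\calN_b TH/\delta)$ is dominated by $\log\calN_b = \widetilde{O}(\max_h\dim_{\alpha,T}(\calF^h)\log\calN\log|\calS\times\calA|)$; multiplying by $\sqrt{H^5 d_\alpha}/\sqrt{t}$ and absorbing the negligible additive $H^2\epsilon + H\delta$ terms reproduces the stated $u_t = \widetilde{\Theta}\big(\log^{1.5}\calN\cdot\log|\calS\times\calA|\sqrt{d_\alpha}\max_h\dim_{\alpha,T}(\calF^h)H^{5/2}/\sqrt{t}\big)$.

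With hypotheses and $u_t$ in place, I would apply \Cref{thm:regret-genera.} on the good event to get its two-term bound and simplify each term. The leading term $\sqrt{\log(\calN TH/\delta)+T^2H\epsilon}\cdot\sqrt{THd_\alpha}$ collapses to $\widetilde{O}(\sqrt{\log\calN\, d_\alpha HT})$. For the lower-order term, the factor $\log(\calN TH/\delta)+T^2H\epsilon = \widetilde{O}(\log\calN)$ times $\log^2(\calN\calN_b TH/\delta)H^5 d_\alpha = \widetilde{O}\big((\max_h\dim_{\alpha,T}(\calF^h))^2\log^2\calN\log^2|\calS\times\calA|\big)H^5 d_\alpha$ gives $\widetilde{O}\big(\log^3\calN\log^2|\calS\times\calA|(\max_h\dim_{\alpha,T}(\calF^h))^2 H^5 d_\alpha\big)$, exactly the claimed expression.

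Finally I would convert the conditional (good-event) bound into the two stated guarantees. Union-bounding the $1-\delta$ oracle-success event with the $1-\delta$ regret event of \Cref{thm:regret-genera.} and rescaling $\delta$ by a constant yields the high-probability statement. For the in-expectation bound, on the complementary failure event (probability $O(\delta)$) the per-episode regret is at most $1$ since values lie in $[0,1]$, so total regret there is at most $T$; as $\delta < 1/(T+H^2+11)$, this contributes at most $O(\delta T) = O(1)$ to the expectation, a lower-order term absorbed into the bound. The main thing to get right is the bookkeeping of the nested logarithms — in particular that $\calN_b$ enters only through the squared logarithm $\log^2(\calN\calN_b\cdots)$ in the lower-order term and never the leading $\sqrt{T}$ term — together with the clean handling of the oracle's randomness when passing to the expectation guarantee; all the substantive difficulty resides in \Cref{thm:regret-genera.} and \Cref{coro:bonus-oracle}, which we take as given.
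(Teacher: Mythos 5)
Your proposal is correct and follows essentially the same route the paper takes: \Cref{thm:regret-general-online} is indeed obtained by plugging the bound $\log\calN_b = \widetilde{O}\bigl(\max_{h}\dim_{\alpha,T}(\calF^h)\cdot\log\calN\cdot\log|\calS\times\calA|\bigr)$ from \Cref{coro:bonus-oracle} directly into \Cref{thm:regret-genera.} (so that $\calN_b$ enters only through $\log^2(\calN\calN_b\cdot)$ in the lower-order term and the $u_t$ threshold), and your log-factor bookkeeping matches the stated bound. Your explicit handling of the oracle's $1-\delta$ success event — union bound for the high-probability claim, and the $O(\delta T)=O(1)$ failure contribution for the expectation claim using per-episode regret at most $1$ — is a detail the paper leaves implicit, and you resolve it correctly.
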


Apart from improving upon the dependence in $H$ and $d_\alpha$ relative to the earlier result of \citet{wang2020reinforcement}, as highlighted in Table~\ref{tbl:results}, a key improvement in Theorem~\ref{thm:regret-general-online} is that the log-covering number of the state-action space only appears in the lower order term. This is due to the bonus parameter $\beta_{t,1}^h$ for the optimistic value function being independent of the size of the bonus class, and a key insight in our analysis.

\section{Proof Sketch}\label{sec:analysis}

In this section, we provide a proof sketch of~\Cref{thm:regret-genera.}. For simplicity we focus on proving an informal version of the bound on expected regret. The informality comes from two simplifications: First, we assume the underlying function class used in \alg satisfies~\Cref{ass:eps-realizability-RL} with $\eps\le (T^2H)^{-1}$, and the bonus oracle satisfies conditions in~\Cref{def:bonus-conditions} with $\epsc\le (T^2H)^{-1}$, and $\delta$ is sufficiently small, i.e. $\delta= (TH^2)^{-1}$. Second, we use~~$\widetilde{\cdot}$~~notation to omit logarithmic factors in $T,H,1/\alpha, 1/\delta$ without further specification. We refer readers to~\Cref{app:proofs} and~\ref{app:regret-hp} for the complete analysis of the in-expectation and high-probability bounds, respectively.

\paragraph{Notations.} Throughout the section, we use $z^h = (x^h,a^h)$, and $f_{\star}^h(z^h) = Q_\star^h(z^h)$ for all $z^h\in\calS\times\calA$ and $f_{\star}^h(x^h) = V_\star^h(x^h)$ for all $x^h\in\calS$ interchangeably. 

For each $f_{t,j}^h\in\calF^h$ with $j = 1,\pm2$ as defined in~\Cref{line:def-b},~\ref{line:def-b-overly-opti} and~\ref{line:def-b-overly-pess}, we let $\bar{f}_{t,j}^h(\cdot)\in\calF^h$ to approximate the conditional expectation with target $f_{t,j}^{h+1}$, i.e. $\max_{z^h}\left|\bar{f}_{t,j}^h(z^h) - \calT f_{t,j}^{h+1}(z^h)\right|\le \epsilon$. Similarly, we will let
$\psi_t
^h(\cdot)\in\calF^h$ to approximate the conditional second moment with target $f_{t,1}^{h+1}$, i.e. $\max_{z^h}\left|\psi_t^h(z^h)-\calT_2f_{t,1}^{h+1}(z^h)\right|\le \epsilon$. 
We note such $\bar{f}_{t,j}^h$ for $j=1,\pm2$ and $\psi_t^h$ all
exist due to~\Cref{ass:eps-realizability-RL}.

We also recall the confidence intervals at each step $t\in[T]$, $h\in[H]$ induced by $\calD_{[t-1]}^h$, $\{\bsigma_{s}^h\}_{s\in[t-1]}$ and $\hat{f}^h$ as follows:
\begin{align*}
    \calF_{t,1}^h & \defeq \bigg\{f^h\in\calF^h:\sum_{s\in[t-1]}
\frac{1}{\Par{\bsigma_s^h}^2}\Par{f^h(x_s^h, a_s^h)-\hat{f}_{t,1}^h(x_s^h, a_s^h)}^2\le \Par{\beta_{t,1}^h}^2\bigg\},\\
\calF_{t,j}^h  & \defeq \bigg\{f^h\in\calF^h:\sum_{s\in[t-1]}\Par{f^h(x_s^h, a_s^h)-\hat{f}_{t,j}^h(x_s^h, a_s^h)}^2\le \Par{\beta_{t,j}^h}^2\bigg\}~~~\text{for} ~~j=\pm2,\\
\text{and also}~~\calG_{t}^h  & \defeq \bigg\{g^h\in\calF^h:\sum_{s\in[t-1]}\Par{g^h(x_s^h, a_s^h)-\hat{g}_{t}^h(x_s^h, a_s^h)}^2\le \Par{\bbeta_{t}^h}^2\bigg\}.
\end{align*}
with parameters $\beta_{t,j}^h$ and $\bbeta_t^h$ defined as in~\Cref{tbl:parameters}. We use $\calE_{s}^h$ to denote the event $\{\bar{f}_{s,1}^h\in\calF_{s,1}^h,~\bar{f}_{s,\pm 2}^h\in\calF_{s,\pm 2}^h~\text{and}~\psi_s^h\in\calG_s^h\}$ and use $\calE_{\le t}$ to denote the joint event that $\cap_{s\in[t]}\cap_{h\in[H]}\calE_s^h$.

Finally, we divide the set of iterations into disjoint subsets $[T] = \calTo\cup\calToo$ so that $t\in \calTo$ \emph{if and only if} the agent only uses greedy policy of $f_{t,1}$ throughout the trajectory at episode $t$ (based on rule~\eqref{eq:greedy-policy-informal}).

\paragraph{Proof structure.} We divide the proof into the following three steps. 

\begin{itemize}
    \item The first step shows that with high probability, the good event $\calE_{\le T}$ happens in~\Cref{coro:good-event-fbar}. We use two crucial lemmas~\Cref{lem:mono} and~\ref{lem:RL-variance-bounds} together with concentration arguments to obtain the result.
    \item The second step shows the agent only uses $f_{t,2}^h$ in the exploration \emph{occasionally} by bounding $|\calToo|$ in~\Cref{lem:size-of-Too-informal}.
    \item The third step bounds the summation of bonus terms that govern the confidence bound of the optimistic and overly optimistic functions, i.e.  $\sum_{t\in[T]}\sum_{h\in[H]}\min(1+L, b_{t,1}^h(z_t^h))$ and $\sum_{t\in[T]}\sum_{h\in[H]}\min(1+L, b_{t,2}^h(z_t^h))$, in~\Cref{lem:regret-bound-I-rough} and~\Cref{lem:regret-bound-III-rough}.
\end{itemize}
Combining these pieces allow us to prove the regret bound in expectation. Below we provide the detail of each individual step.

\paragraph{Step 1. Confidence interval and the good event.}

For our chosen value of $\beta_{t,2}^h,\bbeta_t^h\approx\sqrt{\log\calN\calN_b}$ and $\beta_{t,1}^h \approx \sqrt{\log\calN}$, we can show that the confidence intervals are properly defined to contain the conditional expectations of the respective regression targets.

\begin{restatable}{proposition}{coroGoodEvent}\label{coro:good-event-fbar}
Suppose~\Cref{alg:fitted-Q-simpler} uses a consistent bonus oracle satisfying~\Cref{def:bonus-conditions}. With probability $1-5\delta$, the good event $\calE_{\le T}$ happens, that is, $ \bar{f}_{t,1}^h\in\calF_{t,1}^h, \bar{f}_{t,\pm2}\in\calF_{t,\pm2}^h$ and $\psi_t^h\in\calF_t^h$ for all $t\in[T]$ and $h\in[H]$.
\end{restatable}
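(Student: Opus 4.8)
The plan is to establish $\calE_{\le T}$ by a union bound over the constituent events $\calE_s^h$, where for each fixed $(s,h)$ I must show that the relevant approximating function ($\bar f_{s,1}^h$, $\bar f_{s,\pm 2}^h$, or $\psi_s^h$) lies in its corresponding version space with high probability. The natural tool is a self-normalized concentration inequality for the (weighted or unweighted) least-squares residuals, of the kind used in prior work on optimistic regression. For the unweighted estimates $\hat f_{s,\pm 2}^h$ and $\hat g_s^h$, the target functions $r^h + f_{s,\pm 2}^{h+1}(x^{h+1})$ and $(r^h + f_{s,2}^{h+1}(x^{h+1}))^2$ have conditional expectations $\calT f_{s,\pm 2}^{h+1}$ and $\calT_2 f_{s,2}^{h+1}$, which by \Cref{ass:eps-realizability-RL} are $\epsilon$-approximated by members of $\calF^h$; a standard martingale concentration argument with a union bound over the finite class of cardinality $\calN$ then yields the claimed $\beta_{t,2}^h, \bbeta_t^h \approx \sqrt{\log \calN\calN_b}$ radii (the extra $\log\calN_b$ factor accounting for the complexity of the bonus class $\calW$ that enters through the changing regression targets $f_{s,\pm 2}^{h+1}$, which themselves depend on the bonuses). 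For the weighted estimate $\hat f_{s,1}^h$, the same argument is applied with the weights $1/(\bsigma_s^h)^2$, but now the validity of the concentration bound requires that the weights genuinely upper bound the conditional variance of the regression target, i.e. $(\bsigma_s^h)^2 \ge \rV[r^h + f_{\star}^{h+1}(x^{h+1}) \mid z_s^h]$, which is where the variance construction enters.

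The key structural inputs are the two lemmas flagged in the excerpt, \Cref{lem:mono} and \Cref{lem:RL-variance-bounds}. I would use \Cref{lem:mono} to establish the monotonicity sandwich \eqref{eq:mono-informal}, namely $f_{s,-2}^h \le f_\star^h \le f_{t,1}^h \le f_{s,2}^h$ for $s \le t$; this is precisely what converts the awkward requirement of bounding the variance of the \emph{changing} target $r^h + f_{t,1}^{h+1}$ into the fixed-target requirement $(\bsigma_s^h)^2 \ge \rV[r^h + f_\star^{h+1}(x^{h+1}) \mid z_s^h]$. Then \Cref{lem:RL-variance-bounds} should certify that the variance estimate $\bsigma_s^h$ defined through \Cref{eq:sigma-def-alg} and \eqref{eq:def-bsigma-informal} is indeed a valid upper bound on this fixed-target conditional variance, using that $\hat g_s^h - (\hat f_{s,-2}^h)^2$ estimates the second moment minus the squared pessimistic mean, with the $D_{\calF^h}$-weighted correction terms absorbing the estimation errors in both the second-moment regression ($\bbeta$ term) and the mean ($\beta_{t,2}$ term). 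The induction naturally proceeds downward in $h$ from $H+1$ (base case trivial since $f^{H+1} = 0$) and across $t$, since the monotonicity and variance validity at level $h+1$ feed into the concentration argument at level $h$.

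The main obstacle I anticipate is the circularity between the three ingredients: the concentration bound for $\hat f_{t,1}^h$ needs variance validity, variance validity needs the monotonicity sandwich, and the monotonicity sandwich in turn relies on the bonuses $b_{t,1}^h, b_{t,2}^h$ being genuine pointwise upper confidence widths, which holds only \emph{on the good event} being established. Untangling this requires a carefully staged induction where all three properties are proved simultaneously, conditioning on the good event holding at all previously processed indices $(s,h')$; the consistency property of the oracle in \Cref{def:bonus-conditions} and the non-decreasing nature of $\beta_t^h$ are what make the sandwich propagate correctly across rounds $s \le t$ despite the moving estimates. A secondary technical point is the accounting of the failure probability: there are $O(TH)$ events to union-bound, but the five separate families (weighted mean, two unweighted means, second moment, and the variance-validity / monotonicity events) must each contribute at most $\delta$, yielding the stated $1 - 5\delta$; I would verify that the additive $\epsilon$ slack from \Cref{ass:eps-realizability-RL} and the $\epsc$ slack from the oracle are propagated consistently so they do not accumulate beyond the low-order terms appearing in the final parameter settings.
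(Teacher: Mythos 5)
Your overall architecture is the paper's: a staged induction over $(t,h)$ in which, conditioning on $\calE_{\le t}\cap\bigl(\cap_{h'=h+1}^{H}\calE_{t+1}^{h'}\bigr)$, \Cref{lem:mono} and \Cref{lem:RL-variance-bounds} supply the sandwich and variance-validity conditions \emph{deterministically}, and fresh martingale concentration then certifies $\calE_{t+1}^h$. However, there is a genuine gap in your treatment of the weighted event $\bar{f}_{t,1}^h\in\calF_{t,1}^h$. You propose applying ``the same argument'' as in the unweighted case, i.e.\ a single self-normalized concentration bound with a union bound over the changing regression target $f_{t,1}^{h+1}$, which ranges over (a shift and truncation of) $\calF^{h+1}+\calW$. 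That union bound costs $\log(\calN\calN_b)$ and would force $\beta_{t,1}^h\gtrsim\sqrt{\log(\calN\calN_b)}$ --- but the version space $\calF_{t,1}^h$ in the proposition is defined with the radius \eqref{eq:def-beta-opti}, which scales only as $\sqrt{\log\calN}$, so your argument as written does not prove the stated inclusion. The missing idea is the two-martingale decomposition of \Cref{lem:confidence-interval-RL}: the residual $r^h+f_{t,1}^{h+1}(x^{h+1})-\calT f_{t,1}^{h+1}(z^h)$ is split into $\eta_s^h$, the fluctuation of the \emph{fixed} target $r^h+f_\star^{h+1}$ (union bound over $\calN^2$ only, \Cref{lem:confidence-interval-helper-I}, with $\upsilon(\delta_{t,h})\approx\sqrt{\log\calN}$), and $\xi_s^h[f_{t,1}^{h+1}]$, the fluctuation of the drift $f_{t,1}^{h+1}-f_\star^{h+1}$ (\Cref{lem:confidence-interval-helper-II}). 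The drift part does pay $\log(\calN\calN_b)$ through $\iota(\delta_{t,h})$, but its contribution to $\beta_{t,1}^h$ is $O(1)$ precisely because its conditional variance is controlled by the sandwich, condition \eqref{eq:CI-RL-conds-II}, namely $\E\bigl[|f_{t+1,1}^{h+1}(x^{h+1})-f_\star^{h+1}(x^{h+1})|\,\big|\,z_s^h\bigr]\le f_{s,2}^h(z_s^h)-f_{s,-2}^h(z_s^h)$, while the weight definition \eqref{eq:def-bsigma} enforces $\bsigma_s^h\ge\sqrt{2}\,\iota(\delta_{t,h})\sqrt{f_{s,2}^h(z_s^h)-f_{s,-2}^h(z_s^h)}$ and $\bsigma_s^h\ge 2\bigl(\sqrt{\upsilon(\delta_{t,h})}+\iota(\delta_{t,h})\bigr)\sqrt{D_{\calF^h}(z_s^h;z_{[s-1]}^h,\bsigma_{[s-1]}^h)}$, cancelling the $\iota$ factors. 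So the sandwich does not merely ``convert a changing-target variance into a fixed-target one'' as you describe; it is what makes the drift martingale's Freedman bound independent of $\sqrt{\log\calN_b}$ at leading order, which is the entire point of this construction.

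A secondary issue is your accounting of the $5\delta$. Variance validity and monotonicity hold with probability $1$ given the conditioning on earlier good events (they are consequences of \Cref{lem:RL-variance-bounds} and \Cref{lem:mono}, not fresh probabilistic events), so they contribute nothing to the failure budget. The factor of five arises because the weighted event costs $2\delta_{t,h}$ --- one application of Freedman's inequality for each of the two martingales above --- while \Cref{lem:confidence-interval-overly}, \Cref{lem:confidence-interval-RL-pessimistic}, and \Cref{lem:confidence-interval-RL-second} each cost $\delta_{t,h}$; summing over $(t,h)$ with $\delta_{t,h}=\delta/((T+1)(H+1))$ gives $5\delta$. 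Your split (four regression families plus one variance/monotonicity family) happens to yield the same total, but for the wrong reason, and it would mislead you if you tried to tighten the constants.
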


On the high level, $\bar{f}_{t,\pm2}\in\calF_{t,\pm2}^h$ and $\psi_t^h\in\calF_t^h$ follow from standard concentration arguments for martingale difference sequences and are immediate applications of Freedman's inequality (see~\Cref{coro:Freedman-variant}).  We refer readers to Lemmas~\ref{lem:confidence-interval-overly},~\ref{lem:confidence-interval-RL-pessimistic} and~\ref{lem:confidence-interval-RL-second} for their detailed proofs respectively.

The inclusion $\bar{f}_{t,1}^h\in\calF_{t,1}^h$ requires a more careful argument, since a straightforward approach requires $\beta_{t,1}^h$ to also scale as $\sqrt{\log{\calN\calN_b}}$, which leads to a sub-optimal dependence on $d_\alpha$ in the leading $\sqrt{T}$-order term. Instead we break apart the error between $r^h + f_{t,1}^{h+1}$ and $\bar{f}_{t,1}^h$ into two martingale differences (see~\Cref{lem:confidence-interval-RL} in~\Cref{app:CI} and the quantities $\eta_s^h$, $\Delta_s^h$ defined in~\Cref{lem:confidence-interval-helper-I,lem:confidence-interval-helper-II} respectively). One depends on the error between $r+f_\star^{h+1}$ and its conditional expectation, while the other looks at $f_{t,1}^{h+1} - f_\star^{h+1}$ and its conditional expectation. The first term has a lower complexity as it deals with a fixed target function $f_\star^h$, while the second is lower order with a carefully chosen weighting in $\bsigma_t^h\ge\sqrt{ \widetilde{\Omega}(\log(\calN\calN_b))\cdot(f_{t,2}^h(z_t^h)-f_{t,-2}^h(z_t^h))}$ as in~\eqref{eq:def-bsigma-informal}.
In effect, this is like a model-free analog of the error decomposition first used in~\citet{azar2017minimax} for obtaining optimal regret bounds for finite horizon tabular MDPs, which in our case can be used to remove the dependency on $\log\calN_b$ in the leading order.

The martingale analysis above requires two conditions to hold for all rounds $s \in [t-1]$:
\begin{equation}
   \E_{x^{h+1}}\bigl[|f_{t,1}^{h+1}(x^{h+1})-f_{\star}^{h+1}(x^{h+1})|~|~z_s^h\bigr] \le f_{s,2}^h(z_s^{h})- f_{s,-2}^h(z_s^h)~~\text{and}~~(\sigma_s^h)^2 \geq \rV_{r^h, x^{h+1}}[r^h + f_\star(x^{h+1}) | z_s^h].
    \label{eq:conc-prop-cond}
\end{equation}

The proposition is shown by induction over $t$ and $h$, where we can show the event $\calE_{\leq t}\cap\Par{\cap_{h+1\le h'\le H}\calE_{t+1}^{h'}}$ to establish these conditions at round $t+1$ level $h$, which enables the concentration for the event $\calE_{t+1}^h$. Next two lemmas show that $\calE_{\leq t}\cap\Par{\cap_{h+1\le h'\le H}\calE_{t+1}^{h'}}$ implies~\eqref{eq:conc-prop-cond} for all $s\in[t]$.

\begin{restatable}[Pointwise monotonicity]{lemma}{lemmono}\label{lem:mono}
Suppose~\Cref{alg:fitted-Q-simpler} uses a consistent bonus oracle satisfying~\Cref{def:bonus-conditions}. For any fixed $t\in[T]$, and $h\in[H]$, conditioning on past events $\calE_{\le {t-1}}\cap\Par{\cap_{h'=h}^{H}\calE_{t}^{h'}}$, we have for all $z^h, z^{h-1}\in\calS\times\calA$, 
\begin{align*}
    & 1.~~f^h_\star(z^h)\le f^h_{t,1}(z^h);\\
    & 2.~~f^h_{t,-2}(z^h)\le f^h_{\star}(z^h);\\
    & 3.~~f^h_{s,2}(z^h)\ge \max\bigl(\calT f_{t,1}^{h+1}(z^{h}),f^h_{t,1}(z^h)\bigr)~~\text{for all}~s\in[t].
\end{align*}

Consequently by taking max over $a^h$ we know all inequalities also hold for any $x^h\in\calS$, i.e.  $f^h_{s,-2}(x^h)\le f^h_\star(x^h)\le f^h_{t,1}(x^h)\le f^h_{s,2}(x^h)$ for all $s\in[t]$.
\end{restatable}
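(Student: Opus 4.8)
The plan is to prove all three statements of Lemma~\ref{lem:mono} simultaneously by a backward induction on the level $h$ (from $H+1$ down to $1$), fixing $t$ and conditioning on the event $\calE_{\le t-1}\cap(\cap_{h'=h}^H \calE_t^{h'})$ throughout. The base case $h=H+1$ is trivial since all functions are initialized to $0$, so $f_\star^{H+1}=f_{t,1}^{H+1}=f_{t,\pm2}^{H+1}=0$. For the inductive step, I would assume the three inequalities hold at level $h+1$ (for all the relevant $s$) and deduce them at level $h$.

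\textbf{Optimism (items 1 and 2).} For the optimistic upper bound $f_\star^h(z^h)\le f_{t,1}^h(z^h)$, the key is that the good event $\calE_t^h$ guarantees $\bar f_{t,1}^h\in\calF_{t,1}^h$, so by the second (covering) property of the bonus oracle in Definition~\ref{def:bonus-conditions}, $b_{t,1}^h(z^h)\ge |\bar f_{t,1}^h(z^h)-\hat f_{t,1}^h(z^h)|$. Writing $f_\star^h(z^h)=\calT V_\star^{h+1}(z^h)$ and using the induction hypothesis $f_\star^{h+1}\le f_{t,1}^{h+1}$ pointwise (monotonicity of $\calT$ in its argument), I get $f_\star^h(z^h)=\calT f_\star^{h+1}(z^h)\le \calT f_{t,1}^{h+1}(z^h)\le \bar f_{t,1}^h(z^h)+\epsilon\le \hat f_{t,1}^h(z^h)+b_{t,1}^h(z^h)+\epsilon$, which equals $f_{t,1}^h(z^h)$ before truncation at $1$; since $f_\star^h\le 1$ the truncation is harmless. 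The overly-pessimistic bound $f_{t,-2}^h\le f_\star^h$ is symmetric, using the definition $f_{t,-2}^h=\max(\hat f_{t,-2}^h - b_{t,2}^h-\epsilon,0)$, the inclusion $\bar f_{t,-2}^h\in\calF_{t,-2}^h$, and the induction hypothesis $f_{t,-2}^{h+1}\le f_\star^{h+1}$ to push the inequality through $\calT$; the outer $\max$ with $0$ is harmless since $f_\star^h\ge 0$.

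\textbf{Overly-optimistic domination (item 3).} This is the most delicate part and I expect it to be the main obstacle, because the claim must hold for \emph{all} $s\le t$ (not just $s=t$), which is exactly what is needed to make the variance estimates valid across rounds, and because it must dominate both $\calT f_{t,1}^{h+1}$ and $f_{t,1}^h$ simultaneously. The extra factor of $2$ on $b_{t,1}^h$ and the extra $b_{t,2}^h$ in the definition $f_{t,2}^h=\min(\hat f_{t,2}^h+2b_{t,1}^h+b_{t,2}^h+3\epsilon,2)$ are what create the slack. My plan is: first establish the claim at $s=t$ using the oracle's covering property for both $b_{t,1}^h$ and $b_{t,2}^h$ together with the induction hypothesis $f_{t,2}^{h+1}\ge f_{t,1}^{h+1}$ (needed so that $\calT f_{t,2}^{h+1}\ge \calT f_{t,1}^{h+1}$), carefully tracking that $\hat f_{t,2}^h$ and $\hat f_{t,1}^h$ regress against \emph{different} targets so their difference is itself controlled by the bonus terms; then extend to arbitrary $s<t$ by invoking the \emph{consistency} of the bonus oracle (Definition~\ref{def:bonus-conditions}) and the monotonicity of $\beta_{t,j}^h$ in $t$, which together give $f_{s,2}^h\ge f_{t,2}^h\ge \max(\calT f_{t,1}^{h+1},f_{t,1}^h)$ via the established $s=t$ case. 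The bookkeeping that the various $\epsilon$ terms and bonus multipliers line up correctly, and that the consistency property applies under the conditioning event $\calE_{\le t-1}\cap(\cap_{h'=h}^H\calE_t^{h'})$, is where the care is required.

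Finally, the pointwise inequalities over $z^h=(x^h,a^h)$ transfer to the $V$-function level by taking $\max_{a^h}$ on both sides, using that $\max$ preserves the ordering; this gives the concluding display $f_{s,-2}^h(x^h)\le f_\star^h(x^h)\le f_{t,1}^h(x^h)\le f_{s,2}^h(x^h)$ for all $s\le t$, completing the induction.
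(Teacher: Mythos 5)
Your handling of items 1 and 2, the base case, and the final max-over-actions step matches the paper's proof exactly, and your $s=t$ case of item 3 is essentially the paper's chain. The genuine gap is your extension to $s<t$: you propose to deduce $f_{s,2}^h(z^h)\ge \max\bigl(\calT f_{t,1}^{h+1}(z^h), f_{t,1}^h(z^h)\bigr)$ by first establishing the intermediate monotonicity $f_{s,2}^h\ge f_{t,2}^h$ of the overly optimistic sequence itself and then composing with the $s=t$ case. That intermediate claim is never asserted in the paper and does not follow from consistency of the oracle. Consistency gives only $b_{s,1}^h\ge b_{t,1}^h$ and $b_{s,2}^h\ge b_{t,2}^h$; it says nothing about the regression centers $\hat f_{s,2}^h$ versus $\hat f_{t,2}^h$, which fit different targets on different data. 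Concretely, the best available lower bound at round $s$ is $f_{s,2}^h(z)\ge \min\bigl(\calT f_{s,2}^{h+1}(z)+2b_{s,1}^h(z)+2\epsilon,\,2\bigr)$ (one $b_{s,2}^h$ is already consumed absorbing $|\hat f_{s,2}^h-\bar f_{s,2}^h|$), while the only upper bound at round $t$ is $f_{t,2}^h(z)\le \calT f_{t,2}^{h+1}(z)+2b_{t,2}^h(z)+2b_{t,1}^h(z)+4\epsilon$. Even granting the inductive hypothesis $f_{s,2}^{h+1}\ge f_{t,2}^{h+1}$ and $b_{s,1}^h\ge b_{t,1}^h$, closing the gap would require $b_{s,1}^h-b_{t,1}^h\ge b_{t,2}^h+\epsilon$, which is false in general (indeed $b_{t,2}^h$ is built from $\beta_{t,2}^h\approx\sqrt{\log\calN\calN_b}$ and is typically the \emph{larger} bonus). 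The truncation at $2$ causes a further failure: when $f_{t,2}^h(z)=2$ nothing forces the pre-truncation value at round $s$ to reach $2$.

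The correct route, and the one the paper takes, avoids any comparison between $f_{s,2}$ and $f_{t,2}$: for each \emph{fixed} $s\le t$ one runs the backward induction directly on the claim $f_{s,2}^{h'+1}\ge f_{t,1}^{h'+1}$ pointwise, and at level $h'$ uses the single chain $\hat f_{s,2}^{h'}(z)+2b_{s,1}^{h'}(z)+b_{s,2}^{h'}(z)+3\epsilon\ \ge\ \calT f_{s,2}^{h'+1}(z)+2b_{s,1}^{h'}(z)+2\epsilon\ \ge\ \calT f_{t,1}^{h'+1}(z)+2b_{t,1}^{h'}(z)+2\epsilon\ \ge\ \hat f_{t,1}^{h'}(z)+b_{t,1}^{h'}(z)+\epsilon$, where the first inequality uses the covering property at round $s$ (available since the conditioning $\calE_{\le t-1}$ contains $\calE_s^{h'}$), the second uses the inductive hypothesis together with consistency only in the form $b_{s,1}^{h'}\ge b_{t,1}^{h'}$, and the third uses the covering property at round $t$; the middle expression also directly dominates $\calT f_{t,1}^{h'+1}(z)$, and truncating at $2$ is harmless because both $\calT f_{t,1}^{h'+1}\le 2$ and $f_{t,1}^{h'}\le 1$. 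Note that your shortcut is precisely the failure mode the paper diagnoses in Lemma D.2 of \citet{hu2022nearly} (see \Cref{app:comp}): assuming an unproven domination between value estimates across rounds inside the induction, rather than proving the needed cross-round inequality $f_{s,2}\ge f_{t,1}$ directly with the extra bonus slack built into the definition of $f_{t,2}^h$.
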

This formalizes the intuition given in~\eqref{eq:mono-informal}, and is proven using backward induction together with definition of $\oracle$ and conditioning of $\calE_{\le T}$. We provide its detailed proof in the beginning of~\Cref{app:variance}. A crucial aspect of Lemma~\ref{lem:mono} is that the bound $\max\bigl(\calT f_{t,1}^{h+1}(z^{h}),f^h_{t,1}(z^h)\bigr)\leq f^h_{s,2}$ holds uniformly for all $s \leq t$. That is, the over-optimistic function from an earlier round $s$ bounds not only $f_\star$, but also all future optimistic value functions 
and the conditional expectations induced by future value functions. This property of the over-optimistic sequence allows us to obtain a valid variance estimate in the face of changing target functions, and is the reason we require the auxiliary over-optimistic sequence. It is not clear if a similar property  holds for the optimistic sequence itself.

The monotonicity lemma also leads to an important property of our constructed variance estimate $\sigma_t^h$ in~\eqref{eq:sigma-def-alg}, stated in the next lemma.

\begin{restatable}[Lower bound of variance estimator]{lemma}{lemvariancelower}\label{lem:RL-variance-bounds}
Suppose~\Cref{alg:fitted-Q-simpler} uses a consistent bonus oracle satisfying~\Cref{def:bonus-conditions}. At step $t\ge2$, conditioning on the good event $\calE_{\le t}$, the 
variance estimate $\sigma_t^h$ satisfies $\left(\sigma_t^h\right)^2 \ge  \rV_{r^h, x^{h+1}}\left[r^h+f_{\star}^{h+1}(x^{h+1})|z_t^h\right]$ for all $h\in[H]$.
\end{restatable}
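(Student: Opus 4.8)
\textbf{Proof Proposal for Lemma~\ref{lem:RL-variance-bounds}.}

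The plan is to show that the variance estimate $(\sigma_t^h)^2$ defined in~\eqref{eq:sigma-def-alg} upper bounds the true variance $\rV_{r^h,x^{h+1}}[r^h + f_\star^{h+1}(x^{h+1}) | z_t^h]$ by decomposing the target variance into a second-moment term and a squared-mean term, and controlling the estimation error in each using the confidence intervals guaranteed by the good event $\calE_{\le t}$. Recall that $(\sigma_t^h)^2 = \min(4, \hat g_t^h(z_t^h) - (\hat f_{t,-2}^h(z_t^h))^2 + (\text{error terms}))$, so the $\min$ with $4$ is handled trivially since $r^h + f_\star^{h+1}$ lies in $[0,2]$ and hence its variance is at most $4$; the substance is in bounding the variance by the non-truncated expression. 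I would write
\[
\rV[r^h + f_\star^{h+1} | z_t^h] = \calT_2 f_\star^{h+1}(z_t^h) - \bigl(\calT f_\star^{h+1}(z_t^h)\bigr)^2,
\]
and then compare each of the two pieces against $\hat g_t^h(z_t^h)$ and $(\hat f_{t,-2}^h(z_t^h))^2$ respectively.

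The first step handles the second-moment term: I would show $\calT_2 f_\star^{h+1}(z_t^h) \le \hat g_t^h(z_t^h) + (\text{error})$. Since $f_{t,2}^{h+1} \ge f_\star^{h+1} \ge 0$ pointwise by the monotonicity Lemma~\ref{lem:mono}, we have $\calT_2 f_\star^{h+1} \le \calT_2 f_{t,2}^{h+1}$ pointwise (the second moment is monotone in a nonnegative target). On the good event $\calG_t^h$ contained in $\calE_{\le t}$, the function $\psi_t^h$ approximating $\calT_2 f_{t,1}^{h+1}$ — or more precisely the relevant regression target $(r^h + f_{t,2}^{h+1})^2$ — lies in the version space, so that $|\hat g_t^h(z_t^h) - \calT_2 f_{t,2}^{h+1}(z_t^h)|$ is bounded by $D_{\calF^h}(z_t^h; z_{[t-1]}^h, \1_{[t-1]}^h) \cdot \sqrt{(\bbeta_t^h)^2 + \lambda}$ plus an $\epsilon$ misspecification term, via the pointwise confidence bound implied by the Eluder-dimension structure in~\Cref{def:general-eluder-RL}. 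This explains the $D_{\calF^h}(\cdot)\sqrt{(\bbeta_t^h)^2 + \lambda}$ contribution in~\eqref{eq:sigma-def-alg}.

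The second step handles the squared-mean term: I would show $(\calT f_\star^{h+1}(z_t^h))^2 \ge (\hat f_{t,-2}^h(z_t^h))^2 - (\text{error})$. Here the overly-pessimistic estimate is key: by Lemma~\ref{lem:mono}, $f_{t,-2}^{h+1} \le f_\star^{h+1}$, so $\calT f_{t,-2}^{h+1} \le \calT f_\star^{h+1}$, and since these quantities are nonnegative, $(\calT f_{t,-2}^{h+1})^2 \le (\calT f_\star^{h+1})^2$. It then remains to relate $\hat f_{t,-2}^h(z_t^h)$ to $\calT f_{t,-2}^{h+1}(z_t^h)$, again using that $\bar f_{t,-2}^h$ lies in the version space $\calF_{t,-2}^h$ (part of $\calE_{\le t}$), which yields a deviation bounded by $D_{\calF^h}(z_t^h;\cdot)\cdot\sqrt{(\beta_{t,2}^h)^2 + \lambda}$. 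Since we are squaring and $\hat f_{t,-2}^h, \calT f_{t,-2}^{h+1} \le L$, linearizing the square $a^2 - b^2 = (a-b)(a+b)$ introduces the factor $2L$, which matches the $2L\sqrt{(\beta_{t,2}^h)^2 + \lambda}$ term in~\eqref{eq:sigma-def-alg}. Collecting the two error contributions and the $2(1+L)\epsilon$ misspecification slack, the non-truncated expression dominates $\calT_2 f_\star^{h+1}(z_t^h) - (\calT f_\star^{h+1}(z_t^h))^2 = \rV[r^h + f_\star^{h+1}|z_t^h]$, as desired.

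The main obstacle I anticipate is the bookkeeping around which target function each regression estimate is actually fitting, and ensuring the monotonicity directions line up correctly. The estimate $\hat g_t^h$ regresses onto $(r^h + f_{t,2}^{h+1})^2$ — the \emph{overly optimistic} target — while we want to upper bound the second moment of the true target $f_\star^{h+1}$; the monotonicity $f_{t,2}^{h+1}\ge f_\star^{h+1}$ makes this slack work in our favor, but one must verify the second-moment operator $\calT_2$ genuinely preserves the inequality, which relies on nonnegativity of the targets (guaranteed by the truncations in lines~\ref{line:def-b-overly-opti}--\ref{line:def-b-overly-pess}). Symmetrically, using $\hat f_{t,-2}^h$ (the pessimistic estimate) to \emph{lower} bound the squared mean requires the opposite monotonicity direction, and conflating the two would flip a sign and break the bound. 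I would therefore carefully track that the overly optimistic sequence controls the second moment from above while the overly pessimistic sequence controls the squared mean from below, which is precisely the design principle behind maintaining both auxiliary sequences.
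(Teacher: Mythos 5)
Your proposal is correct and follows essentially the same route as the paper's proof: the paper likewise bounds the untruncated estimate below by $\psi_t^h(z_t^h)-\bigl(\bar f_{t,-2}^h(z_t^h)\bigr)^2$ via the pointwise confidence bounds of Lemmas~\ref{lem:RL-var-1} and~\ref{lem:RL-var-2} (your $D_{\calF^h}\sqrt{(\bbeta_t^h)^2+\lambda}$ and $2L\,D_{\calF^h}\sqrt{(\beta_{t,2}^h)^2+\lambda}$ terms), replaces $\psi_t^h$ and $\bar f_{t,-2}^h$ by the conditional second moment and mean at a cost of the $\epsilon$-misspecification slack, and then invokes the monotonicity $f_{t,1}^{h+1}\ge f_\star^{h+1}\ge f_{t,-2}^{h+1}\ge 0$ from Lemma~\ref{lem:mono} exactly as you do, handling the $\min(4,\cdot)$ truncation trivially. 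The only discrepancy is which target the second-moment regression fits — you use $(r^h+f_{t,2}^{h+1})^2$ per the pseudocode while the appendix proof works with $f_{t,1}^{h+1}$ per~\eqref{eq:C-update-rule-RL-second} — but this reflects an inconsistency internal to the paper and is immaterial here, since both functions dominate $f_\star^{h+1}$ on the good event and your monotonicity direction is correct either way.
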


We refer readers to~\Cref{app:variance} for its proof. Not only is $\sigma_t^h$ always lower bounded by the variance, we also show in~\Cref{lem:RL-variance-bounds-upper} in the appendix that it is upper bounded by the variance plus some additional terms, the sum of which can be appropriately controlled. 

Combining these two lemmas with the concentration analysis allows us to prove $\bar{f}_{t+1,1}^h\in\calF_{t+1,1}^h$ with high probability. Consequently,~\Cref{coro:good-event-fbar} follows by induction.

Conditioning on the good event $\calE_{\le T}$, and recalling the definition $V_{t}^1 = \E[\sum_{h\in[H]}r^h|x_t^1, f_{t,1}^h, f_{t,2}^h]$ under the exploration rule~\eqref{eq:greedy-policy-informal} at step $t$, we can write the expected regret as 
\begin{align}
    \E R_T & = \E\sum_{t\in[T]}\left(f^1_\star(x_t^1)-V_{t}^1\right)\nonumber\\
    & \stackrel{(i)}{\le} O(1+HT\delta)+\E\sum_{2\le t \le T}\left(f^1_{t,1}(x_t^1)-V_{t}^1\right)\label{eq:regret-rough}\\
    & \stackrel{(ii)}{\le} O\bigg(1+HT\delta+HT\epsilon\bigg)\nonumber\\
    & \hspace{2em} +O\bigg(\E\bigg[\sum_{ t\in [T]}\sum_{h\in[H]} \min\left(1+L,b_{t,1}^h(z_{t}^h)\right)~|~\calE_{\le T}\bigg]+\E\biggl[\sum_{t\in\calToo}\sum_{h\in[H]} \min\left(1+L,b_{t,2}^h(z_{t}^h)\biggr)|~\calE_{\le T}\right]\bigg). \nonumber
\end{align}
Here we use $(i)$ \Cref{lem:mono} conditioning on~$\calE_{\le T}$, and $(ii)$ by bounding $f_{t,1}-V_{t}$ conditioning on $\calE_{\le T}$. Intuitively, for $t \in \calTo$, the error between $f^1_{t,1}$ and the return of its greedy policy is primarily governed by the optimistic bias of $f_{t,1}$, which is bounded by the sum of the bonus terms. We refer the readers to~\Cref{app:approx-error} for detailed derivations on how we bound $f_{t,1}-V_{t}$ using a backward induction argument. This part of the analysis is relatively standard and resembles typical approximate dynamic programming arguments in RL, modulo some careful handling of the $\calToo$ rounds.

This expression of expected regret motivates us to bound the size of $\calToo$ and the summation of bonus terms in the next two steps.

\paragraph{Step 2. Bounding size of $|\calToo|$.} By properly choosing $u_t$, the threshold which guides the exploration policy in~\eqref{eq:greedy-policy-informal}, we obtain the following bound. 
\begin{proposition}[Bounding $|\calToo|$, informal~\Cref{lem:size-of-Too}]\label{lem:size-of-Too-informal}
Given $\alpha\le 1$, when 
\[u_t\ge \widetilde{\Omega}\Par{\frac{\sqrt{\log\calN}\cdot\log\calN\calN_b\cdot H^{5/2}\cdot\sqrt{d_\alpha}}{\sqrt{t}}},\] 
it holds that $\rE\left[|\calToo||\calE_{\le T}\right]\le \widetilde{O}\Par{T/(\log(\calN\calN_b)\cdot H^3)}$.
\end{proposition}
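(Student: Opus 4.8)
The plan is to show that a round $t$ can land in $\calToo$ only when the overly optimistic value function exceeds the optimistic one by more than $u_t$ somewhere on the realized trajectory, and then to charge each such occurrence against the cumulative bonus collected \emph{on the $\calToo$ rounds only}, closing the argument with a self-bounding inequality rather than with the global bonus bounds of Step~3. By the exploration rule~\eqref{eq:greedy-policy-informal}, conditioned on $\calE_{\le T}$ we have $t\in\calToo$ if and only if $f_{t,2}^h(x_t^h)-f_{t,1}^h(x_t^h)>u_t$ at some level $h$ along the trajectory; Lemma~\ref{lem:mono} guarantees this gap is nonnegative, so the event is well defined and triggered at the first such level $h^\ast$.

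\textbf{Bounding the gap by realized bonuses.} Using the definitions of $f_{t,1}^h$ and $f_{t,2}^h$ (Lines~\ref{line:def-b} and~\ref{line:def-b-overly-opti}) together with the fact that on $\calE_{\le T}$ both $\hat f_{t,1}^h$ and $\hat f_{t,2}^h$ track the Bellman backups of their respective targets up to their version-space widths, I would derive a one-step recursion
\[
f_{t,2}^h(x)-f_{t,1}^h(x)\le C\bigl(b_{t,1}^h(z)+b_{t,2}^h(z)\bigr)+\calT\bigl(f_{t,2}^{h+1}-f_{t,1}^{h+1}\bigr)(z)+\text{(lower order)},
\]
with $z=(x,\pi(x))$ and $\pi$ the greedy policy of $f_{t,2}$. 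Unrolling from $h^\ast$ to $H$ bounds the gap by the expected sum of $b_{t,1}^{h'}+b_{t,2}^{h'}$ accumulated along a rollout of $f_{t,2}$'s greedy policy. The reason the algorithm switches to this policy precisely on $\calToo$ rounds is that the realized pairs $z_t^{h'}$ are then exactly the ones where these bonuses are incurred, so that, conditioning on the history and on $\calE_{\le T}$,
\[
u_t\,\rE\bigl[\ind[t\in\calToo]\bigr]\le C'\,\rE\Bigl[\ind[t\in\calToo]\textstyle\sum_{h\in[H]}\bigl(b_{t,1}^h(z_t^h)+b_{t,2}^h(z_t^h)\bigr)\Bigr]+\text{(lower order)} .
\]

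\textbf{Self-bounding via the generalized Eluder dimension.} Summing over $t$, the left side is at least $(c/\sqrt T)\,\rE[|\calToo|\mid\calE_{\le T}]$ because the hypothesis gives $u_t\ge c/\sqrt t\ge c/\sqrt T$ with $c=\widetilde\Theta\bigl(\sqrt{\log\calN}\,\log(\calN\calNb)\,H^{5/2}\sqrt{d_\alpha}\bigr)$. For the right side I would restrict the bonus sum to the $\calToo$ rounds and apply Cauchy--Schwarz with the bound of Definition~\ref{def:general-eluder-RL} (in the same spirit as Lemmas~\ref{lem:regret-bound-I-rough} and~\ref{lem:regret-bound-III-rough}, but keeping only the $\calToo$ indices), using $\bsigma_t^2\le\widetilde O(\log\calNb)$ and $\sum_{t}\min(1,D_{\calF^h}^2/\bsigma_t^2)\le\dim_{\alpha,T}(\calF^h)$, to obtain
\[
\rE\Bigl[\textstyle\sum_{t\in\calToo}\sum_h\bigl(b_{t,1}^h+b_{t,2}^h\bigr)(z_t^h)\Bigr]\le \widetilde O\bigl(\sqrt{\log(\calN\calNb)}\,H\sqrt{d_\alpha}\bigr)\cdot\sqrt{\rE[|\calToo|\mid\calE_{\le T}]},
\]
where the $\sqrt{|\calToo|}$ factor arises from summing over only $|\calToo|$ rounds (Jensen moving the square root outside the expectation) and $\sum_h\sqrt{\dim_{\alpha,T}(\calF^h)}\le H\sqrt{d_\alpha}$ supplies the horizon factor.

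\textbf{Solving the quadratic and the main obstacle.} Combining the two displays gives $(c/\sqrt T)\,\rE[|\calToo|]\le \widetilde O(\sqrt{\log(\calN\calNb)}\,H\sqrt{d_\alpha})\sqrt{\rE[|\calToo|]}$, a self-bounding inequality in $\sqrt{\rE[|\calToo|]}$ whose solution is $\rE[|\calToo|\mid\calE_{\le T}]\le \widetilde O\bigl(\log(\calN\calNb)H^2 d_\alpha T/c^2\bigr)=\widetilde O\bigl(T/(\log(\calN\calNb)H^3)\bigr)$ after substituting $c$. The main obstacle is the gap-to-realized-bonus step: making the pathwise gap rigorously dominated by bonuses genuinely incurred on the trajectory. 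This requires the over-optimistic monotonicity of Lemma~\ref{lem:mono} to give the recursion the correct sign (so that $f_{t,2}^{h+1}$ dominates $\calT f_{t,1}^{h+1}$), the pointwise width control of the oracle in Definition~\ref{def:bonus-conditions}, and the crucial use of the $f_{t,2}$-greedy rollout on $\calToo$ rounds so that an expected-bonus bound on the gap converts into a bound on realized bonuses. The surviving factor $1/\log(\calN\calNb)$ reflects that the optimistic bonus carries only $\beta_{t,1}\approx\sqrt{\log\calN}$, whereas the larger multiplier $\log(\calN\calNb)$ enters squared into $u_t$ through $c$.
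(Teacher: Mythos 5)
Your overall architecture coincides with the paper's proof of \Cref{lem:size-of-Too}: lower bound $\sum_{t\in\calToo}\bigl(f_{t,2}^{h_t}(x_t^{h_t})-f_{t,1}^{h_t}(x_t^{h_t})\bigr)\ge\sum_{t\in\calToo}u_t=\widetilde\Omega(|\calToo|/\sqrt{T})$, upper bound the same sum by realized bonuses along the $f_{t,2}$-greedy suffix, apply the restricted Eluder/Cauchy--Schwarz bounds (exactly \Cref{lem:bound-E-III,lem:bound-E-III-o} with $\calT=\calToo$), and close a self-bounding quadratic. However, the step where you deviate from the paper --- unrolling $f_{t,2}^h-f_{t,1}^h$ directly --- has a genuine flaw caused by the truncation in \Cref{line:def-b}. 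On the good event you can only guarantee $f_{t,1}^h(z)\ge\min\bigl(\calT f_{t,1}^{h+1}(z),1\bigr)$, and $\calT f_{t,1}^{h+1}(z)=\E[r^h+f_{t,1}^{h+1}(x^{h+1})\mid z]$ can exceed $1$ (it is only bounded by $2$ under the sparse-reward normalization, e.g.\ $\E[r^h\mid z]=0.5$ and $\E[f_{t,1}^{h+1}\mid z]=0.9$). Whenever the cap binds, your one-step recursion leaves an uncancelled positive remainder $\calT f_{t,1}^{h+1}(z)-1\le\E[r^h\mid z]$; summed along the suffix these remainders can contribute $\Theta(1)$ per episode, i.e.\ $\Theta(|\calToo|)$ on the right-hand side, which swamps the left-hand side $\widetilde\Omega(|\calToo|/\sqrt{T})$ for large $T$ and destroys the contradiction.

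The paper avoids this entirely: since $f_{t,1}^h\ge f_\star^h\ge V_t^h$ on $\calE_{\le T}$ (\Cref{lem:mono}), it first replaces $f_{t,1}^{h_t}$ by $V_t^{h_t}$ and unrolls $f_{t,2}-V_t$ via \Cref{lem:RL-f-tilde-oo}; $V_t$ obeys an exact Bellman identity under the exploration policy (no regression estimate, no truncation), and because the suffix from $h_t$ onward is precisely the $f_{t,2}$-greedy rollout, the recursion is pathwise, with the expected-to-realized conversion handled by explicit martingale increments $\xi_t^{h'}-\xi_{t,2}^{h'}$ whose conditional means vanish by \Cref{fact-xi} (adaptedness of $\1_{\{h'\ge h_t\}}$ to $\calH_t^{h'-1}$). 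This is the rigorous version of the step you flagged as your ``main obstacle'' but did not execute; note also that expectations conditioned on $\calE_{\le T}$ are not exactly zero and cost an extra $O\Par{\rE[|\calToo|\mid\calE_{\le T}]\,H\delta}$, which the paper absorbs. Two minor bookkeeping points: your restricted bound on $\sum_{t\in\calToo}\sum_h b_{t,1}^h$ drops the $\sqrt{\log\calN}$ prefactor that \Cref{lem:bound-E-III-o} carries (with it, the quadratic resolves to exactly $\widetilde O\Par{T/(\log(\calN\calN_b)H^3)}$ rather than your slightly stronger expression), and the restricted bounds also produce an additive $\log(\calN\calN_b)\,H\,d_\alpha$ term that must be checked against, not ignored in, the self-bounding inequality.
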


To see intuitively why such a lemma holds true, note when $u_t$ is large enough, $f_{t,2}^h(x_t^h)\ge f_{t,1}^h(x_t^h)+u_t\ge V_{t}^h+u_t$ cannot happen too often, given the upper bound between $f_{t,2}^h-V_{t}^h$ shown~in~\Cref{lem:RL-f-tilde-oo} in~\Cref{app:approx-error}. We refer readers to~\Cref{lem:size-of-Too} in~\Cref{app:regret} and the proof therein for more details. The particular threshold of $u_t$ we choose in~\Cref{lem:size-of-Too-informal} also serves the purpose that $\sum_{t\in[T], h\in[H]}u_t = \widetilde{O}\left(\mathrm{poly}(H,d_\alpha)\cdot\sqrt{T}\right)$, which we need to control when bounding the summation of bonus in the next step.

\paragraph{Step 3. Bounding the summation of bonus terms.} The last main term to control in the expected regret expression~\eqref{eq:regret-rough} are the summation of $b_{t,1}^h$ and $b_{t,2}^h$ following the particular trajectory that the agent explored. Below we give a tight (in terms of the leading $\sqrt{T}$-term) bound on the summation terms.

\begin{restatable}[Bound on sum of bonus terms $b_{t,1}^h$, informal~\Cref{lem:regret-bound-I}]{proposition}{lembonustighter}\label{lem:regret-bound-I-rough}
Given $\lambda = 1$, $\alpha = 1/\sqrt{TH}$ and a valid bonus oracle $\oracle$ as defined~\Cref{def:bonus-conditions}, conditioning on the event $\calE_{\le T}$, the bonus terms $b_{t,1}^h$ in~\Cref{line:bonus-one} satisfy the following:
\begin{align*}
 & \E\biggl[\sum_{t\in[T]}\sum_{h\in[H]}\min\left(1+L, b_{t,1}^h(z_t^h)\right)|\calE_{\le T}\biggr] = \widetilde{O}\biggl(\sqrt{\log\calN\cdot d_\alpha}\cdot\sqrt{HT}\biggr)\\
 & \hspace{3em} +\widetilde{O}\biggl(\sqrt{\log\calN\cdot d_\alpha}\sqrt{\log\calN\calN_b}\sqrt{H^4\E[|\calToo||\calE_{\le T}]+H^3\sum_{t\in[T]}u_t}+\log \calN\cdot \log^{1.5} \calN\calN_b  \cdot d_\alpha\cdot H^{7/2}\biggr).
\end{align*}
\end{restatable}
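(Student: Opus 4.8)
The plan is to control the sum through a variance-weighted Cauchy--Schwarz split that separates a ``total-variance'' factor from a generalized-Eluder factor, and then to bound each factor in turn. For a fixed $h$ I write $\min(1+L,b_{t,1}^h(z_t^h))=\bsigma_t^h\cdot\frac{\min(1+L,b_{t,1}^h(z_t^h))}{\bsigma_t^h}$, apply Cauchy--Schwarz over $t$, and then Cauchy--Schwarz over $h$, obtaining
\[
\sum_{t\in[T]}\sum_{h\in[H]}\min(1+L,b_{t,1}^h(z_t^h))\le\sqrt{\sum_{h\in[H]}\sum_{t\in[T]}(\bsigma_t^h)^2}\cdot\sqrt{\sum_{h\in[H]}\sum_{t\in[T]}\frac{\min(1+L,b_{t,1}^h(z_t^h))^2}{(\bsigma_t^h)^2}}.
\]
The argument then reduces to bounding these two factors.

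For the second (Eluder) factor I use the oracle upper bound in the third bullet of~\Cref{def:bonus-conditions}, namely $b_{t,1}^h(z)\le C(D_{\calF^h}(z;z_{[t-1]}^h,\bsigma_{[t-1]}^h)\sqrt{(\beta_{t,1}^h)^2+\lambda}+\epsc\beta_{t,1}^h)$ with $\beta_{t,1}^h\approx\sqrt{\log\calN}$, together with the crucial observation that the weight design~\eqref{eq:def-bsigma-informal} forces $(\bsigma_t^h)^2\ge\widetilde{\Theta}(\log\calN\calN_b)\,D_{\calF^h}(z_t^h;\cdot)$. Since $D_{\calF^h}(z;\cdot)\le L/\sqrt{\lambda}=O(1)$, this gives $D_{\calF^h}^2/(\bsigma_t^h)^2\le L/\widetilde{\Theta}(\log\calN\calN_b)<1$, so every generalized-Eluder term attains its second branch, i.e.\ $\min(1,D_{\calF^h}^2/(\bsigma_t^h)^2)=D_{\calF^h}^2/(\bsigma_t^h)^2$. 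Hence, dropping a negligible $\epsc$ contribution (negligible under $\epsc\le(T^2H)^{-1}$ and $\alpha^2=(TH)^{-1}$),
\[
\sum_{t\in[T]}\frac{\min(1+L,b_{t,1}^h(z_t^h))^2}{(\bsigma_t^h)^2}\le\widetilde{\Theta}(\log\calN)\sum_{t\in[T]}\frac{D_{\calF^h}^2(z_t^h;z_{[t-1]}^h,\bsigma_{[t-1]}^h)}{(\bsigma_t^h)^2}\le\widetilde{\Theta}(\log\calN)\,\dim_{\alpha,T}(\calF^h),
\]
and summing over $h$ gives $\widetilde{\Theta}(\log\calN)\cdot Hd_\alpha$.

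The main work is the first factor $S:=\sum_{t,h}(\bsigma_t^h)^2$. Since $\bsigma_t^h$ is a maximum of four terms, $(\bsigma_t^h)^2\le(\sigma_t^h)^2+\alpha^2+\widetilde{\Theta}(\log\calN\calN_b)(f_{t,2}^h(z_t^h)-f_{t,-2}^h(z_t^h))+\widetilde{\Theta}(\log\calN\calN_b)D_{\calF^h}(z_t^h;\cdot)$. Summing over $t$, the last (self-referential) term is handled by Cauchy--Schwarz and the same Eluder identity, $\sum_t D_{\calF^h}(z_t^h;\cdot)\le\sqrt{S_h}\sqrt{\dim_{\alpha,T}(\calF^h)}$ with $S_h:=\sum_t(\bsigma_t^h)^2$, producing a quadratic inequality in $\sqrt{S_h}$ whose solution is
\[
S_h\lesssim\sum_{t\in[T]}(\sigma_t^h)^2+T\alpha^2+\widetilde{\Theta}(\log\calN\calN_b)\sum_{t\in[T]}\bigl(f_{t,2}^h(z_t^h)-f_{t,-2}^h(z_t^h)\bigr)+\widetilde{\Theta}(\log^2\calN\calN_b)\,\dim_{\alpha,T}(\calF^h).
\]
Two inputs then close the estimate: (a) the variance upper bound of~\Cref{lem:RL-variance-bounds-upper} reduces $\sum_{t,h}(\sigma_t^h)^2$ to $\sum_{t,h}\rV_{r^h,x^{h+1}}[r^h+f_\star^{h+1}(x^{h+1})\mid z_t^h]$ plus lower-order corrections, and the law of total variance gives $\E[\sum_h\rV[\cdot]\mid x_t^1]\le\E[(\sum_h r^h)^2]\le1$ per trajectory in the sparse-reward setting, so $\E[\sum_{t,h}(\sigma_t^h)^2\mid\calE_{\le T}]\lesssim T$; and (b) the gap sum is bounded by a backward-induction/value-difference argument using~\Cref{lem:mono} and the exploration rule~\eqref{eq:greedy-policy-informal}, giving $\E[\sum_{t,h}(f_{t,2}^h-f_{t,-2}^h)\mid\calE_{\le T}]\lesssim H^3\E[|\calToo|\mid\calE_{\le T}]+H^2\sum_t u_t+(\text{l.o.})$. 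Concretely, on rounds $t\in\calTo$ the rule caps $f_{t,2}^h(z_t^h)-f_{t,1}^h(z_t^h)$ by $u_t$ at visited states while monotonicity caps $f_{t,1}^h-f_{t,-2}^h$ by bonuses propagated down the trajectory, and on $t\in\calToo$ the gap is at most the value range $O(1)$; the extra horizon factors arise from propagating per-step gaps.

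Finally I take $\E[\cdot\mid\calE_{\le T}]$, use Jensen's inequality to move the expectation inside the square root, and bound the target sum by $\sqrt{\E[S\mid\calE_{\le T}]}\cdot\sqrt{\widetilde{\Theta}(\log\calN)Hd_\alpha}$, splitting $\sqrt{\E[S\mid\calE_{\le T}]}$ additively via $\sqrt{a+b+c+d}\le\sqrt{a}+\sqrt{b}+\sqrt{c}+\sqrt{d}$. The $\sum(\sigma_t^h)^2$ piece yields the leading term $\widetilde{O}(\sqrt{\log\calN\,d_\alpha}\cdot\sqrt{HT})$; the gap piece yields $\widetilde{O}(\sqrt{\log\calN\,d_\alpha}\sqrt{\log\calN\calN_b}\sqrt{H^4\E[|\calToo|\mid\calE_{\le T}]+H^3\sum_t u_t})$; and the $T\alpha^2$, the quadratic correction, and the lower-order remnants of the gap bound are absorbed into the stated $\widetilde{O}(\log\calN\,\log^{1.5}\calN\calN_b\,d_\alpha\,H^{7/2})$ term. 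The main obstacle is step three: controlling $\sum_t(\bsigma_t^h)^2$. Beyond the self-referential $D_{\calF^h}$ term (handled by the quadratic inequality), the gap sum $\sum(f_{t,2}^h-f_{t,-2}^h)$ reintroduces the bonus sums $\sum b_{t,1}^h$ and $\sum b_{t,2}^h$ in its lower-order terms, coupling this bound with the companion estimate on $b_{t,2}^h$ in~\Cref{lem:regret-bound-III-rough}; closing this circularity, and correctly tracking the non-Markovian exploration policy through the law-of-total-variance argument, is the delicate part.
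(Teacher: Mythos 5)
Your overall architecture matches the paper's proof of \Cref{lem:regret-bound-I}: a Cauchy--Schwarz split into $\sqrt{\sum_{t,h}(\bsigma_t^h)^2}$ times a generalized-Eluder factor, the Eluder factor controlled through the oracle's third property with $\beta_{t,1}^h\approx\sqrt{\log\calN}$, and $\sum_{t,h}(\bsigma_t^h)^2$ decomposed along the four branches of the max, with the variance branch handled by \Cref{lem:RL-variance-bounds-upper} plus the adapted law of total variance (\Cref{prop:LTV-2}, \Cref{coro:sum-variance}) and the gap branch by a \Cref{lem:difference-f-bar}-style bound. Two of your deviations are legitimate and arguably cleaner: your observation that the weight floor $\bsigma_t^h\ge 2(\sqrt{\upsilon}+\iota)\sqrt{D_{\calF^h}}$ forces $(\bsigma_t^h)^{-1}D_{\calF^h}<1$ everywhere is correct (the paper's case $\calI_1$ is in fact vacuous, kept only for safety), and your quadratic-inequality treatment of the self-referential $D_{\calF^h}$ term is a valid substitute for the paper's pointwise bound $\bsigma_t^h\le 8(\upsilon+\iota^2)(\bsigma_t^h)^{-1}D_{\calF^h}$ on the set $\calI_3$, yielding the same $\widetilde{\Theta}(\log^2\calN\calN_b)\dim_{\alpha,T}(\calF^h)$ correction.

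The genuine gap is the circularity you name at the end but do not close. Your step (b) asserts $\E[\sum_{t,h}(f_{t,2}^h-f_{t,-2}^h)\mid\calE_{\le T}]\lesssim H^3\E[|\calToo|\mid\calE_{\le T}]+H^2\sum_t u_t+(\text{l.o.})$, silently classifying the reintroduced bonus sums as lower order. But what \Cref{lem:difference-f-bar} actually gives is a bound containing $H\cdot\E[I\mid\calE_{\le T}]+H\cdot\E[II\mid\calE_{\le T}]$ as first-class terms, and a priori these can be as large as the quantity $I$ you are trying to bound, so the claim is unjustified as stated. The paper's resolution is a two-pass bootstrap: first prove the \emph{crude} bounds of \Cref{lem:bound-E-III} and \Cref{lem:bound-E-III-o} (which use the trivial estimates $\bsigma_t^h=O(1)$ and $f_{t,2}^h-f_{t,-2}^h=O(1)$ on the variance and gap branches and hence involve no circularity), substitute them into the lower-order occurrence of $I$ and $II$ inside the square root (display~\eqref{eq:lem-E-I-helper}), and absorb via AM--GM into $T/\log\frac{\calN\calN_b TH}{\delta}$ plus $\log^2\frac{\calN\calN_b TH}{\delta}\,H^6 d_\alpha$ terms. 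This step is not bookkeeping: the stated lower-order exponent $\log\calN\cdot\log^{1.5}\calN\calN_b\cdot d_\alpha\cdot H^{7/2}$ and the fact that the feedback does not inflate the leading $\sqrt{HT}$ term both come precisely from this absorption, so without it (or an explicit self-consistent inequality of the form $x\le a+b\sqrt{x+y}$ solved together with a separate bound on $y=\E[II\mid\calE_{\le T}]$) the proof is incomplete at its most delicate point. A secondary imprecision: your LTV justification, taken literally, applies the bound $\E\bigl[\sum_h\rV[\cdot]\bigr]\le\E\bigl[(\sum_h r^h)^2\bigr]\le 1$ as if the regression target were the executed policy's value, whereas \Cref{lem:RL-variance-bounds-upper} bounds $(\sigma_t^h)^2$ by the variance of $r^h+f_{t,1}^{h+1}$; one must pass through \Cref{prop:LTV-2}, where the optimistic-bias terms and the $\calToo$ indicator are controlled via \Cref{lem:mono} and the exploration rule~\eqref{eq:greedy-policy-informal}. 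You flag this subtlety, but the correction terms $H^2\E[\1_{\{t\in\calToo\}}]$ and $H\E[\sum_h(f_{t,2}^h-f_{t,-2}^h)]$ must be carried explicitly, as they feed back into the same gap-sum accounting as step (b).
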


We provide the detailed proof in~\Cref{app:regret}.  On the high level,  we use the definition of $\beta_{t,1}^h$ and Cauchy-Schwartz inequality so that
\begin{align*}
    \sum_{t,h}\min\Par{1+L, b_{t,1}^h(z_t^h)} & = O\Par{\sum_{t,h}\beta_{t,1}^h\cdot\bsigma_t^h\cdot \frac{1}{\bsigma_t^h}D_{\calF^h}\Par{z_t^h;z_{[t-1]}^h,\bsigma_{[t-1]}^h}}\\
    & = \widetilde{O}\Par{\sqrt{\log\calN}\cdot\sqrt{\sum_{t,h}\Par{\bsigma_{t}^h}^2}\cdot\sqrt{\sum_{t,h}\frac{1}{\Par{\bsigma_t^h}^2}D_{\calF^h}^2\Par{z_t^h;z_{[t-1]}^h,\bsigma_{[t-1]}^h}}}.
\end{align*}
The leading-order terms of interest are iterations $t,h$ when $\Par{\bsigma_t^h}^{-1}\cdot D_{\calF^h}(z_t^h;z_{[t-1]}^h,\bsigma_{[t-1]}^h)\le 1$ and when $\bsigma_t^h = \sigma_t^h$ or $\widetilde{\Theta}(\sqrt{\log\Par{\calN\calN_b}})\cdot\sqrt{f_{t,2}^h(z_t^h)-f_{t,-2}^h(z_t^h)}$. When $\Par{\bsigma_t^h}^{-1}\cdot D_{\calF^h}(z_t^h;z_{[t-1]}^h,\bsigma_{[t-1]}^h)\le 1$ we can bound $\sqrt{\sum_{t,h}\Par{\bsigma_t^h}^{-2}\cdot D_{\calF^h}^2\Par{z_t^h ;z_{[t-1]}^h,\bsigma_{[t-1]}^h}}\le\sqrt{H\cdot d_\alpha}$. When $\bsigma_t^h = \sigma_t^h$ or $\bsigma_t^h =\widetilde{\Theta}(\sqrt{\log\Par{\calN\calN_b}})\cdot\sqrt{f_{t,2}^h(z_t^h)-f_{t,-2}^h(z_t^h)}$, we show that $\sum_{t,h}\Par{\bsigma_t^h}^2$ can be bounded by $O(T)$ plus low-order terms that only depend logarithmically in $T$ due to law of total variance (see~\Cref{coro:sum-variance}) and the converging property of $\Par{f_{t,2}^h(z_t^h)-f_{t,-2}^h(z_t^h)}$ (see~\Cref{lem:difference-f-bar}). This fine-grained analysis leads to the main improvement of an $\sqrt{H}$ term in the leading $\sqrt{T}$-order term.

Plugging the bounds we obtain in~\Cref{lem:size-of-Too-informal} together with the choice of $u_t$, \Cref{lem:regret-bound-I-rough} implies that 
\begin{equation}\label{eq:sum-I-informal}
\begin{aligned}
& \E\biggl[\sum_{t\in[T]}\sum_{h\in[H]}\min\left(1+L, b_{t,1}^h(z_t^h)\right)|\calE_{\le T}\biggr]= \widetilde{O}\biggl(\sqrt{\log\calN\cdot d_\alpha}\cdot\sqrt{HT}+\log \calN\cdot \log^{2} \calN\calN_b\cdot d_\alpha\cdot H^{5} \biggr).
 \end{aligned}
\end{equation}

For the second summation of bonus terms, a simpler argument directly bounding each variance $(\sigma_t^h)^2\le O(1)$ gives the following (see \Cref{app:regret} for detailed proof):

\begin{restatable}[Bound on sum of bonus terms $b_{t,2}^h$, informal~\Cref{lem:bound-E-III}]{proposition}{lembonustwotighter}\label{lem:regret-bound-III-rough}
Given $\lambda = 1$, $\alpha = 1/\sqrt{TH}$ and a valid bonus oracle $\oracle$ as defined~\Cref{def:bonus-conditions}, conditioning on the event $\calE_{\le T}$, the bonus terms  $b_{t,2}^h$ in~\Cref{line:bonus-two} satisfy the following:
\begin{align*}
 & \E\biggl[\sum_{t\in\calToo}\sum_{h\in[H]}\min\left(1+L, b_{t,2}^h(z_t^h)\right)|\calE_{\le T}\biggr] =  \widetilde{O}\biggl(\sqrt{\log(\calN\calN_b)\cdot d_\alpha}\cdot H\sqrt{ \E[|\calToo||\calE_{\le T}]}+ \sqrt{\log \calN\calN_b} \cdot d_\alpha\cdot H\biggr).
\end{align*}
\end{restatable}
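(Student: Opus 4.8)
The plan is to reduce the sum of truncated bonus terms to an Eluder-potential sum and then apply Cauchy--Schwarz, mirroring the argument behind~\Cref{lem:regret-bound-I-rough} but in the simpler \emph{unweighted} regime, since $b_{t,2}^h$ is produced by $\oracle(\1_{[t-1]},\calD_{[t-1]}^h,\calF_t^h,\hat f_t^h,\beta_{t,2}^h,\lambda,\epsc)$ with the all-ones weight sequence. First I would invoke the third property of~\Cref{def:bonus-conditions}, which gives, for every $t$ and $h$,
\[
b_{t,2}^h(z_t^h)\le C\Bigl(D_{\calF^h}\bigl(z_t^h;z_{[t-1]}^h,\1_{[t-1]}\bigr)\sqrt{(\beta_{t,2}^h)^2+\lambda}+\epsc\,\beta_{t,2}^h\Bigr).
\]
With $\lambda=1$ and $\beta_{t,2}^h=\widetilde O(\sqrt{\log\calN\calN_b})$, the prefactor $C\sqrt{(\beta_{t,2}^h)^2+\lambda}$ is $\widetilde\Theta(\sqrt{\log\calN\calN_b})$ and exceeds $1+L$, so I would absorb the truncation into the $D$ term to get $\min(1+L,b_{t,2}^h(z_t^h))\le \widetilde O(\sqrt{\log\calN\calN_b})\cdot\min\bigl(1,D_{\calF^h}(z_t^h;z_{[t-1]}^h,\1_{[t-1]})\bigr)$, while the residual $\epsc\,\beta_{t,2}^h$ summed over all $t,h$ is $O(TH\epsc)\cdot\widetilde O(\sqrt{\log\calN\calN_b})=\widetilde O(\sqrt{\log\calN\calN_b})$ by the assumption $TH\epsc=O(1)$, a harmless lower-order contribution.

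The core step is then to bound $\sum_{t\in\calToo}\min(1,D_{\calF^h}(z_t^h;z_{[t-1]}^h,\1_{[t-1]}))$ for each $h$. Here the fact that the weights are all ones (equivalently, that each capped variance is $O(1)$) makes the analysis simpler than for $b_{t,1}^h$: I would apply Cauchy--Schwarz with trivial weights to obtain
\[
\sum_{t\in\calToo}\min\bigl(1,D_{\calF^h}\bigr)\le \sqrt{|\calToo|}\cdot\sqrt{\sum_{t\in\calToo}\min\bigl(1,D_{\calF^h}^2\bigr)}\le \sqrt{|\calToo|}\cdot\sqrt{\dim_{\alpha,T}(\calF^h)},
\]
using $\min(1,D)^2=\min(1,D^2)$ and, crucially, that the $\calToo$-restricted sum is dominated by the full potential sum $\sum_{t\in[T]}\min(1,D_{\calF^h}^2(z_t^h;z_{[t-1]}^h,\1_{[t-1]}))$, which equals $\dim(\calF^h,\{z_t^h\}_{t\in[T]},\1)\le \dim_{\alpha,T}(\calF^h)$ whenever $\alpha\le1$ (so that $\1\ge\alpha$ is an admissible weight sequence in~\Cref{def:general-eluder-RL}). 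Summing over $h$ and applying Cauchy--Schwarz once more across the $H$ levels gives $\sum_{h\in[H]}\sqrt{\dim_{\alpha,T}(\calF^h)}\le\sqrt{H\sum_h\dim_{\alpha,T}(\calF^h)}=H\sqrt{d_\alpha}$, so that conditionally the whole sum is at most $\widetilde O(\sqrt{\log\calN\calN_b})\cdot H\sqrt{|\calToo|\,d_\alpha}$ up to the lower-order pieces above.

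Finally I would take the conditional expectation given $\calE_{\le T}$ and move it inside the square root via Jensen's inequality, $\E[\sqrt{|\calToo|}\mid\calE_{\le T}]\le\sqrt{\E[|\calToo|\mid\calE_{\le T}]}$, since $|\calToo|$ is the only random quantity (the Eluder-dimension bounds are deterministic, holding for every realized trajectory). This yields the leading term $\widetilde O(\sqrt{\log(\calN\calN_b)\,d_\alpha}\cdot H\sqrt{\E[|\calToo|\mid\calE_{\le T}]})$, and the remaining $\widetilde O(\sqrt{\log\calN\calN_b}\,d_\alpha H)$ term collects the lower-order contributions from the $\epsc$ residual and from the at most $Hd_\alpha$ rounds in which $D_{\calF^h}>1$ (each truncated at $1+L$). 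The only point requiring genuine care---rather than a deep obstacle---is the truncation reduction of the first paragraph together with the observation that the generalized Eluder potential still controls the $\calToo$-restricted sum even though $D_{\calF^h}$ is evaluated against the \emph{entire} prefix $z_{[t-1]}^h$; once this is in place no law-of-total-variance or monotone-variance argument is needed, which is precisely why this bound is strictly easier than~\Cref{lem:regret-bound-I-rough}.
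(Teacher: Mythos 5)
Your proposal is correct and follows essentially the same route as the paper's proof of Lemma~\ref{lem:bound-E-III} (instantiated with $\calT=\calToo$ and combined with Jensen's inequality as in Corollary~\ref{lem:bound-fine-grained-b2}): the third property of Definition~\ref{def:bonus-conditions}, reduction to $\sum_{t,h}\min\bigl(1,D_{\calF^h}(z_t^h;z_{[t-1]}^h,\1_{[t-1]})\bigr)$, control by $\dim_{\alpha,T}(\calF^h)$ using that $\1\ge\alpha$ is admissible, and Cauchy--Schwarz. Your unified one-step Cauchy--Schwarz via $\min(1,D)^2=\min(1,D^2)$ merely merges the paper's two-case split ($D\ge 1$ versus $D<1$) and yields the same bound, with the $\sqrt{\log\calN\calN_b}\,d_\alpha H$ and $\epsc$ contributions correctly relegated to lower order.
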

Plugging the bounds we obtain in~\Cref{lem:size-of-Too-informal} under the stated choice of $u_t$, \Cref{lem:regret-bound-I-rough} implies that 
\begin{equation}\label{eq:sum-III-informal}
\begin{aligned}
& \E\biggl[\sum_{t\in[T]}\sum_{h\in[H]}\min\left(1+L, b_{t,2}^h(z_t^h)\right)|\calE_{\le T}\biggr]= \widetilde{O}\biggl(\sqrt{d_\alpha}\cdot\sqrt{T}+\sqrt{\log \calN\calN_b}\cdot d_\alpha\cdot H \biggr).
 \end{aligned}
\end{equation}

Plugging~\eqref{eq:sum-I-informal} and~\eqref{eq:sum-III-informal} back into the expected regret bound in~\eqref{eq:regret-rough}, we conclude the proof sketch of the expected regret as stated in~\Cref{thm:regret-genera.}.

\section*{Acknowledgments}
YJ was supported in part by the Danzig-Lieberman Graduate Fellowship. Part of the work was done while YJ was a research intern at Google Research, NY. 

\newpage
\bibliographystyle{plainnat}

\newpage
\appendix
\section{A Technical Issue in~\citet{hu2022nearly}}\label{app:comp}

\citet{hu2022nearly} designed an algorithm for linear MDPs and the main result in their paper states the method achieves a minimax-optimal regret of $\widetilde{O}\Par{d\sqrt{HT}}$. Unfortunately, their analysis suffers from a technical mistake which we explain in detail here. 

Their algorithm crucially relies on the assumption the over-optimistic values $\dot{\widehat{V}}_{i,h}(\cdot)$  upper bound the optimistic values $\widehat{V}_{i,h}(\cdot)$ point-wise with high probability. Specifically, Lemma D.2 and Equation (38) state that $\dot{\widehat{V}}_{i,h}(s)\ge \widehat{V}_{j,h}(s)~, \forall i \leq j$ and $s\in\calS$, where we use their notations with $s$ denoting state instead of $x$ as in this paper.
This is a critical condition needed in the proof. 

However, the proof of Lemma D.2 in~\cite{hu2022nearly} is incorrect. In particular, the authors used induction to prove Lemma D.2: Assuming the condition holds for $h+1$, they argue
\[
\dot{\widehat{Q}}_{i,h}(s,a)-\widehat{Q}_{j,h}(s,a) = \mathbb{P}_h\widehat{V}_{i,h+1}(s,a)-\mathbb{P}_h\widehat{V}_{j,h+1}(s,a)\ge 0.
\]
Yet the authors did not explicitly prove the last inequality: the proof states that the last inequality is directly from the induction hypothesis: $\dot{\widehat{V}}_{i,h+1}(\cdot)\ge \widehat{V}_{j,h+1}(\cdot)$, which is incorrect. 
In fact the inductive argument fails with their algorithm, which cannot be resolved by modifying the analysis. In this paper, we modify the overly optimistic value estimation procedure using unweighted regression (together with a different exploration policy) to ensure that the estimation made at any time $i$ remains valid for all time $j \geq i$. 
\section{Proofs for~Linear Function Approximation}\label{app:algo}

Here we provide the full proofs of several properties of linear function class as stated in the main paper.

The first property is about the Eluder dimension for linear MDPs~\cite{russo2013eluder}.

\remarkeluderlinear*

\begin{proof}[Proof of~\Cref{remark:eluder-linear}]
Fix $h\in[H]$. Recalling the definitions of linear function classes $\calF_\lin^h$, we can simplify the definition of generalized Eluder dimension to be the follows: 
\begin{align*}
    \Par{\bsigma^h}^{-2}D_{\calF^h_\lin}^2(z^h; z_{[t-1]}^h,\bsigma^h_{[t-1]}) & \le \Par{\bsigma^h}^{-2}D_{\calF^h_\lin}^2(z^h; z_{[t-1]}^h,\bsigma^h_{[t-1]}) = \norm{\frac{1}{\bsigma^h}\phi^h(z^h)}_{\Par{\Sigma_t^h}^{-1}}^2,\\
    \text{where}~~~\Sigma_t^h & := \frac{\lambda}{\Par{B^h}^2}I+\sum_{s\in[t-1]}\frac{1}{\Par{\bsigma_s^h}^2}\phi^h(z_s^h)\Par{\phi^h(z_s^h)}^\top.
\end{align*}
Consequently, in this case we can bound for any $\bm{\sigma}\ge\alpha$ that 
\begin{align*}\dim(\calF_\lin^h,\calZ,\sigma) & \stackrel{(i)}{\le} \sum_{t\in[T]}\frac{2\norm{\frac{1}{\bsigma^h}\phi^h(z_t^h)}^2_{\Par{\Sigma_t^h}^{-1}}}{1+\norm{\frac{1}{\bsigma^h}\phi^h(z_t^h)}^2_{\Par{\Sigma_t^h}^{-1}}}\stackrel{(ii)}{=}\sum_{t\in[T]} 2\norm{\frac{1}{\bsigma^h}\phi^h(z_t^h)}^2_{\Par{\Sigma_{t+1}^h}^{-1}}
\\
& \stackrel{(iii)}{=}2\sum_{t\in[T]}\Par{\log\det(\Sigma_{t+1}^h)-\log\det(\Sigma_{t}^h)}= O\Par{\log\left|\frac{\Par{B^h}^2}{\lambda}\Sigma_T^h\right|} = O\Par{d\log\Par{1+\frac{\Par{B^h}^2T}{\alpha^2 d\lambda}}}.
\end{align*}
Here we use $(i)$ the inequality that $\min(1,x)\le \frac{2x}{1+x}$ for any $x\ge0$,  $(ii)$ the Sherman-Morrison formula, and $(iii)$ writing $\norm{\frac{1}{\bsigma^h}\phi^h(z_t^h)}^2_{\Par{\Sigma_{t+1}^h}^{-1}} = \mathrm{trace}\Par{\Par{\Sigma_{t+1}^h}^{-1}(\Sigma_{t+1}^h-\Sigma_{t}^h)}=\log\det(\Sigma_{t+1}^h)-\log\det(\Sigma_{t}^h)$. This is the classical bound of summation of Elliptical bonuses, see e.g.\ Lemma 11 of~\cite{abbasi2011improved}. The above inequality shows that $\dim_{\alpha, T}(\calF_\lin^h) =  O\Par{d\log\Par{1+\frac{\Par{B^h}^2T}{\alpha^2 d\lambda}}} = \widetilde{O}(d)$.

Since by definition any $\epscov$-cover of $\calF_\lin^h(\epscov)$ is just a subset of $\calF_\lin^h$, we have $\dim_{\alpha,T}(\calF_\lin^h(\epscov)) = \widetilde{O}(d)$ by definition of generalized Eluder dimension~(\Cref{def:general-eluder-RL}).
\end{proof}

The next lemma shows that standard elliptical bonus functions satisfy the definition of bonus oracle $\oracle$.

\lemlinearbonus*

\begin{proof}[Proof of~\Cref{lem:linear-bonus}]
By definition of the class $\calF^h_\lin(\epscov)$, we note that for any $z^h$:
\begin{align*}
    |f^h(z^h)-\hat{f}_{t}^h(z^h)| \leq& \|w - \hat{w}_t\|_{\Sigma_t^h}\|\phi^h(z^h)\|_{(\Sigma_t^h)^{-1}},
\end{align*}
where $w$ and $\hat{w}_t$ are the weight parameters underlying $f^h$ and $\hat{f}_t^h$ respectively. By the definition~\eqref{def:confidence-crude-H} of the class $\calF_t^h$, we have that for any $f \in \calF_t^h$, defining the bonus as $\|\phi^h(x,a)\|_{(\Sigma_t^h)^{-1}}\sqrt{(\beta^h_t)^2 + \lambda}$ verifies the second property of Definition~\ref{def:bonus-conditions}. 

For the third property, we note there must exist $\Delta w_\star$ satisfying $\|\Delta w_\star\| = 2B^h$ and that 
\begin{align*}
\|\phi^h(z^h)\|^2_{\Par{\Sigma_t^h}^{-1}} = \frac{\Delta w_\star^\top \phi^h(z^h) \phi^h(z^h)^\top \Delta w_\star}{\sum_{s\in[t-1]}\Delta w_\star^\top \phi^h(z_s^h) \phi^h(z_s^h)^\top \Delta w_\star+\lambda}.
\end{align*}
Thus, by assumption of $\calF_\lin^h(\epscov)$ being an $\epscov$-cover we can find $\tilde{\Delta}w_\star = w-w'$ for $w,w'\in\calF_\lin^h(\epscov)$ such that $\langle \phi^h(z^h), \tilde{\Delta}w_\star-\Delta w_\star\rangle\le 2\epscov$ for all $z^h$ and thus 
\begin{align*}
\frac{\Delta w_\star^\top \phi^h(z^h) \phi^h(z^h)^\top \Delta w_\star}{\sum_{s\in[t-1]}\Delta w_\star^\top \phi^h(z_s^h) \phi^h(z_s^h)^\top \Delta w_\star+\lambda} & \le \frac{2\tilde{\Delta} w_\star^\top \phi^h(z^h) \phi^h(z^h)^\top \tilde{\Delta} w_\star+2\cdot(2\epscov)^2}{\frac{1}{2}\sum_{s\in[t-1]}\tilde{\Delta} w_\star^\top \phi^h(z_s^h) \phi^h(z_s^h)^\top \tilde{\Delta} w_\star-(2\epscov)^2(t-1)+\lambda}\\
& \le 4\frac{\tilde{\Delta} w_\star^\top \phi^h(z^h) \phi^h(z^h)^\top \tilde{\Delta} w_\star}{\sum_{s\in[t-1]}\tilde{\Delta} w_\star^\top \phi^h(z_s^h) \phi^h(z_s^h)^\top \tilde{\Delta} w_\star+\lambda}+16\epscov^2/\lambda\\
& \le 4\sup_{w,w'\in\calF_\lin^h(\epscov)}\frac{(w-w')^\top \phi^h(z^h) \phi^h(z^h)^\top (w-w')}{\sum_{s\in[t-1]}(w-w')^\top \phi^h(z_s^h) \phi^h(z_s^h)^\top(w-w')+\lambda}+16\epscov^2/\lambda,
\end{align*}
where for the last inequality we use the choice of $\epscov$ so that $\epscov^2\le \frac{\lambda}{8T}$. Thus by taking square root on both sides, we have
\[
\|\phi^h(z^h)\|_{\Par{\Sigma_t^h}^{-1}}\le 2\bsigma_t^h\cdot D_{\calF_\lin^h(\epscov)}(z^h, \bsigma_t^h; z_{[t-1]},\bsigma_{[t-1]}^h)+4\epscov/\sqrt{\lambda},
\]
which by multiplying over $\sqrt{\Par{\beta_t^h}^2+\lambda}$ on both sides proves the third property.

Since the matrix $\Sigma_t^h$ is data dependent, the standard way to specify the bonus class is to parameterize it by all possible choices of the matrix in the Mahalanobis norm~\citep{jin2020provably} , which means that the class $\calW$ consists of all bonus functions of the form 
\begin{align*}
    b^h(z^h) \in\left\{ \norm{\phi^h(z^h)}_{A}|\text{where}~A\in \mathcal{C}_A\right\},~~\text{for any}~h\in[H],\\
    \text{where}~~\mathcal{C}_A~\text{is an}~\epscov^2\text{-cover of } \left\{A\in\R^{d\times d},~\|A\|_\frob\le \frac{\sqrt{d}}{\lambda}\cdot\Par{ \beta^2+\lambda}\right\}.
\end{align*}

By standard argument we have the size of $\log |\mathcal{C}_A|$ is bounded by $O\Par{d^2\log\Par{1+\frac{\sqrt{d}\cdot\beta^2}{\lambda\epscov^2}}}$.  Thus, by definition of $\calW$ we also have $\log |\calW| = \widetilde{O}\Par{d^2\log\Par{1+\frac{\sqrt{d}\cdot\beta^2}{\lambda\epscov^2}}}$.

The consistency of the oracle follows from the fact that  $\beta_t^h$ is non-decreasing in $t$ element-wise for each $h\in[H]$, thus completing the proof.
\end{proof} %
\section{Implementing Bonus Oracle $\oracle$ using Online-subsampling}\label{app:bonus-condition}

The guarantees of~\Cref{alg:fitted-Q-simpler} hold assuming a consistent bonus oracle $\oracle$ satisfying~\Cref{def:bonus-conditions}. To implement such an oracle, we follow the online sensitivity sub-sampling approach described in~\cite{kong2021online}, which is a follow-up of the original sensitivity sub-sampling idea proposed in~\citet{wang2020reinforcement}. 

For completeness here we restate this sub-sampling procedure in~\Cref{alg:online-sensitivity-weighted} and its guarantees in~\Cref{prop:online-sensitivity-weighted}.~\footnote{Different from the original result, we don't consider a cover class since we are already working with a finite function class.}

We first define the weighted dataset $\calZ$ so that each element in it is $(z,\bsigma(z))$ and define the \emph{weighted sensitivity score} as 
\begin{align*}
    \mathrm{sensitivity}_{\calZ,\calF,\beta,\alpha}(z) = \min\left\{\sup_{f,f'\in\calF}\frac{\frac{1}{\bsigma^2(z)}\left(f(z)-f'(z)\right)^2}{\min\left\{\sum_{z'\in\calZ}\frac{1}{\bsigma^2(z')}\left(f(z')-f'(z')\right)^2,\frac{T(H+1)^2}{\alpha^2}\right\}+\beta^2},1\right\}
\end{align*}

Now consider the weighted dataset $\calZ_{[t-1]}^h = \{(x_s^h, a_s^h), \bsigma_s^h\}_{s\in[t-1]}$, we define $\|f\|_{\calZ}^2 = \sum_{z\in\calZ}\frac{1}{\bsigma^2(z)}f^2(z)$, i.e. weighted sum of $\ell_2$-norm square.  Now we introduce the sub-sampling procedure.

\begin{algorithm}[h!]
\caption{Online Sensitivity Sub-sampling with Weights}\label{alg:online-sensitivity-weighted}
\DontPrintSemicolon
\textbf{Input} function class $\calF$, current sub-sampled dataset $\hat{\calZ}\subseteq\calS\times\calA$, new state-action pair $z$, parameter $\beta$, threshold $\alpha>0$, failure probability $\delta$\;
\textbf{Parameter} $1\le C<\infty$\;
Let $p_z$ be the smallest real number such that 
\[1/p_z~\text{is an integer and }p_z\ge\min\left(1,C\cdot \mathrm{sensitivity}_{\calZ,\calF,\beta,\alpha}(z) \cdot\log(T\calN/\delta) \right)\]\;
Independently add $1/p_z$ copies of $(z,\bsigma(z))$ into $\hat{\calZ}_{+}$ with probability $p_z$\;
\textbf{Return} $\hat{\calZ}_{+}$.
\end{algorithm}

This algorithm can be called at every step $t,h$ with $\hat{\calZ}_{[t-1]}^h$ and the new data $z_t^h$ to obtain the next $\hat{\calZ}_{[t]}^h$. Below we will refer to the original dataset as $\calZ_{[t-1]}^h = \{(x_s^h,a_s^h),\bsigma_s^h\}_{s\in[t-1]}$, and $\hat{\calZ}_{[t-1]}^h$ as the dataset subsampled from $\calZ_{[t-1]}^h$.

\begin{proposition}[Guarantees in weighted case, generalizing Proposition 1 and 2 in~\cite{kong2021online}]\label{prop:online-sensitivity-weighted}
When $\bsigma(z)\ge \alpha$ always holds for any $z$, with probability $1-\delta$, it holds that
\begin{align*}
    \sup_{f_1, f_2: \|f_1-f_2\|_{\calZ_t^h}^2\le \beta^2}|f_1(z)-f_2(z)|\le  \sup_{f_1, f_2: \|f_1-f_2\|_{\hat{\calZ}_t^h}^2\le 10^2\beta^2}|f_1(z)-f_2(z)|\le  \sup_{f_1, f_2: \|f_1-f_2\|_{\calZ_t^h}^2\le 10^4\beta^2}|f_1(z)-f_2(z)|.
\end{align*}
Further, for each $h\in[H]$, the number of different elements in sub-sampled dataset $\hat{\calZ}_t^h$ $(t=1,2,\cdots, T)$ is always bounded by $S_{\max} = O\Par{\log \frac{T\calN}{\delta}\cdot\max_{h\in[H]}\dim_{\alpha, T}(\calF^h)}$ and the total size (counting repetitions) is bounded by $O(T^3/\delta)$.
\end{proposition}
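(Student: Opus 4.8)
The plan is to prove the two claims separately, the key difficulty in both being the online, adaptive nature of the scheme: the sampling probability $p_{z_t}$ at round $t$ is computed from the already-subsampled (random) set $\hat{\calZ}_{[t-1]}^h$, so nothing is measurable with respect to a fixed reference dataset. I would control this through an induction on $t$ that simultaneously maintains an approximation guarantee on $\hat{\calZ}_{[t-1]}^h$ and the validity of the variance proxy used in the sampling step.

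\textbf{Approximation guarantee.} First I would reduce both displayed inequalities to a single two-sided norm comparison: on a good event, for every pair $f_1,f_2\in\calF$ and every $t$,
\[
\tfrac12\|f_1-f_2\|_{\calZ_t^h}^2-c\,\beta^2\;\le\;\|f_1-f_2\|_{\hat{\calZ}_t^h}^2\;\le\;2\|f_1-f_2\|_{\calZ_t^h}^2+c\,\beta^2
\]
for a small absolute constant $c$. The claimed chain with radii $\beta^2,10^2\beta^2,10^4\beta^2$ follows by substituting each constraint into the appropriate side and absorbing constants (e.g. $\|f_1-f_2\|_{\calZ_t^h}^2\le\beta^2$ forces $\|f_1-f_2\|_{\hat{\calZ}_t^h}^2\le(2+c)\beta^2\le10^2\beta^2$, and symmetrically on the other side). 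To establish the two-sided bound for a fixed pair, I would note that the algorithm adds $1/p_{z_s}$ copies of $z_s$ with probability $p_{z_s}$, so $\|f_1-f_2\|_{\hat{\calZ}}^2$ is an unbiased estimate of $\|f_1-f_2\|_{\calZ}^2$ and the increments $\tfrac{1}{\bsigma_s^2}(f_1(z_s)-f_2(z_s))^2\Par{\tfrac{1}{p_{z_s}}\ind[z_s\text{ sampled}]-1}$ form a martingale difference sequence. Applying a Freedman-type bound (\Cref{coro:Freedman-variant}), both the conditional variance and the range of the increments are governed by the definition of the weighted sensitivity score, since $p_{z_s}\ge C\cdot\mathrm{sensitivity}(z_s)\log(T\calN/\delta)$ guarantees $\tfrac{1}{p_{z_s}\bsigma_s^2}(f_1(z_s)-f_2(z_s))^2=O\big((\|f_1-f_2\|_{\hat{\calZ}}^2+\beta^2)/\log(T\calN/\delta)\big)$. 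A union bound over the at most $\calN^2$ pairs, with $\log(T\calN/\delta)$ exactly absorbing the confidence, completes this part.

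\textbf{Size bound.} For the number of distinct elements I would use that the expected count added through round $t$ equals $\sum_{s\le t}p_{z_s}=O\big(\log(T\calN/\delta)\sum_s\mathrm{sensitivity}(z_s)\big)$, reducing the claim to bounding the cumulative sensitivity. Comparing the weighted sensitivity to the per-term quantity $\tfrac{1}{\bsigma_s^2}D_{\calF}^2(z_s;z_{[s-1]},\bsigma_{[s-1]})$ of \Cref{def:general-eluder-RL} (with $\beta^2$ in the role of $\lambda$), and using the approximation guarantee above to replace the $\hat{\calZ}$-denominator by the $\calZ$-denominator up to constants, yields $\sum_s\mathrm{sensitivity}(z_s)=O(\dim_{\alpha,T}(\calF^h))$. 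A Chernoff/Freedman concentration converts this expectation into the high-probability bound $S_{\max}=O\big(\log(T\calN/\delta)\max_h\dim_{\alpha,T}(\calF^h)\big)$ after a maximum over $h$. For the total size counting repetitions, the truncation by $T(H+1)^2/\alpha^2$ in the sensitivity denominator lower-bounds each nonzero $p_{z_s}$, so $1/p_{z_s}=\mathrm{poly}(T)$ copies per kept point; multiplying by $S_{\max}$ and summing over rounds gives $O(T^3/\delta)$.

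\textbf{Main obstacle.} The delicate point is the circularity of the online scheme noted above. I would resolve it by an induction that uses the approximation guarantee on $\hat{\calZ}_{[t-1]}^h$ to certify that $\mathrm{sensitivity}$ measured against $\hat{\calZ}$ stays within a constant factor of the sensitivity measured against $\calZ$; this keeps the Freedman variance proxy valid at step $t$, which in turn re-establishes the approximation guarantee at step $t$. The bookkeeping that must be done carefully is propagating the constants so that the radii inflate only from $\beta^2$ to $10^4\beta^2$ across the induction and no further, which is exactly where the numerical constants $10^2$ and $10^4$ in the statement come from.
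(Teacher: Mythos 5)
The paper never proves this proposition itself: it explicitly defers to Propositions 1 and 2 of \citet{kong2021online}, remarking only that the weighted case is a straightforward generalization. Your outline of the two main claims is a faithful reconstruction of that deferred argument, and it is sound: the reduction of the displayed chain to a two-sided multiplicative-plus-additive norm comparison, the per-pair Freedman bound whose variance and range are controlled because $p_{z_s}\gtrsim \mathrm{sensitivity}(z_s)\log(T\calN/\delta)$ (note the truncation by $T(H+1)^2/\alpha^2$ only shrinks the denominator and hence only increases $p_{z_s}$, so it never hurts this step), the union bound over the at most $\calN^2$ pairs and the $T$ prefixes, the induction resolving the circularity between sensitivities measured against $\hat{\calZ}_{[t-1]}^h$ and against $\calZ_{[t-1]}^h$, and the bound on cumulative sensitivity by $\dim_{\alpha,T}(\calF^h)$ via the per-term quantity $\min\Par{1,\bsigma_s^{-2}D^2_{\calF}(z_s;z_{[s-1]},\bsigma_{[s-1]})}$ with $\beta^2$ in the role of $\lambda$.

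There is, however, a genuine gap in your argument for the total size counting repetitions. You claim the truncation $T(H+1)^2/\alpha^2$ "lower-bounds each nonzero $p_{z_s}$," so that $1/p_{z_s}=\mathrm{poly}(T)$. That is false: the truncation caps the \emph{denominator} of the sensitivity from above, but the numerator $\sup_{f_1,f_2}\bsigma^{-2}(z)\Par{f_1(z)-f_2(z)}^2$ can be arbitrarily small (e.g.\ when all functions in $\calF$ nearly coincide at $z$), so the sensitivity, and with it $p_{z}$, has no deterministic lower bound, and a kept point can contribute an unboundedly large number $1/p_z$ of copies. Multiplying a (nonexistent) per-point cap by $S_{\max}$ therefore does not yield $O(T^3/\delta)$. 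The correct route is in expectation: since $1/p_z$ copies are added with probability $p_z$, each round contributes expected size exactly one (at most one when $p_z=1$), so the expected total over $t\le T$ rounds is at most $T$; Markov's inequality together with a union bound over $h\in[H]$ (and over $t\in[T]$ if the bound is to hold uniformly in $t$) gives the stated, deliberately loose, $O(T^3/\delta)$. With this replacement — the high-probability control must come from Markov on the unit-mean contributions, not from a deterministic bound on $1/p_z$ — your proposal is complete and matches the proof the paper points to.
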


\onlinesubsample*

We remark that here we state a slightly generalized version adapted to weighted regression, which includes unweighted regression stated in~\cite{kong2021online} as a special case when we set $\bsigma(z)\equiv1$. The result follows a straightforward generalization from~\cite{kong2021online} taking weights into consideration. %
\section{Full Analysis of~\Cref{thm:regret-genera.}}\label{app:proofs}

This section provides a full proof for the bound on the expected regret in~\Cref{alg:fitted-Q-simpler}. This is stated in~\Cref{thm:regret-genera.} in the main paper and here we first state a more formal version of that theorem.

\begin{restatable}[Bound on expected regret]{theorem}{theoremregret}\label{thm:regret}
Suppose function class $\{\calF^h\}_{h\in[H]}$ satisfies~\Cref{ass:eps-realizability-RL} with $\epsilon\in[0,1]$ and~\Cref{def:general-eluder-RL} with $\lambda=1$, and given consistent bonus oracle $\oracle$ (output function in class $\calW$) satisfying~\Cref{def:bonus-conditions}, \alg with $\alpha= \sqrt{1/TH}$, $\delta<1/(T+10)$, $\eps\le 1$ and 
\[u_t=  C\cdot\Par{\frac{\sqrt{\log\frac{\calN TH}{\alpha\delta}+\frac{T}{\alpha^2}\epsilon}\cdot\Par{\log\frac{\calN\calN_bTH}{\alpha\delta}\cdot H^{5/2}\sqrt{d_\alpha}+\sqrt{t}H\epsc}}{\sqrt{t}}+H^2\epsilon+H\delta}
\]
for sufficiently large constant $C<\infty$, achieves a total regret of
\begin{align*}
\E R_T & = O\left(\sqrt{\log\frac{\calN TH}{\delta}+T^2H\epsilon}\cdot\sqrt{TH d_\alpha} +\Par{\log\frac{\calN TH}{\delta}+T^2H\epsilon}\cdot \Par{\log^2 \frac{\calN\calN_bTH}{\delta}\cdot H^{5}d_\alpha+T^2\epsc^2}\right).
\end{align*}
\end{restatable}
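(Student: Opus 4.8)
The plan is to follow the three-step structure outlined in \Cref{sec:analysis}, assembling the constituent lemmas while tracking the logarithmic prefactors explicitly. First I would establish the good event $\calE_{\le T}$ via \Cref{coro:good-event-fbar}, showing that with probability $1-O(\delta)$ the approximating functions $\bar f_{t,j}^h$ and $\psi_t^h$ lie in their respective version spaces for all $t,h$. This is proved by induction over $(t,h)$: assuming $\calE_{\le t-1}\cap\Par{\cap_{h'>h}\calE_t^{h'}}$, I would invoke the monotonicity \Cref{lem:mono} and the variance lower bound \Cref{lem:RL-variance-bounds} to verify the two conditions in~\eqref{eq:conc-prop-cond}, and then apply the Freedman-type concentration to conclude $\calE_t^h$. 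The inclusions $\bar f_{t,\pm2}^h$ and $\psi_t^h$ follow from standard martingale concentration with radii $\approx\sqrt{\log\calN\calN_b}$, whereas the inclusion $\bar f_{t,1}^h\in\calF_{t,1}^h$ relies on the refined two-term martingale decomposition so that the smaller radius $\beta_{t,1}^h\approx\sqrt{\log\calN}$ already suffices.

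Conditioning on $\calE_{\le T}$, I would bound the expected regret by replacing $f_\star^1$ with the optimistic value $f_{t,1}^1$ using \Cref{lem:mono}, incurring an additive $O(1+HT\delta)$, and then bounding $f_{t,1}^1-V_t^1$ via a backward-induction approximate-dynamic-programming argument that carefully separates the rounds in $\calTo$ from those in $\calToo$. This yields the decomposition~\eqref{eq:regret-rough}, in which the regret is controlled by $O(1+HT\delta+HT\epsilon)$ plus the expected sum of optimistic bonuses $b_{t,1}^h$ over all rounds, plus the expected sum of over-optimistic bonuses $b_{t,2}^h$ restricted to the rounds $\calToo$.

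For the remaining two steps I would invoke \Cref{lem:size-of-Too-informal} to bound $\E[|\calToo|\mid\calE_{\le T}]=\widetilde O\Par{T/(\log(\calN\calN_b)H^3)}$ under the stated choice of $u_t$, which is exactly calibrated so that $\sum_{t,h}u_t=\widetilde O\Par{\mathrm{poly}(H,d_\alpha)\sqrt T}$. Then \Cref{lem:regret-bound-I-rough} and \Cref{lem:regret-bound-III-rough} control the two bonus sums. The key computation in \Cref{lem:regret-bound-I-rough} is a Cauchy--Schwarz split $\sum_{t,h}\beta_{t,1}^h\,\bsigma_t^h\cdot(\bsigma_t^h)^{-1}D_{\calF^h}$, where $\sum_{t,h}(\bsigma_t^h)^2=O(T)+\text{low order}$ is bounded using the law of total variance (\Cref{coro:sum-variance}) together with the convergence of $f_{t,2}^h-f_{t,-2}^h$ (\Cref{lem:difference-f-bar}), while $\sum_{t,h}(\bsigma_t^h)^{-2}D_{\calF^h}^2=O(Hd_\alpha)$ by definition of the generalized Eluder dimension. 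Substituting the $|\calToo|$ bound and the choice of $u_t$ gives~\eqref{eq:sum-I-informal} and~\eqref{eq:sum-III-informal}; plugging these into~\eqref{eq:regret-rough} and replacing the informal $\widetilde O$ by explicit $\log(\calN TH/\delta)$ and $\log^2(\calN\calN_b TH/\delta)$ factors, while adding the misspecification contribution $T^2H\epsilon$ and the bonus-error contribution $T^2\epsc^2$, yields the claimed bound.

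The main obstacle is Step~1, and within it the inductive establishment of the good event together with the refined confidence interval for $f_{t,1}^h$. The difficulty is that the regression target $f_{t,1}^{h+1}$ changes across rounds, so a naive variance condition $(\bsigma_s^h)^2\ge\rV[r^h+f_{t,1}^{h+1}\mid z_s^h]$ would demand that the variance estimated at round $s$ remain valid against a round-$t$ target --- precisely the circular dependency that invalidated the argument of~\citet{hu2022nearly}. The resolution rests on the uniform sandwich in \Cref{lem:mono}, namely that the over-optimistic function $f_{s,2}^h$ from an earlier round dominates $\max\Par{\calT f_{t,1}^{h+1},f_{t,1}^h}$ for all later $t$, which lets me fix the variance target to $f_\star^{h+1}$ and decouple the induction. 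Establishing this monotonicity for all $s\le t$ --- using unweighted regression for the $\pm2$ sequences, the exploration rule~\eqref{eq:greedy-policy-informal}, and the consistency of the bonus oracle --- is the technically delicate core on which the entire regret analysis depends.
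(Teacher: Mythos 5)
Your proposal is correct and follows essentially the same route as the paper's own proof: the three-step structure (establishing the good event $\calE_{\le T}$ by induction over $(t,h)$ with \Cref{lem:mono} and \Cref{lem:RL-variance-bounds} feeding the refined two-martingale confidence interval for $f_{t,1}^h$ at radius $\beta_{t,1}^h\approx\sqrt{\log\calN}$; bounding $\E[|\calToo|\,|\,\calE_{\le T}]$ under the stated $u_t$; and the Cauchy--Schwarz split of the bonus sums combined with the LTV bound of \Cref{coro:sum-variance} and \Cref{lem:difference-f-bar}, with the crude bounds of \Cref{lem:bound-E-III,lem:bound-E-III-o} closing the self-referential estimate) matches \Cref{app:proofs} step for step. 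You also correctly isolate the decisive ingredient, namely the uniform dominance $f_{s,2}^h\ge\max\bigl(\calT f_{t,1}^{h+1}, f_{t,1}^h\bigr)$ for all $s\le t$ enabled by unweighted regression for the $\pm2$ sequences and the consistency of $\oracle$, which fixes the variance target to $f_\star^{h+1}$ and avoids the circularity that undermines the argument of~\citet{hu2022nearly}.
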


The section is organized as follows: We first introduce some general notations, definitions, and helper lemmas that will be used throughout the proof in this section in~\Cref{app:notation}. In~\Cref{app:CI}, we prove the properties of constructed confidence intervals $\calF_{t,j}^h$, $j=1,\pm2$ and $\calG_t^h$. In~\Cref{app:variance} we prove the key point-wise monotonicity property and also some properties of our chosen variance estimator $\Par{\sigma_t^h}^2$ defined in~\Cref{eq:sigma-def-alg}. In~\Cref{app:approx-error}, we show the approximation error between our constructed values $f_{t,j}^h$, $j=1,\pm2$ with respect to the true expected reward $V_t^h$. In~\Cref{app:regret} we provide the formal proofs for bounding the regret. We refer readers to~\Cref{app:regret-hp} for the complete theorem statement and full proof for the \emph{high-probability} regret bound.

In~\Cref{tbl:notation} we summarize the main notations used in the paper. We also provide the concrete choices of parameters of~\Cref{alg:fitted-Q-simpler} for obtaining the claimed regret bounds in~\Cref{tbl:parameters}.

\subsection{Notations and Preliminaries}\label{app:notation}

Here we briefly give self-contained notations and definitions used throughout the proof. We summarize the main notations used  in~\Cref{tbl:notation} and the specific choice of parameters in~\Cref{tbl:parameters} for easy reference.

\begin{table}[!]
    \centering
    \renewcommand{\arraystretch}{1}
    \begin{tabular}{{c}{c}{c}}
    \toprule
Notation & Meaning &  Remark\\
\midrule
$\calS$, $\calA$ & state space, action space & \\
$t, h$ & $t\in[T]$ trajectory/step, $h\in[H+1]$ level & \\
$r_t^h,x_t^h, a_t^h$ & reward, state and action at step $t$ and level $h$ & \\
$r^h,x^h, a^h$ & random reward, state and action at step $t$ and level $h$ & \\
$z$ & shorthand for state-action pair $(x,a)$ & \\
$\calD_{[t-1]}^h$ & data set $\{(x_s^h,a_s^h,r_s^h, x_s^{h+1})\}_{s\in[t-1]}$ & \\
$\calF^h$ & function class for $h\in[H]$  & Ass.~\ref{ass:eps-realizability-RL}\\
$\calF_\lin^h$ & general linear function class & Eqn.~\eqref{def:linear-F-h}\\
$\calF_\lin^h(\epscov)$ & $\epscov$-cover of general linear function class $\calF_\lin^h$ & Eqn.~\eqref{def:linear-F-h}\\
$\calW$ & bonus function class defined for $\oracle$ & Def.~\ref{def:bonus-conditions}\\
$\epsc$ & error paremeter for bonus oracle &  Eqn.~\eqref{def:linear-F-h}\\
$\calN$ & maximal size of function class $\max_{h\in[H]}|\calF^h|$\\
$\calN_b$ & size of bonus function class $|\calW|$ & Def.~\ref{def:bonus-conditions}\\
$D_{\calF}(z; z_{[t-1]}, \sigma_{[t-1]})$ & $\defeq\sqrt{\sup_{f_1,f_2\in\calF}\frac{\left(f_1(z)-f_2(z)\right)^2}{\sum_{s\in[t-1]}\frac{1}{\sigma_s^2}\left(f_1(z_{s})-f_2(z_s)\right)^2+\lambda}}$ & $\lambda$ param. \\
$\dim_{\alpha,T}(\calF)$ & generalized Eluder dimension defined in~\Cref{def:general-eluder-RL} & $\alpha$ param.\\
$d_\alpha$ & shorthand for $\frac{1}{H}\sum_{h\in[H]}\dim_{\alpha,T}(\calF^h)$ (\Cref{def:general-eluder-RL}) & $\alpha$ param.\\
$f_{t,1}^h$ & optimistic value function at step $t,h$  &\\
$\hat{f}_{t,1}^h$ & solution of fitting weighted regression at step $t,h$  & Eqn.~\eqref{eq:C-update-rule-opti}\\
$\bar{f}_{t,1}^h$ & in $\calF^h$ and $\max_{z^h}\left|\bar{f}_{t,1}^h(z^h) - \E\left[r^h+f_{t,1}^{h+1}(x^{h+1})|z^h\right]\right|\le \epsilon$ & Ass.~\ref{ass:eps-realizability-RL}\\
$\calF_{t,1}^h$ & version space of optimistic value functions at step $t,h$ & Eqn.~\eqref{eq:C-update-rule-opti}\\
$f_{t,2}^h$ & overly optimistic  value function at step $t,h$ &\\
$f_{t,-2}^h$ & overly pessimistic  value function at step $t,h$ &\\
$\hat{f}_{t,\pm2}^h$ & solution of fitting unweighted regression at step $t,h$  &Eqn.~\eqref{eq:C-update-rule-overly}\\
$\bar{f}_{t,\pm2}^h$ & in $\calF^h$ and  $\max_{z^h}\left|\bar{f}_{t,j}^h(z^h) - \E\left[r^h+f_{t,j}^{h+1}(x^{h+1})|z^h\right]\right|\le \epsilon$ & Ass.~\ref{ass:eps-realizability-RL}\\
$\calF_{t,\pm2}^h$ & version space of overly optimistic(pessimistic) value functions at $t,h$ & Eqn.~\eqref{eq:C-update-rule-overly}\\
$\hat{g}_t^h$ & solution of fitting second-moment regression at step $t,h$  & Eqn.~\eqref{eq:C-update-rule-RL-second}\\
$\psi_{t}^h$ & in $\calF^h$ and $\max_{z^h}\left|\psi_{t}^h(z^h) - \E\left[\Par{r^h+f_{t,2}^{h+1}(x^{h+1})}^2|z^h\right]\right|\le \epsilon$ & Ass.~\ref{ass:eps-realizability-RL}\\
$\calG_{t}^h$ & version space of second-moment estimates at $t,h$ & Eqn.~\eqref{eq:C-update-rule-RL-second}\\
$\calE_{t}^h$ & event that $\{\bar{f}_{t,j}^h\in\calF_{t,j}^h~\text{for}~j=1,\pm2~\text{and}~\psi_t^h\in\calG_t^h\}$ & \\
$\calE_{t}, \calE_{\le t}$ & joint event that $\cap_{h\in[H]}\calE_{t}^h$ or $\cap_{s\in[t]}\cap_{h\in[H]}\calE_{s}^h$ & \\
$h_t\in[H+1]$ & random $h$ when starting to take greedy w.r.t $f_{t,2}^h$ at step $t$ & \\
$\calTo,\calToo$ & disjoint subsets of $[T]$ when $h_t= H+1$ or $h_t\in[H] $ & \\
$V_t^h$ &  expected reward during exploration at time $t$ from step $h$ onwards &~\\
$V_\star^h$, $Q_\star^h$  & optimal $V$-value or $Q$-value function at level $h$ & \\
$f_\star^h$ & equivalent to $Q_\star^h$ or $V_\star^h$ depending on the context \\
$\xi_{t,j}^h$ & $\defeq r^{h}_t+f_{t,j}^{h+1}(x^{h+1}_{t})-\E_{r^h,x^{h+1}}\left[r^h+f_{t,j}^{h+1}(x^{h+1})|z_t^h\right]~\text{for}~j=1,\pm2$ & \\
$\xi_{t}^h$ & $\defeq r^{h}_t+V_{t}^{h+1}-\E\left[r^h+V_{t}^{h+1}|z_t^{[h]}, f_{t,1}^{[H]}, f_{t,2}^{[H]}\right]$ & \\
$b_{t,j}^h$ &  bonus term obtained in~\Cref{line:bonus-one,line:bonus-two} using $\oracle$, for $j=1,2$ & Def.~\ref{def:bonus-conditions}\\
$\calT$ & Bellman operator $\calT f (z^h) = \E[r^h+f(x^{h+1})|z^h]$
& \\
$\calT_2$ & second-moment operator $\calT_2 f (z^h) = \E\left[\left(r^h+f(x^{h+1})\right)^2|z^h\right]$
& \\
\bottomrule
\end{tabular}
    \caption{\textbf{Summary of notations.} Here we use $\E[\cdot|z^h] = \E_{r^h, x^{h+1}}[\cdot|z^h]$ for brevity.}
    \label{tbl:notation}
\end{table}

\begin{table}[!t]
    \centering
    \renewcommand{\arraystretch}{1.75}
    \begin{tabular}{{c}{c}{c}}
    \toprule
Parameter & Choice &  Remark\\
\midrule
$\delta$ & $\delta<1/(T+10)$ in~\Cref{thm:regret}, $\delta<1/(H^2+11)$ in~\Cref{thm:regret-hp} & \\
$\delta_{t,h}$ & $\delta/(T+1)(H+1)$ & Eqn.~\eqref{eq:def-delta-and-upsilon-opti}\\
$\epsilon$ & $\epsilon\in[0,1]$, model class misspecification error & Ass.~\ref{ass:eps-realizability-RL}\\
$\epscov$ & error due to taking covering of function class & \\
$\alpha$ & $\sqrt{1/TH}$ & Def.~\ref{def:general-eluder-RL}\\
$\lambda$ & 1 & Def.~\ref{def:general-eluder-RL}\\
$\upsilon(\delta)$ & $\sqrt{\log\frac{\calN^2\left(2\log(4LT/\alpha)+2\right)\left(\log(8L/\alpha^2)+2\right)}{\delta}}$ & Eqn.~\eqref{eq:def-delta-and-upsilon-opti} \\
$\iota(\delta)$ & $3\sqrt{\log\frac{\calN\calN_b\left(2\log(4LT/\alpha)+2\right)\left(\log(8L/\alpha^2)+2\right)}{\delta}}$ & Eqn.~\eqref{eq:def-iota-opti} \\
$\beta_{t,1}^h$ & $\sqrt{\Par{6\sqrt{\lambda}+156}}\cdot\sqrt{\log \frac{ \calN^2 (T+1)(H+1)\left(2\log\frac{4LT}{\alpha}+2\right)\left(\log\frac{8L}{\alpha^2}+2\right)}{\delta}}+\sqrt{\frac{8tL}{\alpha^2}\cdot\epsilon}$ & Eqn.~\eqref{eq:def-beta-opti} \\
$\dot{\iota}(\delta)$ & $\sqrt{2\log\frac{\calN\calN_b\left(2\log(18LT)+2\right)\left(\log(18L)+2\right)}{\delta}}$ & Eqn.~\eqref{eq:def-delta-and-iota-overly}\\
$\beta_{t,2}^h$ & $\sqrt{2\left(24L+21\right)\dot{\iota}^2(\delta_{t,h})+20tL\epsilon}$ & Eqn.~\eqref{eq:def-beta-overly}  \\
$\iota'(\delta)$ & $\sqrt{2\log\frac{\calN\calN_b\left(2\log(32LT)+2\right)\left(\log(32L)+2\right)}{\delta}}$ & Eqn.~\eqref{eq:def-delta-and-iota-second}\\
$\bbeta_t^h$ &  $\sqrt{8(11+9L)\left(\iota'(\delta_{t,h})\right)^2+32tL\epsilon}$ & Eqn.~\eqref{eq:def-beta-second}\\
\multirow{2}{*}{$\left(\sigma^h_t\right)^2$} & $\min\left(4, D_{\calF^h}(z_t^h; z_{[t-1]}^h, \1_{[t-1]}^h)\cdot\left(\sqrt{\left(\bbeta_t^h\right)^2+\lambda}+2L\sqrt{\left(\beta_{t,2}^h\right)^2+\lambda}\right)\right.$ & \multirow{2}{*}{Eqn.~\eqref{eq:sigma-def-alg}} \\
& $\left.+\hat{g}_t^h(z_t^h)-\left(\hat{f}_{t,-2}^h(z_t^{h})\right)^2+2(1+L)\epsilon\right)$ for $t\ge2$ &\\
\multirow{2}{*}{$\bsigma^h_t$} & $\max\left\{\sigma_t^h,\alpha,\sqrt{2}\iota(\delta_{t,h})\sqrt{f_{t,2}^h(z_t^h)-f_{t,-2}^h(z_t^h)},\right.$ & \multirow{2}{*}{Eqn.~\eqref{eq:def-bsigma}}\\
& $\left.\quad\quad\quad\quad\quad\quad 2\Par{\sqrt{\upsilon(\delta_{t,h})}+\iota(\delta_{t,h})}\cdot\sqrt{ D_{\calF^h}(z^{h}_t; z^{h}_{[t-1]},\bsigma^{h}_{[t-1]})}\right\}$ & \\
\bottomrule
\end{tabular}
    \caption{\textbf{Summary of parameter choices.}}
    \label{tbl:parameters}
\end{table}

\paragraph{Iterates and functions.} In general, we use $z = (x,a)$, $z^h = (x^h, a^h)$ and $z_t^h = (z_t^h, a_t^h)$ interchangeably. We also use $f_\star^h\in\calF^h$ to denote either the optimal $Q$-value function $Q_\star^h$ or the optimal $V$-value function $V_\star^h$ when clear from context.

As in the main paper, we will use the notation $\calT f (z^h) = \E_{r^h,x^{h+1}}[r^h+f(x^{h+1})|z^h]$ and also $\calT_2 f (z^h) = \E_{r^h,x^{h+1}}\left[\left(r^h+f(x^{h+1})\right)^2|z^h\right]$  to be the conditional expectation of future values and their second moment under any function $f$ at level $h$ and state-action pair $z^h$. 

We recall the definition of $f_{t,j}$ for $j=1,\pm2$ in~\Cref{alg:fitted-Q-simpler} that 
\begin{equation}\label{eq:def-f-t-j}
\begin{aligned}
    f_{t,1}^h(\cdot) & \defeq \min\left(\hat{f}_{t,1}^h(\cdot)+b_{t,1}^h(\cdot)+\epsilon,1\right),\\
    f_{t,2}^h(\cdot) & \defeq \min\left(\hat{f}_{t,2}^h(\cdot)+2b_{t,1}^h(\cdot)+b_{t,2}^h(\cdot)+3\epsilon,2\right),\\
    f_{t,-2}^h(\cdot) & \defeq \max\left(\hat{f}_{t,-2}^h(\cdot)-b_{t,2}^h(\cdot)-\epsilon,0\right),
\end{aligned}
\end{equation}
where $\hat{f}_{t,j}^h$ is the center of the constructed confidence interval (see next paragraph for concrete definitions) respectively. For each $f_{t,j}^h$ with $j = 1,\pm2$, we will let $\bar{f}_{t,j}^h(\cdot)\in\calF^h$ to approximate the conditional expectation with target $f_{t,j}^{h+1}$, i.e. $\max_{z^h}\left|\bar{f}_{t,j}^h(z^h) -\calT f_{t,j}^{h+1}(z^h)\right|\le \epsilon$ (such $\bar{f}$ exists due to~\Cref{ass:eps-realizability-RL}). Similarly, we will let $\psi_t
^h(\cdot)\in\calF^h$ to approximate the conditional expectation of second moment with target $f_{t,2}^{h+1}$, i.e. $\max_{z^h}\left|\psi_t^h(z^h)-\calT_2 f_{t,1}^{h+1}(z^h)\right|\le \epsilon$ (existence is guaranteed similarly).

\paragraph{Regression and confidence intervals.} We define the following (weighted) regression problems and their induced confidence intervals at each step $t\in[T]$, $h\in[H]$ when   the dataset $\calD_{[t-1]}^h\defeq \{(x_s^h, a_s^h, r_s^h, x_s^{h+1})\}_{s\in[t-1]}$ is given. Throughout we use $\calN \defeq \max_{h\in[H]}|\calF^h|$ and $\calN_b \defeq |\calW|$ to denote the sizes of the function class $\calF^h, h\in[H]$  and the bonus function class $\calW$.

The weighted regression problem for fitting optimistic value functions $f_{t,1}^h$ is:
\begin{equation}\label{eq:C-update-rule-opti}
\begin{aligned}
\hat{f}_{t,1}^h & = \arg\min_{f^h\in\calF^h} \sum_{s\in[t-1]}\frac{\left(f^h\left(x^{h}_s,a^{h}_s\right)-r^{h}_s-f_{t,1}^{h+1}\left(x_s^{h+1}\right)\right)^2}{\left(\bsigma^{h}_s\right)^2},\\
\text{and let}~\calF^h_{t,1} & \defeq \left\{f^h\in\calF^h:\sum_{s\in[t-1]}\frac{1}{\left(\bsigma_s^{h}\right)^2}\left(f^h(x_s^{h}, a_s^{h})-\hat{f}^h_t(x_{s}^{h},a_s^{h})\right)^2\le \left(\beta_{t,1}^h\right)^2\right\}.
\end{aligned}
\end{equation}
The parameters are as follows (for $t\ge 2$): 
\begin{align}
\bsigma^h_t & \defeq \max\left\{\sigma_t^h,\alpha,\sqrt{2}\iota(\delta_{t,h})\sqrt{f_{t,2}^h(z_t^h)-f_{t,-2}^h(z_t^h)},\right.\label{eq:def-bsigma}\\ & \hspace{12em} \left.2\Par{\sqrt{\upsilon(\delta_{t,h})}+\iota(\delta_{t,h})}\cdot\sqrt{ D_{\calF^h}(z^{h}_t; z^{h}_{[t-1]},\bsigma^{h}_{[t-1]})}\right\},\nonumber\\
    \beta_{t,1}^h & \defeq \sqrt{\Par{6\sqrt{\lambda}+156}}\cdot\sqrt{\log \frac{ \calN^2 (T+1)(H+1)\left(2\log\frac{4LT}{\alpha}+2\right)\left(\log\frac{8L}{\alpha^2}+2\right)}{\delta}}+\sqrt{\frac{8tL}{\alpha^2}\cdot\epsilon},\label{eq:def-beta-opti}\\
    \delta_{t,h} & \defeq \frac{\delta}{(T+1)(H+1)}~~~\text{,}~~~\upsilon(\delta_{t,h}) \defeq \sqrt{\log\frac{\calN^2\left(2\log(4LT/\alpha)+2\right)\left(\log(8L/\alpha^2)+2\right)}{\delta_{t,h}}},\label{eq:def-delta-and-upsilon-opti}\\
    & \hspace{7em}~~~\text{and}~~~\iota(\delta_{t,h}) \defeq 3\sqrt{\log\frac{\calN\calN_b\left(2\log(4LT/\alpha)+2\right)\left(\log(8L/\alpha^2)+2\right)}{\delta_{t,h}}}.\label{eq:def-iota-opti}
\end{align}

The unweighted regression for fitting overly optimistic and overly pessimistic value functions is as follows:
\begin{equation}\label{eq:C-update-rule-overly}
\begin{aligned}
\forall~t\in[T], h\in[H],~~~\hat{f}_{t,\pm2}^h & = \arg\min_{f^h\in\calF^h} \sum_{s\in[t-1]}\left(f^h\left(x^{h}_s,a^{h}_s\right)-r^{h}_s-f_{t,\pm2}^{h+1}\left(x_s^{h+1}\right)\right)^2,\\
\text{and we let}~\calF^h_{t,\pm2} & \defeq \left\{f^h\in\calF^h:\sum_{s\in[t-1]}\left(f^h(x_s^{h}, a_s^{h})-\hat{f}^h_{t,\pm2}(x_{s}^{h},a_s^{h})\right)^2\le \left(\beta_{t,2}^h\right)^2\right\}.\\
\end{aligned}
\end{equation}
We choose the parameters as follows:
\begin{align}
\text{Note}~~&~\max_{h\in[H]}\left|\calF^h+2\calW+\calW\right| \le \calN\calN_b^2,\nonumber\\
& \beta_{t,2}^h \defeq \sqrt{2\left(24L+21\right)\dot{\iota}^2(\delta_{t,h})+20tL\epsilon},\label{eq:def-beta-overly}\\
& \delta_{t,h} \defeq \frac{\delta}{ (T+1)(H+1)}~\text{and}~\dot{\iota}(\delta) = \sqrt{2\log\frac{\calN\calN_b\left(2\log(18LT)+2\right)\left(\log(18L)+2\right)}{\delta}}.\label{eq:def-delta-and-iota-overly}
\end{align}

The unweighted regression for fitting second-moment function values is as follows:

\begin{equation}\label{eq:C-update-rule-RL-second}
\begin{aligned}
\forall~t\in[T], h\in[H],~~~\hat{g}_t^h & = \arg\min_{g^h\in\calF^h} \sum_{s\in[t-1]}\left(g^h\left(x^{h}_s,a^{h}_s\right)-\left(r^{h}_s+f_{t,1}^{h+1}\left(x_s^{h+1}\right)\right)^2\right)^2,\\
\text{and similarly let}~\calG^h_t & \defeq \left\{g^h\in\calF^h:\sum_{s\in[t-1]}\left(g^h(x_s^{h}, a_s^{h})-\hat{g}_t^h(x_{s}^{h},a_s^{h})\right)^2\le \left(\bbeta_t^h\right)^2\right\}.
\end{aligned}
\end{equation}
Here we choose parameters:
\begin{align}
    \bbeta_t^h & \defeq \sqrt{8(11L+9)\left(\iota'(\delta_{t,h})\right)^2+32tL\epsilon},\label{eq:def-beta-second}\\
    \delta_{t,h} & \defeq \frac{\delta}{(T+1)(H+1)}~~~\text{and}~~~\iota'(\delta) \defeq\sqrt{2\log\frac{\calN\calN_b\left(2\log(32LT)+2\right)\left(\log(32L)+2\right)}{\delta}}.\label{eq:def-delta-and-iota-second}
\end{align}

In all the definitions above, we note that $\delta_{t,h}$ is independent of $t$ and $h$. We also observe $\sum_{t\in[T],h\in[H]}\delta_{t,h}\le \delta$. We will define the following good (probabilistic) event: $\calE_{t,j}^h\defeq\{\bar{f}_{t,j}^h\in\calF^h_{t,j}\}$ with $j = 1,\pm2$. Further, we let $\bar{\calE}_{t}\defeq\{\psi_t^h\in\calG_t^h\}$. Further, we let $\calE_t^h = \calE_{t,1}^h\cap \calE_{t,2}^h\cap \calE_{t,-2}^h\cap \bar{\calE}_{t}^h$, and use $\calE_t = \cap_{h\in[H]}\calE_{t}^h$ and $\calE_{\le t} = \cap_{t'\le t}\calE_{t'}$ as shorthand for joint events.

\paragraph{Design of exploration policy.} We restate the exploration policy for generating new data trajectory as stated in the main paper, i.e.~\Cref{eq:greedy-policy-informal}. At each iteration $t$, the algorithm collects data using both optimistic sequence $f_{t,1}^h$ and overly optimistic $f_{t,2}^h$. Given a sequence of pre-specified $\{u_t\}_{t\in[T]}$, at each iteration $t$, with function $f_{t,1}^h(\cdot)$ and $f_{t,2}^h(\cdot)$ at hand, we choose actions based on the following rule:
\begin{equation}\label{eq:greedy-policy}
a_t^h =\left\{\begin{array}{cc}
     \argmax_{a\in\cA} f_{t,1}^h(x_t^h,a) & \mbox{if $f_{t,1}^h(x_t^{h'}) \geq f_{t,2}^h(x_t^{h'}) - u_t$ for all $h' \leq h$},\\
    \argmax_{a\in\cA} f_{t,2}^h(x_t^h,a) & \mbox{otherwise},
\end{array}    \right.
\end{equation}

We also use $h_t\in[H+1]$ to denote the (random) threshold at which we first start taking greedy action based on overly optimistic sequence $f_{t,2}$. We divide the iteration set into the disjoint subsets $[T] = \calTo\cup\calToo$ so that
\begin{align}\label{def:calT}
    \calTo \defeq\{t\in[T]: h_t=H+1\}~~\text{and}~~\calToo \defeq\{t\in[T]: h_t\in[H]\}.
\end{align}

Al step $t$, we note the policy induced by our defined exploration rule~\eqref{eq:greedy-policy} at step $t,h$ given state $x_t^h$ depends on $\calD_{[t-1]}^h$ and also the new generated trajectory $\{x_1^h, x_2^h, \cdots, x_{[t-1]}^h, x_t^h\}$. For each $h\in[H]$, we use $V_t$ to denote the expected $V$-value only at the trajectory under the exploration rule at step $t$, formally as follows:
\begin{align}\label{def:explore-V}
   V_t^h\defeq  \E\left[\sum_{h'\ge h}r^{h'}|x_{t}^{[h]}, f_{t,1}^{[H]}, f_{t,2}^{[H]}\right].
\end{align}
The expectation is taken with respect to both model transition and exploration policy.

\paragraph{Other notations.} 
We define the martingale difference sequence so that 
\begin{align*}
    \xi_{t,j}^h & \defeq r^{h}_t+f_{t,j}^{h+1}(x^{h+1}_{t})-\E_{r^h,x^{h+1}}\left[r^h+f_{t,j}^{h+1}(x^{h+1})|z_t^{h}\right],~~\text{for}~j=1,-2,2,\\
    \xi_{t}^h & \defeq r^{h}_t+V_t^{h+1}-\E\left[r^h+V_t^{h+1}|z_t^{[h]}, f_{t,1}^{[H]}, f_{t,2}^{[H]}\right].
\end{align*}

We use the following shorthand of summation of bonus terms.
\begin{align*}\label{def:I-II}
 I & \defeq \sum_{t\in[T]}\sum_{h\in[H]}\min\left(1+L,b_{t,1}^h(z_{t}^h)\right), \\
    II & \defeq \sum_{t\in[T]}\sum_{h\in[H]}\min\left(1+L,b_{t,2}^h(z_{t}^h)\right).
\end{align*}

In the high-probability regret proof in~\Cref{app:regret-hp}, we also define probabilistic event $\calE_{\xid}$, $\calE_{\rV}$, $\calE_{\xi}$, $\calE_{\xi_1}$, $\calE_{\xi_2}$, $\calE_{\xi_{-2}}$, we refer readers to~\Cref{lem:concentration-indicator,coro:sum-variance,lem:freedman-xi} for their concrete meanings.

\paragraph{Helper lemmas.} We now include a few helper lemmas that will be used in multiple parts of the analysis of~\Cref{alg:fitted-Q-simpler}. The first few lemmas characterize the concentration behavior of martingale difference sequence, and are used heavily in~\Cref{app:CI}, and for proving~\Cref{lem:concentration-indicator,lem:freedman-xi}.

\begin{lemma}[Freedman's inequality, cf. Theorem of~\cite{beygelzimer2011contextual}]\label{lem:Freedman}
Let $M,v>0$ be constants, and $\{x_i\}_{i\in[t]}$ be stochastic process adapted to a filtration $\{\calH_i\}_{i\in[t]}$. Suppose $\E[x_i|\calH_{i-1}]=0$,  $|x_i|\le M$ and $\sum_{i\in[t]}\E[x_i^2|\calH_{i-1}]\le V^2$. Then for any $\delta>0$, with probability at least $1-\delta$ we have
\[
\sum_{i\in[t]}x_i\le 2V\sqrt{\log(1/\delta)}+M\log(1/\delta).
\]
\end{lemma}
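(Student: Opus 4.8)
The plan is to prove this by the classical exponential-supermartingale (Chernoff) method for martingale difference sequences, then invert the resulting tail bound. Write $S_n = \sum_{i=1}^n x_i$ and $W_n = \sum_{i=1}^n \E[x_i^2 \mid \calH_{i-1}]$, so that the second-moment hypothesis reads $W_t \le V^2$ deterministically. It suffices to control the upper tail $\P(S_t \ge a)$ and then choose the smallest $a$ for which the bound is at most $\delta$; the claimed inequality is exactly that value of $a$, up to the (loose) constants $2$ and $1$.

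First I would establish the one-step moment-generating bound. Fix $\lambda > 0$ and set $\phi(\lambda) = (e^{\lambda M} - 1 - \lambda M)/M^2$. Since the map $\psi(u) = (e^u - 1 - u)/u^2 = \sum_{k\ge 0} u^k/(k+2)!$ is nondecreasing on $\R$, for every $x \le M$ we have $\psi(\lambda x) \le \psi(\lambda M)$, i.e.\ $e^{\lambda x} \le 1 + \lambda x + \phi(\lambda)\,x^2$. Taking conditional expectations and using $\E[x_i \mid \calH_{i-1}] = 0$ together with $|x_i| \le M$ gives
\[
\E\!\left[e^{\lambda x_i} \mid \calH_{i-1}\right] \le 1 + \phi(\lambda)\,\E[x_i^2\mid\calH_{i-1}] \le \exp\!\left(\phi(\lambda)\,\E[x_i^2\mid\calH_{i-1}]\right),
\]
where the last step is $1 + u \le e^u$. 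Consequently the process $Z_n = \exp\!\left(\lambda S_n - \phi(\lambda) W_n\right)$ satisfies $\E[Z_n \mid \calH_{n-1}] \le Z_{n-1}$, so it is a nonnegative supermartingale with $\E[Z_t] \le \E[Z_0] = 1$.

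Next I would convert this into a tail bound. Because $\phi(\lambda) \ge 0$ and $W_t \le V^2$, we have $Z_t \ge \exp\!\left(\lambda S_t - \phi(\lambda) V^2\right)$, and hence Markov's inequality gives $\P(S_t \ge a) \le \exp\!\left(-\lambda a + \phi(\lambda) V^2\right)$ for every $\lambda > 0$. Using the elementary inequality $\phi(\lambda) \le \tfrac{\lambda^2/2}{1 - \lambda M/3}$ (valid for $0 \le \lambda M < 3$) and taking $\lambda = a/(V^2 + Ma/3)$, which satisfies $\lambda M < 3$ automatically, recovers the Bernstein form
\[
\P(S_t \ge a) \le \exp\!\left(-\frac{a^2}{2\,(V^2 + Ma/3)}\right).
\]
Setting the right-hand side equal to $\delta$ and writing $L = \log(1/\delta)$ yields the quadratic $a^2 - \tfrac{2LM}{3}a - 2LV^2 = 0$, whose positive root is $a = \tfrac{LM}{3} + \sqrt{\tfrac{L^2M^2}{9} + 2LV^2}$. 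Applying $\sqrt{x+y} \le \sqrt{x} + \sqrt{y}$ bounds this by $V\sqrt{2L} + \tfrac{2M}{3}L \le 2V\sqrt{\log(1/\delta)} + M\log(1/\delta)$, which is the claimed bound.

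I expect the main obstacle to be the final optimization/inversion step rather than the supermartingale construction, which is routine. The tail exponent $-\lambda a + \phi(\lambda) V^2$ interpolates between a sub-Gaussian regime (small $a$, where the $V\sqrt{\log(1/\delta)}$ term dominates) and a sub-exponential regime (large $a$, where the $M\log(1/\delta)$ term dominates), so one must either carry out the two-regime Bernstein optimization carefully or, as above, use the $\sqrt{x+y}\le\sqrt{x}+\sqrt{y}$ splitting to linearize the root cleanly. The looseness of the stated constants is helpful here: the sharper Bernstein constants $\sqrt{2}$ and $2/3$ are simply relaxed to $2$ and $1$, so no delicate tracking of constants is needed.
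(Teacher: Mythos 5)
Your proof is correct and is exactly the standard argument behind this result: the paper itself offers no proof, citing \citet{beygelzimer2011contextual}, and the proof there (going back to Freedman) is the same exponential-supermartingale construction with the Bernstein bound $\phi(\lambda)\le \frac{\lambda^2/2}{1-\lambda M/3}$, the choice $\lambda = a/(V^2+Ma/3)$, and inversion of the tail. Your inversion step checks out as well, since the positive root $\frac{LM}{3}+\sqrt{\frac{L^2M^2}{9}+2LV^2}$ is indeed dominated by $2V\sqrt{\log(1/\delta)}+M\log(1/\delta)$ and the tail exponent is monotone in $a$, so nothing is missing.
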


\begin{lemma}[Unbounded version of Freedman's inequality, cf.~\cite{dzhaparidze2001bernstein} and~\cite{fan2017martingale}]\label{lem:Freedman-variant}
Let $\{x_i\}_{i\in[t]}$ be adapted to filtration $\{\calH_i\}_{i\in[t]}$. Suppose $\E[x_i|\calH_{i-1}]=0$ and $\E[x_i^2|\calH_{i-1}]<\infty$ almosy surely. Then for any $a$, $v$, $y>0$ we have
\[
\P\left(\sum_{i\in[t]}x_i>a,~\sum_{i\in[t]}\left(\E[x_i^2|\calH_{i-1}]+x_i^2\cdot\1_{\{|x_i|>y\}}\right)<v^2\right)\le \exp\left(\frac{-a^2}{2(v^2+ay/3)}\right).
\]
\end{lemma}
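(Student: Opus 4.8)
The plan is to prove the bound by the exponential (Chernoff) method, building a supermartingale whose exponent couples the running sum with the exact variance proxy $W_k := \sum_{i\le k}\big(\E[x_i^2\mid\calH_{i-1}] + x_i^2\1_{\{|x_i|>y\}}\big)$ appearing in the statement. I fix $\lambda\in(0,3/y)$ and set $c := \frac{\lambda^2}{2(1-\lambda y/3)}$, and the core claim is that $Z_k:=\exp\big(\lambda\sum_{i\le k}x_i - c\,W_k\big)$ satisfies $\E[Z_t]\le1$. Granting this, the conclusion is the standard Bernstein endgame: on the event $\{\sum_i x_i>a\}\cap\{W_t<v^2\}$ one has $\lambda\sum_i x_i - cW_t > \lambda a - cv^2$, so Markov's inequality bounds the intersection probability by $e^{-(\lambda a - cv^2)}\E[Z_t]\le \exp\big(-\lambda a + \tfrac{\lambda^2 v^2}{2(1-\lambda y/3)}\big)$; minimizing over $\lambda$ at $\lambda^\ast = a/(v^2+ay/3)$, which lies in $(0,3/y)$ because $ay<3v^2+ay$, yields the claimed $\exp\big(-a^2/(2(v^2+ay/3))\big)$ by the usual Bernstein optimization.

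Establishing $\E[Z_t]\le1$ is the crux, and the main obstacle is that $W_k$ contains the term $x_i^2\1_{\{|x_i|>y\}}$, which is \emph{not} $\calH_{i-1}$-measurable and therefore cannot be factored out of the conditional expectation in the way boundedness is exploited in the classical proof of \Cref{lem:Freedman}. I would resolve this through a single pointwise inequality valid for all real $u$ and $\lambda\in(0,3/y)$,
\[
\exp\big(\lambda u - c\,u^2\1_{\{u>y\}}\big)\ \le\ 1 + \lambda u + c\,u^2 ,
\]
proved by splitting on $u\le y$ versus $u>y$. For $u\le y$ the indicator vanishes and this is the elementary bound $e^{\lambda u}\le 1+\lambda u+cu^2$ (using $\sum_{k\ge0}2(\lambda u)^k/(k+2)!\le(1-\lambda y/3)^{-1}$ when $0<u\le y$, and $e^{w}\le 1+w+w^2/2$ when $w\le0$); for $u>y$ the Gaussian discount $e^{-cu^2}$ dominates and a short case analysis closes the bound. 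Substituting $u=x_k$, taking $\E[\cdot\mid\calH_{k-1}]$, and using $\E[x_k\mid\calH_{k-1}]=0$ gives $\E[\exp(\lambda x_k - c\,x_k^2\1_{\{x_k>y\}})\mid\calH_{k-1}]\le 1+c\,\E[x_k^2\mid\calH_{k-1}]\le \exp(c\,\E[x_k^2\mid\calH_{k-1}])$. Since $x_k^2\1_{\{x_k>y\}}\le x_k^2\1_{\{|x_k|>y\}}$, this upgrades to the one-step inequality $\E[\exp(\lambda x_k - c(W_k-W_{k-1}))\mid\calH_{k-1}]\le1$, and telescoping the conditional expectations inward from $k=t$ down to $k=1$ gives the supermartingale bound $\E[Z_t]\le1$.

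Two points deserve care in the write-up. First, one must justify iterating the conditional expectations — the discount $e^{-c\,u^2\1}$ keeps all conditional exponential moments finite using only the hypothesis $\E[x_i^2\mid\calH_{i-1}]<\infty$, so no extra integrability assumption is needed. Second, the two-sided indicator in the statement enters only through the harmless step $x_k^2\1_{\{x_k>y\}}\le x_k^2\1_{\{|x_k|>y\}}$, so large negative increments require no separate analysis. As an alternative I would remark that the bound also follows from the bounded case \Cref{lem:Freedman} via a centered $y$-truncation $x_i\mapsto\bar x_i$, with the discarded mass controlled on $\{W_t<v^2\}$ through $\sum_i|x_i|\1_{\{|x_i|>y\}}\le y^{-1}\sum_i x_i^2\1_{\{|x_i|>y\}}<v^2/y$; this reduction is valid but loses constants, so I would favor the direct supermartingale argument.
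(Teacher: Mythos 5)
Your proof is correct. Note first that the paper itself supplies no proof of \Cref{lem:Freedman-variant}: it is imported by citation from \citet{dzhaparidze2001bernstein} and \citet{fan2017martingale}. What you have written is, in substance, a reconstruction of the standard argument in \citet{fan2017martingale}, whose key lemma is exactly your one-step bound $\E\bigl[\exp\bigl(\lambda x_k - c\,x_k^2\1_{\{x_k>y\}} - c\,\E[x_k^2\mid\calH_{k-1}]\bigr)\mid\calH_{k-1}\bigr]\le 1$ with $c=\lambda^2/\bigl(2(1-\lambda y/3)\bigr)$, followed by Markov and the Bernstein optimization $\lambda^\ast=a/(v^2+ay/3)$, which you correctly verify lies in $(0,3/y)$ and yields the stated exponent. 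Your handling of the two delicate points is also right: the tower-property iteration needs no integrability hypothesis beyond nonnegativity of $Z_k$ (so $\E[x_i^2\mid\calH_{i-1}]<\infty$ suffices), and the two-sided indicator in the variance proxy enters only through the one-sided domination $x_k^2\1_{\{x_k>y\}}\le x_k^2\1_{\{|x_k|>y\}}$, so no separate treatment of large negative increments is needed.

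The one place where your write-up is thin is the case $u>y$ of the pointwise inequality $\exp\bigl(\lambda u - c\,u^2\1_{\{u>y\}}\bigr)\le 1+\lambda u+cu^2$, which you dismiss as "a short case analysis." It does close, and here is the cleanest route to include: since $u^2>0$, the left side is decreasing and the right side increasing in $c$, so it suffices to prove the inequality at the minimal value $c=\lambda^2/2$; substituting $w=\lambda u>0$ reduces it to $e^{w-w^2/2}\le 1+w+w^2/2$, which follows since $h(w)=\ln(1+w+w^2/2)-w+w^2/2$ satisfies $h(0)=0$ and
\begin{equation*}
h'(w)=\frac{w+w^2/2+w^3/2}{1+w+w^2/2}\ge 0\quad\text{for }w\ge0.
\end{equation*}
With that step spelled out, your direct supermartingale proof is complete and self-contained; your alternative reduction to \Cref{lem:Freedman} via centered $y$-truncation would indeed also work but, as you say, degrades the constants, so the direct argument is the right choice.
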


\begin{corollary}\label{coro:Freedman-variant}
Let $M>0,V>v>0$ be constants, and $\{x_i\}_{i\in[t]}$ be stochastic process adapted to a filtration $\{\calH_i\}_{i\in[t]}$. Suppose $\E[x_i|\calH_{i-1}]=0$,  $|x_i|\le M$ and $\sum_{i\in[t]}\E[x_i^2|\calH_{i-1}]\le V^2$ almost surely. Then for any $\delta,\epsilon>0$, let $\iota = \sqrt{\log\frac{\left(2\log(V/v)+2\right)\cdot\left(\log(M/m)+2\right)}{\delta}}$ we have
\[
    \P\left(\sum_{i\in[t]}x_i>\iota\sqrt{2\left(2\sum_{i\in[t]}\E[x_i^2|\calH_{i-1}]+ v^2\right)}+\frac{2}{3}\iota^2\left(2\max_{i\in[t]}|x_i|+m\right) \right)\le \delta.
\]
\end{corollary}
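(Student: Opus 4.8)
The plan is to obtain the self-normalized tail bound by \emph{peeling}: I apply the unbounded Freedman inequality (Lemma~\ref{lem:Freedman-variant}) on a two-dimensional geometric grid of its two free parameters — the variance budget $v$ and the truncation level $y$ — and then take a union bound. Throughout, write $W^2 \defeq \sum_{i\in[t]}\E[x_i^2|\calH_{i-1}]$ and $X_{\max}\defeq \max_{i\in[t]}|x_i|$. The difficulty is that Lemma~\ref{lem:Freedman-variant} controls the tail only for \emph{fixed} parameters $v,y$, whereas $W^2$ and $X_{\max}$ are random, so the whole point is to cover all of their possible realizations simultaneously while paying only the logarithmic grid count inside $\iota$.

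First I would set up the grids. Using a dyadic grid $v_j^2 = v^2 2^j$ for $j=0,\dots,J$ with $J=\lceil\log_2(V^2/v^2)\rceil$ (so that $v_J^2 > V^2 \ge W^2$) and $y_k = m\,2^k$ for $k=0,\dots,K$ with $K=\lceil\log_2(M/m)\rceil$ (so that $y_K \ge M \ge X_{\max}$), the number of cells is $N=(J+1)(K+1)$, which is of the order of the grid count $(2\log(V/v)+2)(\log(M/m)+2)$ appearing inside $\iota$; I set $\iota^2 = \log(N/\delta)$. The floors $v^2$ and $m$ handle the regime where $W^2$ or $X_{\max}$ is small, and matching the precise constant in the grid count is where routine constant bookkeeping lives.

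For each cell $(j,k)$ I would invoke Lemma~\ref{lem:Freedman-variant} with $v=v_j$, $y=y_k$, and deviation level
\[
a_{j,k} \defeq \iota\sqrt{2v_j^2} + \tfrac{2}{3}\iota^2 y_k .
\]
A short quadratic computation shows this makes the Freedman exponent at most $-\iota^2$: the positive root of $a^2 - \tfrac{2}{3}\iota^2 y_k\,a - 2\iota^2 v_j^2 = 0$ is at most $\iota\sqrt{2v_j^2}+\tfrac23\iota^2 y_k$ (using $\sqrt{B^2+4C}\le B+2\sqrt{C}$), so $a_{j,k}^2 \ge 2\iota^2(v_j^2 + a_{j,k}y_k/3)$, whence Lemma~\ref{lem:Freedman-variant} gives
\[
\P\Bigl(\textstyle\sum_i x_i > a_{j,k},\ \sum_i\bigl(\E[x_i^2|\calH_{i-1}] + x_i^2\1_{|x_i|>y_k}\bigr) < v_j^2\Bigr) \le e^{-\iota^2} = \delta/N .
\]
Then I would assemble the union bound. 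On the event that $\sum_i x_i$ exceeds the corollary's right-hand side $a\defeq \iota\sqrt{2(2W^2+v^2)}+\tfrac23\iota^2(2X_{\max}+m)$, pick the smallest grid indices $j^\star,k^\star$ with $v_{j^\star}^2 > W^2$ and $y_{k^\star}\ge X_{\max}$. By the dyadic spacing $v_{j^\star}^2 \le 2W^2+v^2$ and $y_{k^\star}\le 2X_{\max}+m$, so $a_{j^\star,k^\star}\le a$; moreover $y_{k^\star}\ge X_{\max}$ forces every indicator $\1_{|x_i|>y_{k^\star}}$ to vanish, so the effective variance equals $W^2 < v_{j^\star}^2$. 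Hence every such realization lies in the cell-$(j^\star,k^\star)$ bad event, giving $\{\sum_i x_i > a\}\subseteq \bigcup_{j,k}\{\,\cdots\,\}$, and the union bound over the $N$ cells yields probability $\le N\cdot(\delta/N)=\delta$.

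The main obstacle is precisely this self-normalization step: upgrading a fixed-parameter Freedman bound to one valid simultaneously over the random realized variance and range. The crux is the observation that choosing $y_{k^\star}\ge X_{\max}$ eliminates all truncation terms, so the lemma's ``effective variance'' collapses to the genuine conditional variance $W^2$, which the variance grid then dominates at level $v_{j^\star}^2 \le 2W^2+v^2$. The remaining work — the quadratic threshold computation and checking that the dyadic grid counts are absorbed into the logarithmic factors defining $\iota$ — is routine and I would not belabor it.
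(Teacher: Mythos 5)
Your proof is correct and follows exactly the route the paper intends: the corollary is obtained from Lemma~\ref{lem:Freedman-variant} by peeling over a dyadic grid in the variance level and the truncation level, with the grid count absorbed into $\iota$ and the doubling factors producing the $2\sum_{i}\E[x_i^2|\calH_{i-1}]+v^2$ and $2\max_i|x_i|+m$ terms, precisely as you argue. Your key observations — that choosing $y_{k^\star}\ge\max_i|x_i|$ kills the truncation indicators so the lemma's effective variance collapses to the true conditional variance, and the quadratic-root check ensuring the exponent is at most $-\iota^2$ — are the substance of the argument, and the remaining constant bookkeeping is routine as you say.
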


Next, we state some helper lemmas on law of total variance, which are used in proving~\Cref{coro:sum-variance}.

\begin{proposition}[Law of total variance, LTV]\label{prop:LTV-1}
Suppose at step $t$, we use policy $\pi_t$ and have value function following such policy as $\{V_{\pi_t}^h\}_{h\in[H]}$, then by law of total variance we have 
\[
\rV\left[\sum_{h\in[H]}r^h_t\right] = \E\left[\sum_{h\in[H]}\rV\left[r^h_t+V_{\pi_t}^{h+1}|z_t^{[h]}, f_{t,1}^{[H]}, f_{t,2}^{[H]}\right]\right]\le 1.
\]
\end{proposition}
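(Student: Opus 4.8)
The plan is to realize the stated identity as the orthogonality of martingale differences along the explored trajectory, and then to read off the bound $\le 1$ from the sparse-reward normalization. Fix the episode $t$ and condition throughout on the initial state $x_t^1$, so that $V_{\pi_t}^1$ is a deterministic constant. Let $\{\calH_h\}_{h\in[H]}$ be the natural filtration of the (non-Markovian) trajectory, where $\calH_h$ records the prefix $z_t^{[h]}$ of state--action pairs together with the past rewards $r_t^{[h-1]}$, i.e.\ everything determined up to and including the choice of $a_t^h$ under the exploration rule~\eqref{eq:greedy-policy}. I would introduce the cumulative-reward-plus-value-to-go process
\[
M_h \defeq \sum_{h'\le h} r_t^{h'} + V_{\pi_t}^{h+1},\qquad h=0,1,\dots,H,
\]
with the convention $V_{\pi_t}^{H+1}\equiv 0$, so that $M_0 = V_{\pi_t}^1$ is constant and $M_H = \sum_{h\in[H]} r_t^h$ is the total return (here $V_{\pi_t}^h$ is the exploration value $V_t^h$ of~\eqref{def:explore-V}).

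The first step is to identify the increments with the mean-zero terms $\xi_t^h$ from the notation table. Indeed $M_h - M_{h-1} = r_t^h + V_{\pi_t}^{h+1} - V_{\pi_t}^h$, and the one-step recursion $V_{\pi_t}^h = \E[r_t^h + V_{\pi_t}^{h+1}\,|\,z_t^{[h]}, f_{t,1}^{[H]}, f_{t,2}^{[H]}]$ that follows directly from~\eqref{def:explore-V} gives $M_h - M_{h-1} = \xi_t^h$ with $\E[\xi_t^h\,|\,\calH_h]=0$. Telescoping yields $\sum_{h\in[H]} r_t^h = V_{\pi_t}^1 + \sum_{h\in[H]} \xi_t^h$, and since $\{\xi_t^h\}$ is a martingale-difference sequence adapted to $\{\calH_h\}$ its terms are pairwise orthogonal ($\E[\xi_t^h\xi_t^{h'}]=0$ for $h\ne h'$ by pulling out the inner conditional expectation). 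Because $V_{\pi_t}^1$ is constant, this gives
\[
\rV\Big[\sum_{h\in[H]} r_t^h\Big] = \sum_{h\in[H]} \E\big[(\xi_t^h)^2\big] = \E\Big[\sum_{h\in[H]} \rV\big[r_t^h + V_{\pi_t}^{h+1}\,\big|\, z_t^{[h]}, f_{t,1}^{[H]}, f_{t,2}^{[H]}\big]\Big],
\]
where the last equality uses $\E[(\xi_t^h)^2\,|\,\calH_h] = \rV[r_t^h + V_{\pi_t}^{h+1}\,|\,z_t^{[h]},\dots]$ together with the tower property; this is precisely the claimed identity.

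For the inequality I would invoke the sparse-reward assumption $\sum_{h\in[H]} r_t^h \in [0,1]$ almost surely: writing $R=\sum_{h\in[H]} r_t^h$, the elementary fact $R^2 \le R$ on $[0,1]$ gives $\rV[R] \le \E[R^2] \le \E[R] \le 1$. I expect the only delicate point to be the bookkeeping of the filtration, namely making the conditioning $z_t^{[h]}$ precise for the non-Markovian exploration rule and correctly handling the initial state; conditioning on $x_t^1$ (so that $M_0=V_{\pi_t}^1$ is deterministic and no residual $\rV[V_{\pi_t}^1]$ term appears) is exactly what makes the decomposition exact, after which the martingale orthogonality and the final $\le 1$ estimate are routine.
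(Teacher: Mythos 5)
Your proof is correct and is essentially the argument the paper intends: the paper asserts Proposition~\ref{prop:LTV-1} without an explicit proof as a standard law-of-total-variance fact, and your martingale-difference telescoping of $M_h = \sum_{h'\le h} r_t^{h'} + V_{\pi_t}^{h+1}$ is precisely the device the paper uses in its proof of the adapted version (the $(\star)$ step in Proposition~\ref{prop:LTV-2}), with the final bound following from $\sum_{h\in[H]} r_t^h \in [0,1]$ exactly as you argue. Your handling of the delicate point is also the right reading: the equality only holds with the variance taken conditionally on $x_t^1$ (and the episode's history, which fixes $f_{t,1}^{[H]}, f_{t,2}^{[H]}$ and hence the deterministic $a_t^1$), since unconditionally an extra $\rV\bigl[V_{\pi_t}^1(x_t^1)\bigr]$ term would appear and only the inequality $\le 1$ would survive.
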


We now provide an adapted version of LTV applying to $f_{t,1}$, which generates our greedy policy only when 
$t\in\calTo$ based on the exploration rule as in~\Cref{eq:greedy-policy}. 

\begin{proposition}[Adapted version using LTV]\label{prop:LTV-2}
Suppose at step $t$, the agent explores based on rule~\eqref{eq:greedy-policy}. Then conditioning on the past $\calH_{t-1}^H =\sigma(x_1^{1},r_1^1, x_1^2,\cdots, r^{H}_1,x^{H+1}_1; x_2^{1},r_2^1, x_2^2,\cdots;\cdots;x_{t-1}^1,r_{t-1}^1,\cdots, r_{t-1}^{H},x_{t-1}^{H+1})$, we have 
\begin{align*}
& \rE \left[\sum_{h=1}^H \rV_{r^h, x^{h+1}} [r^h+f_{t,1}^{h+1}(x^{h+1})| z_t^{h}]~|~\calH_{t-1}^H\right]\leq 2 \rE\left[ \left(\sum_{h=1}^H r^h - f_{t,1}^1(x_t^1)\right)^2~|~\calH_{t-1}^H\right]\\
& \hspace{12em}+ 2 \rE \left[\left(\1(t \in \calTo)\sum_{h=1}^H \left(f_{t,1}^h(x_t^h,a_t^h)- \rE _{r^h, x^{h+1}}[r^h+f_{t,1}^{h+1}(x^{h+1})|z_t^{h}]\right)\right)^2~|~\calH_{t-1}^H\right]\\
& \hspace{12em} + 2 \rE\left[ \left(\1(t \in \calToo)\sum_{h=1}^H \left(f_{t,1}^h(x_t^h)- \rE_{r^h, x^{h+1}} [r^h+f_{t,1}^{h+1}(x^{h+1})|z_t^{h}]\right)\right)^2~|~\calH_{t-1}^H\right].
\end{align*}
\end{proposition}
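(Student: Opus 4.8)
The plan is to recognize the left-hand side as the conditional second moment of a martingale sum, and then telescope that sum in two different ways according to whether the round $t$ falls into $\calTo$ or $\calToo$.

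First I would set up the within-trajectory filtration $\calG_h \defeq \sigma(\calH_{t-1}^H, x_t^1, a_t^1, r_t^1, \ldots, x_t^h, a_t^h)$. Given $\calH_{t-1}^H$, all functions $f_{t,1}^{h'}$ are deterministic (they are computed from episodes $1,\ldots,t-1$), so by the Markov property $\rE[\xi_{t,1}^h \mid \calG_h] = 0$ and $\rE[(\xi_{t,1}^h)^2 \mid \calG_h] = \rV_{r^h, x^{h+1}}[r^h + f_{t,1}^{h+1}(x^{h+1}) \mid z_t^h]$, i.e. $\{\xi_{t,1}^h\}_{h\in[H]}$ is a martingale difference sequence adapted to $\{\calG_h\}$. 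By orthogonality of martingale differences the cross terms vanish, which yields the identity
\[
\rE\Big[\sum_{h=1}^H \rV_{r^h, x^{h+1}}[r^h + f_{t,1}^{h+1}(x^{h+1}) \mid z_t^h] \,\big|\, \calH_{t-1}^H\Big] = \rE\Big[\big(\textstyle\sum_{h=1}^H \xi_{t,1}^h\big)^2 \,\big|\, \calH_{t-1}^H\Big].
\]
This reduces the claim to bounding the conditional second moment of $\sum_h \xi_{t,1}^h$ by the three terms on the right.

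The key step is to telescope $\sum_h \xi_{t,1}^h$. Adding and subtracting the $V$-value $f_{t,1}^h(x_t^h)$ and using $f_{t,1}^{H+1} \equiv 0$, the telescope
\[
\sum_{h=1}^H \big(r_t^h + f_{t,1}^{h+1}(x_t^{h+1}) - f_{t,1}^h(x_t^h)\big) = \sum_{h=1}^H r_t^h - f_{t,1}^1(x_t^1)
\]
holds for any action sequence, giving $\sum_h \xi_{t,1}^h = \big(\sum_h r_t^h - f_{t,1}^1(x_t^1)\big) + \sum_h \big(f_{t,1}^h(x_t^h) - \calT f_{t,1}^{h+1}(z_t^h)\big)$. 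For rounds $t \in \calTo$, the data-collection rule~\eqref{eq:greedy-policy} is greedy for $f_{t,1}$ at every level, so $f_{t,1}^h(x_t^h, a_t^h) = f_{t,1}^h(x_t^h)$; hence I can instead add and subtract the $Q$-value $f_{t,1}^h(x_t^h, a_t^h)$, obtaining the identical leading term $\sum_h r_t^h - f_{t,1}^1(x_t^1)$ but with a chosen-action Bellman residual $\sum_h \big(f_{t,1}^h(x_t^h, a_t^h) - \calT f_{t,1}^{h+1}(z_t^h)\big)$. I therefore use the $Q$-value decomposition on $\calTo$ and the $V$-value decomposition on $\calToo$.

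Finally, applying $(a+b)^2 \le 2a^2 + 2b^2$ to the appropriate decomposition, taking $\rE[\cdot \mid \calH_{t-1}^H]$, and attaching the indicators $\1(t\in\calTo)$ and $\1(t\in\calToo)$ (exactly one of which is nonzero for each fixed $t$) produces precisely the three stated terms; combining with the orthogonality identity above completes the proof. The main obstacle is conceptual rather than computational: one must recognize that the greedy structure of $\calTo$ rounds is exactly what makes the $Q$-value and $V$-value telescopes coincide there, so that $\calTo$ rounds can be freely attributed to the $Q$-value term (convenient downstream, where the chosen-action residual is controlled by the bonus $b_{t,1}^h$) while $\calToo$ rounds are attributed to the $V$-value term. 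Verifying that both telescopes share the identical leading term $\sum_h r_t^h - f_{t,1}^1(x_t^1)$, independent of the policy, is the crux.
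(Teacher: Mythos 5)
Your proposal is correct and takes essentially the same route as the paper's proof: the law-of-total-variance/martingale-orthogonality identity reducing the sum of conditional variances to $\rE\bigl[\bigl(\sum_{h=1}^H \xi_{t,1}^h\bigr)^2 \mid \calH_{t-1}^H\bigr]$, the telescoping decomposition into the leading term $\sum_h r_t^h - f_{t,1}^1(x_t^1)$ plus Bellman residuals, the observation that $f_{t,1}^h(x_t^h,a_t^h) = f_{t,1}^h(x_t^h)$ on $\calTo$ rounds so the $Q$- and $V$-value residuals coincide there, and finally $(a+b)^2 \le 2a^2+2b^2$ with the disjoint indicators. Your explicit justification of the martingale-difference property via the within-trajectory filtration $\calG_h$ is a nice touch that the paper leaves implicit in its ``by conditional expectation and law of total variance'' step.
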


\begin{proof}
We use $\rE[\cdot|z_t^h] = \rE_{r^h, x^{h+1}}[\cdot|z_t^h]$ and $\rV[\cdot|z_t^h] = \rV_{r^h, x^{h+1}}[\cdot|z_t^h]$ for simplicity. By conditional expectation and law of total variance that
\begin{align*}
&  \rE \left[\sum_{h=1}^H \rV [r^h+f_{t,1}^{h+1}(x^{h+1})| z_t^{h}]~|~\calH_{t-1}^H\right]= \rE \left[\left(\sum_{h=1}^H \left(r_t^h+f_{t,1}^{h+1}(x_t^{h+1}) - \rE [r^h+ f_{t,1}^{h+1}(x^{h+1})|z_t^{h}]\right)\right)^2~|~\calH_{t-1}^H\right] \\
& \hspace{3em} \stackrel{(\star)}{=} \rE\bigg[ \bigg(\sum_{h=1}^H r_t^h - f_{t,1}^1(x_t^1) + \sum_{h=1}^H \1_{\{t\in\calTo\}}\left(f_{t,1}^{h}(x_t^{h},a_t^h) - \rE [r^h + f_{t,1}^{h+1}(x^{h+1})|z_t^{h}]\right)\\
& \hspace{13em} +\sum_{h=1}^H \1_{\{t\in\calToo\}}\left(f_{t,1}^{h}(x_t^{h}) - \rE [r^h + f_{t,1}^{h+1}(x^{h+1})|z_t^{[h]}, f_{t,1}^{[H]}, f_{t,2}^{[H]}]\right)\bigg)^2~|~\calH_{t-1}^H\bigg] \\
& \hspace{3em} \le 2\rE \bigg[\bigg(\sum_{h=1}^H r_t^h - f_{t,1}^1(x_t^1)\bigg)^2~|~\calH_{t-1}^H\bigg]\\
& \hspace{10em} + 2\rE\left[\left(1_{\{t\in\calTo\}}\sum_{h=1}^H \left(f_{t,1}^{h}(x_t^{h},a_t^h) - \rE [r^h+f_{t,1}^{h+1}(x^{h+1})|z_t^{h}]\right)\right)^2~|~\calH_{t-1}^H\right]\\
& \hspace{10em} + 2\rE\left[\left(1_{\{t\in\calToo\}}\sum_{h=1}^H \left(f_{t,1}^{h}(x_t^{h}) - \rE [r^h+f_{t,1}^{h+1}(x^{h+1})|z_t^{h}]\right)\right)^2~|~\calH_{t-1}^H\right],
\end{align*}
here we use  $(\star)$ the fact that whenever $t\in\calTo$, we have $f_{t,1}^h(x_t^h, a_t^h) = f_{t,1}^h(x_t^h)$.
\end{proof}

\subsection{Confidence Intervals' Properties}\label{app:CI}

In this subsection, we justify the choices of parameters $\beta_{t,j}^h$ and $\bbeta_t^h$ in constructing our confidence intervals $\calF_{t,j}^h$ for $j=1,\pm2$ and $\calG_t^h$. We show with high probability under moderate assumptions, it holds that $\bar{f}_{t,j}\in\calF_{t,j}^h$ for $j=1,\pm2$ and $\psi_t^h\in\calG_t^h$ using martingale concentration. We write $\rV[\cdot|z_t^h]=\rV_{r^h, x^{h+1}}[\cdot|z_t^h]$ and $\E[\cdot|z_t^h] = \E_{r^h, x^{h+1}}[\cdot|z_t^h]$ where the randomness is taken with respect to only $r^h$ and $x^{h+1}$ due to model transition by abusing notation, when the meaning is clear from context.

\paragraph{Confidence interval of optimistic sequence.} This paragraph proves the property of the optimistic confidence interval we construct for $Q_\star^h$.

\begin{lemma}\label{lem:confidence-interval-RL}
At step $t\in[T]$ and horizon $h\in[H]$, suppose 
\begin{align}
   &~ (\sigma_s^{h})^2\ge \rV\left[r^{h}+f_\star^{h+1}(x^{h+1})|z_s^h\right]~&~\text{for all}~~s\in[t-1],\label{eq:CI-RL-conds-I}\\
    \text{and}~&~\E\bigl[|f_{t,1}^{h+1}(x^{h+1})-f_{\star}^{h+1}(x^{h+1})|~|~z_s^h\bigr] \le f_{s,2}^h(z_s^{h})- f_{s,-2}^h(z_s^h),~&~\text{for all}~~s\in[t-1],z_s^{h}\in\calS\times\calA.\label{eq:CI-RL-conds-II}
\end{align}
Recalling that $\bar{f}^h_{t,1}(x^{h}, a^{h})\in\calF^h$ as some function such that $|\bar{f}^h_{t,1}(z^h) - \calT f_{t,1}^{h+1}(z^h)|\le \epsilon$ for all $z^h = (x^h, a^h)$, we have with probability $1-2\delta_{t,h}$, it holds that $\bar{f}_{t,1}^h\in\calF^h_{t,1}$ for the constructed $\calF^h_{t,1}$ based on the definition of confidence interval in~\eqref{eq:C-update-rule-opti} and $\beta_{t,1}^h$ in~\eqref{eq:def-beta-opti}.
\end{lemma}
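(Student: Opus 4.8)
The plan is to prove the equivalent statement that
\[
W^2 \defeq \sum_{s\in[t-1]}\frac{1}{(\bsigma_s^h)^2}\bigl(\hat{f}_{t,1}^h(z_s^h)-\bar{f}_{t,1}^h(z_s^h)\bigr)^2 \le (\beta_{t,1}^h)^2,
\]
which is exactly the membership $\bar{f}_{t,1}^h\in\calF_{t,1}^h$. First I would invoke optimality of the weighted least-squares solution $\hat{f}_{t,1}^h$ against the competitor $\bar{f}_{t,1}^h$ to obtain the standard basic inequality
\[
W^2 \le 2\sum_{s\in[t-1]}\frac{1}{(\bsigma_s^h)^2}\bigl(\hat{f}_{t,1}^h(z_s^h)-\bar{f}_{t,1}^h(z_s^h)\bigr)\bigl(r_s^h + f_{t,1}^{h+1}(x_s^{h+1}) - \bar{f}_{t,1}^h(z_s^h)\bigr).
\]
Writing $r_s^h + f_{t,1}^{h+1}(x_s^{h+1}) - \bar{f}_{t,1}^h(z_s^h) = \xi_{s,1}^h + \bigl(\calT f_{t,1}^{h+1}(z_s^h) - \bar{f}_{t,1}^h(z_s^h)\bigr)$ splits the right-hand side into a mean-zero martingale part driven by the innovation $\xi_{s,1}^h = r_s^h + f_{t,1}^{h+1}(x_s^{h+1}) - \calT f_{t,1}^{h+1}(z_s^h)$ and an $\epsilon$-bias term controlled via $|\calT f_{t,1}^{h+1} - \bar{f}_{t,1}^h|\le\epsilon$ (Assumption~\ref{ass:eps-realizability-RL}), Cauchy--Schwarz, and $\bsigma_s^h\ge\alpha$; the bias contributes the additive $\sqrt{8tL\epsilon/\alpha^2}$ piece of $\beta_{t,1}^h$.

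The crux is the martingale term, where a naive union bound over the random target $f_{t,1}^{h+1}$ would force $\beta_{t,1}^h$ to scale with $\sqrt{\log\calN\calN_b}$ rather than the desired $\sqrt{\log\calN}$. To avoid this, I would split the innovation as $\xi_{s,1}^h = \eta_s^h + \Delta_s^h$, where $\eta_s^h$ is the innovation of the \emph{fixed} target $r^h + f_\star^{h+1}$ and $\Delta_s^h$ is the innovation of the difference $f_{t,1}^{h+1}-f_\star^{h+1}$ (the quantities of Lemmas~\ref{lem:confidence-interval-helper-I} and~\ref{lem:confidence-interval-helper-II}). For the $\eta$-part the target is fixed, so a union bound only over pairs $(\hat{f}_{t,1}^h,\bar{f}_{t,1}^h)\in\calF^h\times\calF^h$ (cardinality $\calN^2$) is needed, and its conditional variance is exactly $\rV[r^h+f_\star^{h+1}\mid z_s^h]\le(\sigma_s^h)^2\le(\bsigma_s^h)^2$ by hypothesis~\eqref{eq:CI-RL-conds-I}; this is what keeps the leading term (through $\upsilon$) dependent only on $\log\calN$. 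For the $\Delta$-part, which carries the expensive $\log\calN\calN_b$ complexity through $f_{t,1}^{h+1}$, I would bound its conditional second moment by $\E[(f_{t,1}^{h+1}-f_\star^{h+1})^2\mid z_s^h]$ and then use the pointwise ordering $f_{s,-2}^{h+1}\le f_\star^{h+1}\le f_{t,1}^{h+1}\le f_{s,2}^{h+1}$ from Lemma~\ref{lem:mono}, boundedness, and hypothesis~\eqref{eq:CI-RL-conds-II} to dominate it by $O\bigl(f_{s,2}^h(z_s^h)-f_{s,-2}^h(z_s^h)\bigr)$.

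I would then apply the sharpened Freedman inequality (Corollary~\ref{coro:Freedman-variant}) to each weighted martingale with the self-bounding variance proxy $\sum_{s}\frac{1}{(\bsigma_s^h)^2}(\hat{f}_{t,1}^h-\bar{f}_{t,1}^h)^2(z_s^h)=W^2$. The $\eta$-part yields a deviation $\lesssim \upsilon\,W$ plus lower-order terms. The decisive computation is for the $\Delta$-part: since the weights satisfy $(\bsigma_s^h)^2\ge 2\iota^2\bigl(f_{s,2}^h(z_s^h)-f_{s,-2}^h(z_s^h)\bigr)$ by construction~\eqref{eq:def-bsigma}, its total conditional variance $\sum_{s}\frac{1}{(\bsigma_s^h)^4}\E[(\Delta_s^h)^2\mid z_s^h](\hat{f}_{t,1}^h-\bar{f}_{t,1}^h)^2(z_s^h)$ is bounded by $O(\iota^{-2})\,W^2$, so Freedman's $\iota$-factor cancels against the $\iota^{-1}$ from the variance and the $\Delta$-part contributes only $O(W)$ at leading order, free of $\log\calN\calN_b$. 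Collecting the two parts gives a quadratic inequality $W^2\le A\,W + B$ with $A = O(\upsilon)$ and $B$ absorbing the $\epsilon$-bias and the lower-order range contributions of Corollary~\ref{coro:Freedman-variant}; solving it yields $W\le\beta_{t,1}^h$. The main obstacle is precisely this cancellation for the $\Delta$-term: it requires simultaneously (i) the pointwise monotonicity of Lemma~\ref{lem:mono} to control $|f_{t,1}^{h+1}-f_\star^{h+1}|$ by the gap $f_{s,2}^{h}-f_{s,-2}^{h}$ in expectation, and (ii) the matching $\sqrt{f_{s,2}^h-f_{s,-2}^h}$ floor baked into $\bsigma_s^h$, so that the $\log\calN\calN_b$ complexity is pushed entirely into lower-order terms while the leading confidence width stays at the level $\sqrt{\log\calN}$.
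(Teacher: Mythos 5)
Your proposal is correct and follows essentially the same route as the paper's proof: the same basic least-squares inequality, the same split of the innovation into the fixed-target part $\eta_s^h$ and the difference part $\Delta_s^h$ (exactly the quantities of \Cref{lem:confidence-interval-helper-I,lem:confidence-interval-helper-II}), the same application of \Cref{coro:Freedman-variant} with the self-bounding variance proxy, and the same cancellation of the $\log\calN\calN_b$ factor against the $\sqrt{f_{s,2}^h-f_{s,-2}^h}$ floor in $\bsigma_s^h$ from \eqref{eq:def-bsigma}. The only cosmetic difference is that you invoke \Cref{lem:mono} inside the proof, which is unnecessary here since \eqref{eq:CI-RL-conds-II} is already a hypothesis of the lemma (the monotonicity argument is needed only later, in \Cref{coro:good-event-fbar}, to verify that hypothesis inductively).
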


To prove this lemma, we first show the concentration properties of two martingale difference sequences (MDSs), building on~\Cref{coro:Freedman-variant}.

\begin{lemma}\label{lem:confidence-interval-helper-I}
Under the same setting and condition~\eqref{eq:CI-RL-conds-I} as in~\Cref{lem:confidence-interval-RL}, consider filtration defined as $\calH_s^{h}=\sigma(x_1^{1},r_1^1, x_1^2,\cdots, r^{H}_1,x^{H+1}_1; x_2^{1},r_2^1, x_2^2,\cdots, r^{H}_2,x^{H+1}_2;\cdots,x_s^1,r_s^1,\cdots, r_s^{h},x_s^{h+1})$, we consider for any fixed $f, \tilde{f}\in[0,L]$, define 
\begin{align*}
\eta^{h}_s & \defeq r^{h}_s+{f_\star^{h+1}}\left(x_s^{h+1}\right)-\E\left[r^{h}+f_\star^{h+1}\left(x^{h+1}\right)|z^{h}_s\right],\\
\text{and MDS}~~~\D_{s}^{h}[f,\tf] & \defeq 2\frac{\eta_s^{h}}{\left(\bsigma^{h}_s\right)^2}\cdot\left(f\left(z_s^{h}\right)-\tf\Par{z_s^h}\right),
\end{align*}
then we have with probability $1-\delta_{t,h}/\calN^2$,
\begin{align}
    \sum_{s\in[t-1]} \D_{s}^{h}[f,\tf] & \le  \frac{4}{3}\upsilon(\delta_{t,h})\sqrt{\lambda}+ \frac{2}{3}\upsilon^2(\delta_{t,h})+\sqrt{2}\upsilon(\delta_{t,h})+\frac{ 16^2\upsilon^2(\delta_{t,h})}{ 3^2}+\frac{\sum_{s\in[t-1]}\frac{1}{\left(\bsigma_s^{h}\right)^2}\left(f(z_s^{h})-\tf(z_s^h)\right)^2}{4},\label{eq:CI-sum-D}\\
~\text{where we recall}~&~\delta_{t,h} = \frac{\delta}{(T+1)(H+1)},~\upsilon(\delta_{t,h}) = \sqrt{\log\frac{\calN^2\left(2\log(4LT/\alpha)+2\right)\left(\log(8L/\alpha^2)+2\right)}{\delta_{t,h}}}~\text{as in}~\eqref{eq:def-delta-and-upsilon-opti}.\nonumber
\end{align}
\end{lemma}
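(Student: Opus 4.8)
The plan is to recognize $\{\D_s^h[f,\tf]\}_s$ as a martingale difference sequence for fixed $f,\tf$ and to apply the self-bounding Freedman inequality of \Cref{coro:Freedman-variant} at failure probability $\delta_{t,h}/\calN^2$. First I would verify the martingale structure: relative to the natural filtration in which the pair $z_s^h$ (hence the predictable multiplier $\frac{2}{(\bsigma_s^h)^2}(f(z_s^h)-\tf(z_s^h))$ together with the weight $\bsigma_s^h$) is revealed before the reward $r_s^h$ and transition $x_s^{h+1}$, the innovation $\eta_s^h$ has conditional mean zero by definition, so $\E[\D_s^h[f,\tf]\mid\calH_{s-1}^h]=0$. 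The crude boundedness $|\D_s^h[f,\tf]|\le M$ with $M=O(L/\alpha^2)$ follows from $|\eta_s^h|\le 1+L$, $|f-\tf|\le L$, and $\bsigma_s^h\ge\alpha$; this only controls the logarithmic peeling range, not the leading term.

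The crucial step is the conditional second-moment computation, which is where the variance floor assumption \eqref{eq:CI-RL-conds-I} enters. Writing $A:=\sum_{s\in[t-1]}\frac{1}{(\bsigma_s^h)^2}(f(z_s^h)-\tf(z_s^h))^2$, I would compute $\E[(\D_s^h[f,\tf])^2\mid\calH_{s-1}^h]=\frac{4(f(z_s^h)-\tf(z_s^h))^2}{(\bsigma_s^h)^4}\,\rV[r^h+f_\star^{h+1}(x^{h+1})\mid z_s^h]$ and then use $\bsigma_s^h\ge\sigma_s^h$ together with \eqref{eq:CI-RL-conds-I} to replace the variance by $(\sigma_s^h)^2\le(\bsigma_s^h)^2$, cancelling one power of $(\bsigma_s^h)^2$. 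This yields $\sum_{s\in[t-1]}\E[(\D_s^h)^2\mid\calH_{s-1}^h]\le 4A$. The point of this reduction from a fourth to a second power of $\bsigma_s^h$ in the denominator is that it couples the realized conditional variance directly to the same quadratic form $A$ that appears linearly in $\D_s^h$, which is exactly what makes the self-bounding argument close.

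With these two bounds I would invoke \Cref{coro:Freedman-variant} at level $\delta_{t,h}/\calN^2$, choosing the floor parameters $v,m$ of the appropriate order so that the two peeling ranges $V/v$ and $M/m$ reproduce the arguments $4LT/\alpha$ and $8L/\alpha^2$ inside $\upsilon$; with this choice the inequality's constant equals $\iota=\upsilon(\delta_{t,h})$ from \eqref{eq:def-delta-and-upsilon-opti}. The tail bound then reads $\sum_s\D_s^h\le\upsilon\sqrt{2\,(2\cdot 4A+v^2)}+\tfrac23\upsilon^2\,(2\max_s|\D_s^h|+m)$. Splitting the square root via $\sqrt{a+b}\le\sqrt a+\sqrt b$ produces a term $4\upsilon\sqrt A$ and a floor contribution, and applying Young's inequality $4\upsilon\sqrt A\le\tfrac14 A+16\upsilon^2$ generates the self-bounding summand $\tfrac14 A$ of the claim; the remaining bounded-$\upsilon^2$ contributions collect into the $\tfrac23\upsilon^2+\tfrac{16^2}{3^2}\upsilon^2$ constants, and the floor and regularization terms into $\tfrac43\upsilon\sqrt\lambda+\sqrt2\,\upsilon$.

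The main obstacle is not any single inequality but keeping the self-bounding tight: the entire argument works only because \eqref{eq:CI-RL-conds-I} lets the realized conditional second moment be controlled by $4A$ rather than by a worst-case $O(L^2/\alpha^2)\cdot(t-1)$ quantity, so that $A$ reappears on the right with coefficient $1/4$ and can be absorbed later when comparing $\bar f_{t,1}^h$ against the weighted-regression minimizer in \Cref{lem:confidence-interval-RL}. A secondary subtlety worth stating carefully is the filtration/measurability bookkeeping ensuring that $z_s^h$ and $\bsigma_s^h$ are predictable while $\eta_s^h$ is an innovation, together with the routine-but-fiddly matching of the floor constants $v,m$ to the exact logarithmic arguments that define $\upsilon$ in \eqref{eq:def-delta-and-upsilon-opti}.
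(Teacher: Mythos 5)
Your martingale setup and the conditional second-moment step are correct and match the paper: you compute $\E[(\D_s^h[f,\tf])^2\mid \calH_{s-1}^h]=\tfrac{4(f(z_s^h)-\tf(z_s^h))^2}{(\bsigma_s^h)^4}\,\rV[r^h+f_\star^{h+1}(x^{h+1})\mid z_s^h]$ and use condition \eqref{eq:CI-RL-conds-I} with $\bsigma_s^h\ge\sigma_s^h$ to cancel one power of $(\bsigma_s^h)^2$, giving $\sum_{s\in[t-1]}\E[(\D_s^h)^2\mid\calH_{s-1}^h]\le 4A$ for $A:=\sum_{s\in[t-1]}\tfrac{1}{(\bsigma_s^h)^2}(f(z_s^h)-\tf(z_s^h))^2$. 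The genuine gap is in the additive term of \Cref{coro:Freedman-variant}: that corollary charges $\tfrac{2}{3}\iota^2\bigl(2\max_s|\D_s^h|+m\bigr)$ with the \emph{realized} maximum of the increments, not merely the crude envelope $M$, and your write-up quotes this term, asserts the leftovers ``collect'' into $\tfrac{4}{3}\upsilon\sqrt{\lambda}+\sqrt{2}\,\upsilon$, but never bounds $\max_s|\D_s^h|$. Using only $|\D_s^h|\le 8L/\alpha^2$ makes this term $\Theta(\upsilon^2 L/\alpha^2)=\Theta(\upsilon^2 LTH)$ under the paper's choice $\alpha=\sqrt{1/TH}$, which swamps the claimed right-hand side. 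A telltale symptom is that $\lambda$ never enters your argument, yet the lemma's $\tfrac{4}{3}\upsilon\sqrt{\lambda}$ term is precisely the residue of the step you are missing.

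The missing idea is a realization-dependent bound on the maximum that exploits the structure of the weights rather than Freedman bookkeeping. For $f,\tf\in\calF^h$ (the lemma is applied to regression iterates in $\calF^h$, which is also why the failure level is $\delta_{t,h}/\calN^2$ after the union bound over pairs), the definition of $D_{\calF^h}$ gives $|f(z_s^h)-\tf(z_s^h)|\le D_{\calF^h}(z_s^h;z_{[s-1]}^h,\bsigma_{[s-1]}^h)\sqrt{A+\lambda}$, and the component $\bsigma_s^h\ge 2\bigl(\sqrt{\upsilon(\delta_{t,h})}+\iota(\delta_{t,h})\bigr)\sqrt{D_{\calF^h}(z_s^h;z_{[s-1]}^h,\bsigma_{[s-1]}^h)}$ of the definition \eqref{eq:def-bsigma} then yields
\begin{align*}
\max_{s\in[t-1]}\bigl|\D_s^h[f,\tf]\bigr| \le \max_{s\in[t-1]}\frac{4\,D_{\calF^h}(z_s^h;z_{[s-1]}^h,\bsigma_{[s-1]}^h)}{(\bsigma_s^h)^2}\sqrt{A+\lambda}\le \frac{1}{\upsilon(\delta_{t,h})}\sqrt{A+\lambda}.
\end{align*}
Plugged into the additive term this gives $\tfrac{4}{3}\upsilon\sqrt{A+\lambda}\le \tfrac{4}{3}\upsilon\sqrt{A}+\tfrac{4}{3}\upsilon\sqrt{\lambda}$; combined with the $4\upsilon\sqrt{A}$ coming from the variance term, the total coefficient on $\sqrt{A}$ is $\tfrac{16}{3}\upsilon$, and the final AM-GM step $\tfrac{16}{3}\upsilon\sqrt{A}\le \tfrac{16^2}{3^2}\upsilon^2+\tfrac{A}{4}$ reproduces exactly the constants in \eqref{eq:CI-sum-D}. (Your AM-GM $4\upsilon\sqrt{A}\le \tfrac{A}{4}+16\upsilon^2$ is internally fine but cannot match the stated $\tfrac{16^2}{3^2}\upsilon^2$ because the $\tfrac{4}{3}\upsilon\sqrt{A}$ contribution from the maximum term is absent.) Everything else in your proposal is sound.
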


\begin{proof}
By definition of $\D_{s}^{h}[f,\tf] = 2\frac{\eta_s^{h}}{\left(\bsigma^{h}_s\right)^2}\cdot\left(f\left(z_s^{h}\right)-\tf\Par{z_s^h}\right)$, it is a well-defined martingale difference sequence adapted to $\calH_{s}^{h}$. In order to apply~\Cref{coro:Freedman-variant}, we first give the almost-surely bounds on its maximum scale $M$ and sum of second moment $V$. It holds that
\begin{align}
      \left|\D_{s}^{h}\left[f,\tf\right]\right| & \le \frac{2|\eta_s^{h}|\cdot\max_{z^{h}_s}|f(z^{h}_s)-\tf(z^{h}_s)|}{\alpha^2}\le \frac{8L}{\alpha^2} =: M~~\text{w.p.}~1, \label{eq:CI-conditions-m},\\
    \sum_{s\in[t-1]}\E\left[\left(\D_{s}^{h}\left[f,\tf\right]\right)^2|z_s^{h}\right] & = \sum_{s\in[t-1]} 4\frac{\E\left[\left(\eta_s^{h}\right)^2|z_s^{h}\right]}{\left(\bsigma_s^{h}\right)^4}\left(f(z_s^{h})-\tf(z_s^{h})\right)^2\le \frac{16L^2}{\alpha^2}(t-1)\le \Par{\frac{4LT}{\alpha}}^2 =: V^2. \label{eq:CI-conditions-V}
\end{align}
Here we only use the size bound on $f, \tilde{f}\in[0,L]$ by~\Cref{ass:eps-realizability-RL}, $\bsigma_s^h\ge\alpha$ by definition~\eqref{eq:def-bsigma}, and $|\eta_s^h|\le 2$ since $r_s^h, f_\star^{h+1}\in[0,1]$ always hold true.

Additionally, we also have the following realization-dependent bounds on the maximum scale and sum of second moment for the MDS sequence.
\begin{align}
    \sum_{s\in[t-1]}\E\left[\left(\D_{s}^{h}\left[f,\tf\right]\right)^2|z_s^{h}\right] & = \sum_{s\in[t-1]} 4\frac{\E\left[\left(\eta_s^{h}\right)^2|z_s^{h}\right]}{\left(\bsigma_s^{h}\right)^4}\left(f(z_s^{h})-\tf(z_s^{h})\right)^2 \stackrel{(i)}{\le} 4\sum_{s\in[t-1]}\frac{1}{\left(\bsigma_s^{h}\right)^2}\left(f(z_s^{h})-\tf(z_s^{h})\right)^2 \label{eq:CI-conditions-second-moment}\\
    \max_{s\in[t-1]}\left|D_s^{h}\left[f,\tf\right]\right| & = \max_{s\in[t-1]}2\left|\frac{\eta_s^{h}}{\left(\bsigma_s^{h}\right)^2}\right|\cdot\left| f(z^{h}_{s})-\tf(z^{h}_s)\right|\nonumber\\
    & \hspace{1em} \stackrel{(ii)}{\le} \max_{s\in[t-1]}\frac{4}{\left(\bsigma_s^{h}\right)^2}\sqrt{D^2_{\calF^h}(z_s^{h};z_{[s-1]}^h, \bsigma_{[s-1]}^h)\left(\sum_{i\in[s-1]}\frac{1}{\left(\bsigma_i^h\right)^2}\left(f(z^h_i)- \tf(z^h_i)\right)^2+\lambda\right)}\nonumber\\ & \hspace{1em} \stackrel{(iii)}{\le}  \frac{1}{\upsilon(\delta_{t,h})}\sqrt{\sum_{s\in[t-2]}\frac{1}{\left(\bsigma_s^{h}\right)^2}\left(f(z_s^{h})-\tf(z_s^{h})\right)^2+\lambda} ,\label{eq:CI-conditions-first-moment}
\end{align}
Here we use $(i)$ the assumption in~\eqref{eq:CI-RL-conds-I} such that $\E\left[\left(\eta_s^{h}\right)^2|z_s^{h}\right] = \rV\left[r^{h}+f_\star^{h+1}(x^{h+1})|z_s^h\right]\le\left(\sigma_s^{h}\right)^2\le \left(\bsigma_s^{h}\right)^2$ and  $(ii)$ the size bound that $|\eta_s^{h}|\le 2$ since $r^h,f_\star^{h+1}\in[0,1]$ together with the definition of $D_{\calF^h}$, 
and $(iii)$ the choice of $\bsigma_s^{h}\ge 2\sqrt{D_{\calF^h}(z_s^{h};z_{[s-1]}^{h},\bsigma_{[s-1]}^{h})\cdot\upsilon(\delta_{t,h})}$ for all $s\in[t-1]$ and then taking the $\max_{s\in[t-1]}$ inside the summation of $i\in[s-1]$. 

Applying~\Cref{coro:Freedman-variant} with $M = 8L/\alpha^2$ \eqref{eq:CI-conditions-m}, $V = 4LT/\alpha$ \eqref{eq:CI-conditions-V}, $v = m =1$, using~\eqref{eq:CI-conditions-second-moment} and~\eqref{eq:CI-conditions-first-moment} we conclude that with probability at least $1-\delta_{t,h}/\calN^2$, for $\upsilon(\delta_{t,h}) = \sqrt{\log\frac{\calN^2\left(2\log(4LT/\alpha)+2\right)\left(\log(8L/\alpha^2)+2\right)}{\delta_{t,h}}}$,
\begin{align*}
& \sum_{s\in[t-1]} 2\frac{\eta_s^{h}}{\left(\bsigma_s^{h}\right)^2}\left(f(z_s^{h})-\tf(z_s^{h})\right) \le \upsilon(\delta_{t,h})\sqrt{16\left(\sum_{s\in[t-1]}\frac{1}{\left(\bsigma_s^{h}\right)^2}\left(f(z_s^{h})-\tf(z_s^{h})\right)^2\right)+2}\\
& \hspace{16em} +\frac{2}{3}\upsilon^2(\delta_{t,h})\cdot\Par{ \frac{2}{\upsilon(\delta_{t,h})}\sqrt{\sum_{s\in[t-1]}\frac{1}{\left(\bsigma_s^{h}\right)^2}\left(f(z_s^{h})-\tf(z_s^{h})\right)^2+\lambda}+1}\\
& \hspace{2em} \le \frac{4}{3}\upsilon(\delta_{t,h})\sqrt{\lambda}+ \frac{2}{3}\upsilon^2(\delta_{t,h})+\sqrt{2}\upsilon(\delta_{t,h})+\frac{16}{3}\upsilon(\delta_{t,h})\sqrt{\sum_{s\in[t-1]}\frac{1}{\left(\bsigma_s^{h}\right)^2}\left(f(z_s^{h})-\tf(z_s^{h})\right)^2}\\
& \hspace{2em}\le \frac{4}{3}\upsilon(\delta_{t,h})\sqrt{\lambda}+ \frac{2}{3}\upsilon^2(\delta_{t,h})+\sqrt{2}\upsilon(\delta_{t,h})+\frac{ 2\cdot 16^2\upsilon^2(\delta_{t,h})}{2\cdot 3^2}+\frac{\sum_{s\in[t-1]}\frac{1}{\left(\bsigma_s^{h}\right)^2}\left(f(z_s^{h})-\tf(z_s^h)\right)^2}{2\cdot 2}.
\end{align*}
The last inequality above uses AM-GM inequality and this concludes the proof.
\end{proof}

\begin{lemma}\label{lem:confidence-interval-helper-II}
Under the same setting and condition~\eqref{eq:CI-RL-conds-II} as in~\Cref{lem:confidence-interval-RL}, consider filtration defined as $\calH_s^{h}=\sigma(x_1^{1},r_1^1, x_1^2,\cdots, r^{H}_1,x^{H+1}_1; x_2^{1},r_2^1, x_2^2,\cdots, r^{H}_2,x^{H+1}_2;\cdots,x_s^1,r_s^1,\cdots, r_s^{h},x_s^{h+1})$, we consider for any fixed $f, \tilde{f}\in[0,L]$, and $f'= f_{t,1}^{h+1}$, define 
\begin{align*}
\xi^h_s[f'] & = f'\left(x_s^{h+1}\right)-{f_\star^{h+1}}\left(x_s^{h+1}\right)-\E\left[f'\left(x^{h+1}\right)-f_\star^{h+1}\left(x^{h+1}\right)|z^{h}_s\right],\\
\text{and MDS}~~~\Delta_{s}^{h} \left[f,\tf,f'\right] & = 2\frac{\xi_s^{h}[f']}{\left(\bsigma^{h}_s\right)^2}\cdot\left(f\left(z_s^{h}\right)-\tf\left(z_s^{h}\right)\right),
\end{align*}
then we have with probability $1-\delta_{t,h}/\calN^3\calN_b$,
\begin{align}
\sum_{s\in[t-1]} \Delta_{s}^{h} \left[f,\tf,f'\right] & \le \frac{4}{3}\sqrt{\lambda}+ \frac{2}{3}\cdot\frac{\iota^2(\delta_{t,h})}{\log\calN_b}+\sqrt{2}\cdot\frac{\iota(\delta_{t,h})}{\sqrt{\log\calN_b}}+\frac{ 16^2}{ 3^2}+\frac{\sum_{s\in[t-1]}\frac{1}{\left(\bsigma_s^{h}\right)^2}\left(f(z_s^{h})-\tf(z_s^h)\right)^2}{4},\label{eq:CI-sum-Delta}\\
~\text{where we recall}~&~\iota(\delta_{t,h}) = 3\sqrt{\log\frac{\calN\calN_b\left(2\log(4LT/\alpha)+2\right)\left(\log(8L/\alpha^2)+2\right)}{\delta_{t,h}}}~\text{as in}~\eqref{eq:def-iota-opti}.\nonumber
\end{align}
\end{lemma}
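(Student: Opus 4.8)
The plan is to prove \Cref{lem:confidence-interval-helper-II} as a near-mirror image of the proof of \Cref{lem:confidence-interval-helper-I}, applying the same unbounded Freedman variant (\Cref{coro:Freedman-variant}) to the sequence $\Delta_s^h[f,\tf,f']$. First I would verify that $\{\Delta_s^h[f,\tf,f']\}_s$ is a legitimate martingale difference sequence adapted to $\{\calH_s^h\}$: the centered increment $\xi_s^h[f']$ has conditional mean zero given the past together with $z_s^h$, while the multiplier $\Par{f(z_s^h)-\tf(z_s^h)}/\Par{\bsigma_s^h}^2$ is measurable with respect to that same information, so $\rE[\Delta_s^h\mid\calH_{s-1}^h]=0$. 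I would then record the two coarse almost-sure bounds needed to instantiate the corollary, identical to those used in \Cref{lem:confidence-interval-helper-I}: since $f'=f_{t,1}^{h+1}$ and $f_\star^{h+1}$ both lie in $[0,1]$ we have $|\xi_s^h[f']|\le 2$, and together with $|f-\tf|\le 2L$ and $\bsigma_s^h\ge\alpha$ this gives $M=8L/\alpha^2$ and, crudely, $V=4LT/\alpha$, $v=m=1$. These enter only logarithmically and account for the $\log(4LT/\alpha)$ and $\log(8L/\alpha^2)$ factors inside $\iota(\delta_{t,h})$ from \eqref{eq:def-iota-opti}.

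The substantive, genuinely new part is the pair of realization-dependent bounds, which is where the hypothesis \eqref{eq:CI-RL-conds-II} and the weight design \eqref{eq:def-bsigma} enter (abbreviate $\iota=\iota(\delta_{t,h})$). For the conditional second moment I would write $\rE[(\xi_s^h[f'])^2\mid z_s^h]=\rV[f'-f_\star^{h+1}\mid z_s^h]\le\rE[(f'-f_\star^{h+1})^2\mid z_s^h]\le\rE[|f'-f_\star^{h+1}|\mid z_s^h]\le f_{s,2}^h(z_s^h)-f_{s,-2}^h(z_s^h)$, using $|f'-f_\star^{h+1}|\le 1$ in the middle and \eqref{eq:CI-RL-conds-II} at the end. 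Crucially, the branch $\bsigma_s^h\ge\sqrt2\,\iota\sqrt{f_{s,2}^h(z_s^h)-f_{s,-2}^h(z_s^h)}$ of \eqref{eq:def-bsigma} gives $\Par{f_{s,2}^h-f_{s,-2}^h}/\Par{\bsigma_s^h}^2\le 1/(2\iota^2)$, hence $\sum_s\rE[(\Delta_s^h)^2\mid z_s^h]\le\frac{2}{\iota^2}\sum_s\frac{1}{\Par{\bsigma_s^h}^2}\Par{f(z_s^h)-\tf(z_s^h)}^2$. For the maximum scale I would use $|\xi_s^h|\le2$ and the definition of $D_{\calF^h}$ to write $|f(z_s^h)-\tf(z_s^h)|\le D_{\calF^h}(z_s^h;z_{[s-1]}^h,\bsigma_{[s-1]}^h)\sqrt{\sum_{i\in[s-1]}\Par{\bsigma_i^h}^{-2}\Par{f(z_i^h)-\tf(z_i^h)}^2+\lambda}$, and then invoke the other branch $\bsigma_s^h\ge2\iota\sqrt{D_{\calF^h}(z_s^h;\cdot)}$ of \eqref{eq:def-bsigma} to obtain $\max_s|\Delta_s^h|\le\frac{1}{\iota^2}\sqrt{\sum_s\Par{\bsigma_s^h}^{-2}\Par{f(z_s^h)-\tf(z_s^h)}^2+\lambda}$.

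Finally I would plug these into \Cref{coro:Freedman-variant} at failure probability $\delta_{t,h}/(\calN^3\calN_b)$, noting that the corollary is applied with an internal concentration constant (also denoted $\iota$ there) that is at most $\iota(\delta_{t,h})$, since the factor $3$ built into \eqref{eq:def-iota-opti} and the elementary inequality $\log(\calN^3 Y)\le 3\log(\calN Y)$ for $Y\ge1$ absorb the extra $\calN^3$ in the probability. The $\iota$ that Freedman places in front of the variance and max-scale terms then cancels against the $1/\iota^2$ factors just produced by the weighting, and I would conclude exactly as in \Cref{lem:confidence-interval-helper-I}: split each square root, collect the $\sqrt{\sum_s\Par{\bsigma_s^h}^{-2}\Par{f(z_s^h)-\tf(z_s^h)}^2}$ contributions, and apply AM--GM to absorb them into the $\tfrac14\sum_s\Par{\bsigma_s^h}^{-2}\Par{f(z_s^h)-\tf(z_s^h)}^2$ term, leaving the additive constants displayed in \eqref{eq:CI-sum-Delta}. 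The main obstacle is precisely this cancellation bookkeeping: one must track that every appearance of $\iota\sim\sqrt{\log(\calN\calN_b)}$ coming from Freedman is matched by the $\iota^{-2}$ suppression coming from the two lower-bound branches of $\bsigma_s^h$, so that $\log\calN_b$ survives only in lower-order additive terms and never inflates the leading confidence radius $\beta_{t,1}^h$ by a factor of $\log\calN_b$. This suppression is the whole reason for isolating the $\xi$-martingale of this lemma from the $\eta$-martingale of \Cref{lem:confidence-interval-helper-I}, whose variance is instead controlled by $\Par{\sigma_s^h}^2$ and hence by $\upsilon$ (free of $\log\calN_b$).
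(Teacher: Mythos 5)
Your overall route is the paper's: verify the MDS property, instantiate \Cref{coro:Freedman-variant} with the coarse bounds $M=8L/\alpha^2$ and $V=4LT/\alpha$, and supply the two realization-dependent bounds from the lower-bound branches of \eqref{eq:def-bsigma}. Your second-moment chain (using $|f'-f_\star^{h+1}|\le 1$, then condition \eqref{eq:CI-RL-conds-II}, then the branch $\bsigma_s^h\ge\sqrt{2}\,\iota\sqrt{f_{s,2}^h-f_{s,-2}^h}$) and your max-scale bound via the $D_{\calF^h}$ branch match the paper's \eqref{eq:CI-conditions-second-moment-2} and \eqref{eq:CI-conditions-first-moment-2}, with even a slightly better constant in the former. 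But there is one concrete mis-instantiation that makes the conclusion fail as stated: you take $v=m=1$ in \Cref{coro:Freedman-variant}, ``identical to \Cref{lem:confidence-interval-helper-I}.'' The corollary's bound contains the floor terms $\iota\sqrt{2v^2}$ and $\tfrac{2}{3}\iota^2 m$, which your $\iota^{-2}$-suppression mechanism does not touch: with $v=m=1$ you end up with additive terms $\sqrt{2}\,\iota$ and $\tfrac{2}{3}\iota^2$, both of order $\log(\calN\calN_b)$, whereas \eqref{eq:CI-sum-Delta} claims $\sqrt{2}\,\iota/\sqrt{\log\calN_b}$ and $\tfrac{2}{3}\,\iota^2/\log\calN_b$. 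No amount of cancellation bookkeeping against the weighted sums recovers the missing $1/\log\calN_b$ in these floors; it has to come from the instantiation itself.

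The paper's choice is $v = 1/\sqrt{\log\calN_b}$ and $m = 1/\log\calN_b$. This shrinks the floor terms by exactly the needed factors, while the price --- the ratios $V/v = 4LT\sqrt{\log\calN_b}/\alpha$ and $M/m = 8L\log\calN_b/\alpha^2$ --- enters only doubly logarithmically inside the corollary's internal constant and is absorbed, together with the $\calN^3\calN_b$ union bound you correctly anticipate, by the factor $3$ in \eqref{eq:def-iota-opti} via $\log\log\calN_b\le\log\calN_b$. The distinction is not cosmetic: in the proof of \Cref{lem:confidence-interval-RL}, the additive constants from this lemma are folded into the radius using precisely $\iota^2(\delta_{t,h})/\log\calN_b\le 9(1+\upsilon^2(\delta_{t,h}))$, which keeps $\beta_{t,1}^h$ at the $\sqrt{\log\calN}$ scale of \eqref{eq:def-beta-opti}; with your raw $\tfrac{2}{3}\iota^2$ term, $\beta_{t,1}^h$ would be forced up to the scale $\sqrt{\log(\calN\calN_b)}$ --- defeating exactly the separation of the $\xi$-martingale from the $\eta$-martingale whose purpose you articulate in your own closing paragraph. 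The fix is local: rerun your final step with $v=1/\sqrt{\log\calN_b}$, $m=1/\log\calN_b$, and check that $\iota(\delta_{t,h})$ as defined still dominates the corollary's internal constant at failure probability $\delta_{t,h}/(\calN^3\calN_b)$.
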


\begin{proof}
Note that the martingale difference $\Delta_s^h\left[f,\tf,f'\right]$ is adapted to $\calH_{s}^{h}$. In order to apply~\Cref{coro:Freedman-variant}, we again first give the almost-surely bounds on its maximum scale $M$ and sum of second moment $V$. It holds that
\begin{align}
    \left|\Delta_s^h\left[f,\tf,f'\right]\right| & \le \frac{2|\xi_s^{h}[f']|\cdot\max_{z^{h}_s}|f(z^{h}_s)-\tf(z^{h}_s)|}{\alpha^2}\le \frac{8L}{\alpha^2}=: M, \label{eq:CI-conditions-m-2} \\
    \sum_{s\in[t-1]}\E\left[\left(\Delta_s^h\left[f,\tf,f'\right]\right)^2|z_s^{h}\right] & = \sum_{s\in[t-1]} 4\frac{\E\left[\left(\xi_s^{h}[f']\right)^2|z_s^h\right]}{\left(\bsigma_s^{h}\right)^4}\left(f(z_s^{h})-\tf(z_s^{h})\right)^2\le \Par{\frac{4L}{\alpha}T}^2=:V^2. \label{eq:CI-conditions-v-2}
\end{align}
Here we again use $f,\tilde{f}\in[0,L]$ and $\bsigma_s^h\ge\alpha$ and $|\xi_s^h|\le 2$ since $f'=f_{t,1}^{h+1} \in[0,1]$.

Additionally, we also have the following realization-dependent bounds on the maximum magnitude and sum of second moment.
\begin{align}    
\sum_{s\in[t-1]}\E\left[\left(\Delta_s^h\left[f,\tf,f'\right]\right)^2|z_s^{h}\right] & = \sum_{s\in[t-1]} 4\frac{\E\left[\left(\xi_s^{h}[f']\right)^2|z_s^{h}\right]}{\left(\bsigma_s^{h}\right)^4}\left(f(z_s^{h})-\tf(z_s^{h})\right)^2\nonumber\\ & \hspace{1em} \stackrel{(i)}{\le} \frac{4}{\iota^2(\delta_{t,h})}\sum_{s\in[t-1]}\frac{1}{\left(\bsigma_s^{h}\right)^2}\left(f(z_s^{h})-\tf(z_s^{h})\right)^2,\label{eq:CI-conditions-second-moment-2}\\
\max_{s\in[t-1]}\left|\Delta_s^{h}\left[f,\tf,f'\right]\right| & = \max_{s\in[t-1]}2\left|\frac{\xi_s^{h}[f']}{\left(\bsigma_s^{h}\right)^2}\right|\cdot\left| f(z^{h}_{s})-\tf(z^{h}_s)\right|\nonumber\\
    & \stackrel{(ii)}{\le} \max_{s\in[t-1]}\frac{4}{\left(\bsigma_s^{h}\right)^2}\sqrt{D^2_{\calF^h}(z_s^{h};z_{[s-1]}^h, \bsigma_{[s-1]}^h)\left(\sum_{i\in[s-1]}\frac{1}{\left(\bsigma_i^h\right)^2}\left(f(z^h_i)-\tf(z^h_i)\right)^2+\lambda\right)}\nonumber\\
    &  \stackrel{(iii)}{\le}  \frac{1}{\iota^2(\delta_{t,h})}\sqrt{\sum_{s\in[t-2]}\frac{1}{\left(\bsigma_s^{h}\right)^2}\left(f(z_s^{h})-\tf(z_s^{h})\right)^2+\lambda}.\label{eq:CI-conditions-first-moment-2}
    \end{align}
Here we use $(i)$  $\E\left[\left(\xi_s^{h}[f']\right)^2|z_s^{h}\right] \le \E\left[ \Par{f_{t,1}^{h+1}(x^{h+1})-f_\star^{h+1}(x^{h+1})}^2|z_s^h\right] \le 2\E\left[|f_{t,1}^{h+1}(x^{h+1})-f_\star^{h+1}(x^{h+1})||z_s^h\right]$ $\le 2(f_{s,2}^h(z_s^h)-f_{s,-2}^h(z_s^h))\le \iota^{-2}(\delta_{t,h})\Par{\bsigma_s^h}^2$ given assumption~\eqref{eq:CI-RL-conds-II} and that $\delta_{t,h}$ doesn't depend on $t$ by definition. For  $(ii)$ we use the size bound that $|\xi_s^{h}[f']|\le 2$ since $r^h\in[0,1]$, and $f'\in[0,1]$ together with the definition of $D_{\calF^h}$
and $(iii)$ the choice of $\bsigma_s^{h}\ge 2\sqrt{D_{\calF^h}(z_s^{h};z_{[s-1]}^{h},\bsigma_{[s-1]}^{h})\cdot\iota^2(\delta_{t,h})}$ for all $s\in[t-1]$ and taking the $\max_{s\in[t-1]}$ inside the summation of $i\in[s-1]$. 

Applying~\Cref{coro:Freedman-variant} with $M = 8L/\alpha^2$ \eqref{eq:CI-conditions-m-2}, $V = 4LT/\alpha$ \eqref{eq:CI-conditions-v-2}, $v = 1/\sqrt{\log\calN_b}$ , $m = 1/\log\calN_b$, using~\eqref{eq:CI-conditions-second-moment-2} and~\eqref{eq:CI-conditions-first-moment-2} we conclude that with probability at least $1-\delta_{t,h}/(\calN^3\calN_b)$, 
it holds that 
\begin{align*}
& \sum_{s\in[t-1]} 2\frac{\xi_s^{h}[f']}{\left(\bsigma_s^{h}\right)^2}\left(f(z_s^{h})-\tf(z_s^{h})\right) \le \iota(\delta_{t,h})\sqrt{\frac{16}{\iota^2(\delta_{t,h})}\left(\sum_{s\in[t-1]}\frac{1}{\left(\bsigma_s^{h}\right)^2}\left(f(z_s^{h})-\tf(z_s^{h})\right)^2\right)+2\cdot \frac{1}{\log\calN_b}}\\
& \hspace{14em} +\frac{2}{3}\iota^2(\delta_{t,h})\cdot\Par{ \frac{2}{\iota^2(\delta_{t,h})}\sqrt{\sum_{s\in[t-1]}\frac{1}{\left(\bsigma_s^{h}\right)^2}\left(f(z_s^{h})-\tf(z_s^{h})\right)^2+\lambda}+\frac{1}{\log\calN_b}}\\
& \hspace{2em} \le \frac{4}{3}\sqrt{\lambda}+ \frac{2}{3}\cdot\frac{\iota^2(\delta_{t,h})}{\log\calN_b}+\sqrt{2}\cdot\frac{\iota(\delta_{t,h})}{\sqrt{\log \calN_b}}+\frac{16}{3}\sqrt{\sum_{s\in[t-1]}\frac{1}{\left(\bsigma_s^{h}\right)^2}\left(f(z_s^{h})-\tf(z_s^{h})\right)^2}\\
& \hspace{2em}\le \frac{4}{3}\sqrt{\lambda}+ \frac{2}{3}\cdot\frac{\iota^2(\delta_{t,h})}{\log\calN_b}+\sqrt{2}\cdot\frac{\iota(\delta_{t,h})}{\sqrt{\log\calN_b}}+\frac{ 2\cdot16^2}{2\cdot 3^2}+\frac{\sum_{s\in[t-1]}\frac{1}{\left(\bsigma_s^{h}\right)^2}\left(f(z_s^{h})-\tf(z_s^h)\right)^2}{2\cdot 2},
\end{align*}
for the choice of 
\begin{align*}\iota(\delta_{t,h}) & = 3\sqrt{\log\frac{\calN\calN_b\left(2\log(4LT/\alpha)+2\right)\left(\log(8L/\alpha^2)+2\right)}{\delta_{t,h}}}\\
& \ge \sqrt{\log\frac{\calN^3\calN_b\Par{2\log\Par{\frac{4LT\sqrt{\log\calN_b}}{\alpha}}+2}\Par{\log \frac{8L\log\calN_b}{\alpha^2}+2}}{\delta_{t,h}}}.
\end{align*}
Here for the last inequality we use $\log\log\calN_b\le \calN_b$.
\end{proof}

Making use of these two helper lemmas, we provide the complete proof for~\Cref{lem:confidence-interval-RL}.

\begin{proof}[Proof of~\Cref{lem:confidence-interval-RL}]
At step $t\in[T], h\in[H]$,  we locally denote the probability event $\calE_{t,h}$ as follows so that $\{\bar{f}_{t,1}^h\notin\calF^h_{t,1}\} \subseteq \calE_{t,h}$:
\begin{align*}
\calE_{t,h}
\defeq
\left\lbrace
\begin{array}{r@{}l}
   \sum\limits_{s\in[t-1]} \frac{1}{\left(\bsigma_s^{h}\right)^2}\left(\bar{f}^h_{t,1}\left(x_s^{h},a_s^{h}\right)-\hat{f}_{t,1}^h\left(x_s^{h},a_s^{h}\right)\right)^2 & > \left(\beta_{t,1}^h\right)^2
  \end{array}
  \right\rbrace.
\end{align*}

Note by definition of $\hat{f}_{t,1}^h$, we know that with probability $1$ it holds that \[ \sum\limits_{s\in[t-1]}\frac{1}{\left(\bsigma_s^{h}\right)^2}\left(\hat{f}_{t,1}^h\left(z_s^{h}\right)-\bar{f}^h_{t,1}\left(z_s^{h}\right)\right)^2 \le 2\sum\limits_{s\in[t-1]}\frac{\left(r^{h}_s+f^{h+1}_{t,1}\left(x_s^{h+1}\right)-\bar{f}^h_{t,1}(z^{h}_s)\right)}{\left(\bsigma_s^{h}\right)^2}\left(\hat{f}_{t,1}^h\left(z_s^{h}\right)-\bar{f}_{t,1}^h\left(z_s^{h}\right)\right).\] 
Thus this event can be equivalently expressed as 
\begin{align*}
    \left\lbrace
\begin{array}{r@{}l}
    \sum\limits_{s\in[t-1]}\frac{1}{\left(\bsigma_s^{h}\right)^2}\left(\hat{f}_{t,1}^h\left(z_s^{h}\right)-\bar{f}^h_{t,1}\left(z_s^{h}\right)\right)^2 & \le 2\sum\limits_{s\in[t-1]}\frac{\left(r^{h}_s+f^{h+1}_{t,1}\left(x_s^{h+1}\right)-\bar{f}^h_{t,1}(z^{h}_s)\right)}{\left(\bsigma_s^{h}\right)^2}\left(\hat{f}_{t,1}^h\left(z_s^{h}\right)-\bar{f}_{t,1}^h\left(z_s^{h}\right)\right)\\
   \sum\limits_{s\in[t-1]} \frac{1}{\left(\bsigma_s^{h}\right)^2}\left(\hat{f}_{t,1}^h\left(z_s^{h}\right)-\bar{f}^h_{t,1}\left(z_s^{h}\right)\right)^2 & > \left(\beta_{t,1}^h\right)^2
  \end{array}
  \right\rbrace.
\end{align*}

Now let $\hat{f}_{t,1}^h = f$, $\bar{f}_{t,1}^h = \tf$ and $f_{t,1}^{h+1} = f'$ so that $f_{t,1}^{h+1} = \min\left(f''+\epsilon,1\right)$ for some $f''\in\calF^{h+1}+\calW$. Now we apply Lemmas~\ref{lem:confidence-interval-helper-I} and~\ref{lem:confidence-interval-helper-II} with these choices, along with union bounds. Then, it holds that with probability at least $1-\delta_{t,h}$ that 
\begin{equation}\label{eq:CI-needed-1}
\begin{aligned}
    & 2\sum_{s\in[t-1]}\frac{\eta_s^{h}}{\left(\bsigma_s^{h}\right)^2}\left(\hat{f}_{t,1}^h(z_s^h)-\bar{f}_{t,1}^h(z_s^h)\right) \\
    & \hspace{2em} \le \frac{4}{3}\upsilon(\delta_{t,h})\sqrt{\lambda}+ \frac{2}{3}\upsilon^2(\delta_{t,h})+\sqrt{2}\upsilon(\delta_{t,h})+\frac{16^2\upsilon^2(\delta_{t,h})}{ 3^2}+ \frac{\sum_{s\in[t-1]}\frac{1}{\left(\bsigma_s^{h}\right)^2}\left(\hat{f}_{t,1}^h(z_s^{h})-\bar{f}_{t,1}^h(z_s^{h})\right)^2}{4},
\end{aligned}
\end{equation}

\begin{equation}\label{eq:CI-needed-2}
\begin{aligned}
    \mbox{and}~ & 2\sum_{s\in[t-1]}\frac{\xi_s^{h}[f_{t,1}^{h+1}]}{\left(\bsigma_s^{h}\right)^2}\left(\hat{f}_{t,1}^h(z_s^h)-\bar{f}_{t,1}^h(z_s^h)\right) \\
    & \hspace{2em} \le \frac{4}{3}\sqrt{\lambda}+ \frac{2}{3}\cdot\frac{\iota^2(\delta_{t,h})}{\log\calN_b}+\sqrt{2}\cdot\frac{\iota(\delta_{t,h})}{\sqrt{\log\calN_b}}+\frac{ 16^2}{ 3^2}+\frac{\sum_{s\in[t-1]}\frac{1}{\left(\bsigma_s^{h}\right)^2}\left(f(z_s^{h})-\tf(z_s^h)\right)^2}{4}\\
    & \hspace{2em} \le \frac{4}{3}\sqrt{\lambda}+ 6\Par{1+\upsilon^2(\delta_{t,h})}+3\sqrt{2}\sqrt{1+\upsilon^2(\delta_{t,h})}+\frac{16^2}{3^2}+ \frac{\sum_{s\in[t-1]}\frac{1}{\left(\bsigma_s^{h}\right)^2}\left(\hat{f}_{t,1}^h(z_s^{h})-\bar{f}_{t,1}^h(z_s^{h})\right)^2}{4}.
\end{aligned}
\end{equation}
Above for the last inequality we also use by definition of $\upsilon(\delta_{t,h})$ and $\iota(\delta_{t,h})$ that $\iota^2(\delta_{t,h})/\log\calN_b\le 9(1+\upsilon^2(\delta_{t,h}))$.

Combining~\Cref{eq:CI-needed-1,eq:CI-needed-2} and using $\upsilon^2(\delta_{t,h})\ge 1$ for upper bounding the coefficients, we have with probability $1-2\delta_{t,h}$, 
\begin{align*}
    & 2\sum\limits_{s\in[t-1]}\frac{\left(r^{h}_s+f^{h+1}_{t,1}\left(x_s^{h+1}\right)-\bar{f}^h_{t,1}(z^{h}_s)\right)}{\left(\bsigma_s^{h}\right)^2}\left(\hat{f}_{t,1}^h\left(z_s^{h}\right)-\bar{f}_{t,1}^h\left(z_s^{h}\right)\right)\\
    & \hspace{2em} \le 4t\frac{\epsilon}{\alpha^2}L+
    2\sum_{s\in[t-1]}\frac{\eta_s^h+\xi_s^{h}[f_{t,1}^{h+1}]}{\left(\bsigma_s^{h}\right)^2}\left(\hat{f}_{t,1}^h(z_s^h)-\bar{f}_{t,1}^h(z_s^h)\right) \tag{\Cref{ass:eps-realizability-RL}}\\
    & \hspace{2em} \le  \frac{1}{2}\cdot\sum_{s\in[t-1]}\frac{1}{\left(\bsigma_s^{h}\right)^2}\left(\hat{f}_{t,1}^h(z_s^{h})-\bar{f}_{t,1}^h(z_s^{h})\right)^2+(3\sqrt{\lambda}+78)\upsilon^2(\delta_{t,h})+\frac{4tL}{\alpha^2}\epsilon.
\end{align*}

This implies that
\begin{align*}
    \P(\calE_{t,h}) & \stackrel{(i)}{\le} \P\left(\sum_{s\in[t-1]} 2\frac{\left(r^{h}_s+f^{h+1}_{t,1}\left(x_s^{h+1}\right)-\bar{f}^h_{t,1}(z^{h}_s)\right)}{\left(\bsigma_s^{h}\right)^2}\left(\hat{f}_{t,1}^h(z_s^h)-\bar{f}_{t,1}^h(z_s^h)\right)>\frac{1}{2}\left(\beta_{t,1}^h\right)^2\right.\\ 
    & \qquad \qquad+
    \left.\frac{\sum_{s\in[t-1]}\frac{1}{(\bsigma_t^h)^2}\left(\hat{f}_{t,1}^h(z_s^h)-\bar{f}_{t,1}^h(z_s^h)\right)^2}{2}\right)\\
    & \stackrel{(ii)}{\le} \P\left(\sum_{s\in[t-1]} 2\frac{\left(r^{h}_s+f^{h+1}_{t,1}\left(x_s^{h+1}\right)-\bar{f}^h_{t,1}(z^{h}_s)\right)}{\left(\bsigma_s^{h}\right)^2}\left(\hat{f}_{t,1}^h(z_s^{h})-\bar{f}^h_{t,1}(z_s^{h})\right)>\Par{3\sqrt{\lambda}+8}\upsilon(\delta_{t,h})\right.\\ &\qquad\qquad\qquad+ \left.70\upsilon^2(\delta_{t,h})+\frac{4tL}{\alpha^2}\epsilon+\frac{\sum_{s\in[t-1]}\frac{1}{\left(\bsigma_t^{h}\right)^2}\left(f_n(z_s^{h})-\bar{f}^h_{t,1}(z_s^{h})\right)^2}{2}\right)\\
    & \le 2\delta_{t,h},
\end{align*} 
where we use $(i)$ the definition of $\calE_{t,h}$ and $(ii)$ the choice of $\beta_{t,1}^h$ as in~\Cref{eq:def-beta-opti}. Consequently, 
\[
\P\left(\bar{f}_{t,1}^h\notin\calF^h_{t,1} \right)\le \P\left(\calE_{t,h}\right)\le 2\delta_{t,h},
\]
which implies with probability $1-2\delta_{t,h}$, $\bar{f}^h_{t,1}\in\calF_{t,1}^h$ for any fixed given $t\in[T-1]$, $h\in[H]$ (the case $t=0$ holds with probability $1$ by definition).
\end{proof}

\paragraph{Confidence interval of overly optimistic and overly pessimistic sequence.} Here we prove properties of the overly optimistic and overly pessimistic confidence interval we construct for $Q_\star^h$.

\begin{lemma}\label{lem:confidence-interval-overly}
At step $t\in[T]$ and horizon $h\in[H]$, recall $\bar{f}_{t,2}^h(x^{h}, a^{h})\in\calF^h$ is some function such that $|\bar{f}_{t,2}^h(z^h)- \calT f_{t,2}^{h+1}(z^h)|\le \epsilon$ for all $z^h = (x^h, a^h)$, then we have with probability $1-\delta_{t,h}$, it holds that $\bar{f}_{t,2}^h\in\calF_{t,2}^h$ for the constructed $\calF_{t,2}^h$ based on the definition of confidence interval and $\beta_{t,2}$ in~\eqref{eq:def-beta-opti}.
\end{lemma}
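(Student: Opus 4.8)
The plan is to mirror the argument for~\Cref{lem:confidence-interval-RL}, but in a substantially simpler form: since $\hat{f}_{t,2}^h$ comes from an \emph{unweighted} regression and we are willing to pay $\sqrt{\log\calN\calN_b}$ in $\beta_{t,2}^h$, a single martingale concentration suffices (as opposed to the two-term $\eta,\xi$ decomposition needed for the sharp optimistic bound), and no analogue of the side conditions~\eqref{eq:CI-RL-conds-I}--\eqref{eq:CI-RL-conds-II} is required. First I would invoke the first-order optimality of $\hat{f}_{t,2}^h$ for the least-squares objective~\eqref{eq:C-update-rule-overly} with responses $y_s = r_s^h + f_{t,2}^{h+1}(x_s^{h+1})$, yielding the basic inequality $\sum_{s\in[t-1]}\bigl(\hat{f}_{t,2}^h(z_s^h)-\bar{f}_{t,2}^h(z_s^h)\bigr)^2 \le 2\sum_{s\in[t-1]}\bigl(y_s-\bar{f}_{t,2}^h(z_s^h)\bigr)\bigl(\hat{f}_{t,2}^h(z_s^h)-\bar{f}_{t,2}^h(z_s^h)\bigr)$, which holds with probability one by definition of the minimizer.

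Next I would split the residual $y_s-\bar{f}_{t,2}^h(z_s^h) = \xi_{t,2,s}^h + e_s$, where $\xi_{t,2,s}^h = y_s - \calT f_{t,2}^{h+1}(z_s^h)$ is a martingale difference adapted to the filtration $\calH_s^h$ (bounded by $3$, since $r_s^h+f_{t,2}^{h+1}\in[0,3]$ after the truncation to $2$), and $e_s = \calT f_{t,2}^{h+1}(z_s^h)-\bar{f}_{t,2}^h(z_s^h)$ satisfies $|e_s|\le\epsilon$ by~\Cref{ass:eps-realizability-RL}. The deterministic error contributes at most $2\epsilon\sum_s|\hat{f}_{t,2}^h-\bar{f}_{t,2}^h|(z_s^h) \le 2L\epsilon(t-1)$, which feeds the $20tL\epsilon$ term of $\beta_{t,2}^h$. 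For the martingale term I would apply~\Cref{coro:Freedman-variant} to the MDS $\D_s = 2\xi_{t,2,s}^h\bigl(f(z_s^h)-\tilde f(z_s^h)\bigr)$ for fixed $f,\tilde f$ and fixed target $f'=f_{t,2}^{h+1}$, using the almost-sure bounds $|\D_s|\le 6L$ and $\sum_s\E[\D_s^2\,|\,z_s^h]\le 36L^2(t-1)$ together with the realization-dependent variance bound $\sum_s\E[\D_s^2\,|\,z_s^h]\le 36\sum_s(f-\tilde f)^2(z_s^h)$ (from $\E[(\xi_{t,2,s}^h)^2\,|\,z_s^h]\le 9$). A union bound over $f,\tilde f\in\calF^h$ and $f'\in\calF^{h+1}+2\calW+\calW$ — a class of size $\calN^3\calN_b^2$ by the count noted before~\eqref{eq:def-beta-overly} — then makes the inequality valid for the data-dependent choices $\hat{f}_{t,2}^h,\bar{f}_{t,2}^h,f_{t,2}^{h+1}$.

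Finally, since $\log(\calN^3\calN_b^2)\le 3\log(\calN\calN_b)$, the effective deviation constant is controlled by $\dot{\iota}(\delta_{t,h})$ up to an absolute factor, and an AM--GM step absorbs the $12\dot{\iota}\sqrt{\sum_s(\hat f-\bar f)^2}$ contribution into $\tfrac14\sum_s(\hat f-\bar f)^2$ plus a multiple of $\dot{\iota}^2$. Combining the martingale and approximation pieces and rearranging the resulting self-bounding inequality gives $\tfrac12\sum_s\bigl(\hat{f}_{t,2}^h-\bar{f}_{t,2}^h\bigr)^2 \le O(L\dot{\iota}^2) + O(tL\epsilon)$, which matches $(\beta_{t,2}^h)^2$ from~\eqref{eq:def-beta-overly} and hence certifies $\bar{f}_{t,2}^h\in\calF_{t,2}^h$ with probability $1-\delta_{t,h}$ (the overly pessimistic case is identical by symmetry). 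I expect the only real obstacle to be the constant bookkeeping — verifying that the cardinality $\calN^3\calN_b^2$ and the various range bounds collapse into precisely the constants $(24L+21)$, the $20tL\epsilon$ offset, and the $18LT$, $18L$ arguments inside $\dot{\iota}$ — rather than any conceptual difficulty, the overly optimistic estimate being by design an unconditional standard least-squares fit.
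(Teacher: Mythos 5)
Your proposal is correct and follows essentially the same route as the paper's proof: the least-squares basic inequality from optimality of $\hat{f}_{t,2}^h$, a single application of \Cref{coro:Freedman-variant} to an MDS of the form (target noise)$\times$(function difference), a union bound that includes the bonus class $\calF^{h+1}+2\calW+\calW$ of size $\calN\calN_b^2$, the $\epsilon$-misspecification paid into the $20tL\epsilon$ slack, and an AM--GM/self-bounding rearrangement against $(\beta_{t,2}^h)^2$. The only deviation is cosmetic: the paper centers the second MDS factor at $\calT f'$ (so the union is over $\calN^2\calN_b^2$ pairs $(f,f'')$), whereas you introduce a second function $\tilde f\in\calF^h$ and union over $\calN^3\calN_b^2$ triples, which costs one extra factor of $\calN$ in the log that the paper's exact $\dot{\iota}(\delta_{t,h})$ does not strictly cover for very large $\calN$ — but this is precisely the constant bookkeeping you flagged and is fixed by a trivial inflation of $\dot{\iota}$.
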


Similar to proving the confidence interval of optimistic sequence, we first provide the following lemma.

\begin{lemma} Under the same setting as in~\Cref{lem:confidence-interval-overly}, consider filtration $\calH_s^{h}$ and any fixed pair functions $f\in[0,L]$ and $f'\in[0,2]$ we define random variables
\begin{align*}
\eta^{h}_s[f'] & \defeq  r^{h}_s+f'\left(x_s^{h+1}\right)-\E\left[r^{h}+f'\left(x^{h+1}\right)|z^{h}_s\right],\\
\text{and MDS}~~\D_{s}^{h}[f,f'] & \defeq 2\eta_s^{h}[f']\cdot\left(f\left(z_s^{h}\right)-\calT f'\left(z_s^{h}\right)\right),
\end{align*}
then we have with probability $1-\delta_{t,h}/(\calN^2\calN_b^2)$,
\begin{align}
\sum_{s\in[t-1]} \D_s^h[f,f'] &\le (24L+21)\dot{\iota}^2(\delta_{t,h})+\frac{\sum_{s\in[t-1]}\left(f(z_s^{h})-\calT f'(z^{h}_s)\right)^2}{2},\label{eq:CI-sum-D-overly}\\
~\text{where we recall}~&~\dot{\iota}(\delta_{t,h}) = \sqrt{2\log\frac{\calN\calN_b\left(2\log(18LT)+2\right)\left(\log(18L)+2\right)}{\delta_{t,h}}}~\text{as in}~\eqref{eq:def-delta-and-iota-overly}.\nonumber
\end{align}
\end{lemma}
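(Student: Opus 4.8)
The plan is to apply the unbounded Freedman inequality of \Cref{coro:Freedman-variant} to the martingale difference sequence $\D_s^h[f,f']$, mirroring the argument for the weighted optimistic case in \Cref{lem:confidence-interval-helper-I}, with the key simplification that there are no inverse-variance weights $1/(\bsigma_s^h)^2$. First I would verify that $\D_s^h[f,f']$ is an MDS adapted to $\{\calH_s^h\}$: conditioning on the history through $z_s^h$, the factor $f(z_s^h)-\calT f'(z_s^h)$ is deterministic, and $\E[\eta_s^h[f']\mid z_s^h]=0$ by the definition of $\eta_s^h[f']$ as the centered version of $r^h+f'(x^{h+1})$, so $\E[\D_s^h[f,f']\mid\calH_{s-1}^h]=0$.

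Next I would record the four ingredients that \Cref{coro:Freedman-variant} requires. Using $r^h\in[0,1]$ and $f'\in[0,2]$ (so $r^h+f'(x^{h+1})\in[0,3]$ and $|\eta_s^h[f']|\le 3$), together with $f\in[0,L]$ and $\calT f'\in[0,3]$ (so $|f-\calT f'|\le L+3$), the global constants are the deterministic maximum $\max_s|\D_s^h[f,f']|\le 2\cdot 3\cdot(L+3)=:M$ and the second-moment bound $\sum_{s\in[t-1]}\E[(\D_s^h)^2\mid z_s^h]\le 9(L+3)^2(t-1)=:V^2$. Crucially, unlike the weighted case, the realization-dependent maximum is also just the constant $M$ (there is no $D_{\calF^h}$ term to carry), and the realization-dependent conditional second moment is
\[
\sum_{s\in[t-1]}\E[(\D_s^h)^2\mid z_s^h]=4\sum_{s\in[t-1]}\rV[r^h+f'(x^{h+1})\mid z_s^h]\,\bigl(f(z_s^h)-\calT f'(z_s^h)\bigr)^2\le 9\sum_{s\in[t-1]}\bigl(f(z_s^h)-\calT f'(z_s^h)\bigr)^2,
\]
where I use that a random variable supported on $[0,3]$ has variance at most $9/4$.

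I would then invoke \Cref{coro:Freedman-variant} with failure probability $\delta_{t,h}/(\calN^2\calN_b^2)$ and $v=m=1$. A direct computation shows the resulting $\iota$ satisfies $\iota\le\dot{\iota}(\delta_{t,h})$: the extra $(\calN\calN_b)^2$ in the denominator contributes $2\log(\calN\calN_b)$, while the factor $2$ in the definition of $\dot{\iota}$ and the bounds $V/v\le 18LT$, $M/m\le 18L$ (after absorbing constants into the logs) ensure $\dot{\iota}^2$ dominates $\iota^2$. Substituting the realization-dependent bounds gives
\[
\sum_{s\in[t-1]}\D_s^h[f,f']\le \dot{\iota}\sqrt{2\Bigl(18\sum_{s\in[t-1]}\bigl(f-\calT f'\bigr)^2+1\Bigr)}+\tfrac23\dot{\iota}^2\bigl(2M+1\bigr),
\]
and an AM--GM split $6\dot{\iota}\sqrt{\sum_s(f-\calT f')^2}\le \tfrac12\sum_s(f-\calT f')^2+18\dot{\iota}^2$ produces the coefficient $\tfrac12$ on the quadratic term exactly as claimed, leaving a remainder of order $O(L)\dot{\iota}^2$; tracking constants (and using $\dot{\iota}\ge 1$ to absorb the linear-in-$\dot{\iota}$ term) bounds it by $(24L+21)\dot{\iota}^2$.

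The only real subtlety, and the step I would be most careful about, is the constant bookkeeping that must land exactly on the coefficient $\tfrac12$ in front of $\sum_s(f(z_s^h)-\calT f'(z_s^h))^2$ — this is what later lets the regression analysis absorb the cross term against $(\beta_{t,2}^h)^2$ — together with the verification that $\iota\le\dot{\iota}(\delta_{t,h})$ so the final bound can be stated purely in terms of $\dot{\iota}$. Everything else is a routine specialization of \Cref{lem:confidence-interval-helper-I} to unit weights, where the absence of the $1/(\bsigma_s^h)^2$ factors removes the need for the delicate realization-dependent maximum bound via $D_{\calF^h}$.
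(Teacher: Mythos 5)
Your proposal is correct and follows essentially the same route as the paper's proof: apply \Cref{coro:Freedman-variant} to the unweighted MDS with crude global bounds and the realization-dependent second-moment bound $O(1)\cdot\sum_{s}\bigl(f(z_s^h)-\calT f'(z_s^h)\bigr)^2$, take failure probability $\delta_{t,h}/(\calN^2\calN_b^2)$ so that the resulting $\iota$ is dominated by $\dot{\iota}(\delta_{t,h})$ (the factor $2$ in its definition absorbing the $\calN^2\calN_b^2$), and finish with AM--GM to land the coefficient $\tfrac12$ on the quadratic term. The one nit is your envelope $|f-\calT f'|\le L+3$, which makes the additive remainder roughly $(8L+44)\dot{\iota}^2$ and just misses the stated $(24L+21)\dot{\iota}^2$ for $L$ near $1$; replacing it by $|f-\calT f'|\le\max(L,3)\le 3L$ (valid since $L\ge 1$, as the paper does) gives $M=18L$ and a remainder $(24L+\tfrac23)\dot{\iota}^2$, which together with the $18\dot{\iota}^2$ from AM--GM and the $\sqrt{2}\dot{\iota}$ term recovers the exact constant.
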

\begin{proof}
Similar to the proof of~\Cref{lem:confidence-interval-helper-I}, we apply~\Cref{coro:Freedman-variant} on the defined MDS sequence $\D_s^h$. We first bound the quantities of interest:
\begin{align*}
    |\D_s^{h}[f,f']| & \le 2|\eta_s^{h}[f']|\max_{z^{h}_s}|f(z^{h}_s)-\calT f'(z^{h}_s)|\stackrel{(i)}{\le} 18L =:M,\\
    \sum_{s\in[t-1]}\E\left[\left(\D_s^{h}[f,f']\right)^2|z_s^{h}\right] & = \sum_{s\in[t-1]} 4\E\left(\eta_s^{h}[f']\right)^2\left(f(z_s^{h})-\calT f'(z_s^{h})\right)^2\stackrel{(i)}{\le} (18LT)^2 =:V^2;\\
    \sum_{s\in[t-1]}\E\left[\left(\D_s^{h}[f,f']\right)^2|z_s^{h}\right] & = \sum_{s\in[t-1]} 4\E\left[\left(\eta_s^{h}[f']\right)^2|z_s^{h}\right]\left(f(z_s^{h})-\calT f'(z_s^{h})\right)^2 \stackrel{(i)}{\le} 36\sum_{s\in[t-1]}\left(f(z_s^{h})-\calT f'(z_s^{h})\right)^2,
\end{align*}
where we use $(i)$ the size bound that $|\eta_s^{h}|\le 3$ and $\max_{z^{h}_s}|f(z^{h}_s)-\calT f'(z^{h}_s)|\le 3L$ (using $L\ge1$).

Thus, applying~\Cref{coro:Freedman-variant} with $M = 18L$, $V = 18LT$, $v = m =1$ to bound its summation we can conclude that with probability at least $1-\delta_{t,h}/(\calN^2\calN_b^2)$,
\begin{align*}
& \sum_{s\in[t-1]} 2\eta_s^{h}[f']\left(f(z_s^{h})-\calT f'(z_s^{h})\right)^2 \le \dot{\iota}(\delta_{t,h})\sqrt{36\left(\sum_{s\in[t-1]}\left(f(z_s^{h})-\calT f'(z_s^{h})\right)^2\right)+2v^2}\\
& \hspace{18em} +\frac{4}{3}\dot{\iota}^2(\delta_{t,h})\cdot 18L+\frac{2}{3}\dot{\iota}^2(\delta_{t,h})\\
& \hspace{2em} \le \sqrt{2}\dot{\iota}(\delta_{t,h})+\Par{24L+\frac{2}{3}}\dot{\iota}^2(\delta_{t,h})+6\dot{\iota}(\delta_{t,h})\sqrt{\sum_{s\in[t-1]}\left(f(z_s^{h})-\calT f'(z_s^{h})\right)^2}\\
& \hspace{2em}\le (24L+21)\dot{\iota}^2(\delta_{t,h})+\frac{\sum_{s\in[t-1]}\left(f(z_s^{h})-\calT f'(z^{h}_s)\right)^2}{2}.
\end{align*}
The last inequality again uses AM-GM inequality and the fact that $\dot{\iota}(\delta_{t,h})\ge 1$ by definition.
\end{proof}

This lemma helps us prove~\Cref{lem:confidence-interval-overly} as follows.

\begin{proof}[Proof of~\Cref{lem:confidence-interval-overly}]
At step $t\in[T], h\in[H]$, we locally define the probability event
\begin{align*}
\calE_{t,h}
\defeq
\left\lbrace
\begin{array}{r@{}l}
   \sum\limits_{s\in[t-1]} \left(\hat{f}_{t,2}^h\left(x_s^{h},a_s^{h}\right)-\bar{f}_{t,2}^{h}\left(x_s^{h},a_s^{h}\right)\right)^2 & > \left(\beta_{t,2}^h\right)^2
  \end{array}
  \right\rbrace
\end{align*}
so that $\{\bar{f}_{t,2}^h\notin\calF^h_{t,2}\} \subseteq \calE_{t,h}$.

Now by definition of $\hat{f}_{t,2}^h$, we know that with probability $1$ it holds that \[ \sum\limits_{s\in[t-1]}\left(\hat{f}_{t,2}^h\left(z_s^{h}\right)-\bar{f}_{t,2}^h\left(z_s^{h}\right)\right)^2  \le 2\sum\limits_{s\in[t-1]}\left(r^{h}_s+f_{t,2}^{h+1}\left(x_s^{h+1}\right)-\bar{f}_{t,2}^h(z^{h}_s)\right)\left(\hat{f}_{t,2}^h\left(z_s^{h}\right)-\bar{f}_{t,2}^h\left(z_s^{h}\right)\right).\] This event can be equivalently expressed as 
\begin{align*}
    \left\lbrace
\begin{array}{r@{}l}
    \sum\limits_{s\in[t-1]}\left(\hat{f}_{t,2}^h\left(z_s^{h}\right)-\bar{f}_{t,2}^h\left(z_s^{h}\right)\right)^2 & \le 2\sum\limits_{s\in[t-1]}\left(r^{h}_s+f_{t,2}^{h+1}\left(x_s^{h+1}\right)-\bar{f}_{t,2}^h(z^{h}_s)\right)\left(\hat{f}_{t,2}^h\left(z_s^{h}\right)-\bar{f}_{t,2}^h\left(z_s^{h}\right)\right)\\
   \sum\limits_{s\in[t-1]}\left(\hat{f}_{t,2}^h\left(z_s^{h}\right)-\bar{f}_{t,2}^h\left(z_s^{h}\right)\right)^2 & > \left(\beta_{t,2}^h\right)^2
  \end{array}
  \right\rbrace.
\end{align*}

Now for each particular pair of $f\in\calF^h$ where $\hat{f}_{t,2}^h = f$ and $f' = \min\left(1,f''+3\epsilon\right) = f_{t,2}^{h+1}$ where $f''\in\calF^{h+1}+2\calW+\calW$, 
we define the random variables
\[
\eta^{h}_s[f'] \defeq  r^{h}_s+f'\left(x_s^{h+1}\right)-\E\left[r^{h}+f'\left(x^{h+1}\right)|z^{h}_s\right],~~\text{and MDS}~~~\D_{s}^{h}[f,f'] = 2\eta_s^{h}[f']\cdot\left(f\left(z_s^{h}\right)-\calT f'\left(z_s^{h}\right)\right).
\]
Following~\eqref{eq:CI-sum-D-overly} we have with probability at least $1-\delta_{t,h}/\Par{\calN^2\calN_b^2}$,
\[
\sum_{s\in[t-1]}2\eta_s^{h}[f']\left(f(z_s^{h})-\calT f'(z_s^{h})\right)\le (16L+21)\dot{\iota}^2(\delta_{t,h})+\frac{\sum_{s\in[t-1]}\left(f(z_s^{h})-\calT f'(z^{h}_s)\right)^2}{2}.
\]
This implies that for any function $\bar{f}[f']$ satisfying  $\|\bar{f}[f']-\calT f'\|_\infty\le \epsilon$, it holds that with probability at least $1-\delta_{t,h}/\Par{\calN^2\calN_b^2}$,
\begin{align*}
    & 2\sum_{s\in[t-1]}\left(\eta_s^{h}[f']+\calT f'\left(z_s^h\right)-\bar{f}[f'](z^{h}_s)\right)\left(f(z_s^h)-\bar{f}[f'](z_s^h)\right)  \le \sum_{s\in[t-1]}2\eta_s^{h}[f']\left(f(z_s^{h})-\calT f'(z_s^{h})\right)+4tL\epsilon+4t\epsilon\\
    & \hspace{2em} \le  (24L+21)\dot{\iota}^2(\delta_{t,h})+ 8tL\epsilon +\frac{\sum_{s\in[t-1]}\left(f(z_s^{h})-\calT f'(z_s^{h})\right)^2}{2}\\
       & \hspace{2em} \le  (24L+21)\dot{\iota}^2(\delta_{t,h})+ 10tL\epsilon +\frac{\sum_{s\in[t-1]}\left(f(z_s^{h})-\bar{f} [f'](z_s^{h})\right)^2}{2}.
\end{align*}

Note the size of $\calF^h$ and $\calF^{h+1}+2\calW+\calW$ are bounded by $\calN$ and $\calN\calN_b^2$, we thus take a union bound over all choices of $f\in\calF^h$ and $f''\in\calF^{h+1}+2\calW+\calW$ so that 
\begin{align*}
     \P(\calE_{t,h}) & \stackrel{(i)}{\le} \P\left(\sum_{s\in[t-1]} 2\left(\eta_s^{h}+\E[r^h+f_{t,2}^{h+1}(x^{h+1})|z_s^h]-\bar{f}_{t,2}^h(z_s^{h})\right)\left(\hat{f}_{t,2}^h(z_s^{h})-\bar{f}_{t,2}^h(z_s^{h})\right)>\frac{1}{2}\left(\beta_{t,2}^h\right)^2\right.\\ 
    & \qquad \qquad+
    \left.\frac{\sum_{s\in[t-1]}\left(\hat{f}_{t,2}^h(z_s^h)-\bar{f}_{t,2}^h(z_s^h)\right)^2}{2}\right)\\
    & \stackrel{(ii)}{\le} \P\left(\sum_{s\in[t-1]} 2\left(\eta_s^{h}+\E[r^h+f_{t,2}^{h+1}(x^{h+1})|z_s^h]-\bar{f}_{t,2}^h(z_s^{h})\right)\left(\hat{f}_{t,2}^h(z_s^{h})-\bar{f}_{t,2}^h(z_s^{h})\right)>(24L+21)\dot{\iota}^2(\delta_{t,h})\right.\\
    & \qquad \qquad\left.+
  10tL\epsilon +\frac{\sum_{s\in[t-1]}\left(\hat{f}_{t,2}^h(z_s^{h})-\bar{f}_{t,2}^h(z_s^{h})\right)^2}{2}\right)\\
    & \le \delta_{t,h},
\end{align*}
where we use $(i)$ the definition of $\calE_{t,h}$ and $(ii)$ the choice of $\beta_{t,2}^h$. Consequently, 
\[
\P\left( \bar{f}_{t,2}^h\notin\calF^h_{t,2} \right)\le \delta_{t,h},
\]
which implies with probability $1-\delta_{t,h}$, $\bar{f}_{t,2}^h\in\calF_{t,2}^h$ for any fixed given $t\in[T-1]$, $h\in[H]$ (the case $t=0$ holds with probability $1$ by definition).
\end{proof}

Similarly, we have for overly pessimistic values $f_{t,-2}^h$ and $\bar{f}_{t,-2}^h$, the following lemma:

\begin{lemma}\label{lem:confidence-interval-RL-pessimistic}
At step $t\in[T]$ and horizon $h\in[H]$, recall $\bar{f}_{t,-2}^h(x^h, a^h)\in\calF^h$ is some function such that $|\bar{f}_{t,-2}^h(z^h) - \E\left[r^h+f_{t,-2}^{h+1}(x^{h+1})|z^h\right]\le \epsilon$ for all $z^h = (x^h, a^h)$, then we have with probability $1-\delta_{t,h}$, it holds that $\bar{f}_{t,-2}^h\in\calF_{t,-2}^h$ for the constructed $\calF_{t,-2}^h$ based on the definition of confidence interval in~\Cref{alg:fitted-Q-simpler} and $\beta_{t,2}$ in~\eqref{eq:def-beta-overly}.
\end{lemma}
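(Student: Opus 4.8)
The plan is to mirror the proof of \Cref{lem:confidence-interval-overly} almost verbatim, since $\hat{f}_{t,-2}^h$ in \eqref{eq:C-update-rule-overly} is also defined by an \emph{unweighted} least-squares regression, merely with the target $r^h + f_{t,-2}^{h+1}$ in place of $r^h + f_{t,2}^{h+1}$. The conceptually important observation—and the only part that deserves care—is that, unlike the optimistic interval in \Cref{lem:confidence-interval-RL}, the pessimistic bound requires \emph{neither} the variance condition \eqref{eq:CI-RL-conds-I} \emph{nor} the monotonicity-type condition \eqref{eq:CI-RL-conds-II}. This is exactly because the argument rests on unweighted martingale concentration (\Cref{coro:Freedman-variant}), which needs no control linking the per-sample weights to a fixed-target variance, and no a priori bound on $f_{t,-2}^{h+1} - f_\star^{h+1}$. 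This unconditional self-containedness is precisely what allows the overly pessimistic sequence to serve as a safety net throughout the analysis.

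Concretely, I would first define the bad event $\calE_{t,h} \defeq \bigl\{\sum_{s\in[t-1]}(\hat{f}_{t,-2}^h(z_s^h) - \bar{f}_{t,-2}^h(z_s^h))^2 > (\beta_{t,2}^h)^2\bigr\}$, which contains $\{\bar{f}_{t,-2}^h \notin \calF_{t,-2}^h\}$ by the definition of $\calF_{t,-2}^h$ in \eqref{eq:C-update-rule-overly}. Using the basic (first-order optimality) inequality for the unweighted least-squares solution $\hat{f}_{t,-2}^h$ exactly as in the optimistic case, I would bound the left-hand squared term by the martingale cross-term $2\sum_{s}(r_s^h + f_{t,-2}^{h+1}(x_s^{h+1}) - \bar{f}_{t,-2}^h(z_s^h))(\hat{f}_{t,-2}^h(z_s^h) - \bar{f}_{t,-2}^h(z_s^h))$. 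Then I would invoke the helper concentration \eqref{eq:CI-sum-D-overly} with center $f = \hat{f}_{t,-2}^h$ and target $f' = f_{t,-2}^{h+1}$, convert $\calT f'$ into $\bar{f}_{t,-2}^h$ at the cost of an $O(tL\epsilon)$ term via \Cref{ass:eps-realizability-RL} (just as in the optimistic derivation), take a union bound over all admissible pairs $(f,f')$, and finally apply the choice of $\beta_{t,2}^h$ in \eqref{eq:def-beta-overly} to certify $\P(\calE_{t,h}) \le \delta_{t,h}$.

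The remaining points—and they are genuinely mild—concern the range of the pessimistic target and the union-bound cardinality. From \eqref{eq:def-f-t-j} we have $f_{t,-2}^h = \max(\hat{f}_{t,-2}^h - b_{t,2}^h - \epsilon, 0)$, so $f_{t,-2}^{h+1}$ lies in the bounded range $[0,L]$ (upper bounded by $\hat{f}_{t,-2}^{h+1}\le L$, lower bounded by $0$), and the martingale concentration applies after absorbing the resulting $O(L)$ constants into the $L$-dependence already present in $\beta_{t,2}^h$ (recall $L = O(1)$). Moreover the underlying $f''$ ranges only over $\calF^{h+1} - \calW$, a class of size at most $\calN\calN_b$, so the union bound is over at most $\calN^2\calN_b$ pairs $(f,f'')$—\emph{smaller} than the $\calN^2\calN_b^2$ pairs handled in \Cref{lem:confidence-interval-overly}. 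Since each pair fails with probability at most $\delta_{t,h}/(\calN^2\calN_b^2)$ from \eqref{eq:CI-sum-D-overly}, the total failure probability is at most $\delta_{t,h}/\calN_b \le \delta_{t,h}$, with room to spare. Hence the identical radius $\beta_{t,2}^h$ used for the overly optimistic sequence suffices, and no obstacle of substance arises: the statement is a clean specialization of \Cref{lem:confidence-interval-overly} with a strictly smaller target class.
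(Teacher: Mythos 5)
Your proposal is correct and takes essentially the same route as the paper: the paper's own proof of \Cref{lem:confidence-interval-RL-pessimistic} is the one-line remark that the argument is symmetric to \Cref{lem:confidence-interval-overly}, and your write-up carries out exactly that symmetry---same bad event, basic inequality for the unweighted least-squares solution, helper concentration \eqref{eq:CI-sum-D-overly}, $O(tL\epsilon)$ misspecification conversion via \Cref{ass:eps-realizability-RL}, and the radius $\beta_{t,2}^h$ from \eqref{eq:def-beta-overly}. Your supplementary checks (that neither condition \eqref{eq:CI-RL-conds-I} nor \eqref{eq:CI-RL-conds-II} is needed, that $f_{t,-2}^{h+1}\in[0,L]$ after truncation, and that the union bound runs over the smaller class $\calF^{h+1}-\calW$ of size at most $\calN\calN_b$, yielding failure probability $\delta_{t,h}/\calN_b\le\delta_{t,h}$) are all accurate and only introduce slack.
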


\begin{proof}
The proof is symmetric as that of~\Cref{lem:confidence-interval-overly}.
\end{proof}

We also give the following consequence of~\Cref{lem:confidence-interval-RL-pessimistic} together with the definition of generalized Eluder dimension, which will be useful to justify our definition of $\sigma_t^h$ in~\Cref{eq:sigma-def-alg}.

\begin{lemma}\label{lem:RL-var-1}
Conditioning on the good event $\calE_{t,-2}^h$, we have \[\left|\left[\bar{f}_{t,-2}^h(z_{t}^h)\right]^2-\left[\hat{f}_{t,-2}^h(z_{t}^h)\right]^2\right|\le 2L\sqrt{\left(\beta_{t,2}^h\right)^2+\lambda} \cdot D_{\calF^h}(z_{t}^h; z_{[t-1]}^{h},\1_{[t-1]}^{h}).\]
\end{lemma}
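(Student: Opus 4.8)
The plan is to factor the difference of squares and bound each factor separately. Writing $[\bar f_{t,-2}^h(z_t^h)]^2 - [\hat f_{t,-2}^h(z_t^h)]^2 = \bigl(\bar f_{t,-2}^h(z_t^h) + \hat f_{t,-2}^h(z_t^h)\bigr)\bigl(\bar f_{t,-2}^h(z_t^h) - \hat f_{t,-2}^h(z_t^h)\bigr)$, the ``sum'' factor is controlled purely by the range of the class: since $\hat f_{t,-2}^h \in \calF^h$ (as an $\arg\min$ over $\calF^h$ in the unweighted regression~\eqref{eq:C-update-rule-overly}) and $\bar f_{t,-2}^h \in \calF^h$ (by~\Cref{ass:eps-realizability-RL}), both take values in $[0,L]$, so $\bigl|\bar f_{t,-2}^h(z_t^h) + \hat f_{t,-2}^h(z_t^h)\bigr| \le 2L$.

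For the ``difference'' factor I would apply the definition of the modulus $D_{\calF^h}$ (\Cref{def:general-eluder-RL}) with the unweighted sequence $\1_{[t-1]}^h$, taking $f_1 = \bar f_{t,-2}^h$ and $f_2 = \hat f_{t,-2}^h$ (both in $\calF^h$, so the supremum applies). This gives
\[
\bigl(\bar f_{t,-2}^h(z_t^h) - \hat f_{t,-2}^h(z_t^h)\bigr)^2 \le D^2_{\calF^h}\bigl(z_t^h; z_{[t-1]}^h, \1_{[t-1]}^h\bigr)\cdot\Bigl(\textstyle\sum_{s\in[t-1]}\bigl(\bar f_{t,-2}^h(z_s^h) - \hat f_{t,-2}^h(z_s^h)\bigr)^2 + \lambda\Bigr).
\]
Conditioning on the good event $\calE_{t,-2}^h$ means $\bar f_{t,-2}^h \in \calF_{t,-2}^h$, and the (unweighted) definition of $\calF_{t,-2}^h$ in~\eqref{eq:C-update-rule-overly} bounds the inner sum by $(\beta_{t,2}^h)^2$. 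Taking square roots yields $\bigl|\bar f_{t,-2}^h(z_t^h) - \hat f_{t,-2}^h(z_t^h)\bigr| \le D_{\calF^h}(z_t^h; z_{[t-1]}^h, \1_{[t-1]}^h)\sqrt{(\beta_{t,2}^h)^2 + \lambda}$.

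Multiplying the two bounds gives the stated inequality. The argument is entirely routine and I do not anticipate any genuine obstacle; the only point requiring care is consistency of the \emph{unweighted} objects throughout: the regression~\eqref{eq:C-update-rule-overly} defining $\hat f_{t,-2}^h$, its associated confidence set $\calF_{t,-2}^h$, and the modulus $D_{\calF^h}(\,\cdot\,;\,\cdot\,,\1_{[t-1]}^h)$ all use weights identically equal to $1$, which is exactly why $\1_{[t-1]}^h$ (rather than $\bsigma_{[t-1]}^h$) appears in the final bound.
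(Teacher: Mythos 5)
Your proof is correct and follows essentially the same route as the paper's: the paper likewise bounds $\bigl|[\bar f_{t,-2}^h]^2 - [\hat f_{t,-2}^h]^2\bigr| \le 2L\,\bigl|\bar f_{t,-2}^h - \hat f_{t,-2}^h\bigr|$ via the difference-of-squares factorization and range bound, then applies the unweighted modulus $D_{\calF^h}(\cdot\,; z_{[t-1]}^h, \1_{[t-1]}^h)$ together with the confidence-set radius $(\beta_{t,2}^h)^2$ from the good event $\calE_{t,-2}^h$. Your explicit remark on keeping the unweighted objects consistent throughout is exactly the right point of care, and the step noting $\hat f_{t,-2}^h \in \calF^h$ (so the supremum in the definition of $D_{\calF^h}$ applies) is implicit in the paper but correctly made explicit by you.
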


\begin{proof}
    To see this, we note that conditioning on the good event $\bar{f}_{t,-2}^h(\cdot)\in\calF_{t,-2}^h$, we have for any $z$, \begin{align*}
    \left[\bar{f}_{t,-2}^h(z)\right]^2-\left[\hat{f}_{t,-2}^h(z)\right]^2 & \le 2L\left|\bar{f}_{t,-2}^h(z)-\hat{f}_{t,-2}^h(z)\right|\\
    & \le 2L\cdot D_{\calF^h}(z;z_{[t-1]}^h, \1_{[t-1]}^h)\cdot\sqrt{\sum_{s\in[t-1]}\left(\bar{f}_{t,-2}^h(z_s^h)-\hat{f}_{t,-2}^h(z_s^h)\right)^2+\lambda}\\
    & \le2L\cdot D_{\calF^h}(z;z_{[t-1]}^h, \1_{[t-1]}^h)\sqrt{(\beta_{t,2}^h)^2+\lambda}.
    \end{align*}
    Plugging the particular choice of $z = z_t^h$ concludes the proof.
\end{proof}

\paragraph{Confidence interval of second-moment sequence.} Here we prove the property of the optimistic confidence interval we construct for the second-moment sequence.

\begin{lemma}\label{lem:confidence-interval-RL-second}
At step $t\in[T]$ and horizon $h\in[H]$, recall $\psi^h_t(x^h, a^h)\in\calF^h$ satisfies $|\psi_t^h(z^h)- \calT_2 f^{h+1}_{t,1}(z^h)|\le \epsilon$ for any $z^h = (x^h, a^h)$, 
then we have with probability $1-\delta_{t,h}$, it holds that $\psi_t^h\in\calG^h_t$ for the constructed $\calG^h_t$ based on the definition of confidence interval in~\eqref{eq:C-update-rule-RL-second} and $\bbeta$ in~\eqref{eq:def-beta-second}.
\end{lemma}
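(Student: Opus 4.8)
The plan is to mirror the proof of~\Cref{lem:confidence-interval-overly} essentially verbatim, since the second-moment regression~\eqref{eq:C-update-rule-RL-second} is again an \emph{unweighted} least-squares fit; the only change is that the regression target is now the squared quantity $Y_s^h \defeq (r_s^h + f_{t,1}^{h+1}(x_s^{h+1}))^2$, whose conditional mean is $\calT_2 f_{t,1}^{h+1}(z_s^h)$ and which is approximated by $\psi_t^h$ up to $\epsilon$ in $\ell_\infty$ by~\Cref{ass:eps-realizability-RL}. First I would reduce the inclusion $\psi_t^h \in \calG_t^h$ to a martingale tail bound: define the bad event $\calE_{t,h} \defeq \{\sum_{s\in[t-1]}(\hat{g}_t^h(z_s^h) - \psi_t^h(z_s^h))^2 > (\bbeta_t^h)^2\}$, which contains $\{\psi_t^h \notin \calG_t^h\}$. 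By first-order optimality of the least-squares solution $\hat{g}_t^h$, almost surely $\sum_s (\hat{g}_t^h - \psi_t^h)^2 \le 2\sum_s (Y_s^h - \psi_t^h(z_s^h))(\hat{g}_t^h(z_s^h) - \psi_t^h(z_s^h))$. Writing $Y_s^h - \psi_t^h(z_s^h) = \zeta_s^h + (\calT_2 f_{t,1}^{h+1}(z_s^h) - \psi_t^h(z_s^h))$ with the martingale difference $\zeta_s^h \defeq Y_s^h - \calT_2 f_{t,1}^{h+1}(z_s^h)$ and bounding the deterministic term by $\epsilon$ pointwise, the whole problem reduces to controlling $\sum_s \zeta_s^h\,(\hat{g}_t^h(z_s^h) - \psi_t^h(z_s^h))$.

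For a fixed pair $g \in \calF^h$ and target $f' = \min(\hat f + b + \epsilon, 1)$ with $\hat f \in \calF^{h+1}$, $b \in \calW$, I would set $\zeta_s^h[f'] \defeq (r_s^h + f'(x_s^{h+1}))^2 - \calT_2 f'(z_s^h)$ and $\D_s^h[g,f'] \defeq 2\zeta_s^h[f'](g(z_s^h) - \calT_2 f'(z_s^h))$, a martingale difference adapted to $\calH_s^h$, and apply~\Cref{coro:Freedman-variant}. The governing estimates are $|\zeta_s^h[f']| \le 4$ (since $r^h + f' \in [0,2]$ gives $(r^h+f')^2 \in [0,4]$), $\max_z |g(z) - \calT_2 f'(z)| \le 4L$ (using $L \ge 1$), and $\E[(\zeta_s^h[f'])^2 \mid z_s^h] = \rV[(r^h+f'(x^{h+1}))^2 \mid z_s^h] \le \E[(r^h+f')^4 \mid z_s^h] \le 16$. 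These yield the worst-case scale $M = 32L$ and variance proxy $V = 32LT$ in the corollary, together with the realization-dependent bound $\sum_s \E[(\D_s^h[g,f'])^2\mid z_s^h] \le 64\sum_s (g - \calT_2 f')^2$. With $v = m = 1$ and an AM--GM step (exactly as in the overly-optimistic helper lemma) this gives, with probability $1 - \delta_{t,h}/(\calN^2\calN_b)$, a bound of the form $\sum_s \D_s^h[g,f'] \le O(L)\,(\iota'(\delta_{t,h}))^2 + \tfrac12 \sum_s (g - \calT_2 f')^2$, where $\iota'$ is exactly~\eqref{eq:def-delta-and-iota-second}; the replacement of $(V,M)$ by $(32LT, 32L)$ is what produces the arguments $\log(32LT)$ and $\log(32L)$ in the definition of $\iota'$, and $(\iota'(\delta_{t,h}))^2 = 2\log(\calN\calN_b\cdots/\delta_{t,h})$ dominates the raw Freedman threshold for the per-pair failure probability $\delta_{t,h}/(\calN^2\calN_b)$.

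Finally I would union-bound over all $g \in \calF^h$ (at most $\calN$ functions) and all targets $f' \in \calF^{h+1} + \calW$ (at most $\calN\calN_b$ functions, since $f_{t,1}^{h+1}$ is a truncation of a member of $\calF^{h+1} + \calW$), for a total of $\calN^2\calN_b$ pairs, and convert the bound on $\calT_2 f'$ back to one on $\psi_t^h$ at the cost of an extra $O(tL\epsilon)$ term coming from $\|\psi_t^h - \calT_2 f_{t,1}^{h+1}\|_\infty \le \epsilon$. The choice of $\bbeta_t^h$ in~\eqref{eq:def-beta-second} is then exactly large enough that $\P(\calE_{t,h}) \le \delta_{t,h}$, giving $\psi_t^h \in \calG_t^h$ with probability $1 - \delta_{t,h}$ (the base case $t=1$ holding trivially).

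The main obstacle is bookkeeping of constants rather than a new idea: because the target is squared, the martingale increment now lives in $[-4,4]$ (versus $[-2,2]$ for the first-moment regressions) and the labels $Y_s^h$ lie in $[0,4]$, so the function class must contain approximators of values up to $4$ (i.e.\ effectively $L \ge 4$), and every numerical constant --- $M$, $V$, the factor $64$ in the conditional second moment, and hence the $11L+9$-type coefficient in $\bbeta_t^h$ --- must be recomputed. The one genuinely non-trivial check is that the conditional variance of the \emph{squared} target is controlled through the fourth-moment bound $\E[(r^h+f')^4 \mid z_s^h]\le 16$ rather than naively; everything else transcribes directly from the proof of~\Cref{lem:confidence-interval-overly}.
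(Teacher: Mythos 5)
Your proposal matches the paper's proof essentially step for step: the same reduction via the least-squares basic inequality to the martingale term $\sum_s \zeta_s^h(\hat{g}_t^h - \psi_t^h)$, the same helper MDS $2\eta_s^h[f'](f - \calT_2 f')$ with \Cref{coro:Freedman-variant} instantiated at $M = 32L$, $V = 32LT$, $v=m=1$ and realization-dependent second moment $64\sum_s(f-\calT_2 f')^2$, the same union bound over the $\calN^2\calN_b$ pairs $(f, f')$ with $f'$ a truncation of an element of $\calF^{h+1}+\calW$, and the same $O(tL\epsilon)$ conversion from $\calT_2 f'$ to $\psi_t^h$ absorbed into $\bbeta_t^h$ as in~\eqref{eq:def-beta-second}. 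The constant bookkeeping you flag (increments in $[-4,4]$, labels in $[0,4]$, the $11L+9$ coefficient) is precisely what the paper's proof of~\Cref{lem:confidence-interval-helper-second} carries out, so there is nothing to correct.
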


Similar to proving the confidence intervals above we first provide the following lemma.

\begin{lemma}\label{lem:confidence-interval-helper-second} Under the same setting as in~\Cref{lem:confidence-interval-RL-second}, consider filtration $\calH_s^{h}$ and any fixed pair functions $f\in[0,L]$, $f'\in[0,1]$ we define random variables
\begin{align*}
\eta^{h}_s[f'] & \defeq \left(r^{h}_s+f'\left(x_s^{h+1}\right)\right)^2-\E\left[\left(r^{h}+f'\left(x^{h+1}\right)\right)^2|z^{h}_s\right],\\
\text{and MDS}~~~\D_{s}^{h}[f,f'] & \defeq  2\eta_s^{h}[f']\cdot\left(f\left(z_s^{h}\right)-\calT_2 f'\left(z_s^{h}\right)\right),
\end{align*}
then we have with probability $1-\delta_{t,h}/(\calN^2\calN_b)$,
\begin{align}
\sum_{s\in[t-1]} \D_s^h[f,f'] &\le 4(11L+9)\left(\iota'(\delta_{t,h})\right)^2+\frac{\sum_{s\in[t-1]}\left(f(z_s^{h})-\calT_2 f'(z^{h}_s)\right)^2}{2},\label{eq:CI-sum-D-second}\\
~\text{where we recall}~&~\iota'(\delta_{t,h}) = \sqrt{2\log\frac{\calN\calN_b\left(2\log(32LT)+2\right)\left(\log(32L)+2\right)}{\delta_{t,h}}}~\text{as in}~\eqref{eq:def-delta-and-iota-second}.\nonumber
\end{align}
\end{lemma}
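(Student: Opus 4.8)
The plan is to apply the self-normalized Freedman variant \Cref{coro:Freedman-variant} to the martingale difference sequence $\D_s^h[f,f']$, following verbatim the structure of the analogous overly-optimistic helper lemma leading to \eqref{eq:CI-sum-D-overly}, but now tracking the fact that the regression target is the \emph{squared} quantity $(r^h + f'(x^{h+1}))^2$ rather than $r^h + f'(x^{h+1})$. First I would observe that $\D_s^h[f,f']$ is a genuine MDS adapted to $\calH_s^h$: the factor $f(z_s^h) - \calT_2 f'(z_s^h)$ is $\calH_{s-1}^h$-measurable, and $\E[\eta_s^h[f'] \mid z_s^h] = 0$ by the very definition of $\calT_2$.

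Next I would establish the two almost-sure quantities required by \Cref{coro:Freedman-variant}. Since $r^h, f' \in [0,1]$ we have $(r^h + f'(x^{h+1}))^2 \in [0,4]$, hence $|\eta_s^h[f']| \le 4$, and using $L \ge 1$ together with $\calT_2 f'(z) = \E[(r^h+f')^2 \mid z] \in [0,4]$ also $|f(z) - \calT_2 f'(z)| \le 4L$. This gives the scale bound $|\D_s^h[f,f']| \le 2 \cdot 4 \cdot 4L = 32L =: M$ and the crude variance bound $\sum_s \E[(\D_s^h)^2 \mid z_s^h] \le (32LT)^2 =: V^2$, which are exactly the quantities producing the logarithmic factors $2\log(32LT)+2$ and $\log(32L)+2$ inside $\iota'(\delta_{t,h})$ defined in \eqref{eq:def-delta-and-iota-second}. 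The realization-dependent conditional second moment is then controlled via $\E[(\eta_s^h[f'])^2 \mid z_s^h] \le 16$, yielding $\sum_s \E[(\D_s^h)^2 \mid z_s^h] \le 64 \sum_s (f(z_s^h) - \calT_2 f'(z_s^h))^2$.

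Applying \Cref{coro:Freedman-variant} with $M = 32L$, $V = 32LT$ and $v = m = 1$ then produces, with probability at least $1 - \delta_{t,h}/(\calN^2\calN_b)$, a bound of the shape $\sum_s \D_s^h[f,f'] \le \iota' \sqrt{64\sum_s(\cdots)^2 + O(1)} + \tfrac{2}{3}(\iota')^2(2M + 1)$. The factor $\calN^2\calN_b$ in the probability is accommodated because $(\iota')^2 = 2\log(\calN\calN_b A/\delta_{t,h}) \ge \log(\calN^2\calN_b A/\delta_{t,h})$ with $A = (2\log(32LT)+2)(\log(32L)+2)$, which leaves exactly the room needed for the union bound over $f \in \calF^h$ and $f' = f_{t,1}^{h+1} \in \calF^{h+1}+\calW$ carried out in \Cref{lem:confidence-interval-RL-second}. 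Finally I would apply AM-GM to the square-root term, $8\iota'\sqrt{\sum_s(\cdots)^2} \le \tfrac12 \sum_s(\cdots)^2 + 32(\iota')^2$, and collect the $\tfrac{128L}{3}(\iota')^2$, the $32(\iota')^2$, and the lower-order $\sqrt{2}\iota' \le \sqrt{2}(\iota')^2$ and $\tfrac23(\iota')^2$ contributions (using $\iota' \ge 1$) into the stated coefficient $4(11L+9)(\iota')^2$.

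The argument is entirely routine, and the only place demanding care is the constant bookkeeping: getting $|\eta_s^h[f']| \le 4$ and $|f - \calT_2 f'| \le 4L$ exactly right, since the scale $32L$ they induce propagates into the definition of $\iota'$, and then verifying that the collected $(\iota')^2$ terms indeed sum to at most $4(11L+9)(\iota')^2$ after AM-GM. The main conceptual obstacle, as in the unsquared case, is merely ensuring that the single quantity $\iota'$ simultaneously satisfies the precondition of \Cref{coro:Freedman-variant} and absorbs the $\calN^2\calN_b$ union-bound factor, which the inequality above confirms.
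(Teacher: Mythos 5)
Your proposal is correct and follows essentially the same route as the paper's own proof: the identical MDS with crude bounds $M=32L$, $V=32LT$, the realization-dependent variance bound $64\sum_{s}(f(z_s^h)-\calT_2 f'(z_s^h))^2$ via $\E[(\eta_s^h[f'])^2\,|\,z_s^h]\le 16$, an application of \Cref{coro:Freedman-variant} with $v=m=1$, and the same AM-GM step collecting the constants into $4(11L+9)(\iota')^2$ (your check that $\tfrac{128L}{3}+\tfrac{2}{3}+\sqrt{2}+32 \le 44L+36$ is exactly the needed bookkeeping, and your observation that $(\iota')^2\ge\log(\calN^2\calN_b A/\delta_{t,h})$ correctly supplies the confidence level stated in the lemma). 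The only blemish is notational: the factor $f(z_s^h)-\calT_2 f'(z_s^h)$ is measurable with respect to $\calH_s^{h-1}$ (the history up to the revelation of $z_s^h$), not $\calH_{s-1}^h$, but this does not affect the argument.
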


\begin{proof}
Recall the definition of $\calT_2f (z^h)= \rE\left[\left(r^h+f(z^{h+1})\right)^2|z^h\right]$. We note the difference sequence $\D_s^h$ as defined is adapted to $\calH_{s}^{h}$ and satisfies
\begin{align*}
    |\D_s^{h}[f,f']| & \le 2|\eta_s^{h}|\max_{z^{h}_s}|f(z^{h}_s)-\calT_2 f'(z^{h}_s)|\stackrel{(i)}{\le} 32L =: M,\\
    \sum_{s\in[t-1]}\E\left[\left(\D_s^{h}[f,f']\right)^2|z_s^{h}\right] & = \sum_{s\in[t-1]} 4\E\left(\eta_s^{h}[f']\right)^2\left(f(z_s^{h})-\calT_2 f'(z_s^{h})\right)^2\stackrel{(i)}{\le} (32LT)^2 =: V^2\\
    \sum_{s\in[t-1]}\E\left[\left(\D_s^{h}[f,f']\right)^2|z_s^{h}\right] & = \sum_{s\in[t-1]} 4\E\left[\left(\eta_s^{h}[f']\right)^2|z_s^{h}\right]\left(f(z_s^{h})-\calT_2 f'(z_s^{h})\right)^2 \stackrel{(i)}{\le} 64\sum_{s\in[t-1]}\left(f(z_s^{h})-\calT_2 f'(z_s^{h})\right)^2,
\end{align*}
where we use $(i)$ the size bound that $|\eta_s^{h}[f']|\le 4$ and $\max_{z^{h}_s}|f(z^{h}_s)-\calT_2 f'(z^{h}_s)|\le 4L$.

Applying~\Cref{coro:Freedman-variant} with $M = 32L$, $V = 32LT$, $v = m =1$ to bound its summation we can conclude that with probability at least $1-\delta_{t,h}/\calN^2\calN_b^2$,
\begin{align*}
& \sum_{s\in[t-1]} 2\eta_s^{h}[f']\left(f(z_s^{h})-\calT_2 f'(z_s^{h})\right) \le \iota'(\delta_{t,h})\sqrt{64\left(\sum_{s\in[t-1]}\left(f(z_s^{h})-\calT_2 f'(z_s^{h})\right)^2\right)+2v^2}\\
& \hspace{18em} +\frac{4}{3}\left(\iota'(\delta_{t,h})\right)^2\cdot32L+\frac{2}{3}\left(\iota'(\delta_{t,h})\right)^2\\
& \hspace{2em} \le \Par{\frac{2+128L}{3}}\left(\iota'(\delta_{t,h})\right)^2+\sqrt{2}\iota'(\delta_{t,h})+8\iota'(\delta_{t,h})\sqrt{\sum_{s\in[t-1]}\left(f(z_s^{h})-\calT_2 f'(z_s^{h})\right)^2}\\
& \hspace{2em}\le 4(11L+9)\left(\iota'(\delta_{t,h})\right)^2+\frac{\sum_{s\in[t-1]}\left(f(z_s^{h})-\calT_2 f'(z^{h}_s)\right)^2}{2}.
\end{align*}
For the last inequality we use AM-GM inequality and by definition the fact that $\iota'(\delta_{t,h})\ge1$.
\end{proof}

This lemma again helps us prove~\Cref{lem:confidence-interval-RL-second} as follows.

\begin{proof}[Proof of~\Cref{lem:confidence-interval-RL-second}]
At step $t\in[T], h\in[H]$, we locally define the probability event 
\begin{align*}
\calE_{t,h}
\defeq
\left\lbrace
\begin{array}{r@{}l}
   \sum\limits_{s\in[t-1]} \left(\hat{g}_{t}^h\left(x_s^{h},a_s^{h}\right)-\psi_t^h\left(x_s^{h},a_s^{h}\right)\right)^2 & > \left(\bbeta_{t}^h\right)^2
  \end{array}
  \right\rbrace.
\end{align*}
so that  $\{\psi_{t}^h\notin\calG^h_{t}\} \subseteq \calE_{t,h}$

Now by definition of $\hat{g}_{t}^h$, we know that with probability $1$ it holds that $ \sum\limits_{s\in[t-1]}\left(\hat{g}_t^h\left(z_s^{h}\right)-\psi_t^h\left(z_s^{h}\right)\right)^2 \le 2\sum\limits_{s\in[t-1]}\left(\left(r^{h}_s+f_{t,1}^{h+1}\left(x_s^{h+1}\right)\right)^2-\psi_{t}^h(z^{h}_s)\right)\left(\hat{g}_t^h\left(z_s^{h}\right)-\psi_{t}^h\left(z_s^{h}\right)\right)$. This event can be equivalently expressed as This event can be equivalently expressed as 
\begin{align*}
    \left\lbrace
\begin{array}{r@{}l}
    \sum\limits_{s\in[t-1]}\left(\hat{g}_t^h\left(z_s^{h}\right)-\psi_t^h\left(z_s^{h}\right)\right)^2 & \le 2\sum\limits_{s\in[t-1]}\left(\left(r^{h}_s+f_{t,1}^{h+1}\left(x_s^{h+1}\right)\right)^2-\psi_{t}^h(z^{h}_s)\right)\left(\hat{g}_t^h\left(z_s^{h}\right)-\psi_{t}^h\left(z_s^{h}\right)\right)\\
   \sum\limits_{s\in[t-1]}\left(\hat{g}_t^h\left(z_s^{h}\right)-\psi_t^h\left(z_s^{h}\right)\right)^2 & > \left(\bbeta_{t}^h\right)^2
  \end{array}
  \right\rbrace.
\end{align*}

Now for any given pair of $f\in\calF^h$ and $f'$ such that $f' = \min\left(f''+\epsilon,1\right)$ for some $f''\in\calF^{h+1}+\calW$, when $\hat{g}_{t}^h = f$  and $f_{t,1}^{h+1} = f'$ we define random variable $\eta^{h}_s[f'] = \left(r^{h}_s+f'\left(x_s^{h+1}\right)\right)^2-\E\left[\left(r^{h}+f'\left(x^{h+1}\right)\right)^2|z^{h}_s\right]$ and the martingale difference sequence $\D_{s}^{h}[f,f'] = 2\eta_s^{h}[f']\cdot\left(f\left(z_s^{h}\right)-\calT_2 f'\left(z_s^{h}\right)\right)$. 

\Cref{eq:CI-sum-D-second} of~\Cref{lem:confidence-interval-helper-second} implies that for each particular pair of $(f,f')$ where $\hat{g}_{t}^h = f$ and $f_{t,1}^{h+1} = f'$, for any function $\bar{f}[f']\in\calF^h$ satisfying  $\|\bar{f}[f']-\calT_2 f'\|_\infty\le \epsilon$, it holds that with probability at least $1-\delta_{t,h}/\calN^2\calN_b$, we have 
\begin{align*}
    & 2\sum_{s\in[t-1]}\left(\eta_s^{h}[f']+\calT_2 f'\left(z_s^h\right)-\bar{f}[f'](z^{h}_s)\right)\left(f(z_s^h)-\bar{f}[f'](z_s^h)\right)  \le \sum_{s\in[t-1]}2\eta_s^{h}[f']\left(f(z_s^{h})-\calT_2f'(z_s^{h})\right)+4tL\epsilon+8L\epsilon\\
    & \hspace{2em} \le  4(11L+9)\left(\iota'(\delta_{t,h})\right)^2+ 12tL\epsilon +\frac{\sum_{s\in[t-1]}\left(f(z_s^{h})-\calT_2f'(z_s^{h})\right)^2}{2}\\
    & \hspace{2em} \le  4(11L+9)\left(\iota'(\delta_{t,h})\right)^2+ 16tL\epsilon +\frac{\sum_{s\in[t-1]}\left(f(z_s^{h})-\bar{f}[f'](z_s^{h})\right)^2}{2}.
\end{align*}

Consequently, by union bound over all choices of $f\in\calF^h$ and $f' = \min\left(1,f''+3\epsilon\right) = f_{t,2}^{h+1}$ where $f''\in\calF^{h+1}+\calW$, similar to~\Cref{lem:confidence-interval-overly} we have 
\begin{align*}
    \P(\calE_{t,h}) & \le \P\left(\sum_{s\in[t-1]} 2\left(\eta_s^{h}+\E[(r^h+f_{t,2}^{h+1}(x^{h+1}))^2|z_s^h]-\psi_{t}^h(z_s^{h})\right)\left(\hat{g}_t^h(z_s^{h})-\psi_{t}^h(z_s^{h})\right)>\right.\\
   & \quad\quad\quad \left. 4(11L+9)\left(\iota'(\delta_{t,h})\right)^2+ 16tL\epsilon +\frac{\sum_{s\in[t-1]}\left(\hat{g}_t^h(z_s^{h})-\psi_t^h(z_s^{h})\right)^2}{2}\right) \le \delta_{t,h},
\end{align*} and consequently, 
\[
\P\left(\psi_{t}^h\notin\calG^h_{t} \right)\le \delta_{t,h},
\]
which implies with probability $1-\delta_{t,h}$, $\psi_{t}^h\in\calG_{t}^h$ for any fixed given $t\in[T-1]$, $h\in[H]$ (the case $t=0$ holds with probability $1$ by definition).
\end{proof}

We also give the following consequence of~\Cref{lem:confidence-interval-RL-second} together with the definition of generalized Eluder dimension, which will be useful to justify our definition of $\sigma_t^h$ in~\Cref{eq:sigma-def-alg}.

\begin{lemma}\label{lem:RL-var-2} Conditioning on the good event $\bar{\calE}_t^h$, we have 
\[|\psi_t^h(z_t^h)-\hat{g}_t^h(z_t^h)|\le D_{\calF^h}(z_t^h; z_{[t-1]}^h, \1_{[t-1]}^h)\sqrt{\left(\bbeta_t^h\right)^2+\lambda}.\]
\end{lemma}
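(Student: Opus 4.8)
The plan is to read off the bound directly from the definition of the generalized distance $D_{\calF^h}$ together with the version-space containment encoded in the good event. First I would record that both functions appearing in the statement lie in $\calF^h$: the estimate $\hat{g}_t^h$ is by construction the minimizer over $\calF^h$ of the unweighted second-moment regression \eqref{eq:C-update-rule-RL-second}, and $\psi_t^h\in\calF^h$ is the approximant of $\calT_2 f_{t,1}^{h+1}$ guaranteed by Assumption~\ref{ass:eps-realizability-RL}. Consequently $(\psi_t^h,\hat{g}_t^h)$ is an admissible pair in the supremum defining $D_{\calF^h}^2$.

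Next I would instantiate that supremum at $f_1=\psi_t^h$, $f_2=\hat{g}_t^h$, using the unit weight sequence $\1_{[t-1]}^h$ (so that the weighting in $D_{\calF^h}$ matches the unweighted regression used to define $\calG_t^h$). This yields the lower bound
\[
D_{\calF^h}^2\bigl(z_t^h;z_{[t-1]}^h,\1_{[t-1]}^h\bigr)\ \ge\ \frac{\bigl(\psi_t^h(z_t^h)-\hat{g}_t^h(z_t^h)\bigr)^2}{\sum_{s\in[t-1]}\bigl(\psi_t^h(z_s^h)-\hat{g}_t^h(z_s^h)\bigr)^2+\lambda},
\]
which rearranges to
\[
\bigl(\psi_t^h(z_t^h)-\hat{g}_t^h(z_t^h)\bigr)^2\ \le\ D_{\calF^h}^2\bigl(z_t^h;z_{[t-1]}^h,\1_{[t-1]}^h\bigr)\Bigl(\sum_{s\in[t-1]}\bigl(\psi_t^h(z_s^h)-\hat{g}_t^h(z_s^h)\bigr)^2+\lambda\Bigr).
\]

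Finally I would invoke the good event $\bar{\calE}_t^h=\{\psi_t^h\in\calG_t^h\}$. Membership in the version space $\calG_t^h$ from \eqref{eq:C-update-rule-RL-second} is exactly the statement $\sum_{s\in[t-1]}\bigl(\psi_t^h(z_s^h)-\hat{g}_t^h(z_s^h)\bigr)^2\le(\bbeta_t^h)^2$. Substituting this into the previous display and taking square roots gives $|\psi_t^h(z_t^h)-\hat{g}_t^h(z_t^h)|\le D_{\calF^h}(z_t^h;z_{[t-1]}^h,\1_{[t-1]}^h)\sqrt{(\bbeta_t^h)^2+\lambda}$, as claimed.

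There is essentially no genuine obstacle here: the argument is a one-line application of the distance definition plus the containment $\psi_t^h\in\calG_t^h$, and it mirrors the proof of Lemma~\ref{lem:RL-var-1} immediately above, except that here the quantity of interest is already a difference of values in $\calF^h$, so we avoid the extra $2L$ Lipschitz factor incurred there when comparing squares. The only point needing care is consistency of the weighting, namely that the unit weights $\1_{[t-1]}^h$ used in $D_{\calF^h}$ match the unweighted regression defining $\calG_t^h$; this is the reason the statement uses $\1_{[t-1]}^h$ rather than $\bsigma_{[t-1]}^h$.
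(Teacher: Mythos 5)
Your proof is correct and matches the paper's approach: the paper itself states that Lemma~\ref{lem:RL-var-2} follows by the same argument as Lemma~\ref{lem:RL-var-1}, namely instantiating the supremum in $D_{\calF^h}$ at the pair $(\psi_t^h,\hat{g}_t^h)\in\calF^h\times\calF^h$ with unit weights and invoking the version-space containment $\psi_t^h\in\calG_t^h$ from the good event. Your observation that the $2L$ factor of Lemma~\ref{lem:RL-var-1} is absent here (since the statement bounds a difference of values rather than of squares) is also exactly right.
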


\begin{proof}
The proof is similar to that of~\Cref{lem:RL-var-1} so we omit here for brevity.
\end{proof}

\subsection{Validity of Variance Estimator}\label{app:variance}

In this section, we show that our variance over-estimate $\sigma_s^h$ at iteration $s$ is valid for all iterations afterwards $t\ge s$, and bound its difference with the true variance. We recall the definition of $f_{t,j}^h$ in~\Cref{alg:fitted-Q-simpler} such that $f_{t,1}^h(\cdot) \defeq \min\left(\hat{f}_{t,1}^h(\cdot)+b_{t,1}^h(\cdot)+\epsilon,1\right)$, $f_{t,2}^h(\cdot) \defeq \min\left(\hat{f}_{t,2}^h(\cdot)+2b_{t,1}^h(\cdot)+b_{t,2}^h(\cdot)+3\epsilon,2\right)$, $f_{t,-2}^h(\cdot) \defeq \max\left(\hat{f}_{t,-2}^h(\cdot)-b_{t,2}^h(\cdot)-\epsilon,0\right)$, and $\bar{f}_{t,j}^h$ as the conditional expectations. Abusing notation again, we use $\rV[\cdot|z_t^h]=\rV_{r^h, x^{h+1}}[\cdot|z_t^h]$ and $\E[\cdot|z_t^h]=\E_{r^h, x^{h+1}}[\cdot|z_t^h]$ where the randomness is taken with respect to $r^h$ and $x^{h+1}$ conditioning on $z_t^h$ when the meaning is clear from context. We also recall the definition of event $\calE_t^h = \{\bar{f}_{t,j}^h\in\calF_{t,j}^h~\text{for all}~j=1,\pm2~\text{and}~\psi_{t}^h\in\calG_t^h\}$ and $\calE_{\le t} = \cap_{s\le t, h\in[H]}\calE_s^h$.

First, we show that $f_{t,j}^h$ satisfies a pointwise monotonic relation conditioning on previous events. This is an important property that we need to satisfy for fulfilling the assumptions~\eqref{eq:CI-RL-conds-I} and~\eqref{eq:CI-RL-conds-II} required in~\Cref{lem:confidence-interval-RL}, and a reason that we design overly optimistic sequence using \emph{unweighted regression}.

We first restate \Cref{lem:mono} from the main text for convenience and then provide its full proof right after.

\lemmono*

\begin{proof}
We use induction to prove each inequality. Note that under the conditioning $\calE_{\le {t-1}}\cap\Par{\cap_{h'=h}^{H}\calE_{t}^{h'}}$, we have that $\bar{f}_{t,j}^{h'}\in \calF_{t,j}^{h'}$ for $j = 1,\pm2$ and all $h'\geq h$, by definition of the events.

For the first inequality, note that at step $t$ this holds trivially for $h'=H+1$. Now suppose this holds for some $h+1\le h'+1\le H+1$, i.e. we have $f_\star^{h'+1}(z^{h'+1})\le f_{t,1}^{h'+1}(z^{h'+1})$ for any $z^{h'+1}$ and thus $f_\star^{h'+1}(x^{h'+1})\le f_{t,1}^{h'+1}(x^{h'+1})$ for any $x^{h'+1}$. Then for level $h'$, we have conditioning on $\calE_t^{h'}$, for any $z^{h'}$, it holds that
\[
\hat{f}_{t,1}^{h'}(z^{h'})+b_{t,1}^{h'}(z^{h'})+\epsilon\ge \bar{f}_{t,1}^{h'}(z^{h'})+\epsilon\ge \E\left[r^{h'}+f_{t,1}^{h'+1}(x^{h'+1})|z^{h'}\right]\ge \E\left[r^{h'}+f_{\star}^{h'+1}(x^{h'+1})|z^{h'}\right] = f_\star^{h'}(z^{h'}),
\]
where the first inequality is due to the definition of bonus term $b_{t,1}^{h'}$ as in~\Cref{def:bonus-conditions} and conditioning event $\calE_t^{h'}$ so that $\bar{f}_{t,1}^{h'}(z^{h'}) \in \calF^{h'}_t$, the second inequality is due to definition of $\bar{f}_{t,1}^{h'}$ and the third inequality is due to induction.

Recall the definition of $f_{t,1}^{h'}(\cdot) \defeq \min\left(\hat{f}_{t,1}^{h'}(\cdot)+b_{t,1}^{h'}(\cdot)+\epsilon,1\right)$ in~\Cref{line:def-b} of~\Cref{alg:fitted-Q-simpler}, together with the upper bound of $1$ for $f_\star$ by the sparse reward assumption, we have consequently $f_{t,1}^{h'}(z^{h'})\ge f_\star^{h'}(z^{h'})$ for any $z^{h'}\in\calS\times\calA$ and any $h'\ge h$, which proves the inequality when $h'=h$.

For the second inequality, note it also holds trivially for $h'=H+1$. Now suppose this holds for some $h+1\le h'+1\le H+1$, i.e. we have $f_{t,-2}^{h'+1}(\cdot)\le f_{\star}^{h'+1}(\cdot)$. Then for level $h'$, conditioning on $\calE_{t}^{h'}$ we have for any $z^{h'}$, 
\begin{align*}
    \hat{f}^{h'}_{t,-2}(z^{h'})-b_{t,2}^{h'}(z^{h'})-2\epsilon
    & \stackrel{(i)}{\le} \bar{f}^{h'}_{t,-2}(z^{h'})-2\epsilon \stackrel{(ii)}{\le} \E[r^{h'}+f_{t,-2}^{h'+1}(x^{h'+1})|z^{h'}]-\epsilon\\
    & \stackrel{(iii)}{\le} \E[r^{h'}+f_\star^{h'+1}(x^{h'+1})|z^{h'}]-\epsilon \le f_{\star}^{h'}(z^{h'}).
\end{align*}
Here we use $(i)$ the definition of $b_{t,2}^{h'}$ and conditioning event of $\calE_{t}^{h'}$, $(ii)$ the definition of $\bar{f}_{t,-2}^{h'}$, and $(iii)$ the induction assumption. Recall the definition of
$f_{t,-2}^{h'}(\cdot) \defeq \max\left(\hat{f}_{t,-2}^{h'}(\cdot)-b_{t,2}^{h'}(\cdot)-\epsilon,0\right)$ in~\Cref{line:def-b-overly-pess} of~\Cref{alg:fitted-Q-simpler}, together with the fact that in above display $\text{RHS}\ge0$ always holds true by definition, we thus conclude by taking max with $0$ in above inequality that $f_{t,-2}^{h'}(z^{h'})\le f_{\star}^{h'}(z^{h'})$ for all $h'\ge h$ and specifically for $h'=h$.

For the third inequality given any fixed $s\le t$, note it also holds trivially for $h=H+1$. Now suppose this holds for some $h+1\le h'+1\le H+1$, i.e. we have $f_{s,2}^{h'+1}(z^{h'+1})\ge f_{t,1}^{h'+1}(z^{h'+1})$ for all $z^{h'+1}$, and consequently $f_{s,2}^{h'+1}(x^{h'+1})\ge f_{t,1}^{h'+1}(x^{h'+1})$ for all $x^{h'+1}$. Then for level $h'$, conditioning on $\calE_{s}^{h'}$ and $\calE_t^{h'}$ we have for any $z^{h'}$, it holds that 
\begin{equation}\label{eq:mono-helper}
\begin{aligned}
    \hat{f}^{h'}_{s,2}(z^h)+2b_{s,1}^{h'}(z^{h'})+b_{s,2}^{h'}(z^{h'})+3\epsilon
    & \stackrel{(i)}{\ge} \E[r^{h'}+f_{s,2}^{h'+1}(x^{h'+1})|z^{h'}]+2b_{s,1}^{h'}(z^{h'})+2\epsilon\\
    & \stackrel{(ii)}{\ge} \E[r^{h'}+f_{t,1}^{h'+1}(x^{h'+1})|z^{h'}]+2b_{t,1}^{h'}(z^{h'})+2\epsilon.
\end{aligned}
\end{equation}
Here we use $(i)$ the definition of $b_{s,2}$ so that $\hat{f}_{s,2}^{h'}(z^{h'})+b_{s,2}^{h'}(z^{h'})\ge \bar{f}_{s,2}^{h'}(z^{h'})$ conditioning on $\calE_{s}^{h'}$ and definition of $\bar{f}_{s,2}^{h'}(z^{h'})+\epsilon \ge \E[r^{h'}+f_{s,2}^{h'+1}(z^{h'+1})|z^{h'}]$, $(ii)$ the induction assumption together with the consistency condition on bonus thus that $b_{s,1}^{h'}(z^{h'})\ge b_{t,1}^{h'}(z^{h'})$. Recall definition $f_{s,2}^{h'}(\cdot) \defeq \min\left(\hat{f}_{s,2}^{h'}(\cdot)+2b_{s,1}^{h'}(\cdot)+b_{s,2}^{h'}(\cdot)+3\epsilon,2\right)$ in~\Cref{line:def-b-overly-opti} of~\Cref{alg:fitted-Q-simpler}, by taking min with $2$ on both sides of~\eqref{eq:mono-helper} and use non-negativity of $b_{t,1}$, we have $f_{s,2}^{h'}(z^{h'})\ge \calT f_{t,1}^{h'+1}(z^{h'})$. 

Additionally, we also have 
\begin{align*}
\E[r^{h'}+f_{t,1}^{h'+1}(x^{h'+1})|z^{h'}]+2b_{t,1}^{h'}(z^{h'})+2\epsilon	\ge  \hat{f}_{t,1}^{h'}(z^{h'})+b_{t,1}^{h'}(z^{h'})+\epsilon
\end{align*}
using $\hat{f}_{t,1}^{h'}(z^{h'})\le \E[r^{h'}+f_{t,1}^{h'+1}(x^{h'+1})|z^{h'}]+b_{t,1}^{h'}(z^{h'})+\epsilon$ due to definition of $b_{t,1}$ conditioning on $\calE_t^{h'}$. Now taking min with $2$ on both sides we  also obtain $f_{s,2}^{h'}(z^{h'})\ge f_{t,1}^{h'}(z^{h'})$  for all $h'\ge h$ to make the inductive argument. And thus the third inequality also holds when $h'=h$.

The inequalities for all $x^h$ is an immediate consequence of taking maximum over $a^h\in\calA$ for each inequality.
\end{proof}

Such point-wise monotonicity also allows us to prove the upper bound on constructed variance estimator $\sigma_t^h$ for each iteration $t\in[T]$ and level $h\in[H]$. We first restate the lemma from the main text below for convenience.

\lemvariancelower*

\begin{proof}
   Fix any $h\in[H]$, we first consider proving the stated inequality for 
   \[\left(\tilde{\sigma}_t^h\right)^2 \defeq \hat{g}_t^h(z_t^h)-\left(\hat{f}_{t,-2}^h(z_t^{h})\right)^2+D_{\calF^h}(z_t^h; z_{[t-1]}^h, \1_{[t-1]}^h)\cdot\left(\sqrt{\left(\bbeta_t^h\right)^2+\lambda}+2L\sqrt{\left(\beta_{t,2}^h\right)^2+\lambda}\right)+2(1+L)\epsilon .\] 
      
Conditioning on the good event $\calE_t$, by~\Cref{lem:RL-var-1} and~\Cref{lem:RL-var-2}, we know that 
    \begin{align} (\tilde{\sigma}_t^h)^2  & \ge \psi_t^h(z_t^h)-\left(\bar{f}_{t,-2}^h(z_t^h)\right)^2+(1+2L)\epsilon.\label{eq:variance-ineq-1}
    \end{align}
    Plugging $\left|\psi_t^h(z_t^h) - \E\left[(r^h+f_{t,1}^{h+1}(x^{h+1}))^2|z_t^h\right]\right|\le \epsilon$, $\left|\bar{f}_{t,-2}^h(z_t^h) - \E \left[r^h+f_{t,-2}^{h+1}(x^{h+1})|z_t^h\right]\right|\le \epsilon$ and $\bigl(\bar{f}_{t,-2}^h(z_t^h) + \E \left[r^h+f_{t,-2}^{h+1}(x^{h+1})|z_t^h\right]\bigr)\le (1+2L)$ into~\Cref{eq:variance-ineq-1}, we further have 
    \begin{align*} (\tilde{\sigma}_t^h)^2  \ge \E\left[(r^h+f_{t,1}^{h+1}(x^{h+1}))^2|z_t^h\right]-\biggl(\E \left[r^h+f_{t,-2}^{h+1}(x^{h+1})\right]\biggr)^2.
    \end{align*}
Now using the monotonic property  $f_{t,1}^{h+1}(\cdot)\ge f_{\star}^{h+1}(\cdot)\ge f_{t,-2}^{h+1}(\cdot) \geq 0$ conditioning on $\calE_{\le t}$, we have $\left(\tilde{\sigma}_t^h\right)^2\ge  \rV\left[r^h+f_{\star}^{h+1}(x^{h+1})|x_t^h, a_t^h\right]$.
  
 So far we have proven the stated inequality holds for $\left(\tilde{\sigma}_t^h\right)^2$. To show the inequality also holds true for $\left(\sigma_{t}^h\right)^2 = \min\left(4,\left(\tilde{\sigma}_{t}^h\right)^2\right)$, we take minimum with $4$ and note $\rV\left[r^h+f_{\star}^{h+1}(x^{h+1})|x_t^h, a_t^h\right]\le 4$ always holds true.
\end{proof}

The previous two lemmas on point-wise monotonicity and variance lower bound of $\Par{\sigma_t^h}^2$ immediately imply that the good event $\calE_{\le T}$ happens with high probability, following from an inductive argument.

\coroGoodEvent*

\begin{proof}
For any $t\ge 1$, conditioning on $\calE_{\le {t}}\cap\Par{\cap_{h'=h+1}^{H}\calE_{t+1}^{h'}}$, we first show the assumptions needed for step $t+1,h$ in~\Cref{lem:confidence-interval-RL} holds with probability $1$. 

The first assumption $\left(\sigma_s^h\right)^2 \ge\rV\left[r^h+f_{\star}^{h+1}(x^{h+1})|z_s^h\right]$  for all $s\in[t]$ in~\eqref{eq:CI-RL-conds-I} holds due to~\Cref{lem:RL-variance-bounds}. 

For second assumption in~\eqref{eq:CI-RL-conds-II}, it holds naively when $h=H$ since $f_{s,-2}^H\le f_{s,2}^H$ point-wise for all $s\in[t]$ using~\Cref{lem:mono}. When $h<H$ we have  $f_{\star}^{h+1}(x^{h+1})\le f_{t+1,1}^{h+1}(x^{h+1})$ for all $x^{h+1}$ conditioning on the event $\calE_{\le {t}}\cap\Par{\cap_{h'=h+1}^{H}\calE_{t+1}^{h'}}$ due to the first inequality of~\Cref{lem:mono}. Consequently, 
\[\rE\left[|f_{t+1,1}^{h+1}(x^{h+1})-f_{\star}^{h+1}(x^{h+1})|~|~z_s^h\right] = \rE\left[f_{t+1,1}^{h+1}(x^{h+1})-f_{\star}^{h+1}(x^{h+1})~|~z_s^h\right]=\calT f_{t+1,1}^{h+1}(z_s^h)-f_{\star}^{h}(z_s^h).\]
Now since we condition on $\calE_{\le {t}}\cap\Par{\cap_{h'=h+1}^{H}\calE_{t+1}^{h'}}$, the second inequality of~\Cref{lem:mono} implies that $f_{s,-2}^h(z_s^h)\le f_\star^h(z_s^h)$ and the third inequality of~\Cref{lem:mono} implies that $\calT f_{t+1,1}^{h+1}(z_s^h)\le f_{s,2}^h$ for all $s\in[t]$. Plugging these inequalities back we have  $\rE\left[|f_{t+1,1}^{h+1}(x^{h+1})-f_{\star}^{h+1}(x^{h+1})|~|~z_s^h\right]\le f_{s,2}^h(z_s^{h})-f_{s,-2}^h(z_s^{h})$ for all $s\in[t]$ and $z_s^h\in\calS\times\calA$. This also shows that the second assumption required in~\eqref{eq:CI-RL-conds-II} holds.

Thus we have shown conditioning on $\calE_{\le {t}}\cap\Par{\cap_{h'=h+1}^{H}\calE_{t+1}^{h'}}$, the event $\calE_{t+1}^h$ happens with probability $1-5\delta_{t+1,h}$ due to~\Cref{lem:confidence-interval-RL,lem:confidence-interval-overly,lem:confidence-interval-RL-pessimistic,lem:confidence-interval-RL-second}. Taking a union bound and note $\delta_{t,h}= \delta/(T+1)(H+1)$ we thus conclude that with probability $1-5\delta$ the good event $\calE_{\le T}$ happens.
\end{proof}

Next, we also provide an upper bound on the variance estimator $\Par{\sigma_t^h}^2$. It shows the estimator is not much bigger than the variance when taking greedy policy induced by optimistic function $f_{t,1}$.

\begin{restatable}[Upper bound of variance estimator]{lemma}{lemvariance}\label{lem:RL-variance-bounds-upper}
Suppose~\Cref{alg:fitted-Q-simpler} uses a consistent bonus oracle satisfying~\Cref{def:bonus-conditions}. For any step $t\ge2$ conditioning on the good event $\calE_{\le t}$, the variance we estimate $\sigma_t^h$ satisfies
\begin{align*}
\left(\sigma_t^h\right)^2 & \le \rV\left[r^h+f_{t,1}^{h+1}(x^{h+1})|x_t^h, a_t^h\right]+4\left(f_{t,2}^h(z_t^h)-f_{t,-2}^h(z_t^h)\right)\\
   & +4\min\left(1,D_{\calF^h}(z_t^h; z_{[t-1]}^h, \bsigma_{[t-1]}^h)\cdot\left(2\sqrt{\left(\bbeta_t^h\right)^2+\lambda}+4L\sqrt{\left(\beta_{t,2}^h\right)^2+\lambda}\right)\right)+4(2+L)\epsilon.
\end{align*}
\end{restatable}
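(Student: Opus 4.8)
The plan is to strip the outer clamp and work with the unclamped quantity
\[
(\tilde\sigma_t^h)^2 = \hat{g}_t^h(z_t^h) - \bigl(\hat{f}_{t,-2}^h(z_t^h)\bigr)^2 + D_{\calF^h}(z_t^h;z_{[t-1]}^h,\1_{[t-1]}^h)\bigl(\sqrt{(\bbeta_t^h)^2+\lambda}+2L\sqrt{(\beta_{t,2}^h)^2+\lambda}\bigr) + 2(1+L)\epsilon,
\]
restoring $(\sigma_t^h)^2=\min(4,(\tilde\sigma_t^h)^2)$ only at the end. Mirroring the proof of \Cref{lem:RL-variance-bounds} but in the opposite direction, I would apply \Cref{lem:RL-var-2} to get $\hat{g}_t^h(z_t^h)\le \psi_t^h(z_t^h)+D_{\calF^h}(z_t^h;z_{[t-1]}^h,\1_{[t-1]}^h)\sqrt{(\bbeta_t^h)^2+\lambda}$ and \Cref{lem:RL-var-1} to get $-(\hat{f}_{t,-2}^h(z_t^h))^2\le -(\bar{f}_{t,-2}^h(z_t^h))^2+2L\,D_{\calF^h}(z_t^h;z_{[t-1]}^h,\1_{[t-1]}^h)\sqrt{(\beta_{t,2}^h)^2+\lambda}$. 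Adding these to the $D$-slack already built into $\tilde\sigma_t^h$ exactly doubles the unweighted radius coefficient, yielding $(\tilde\sigma_t^h)^2\le \psi_t^h(z_t^h)-(\bar{f}_{t,-2}^h(z_t^h))^2 + 2P\cdot D_{\calF^h}(z_t^h;z_{[t-1]}^h,\1_{[t-1]}^h) + 2(1+L)\epsilon$, where I abbreviate $2P=2\sqrt{(\bbeta_t^h)^2+\lambda}+4L\sqrt{(\beta_{t,2}^h)^2+\lambda}$.

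Next I would pass from the population proxies to genuine conditional moments via $\epsilon$-completeness (\Cref{ass:eps-realizability-RL}): $\psi_t^h(z_t^h)\le \calT_2 f_{t,1}^{h+1}(z_t^h)+\epsilon$, and since $0\le \calT f_{t,-2}^{h+1}(z_t^h)\le 2$, also $-(\bar{f}_{t,-2}^h(z_t^h))^2\le -(\calT f_{t,-2}^{h+1}(z_t^h))^2+4\epsilon$. Writing $A=\calT f_{t,1}^{h+1}(z_t^h)$ and $B=\calT f_{t,-2}^{h+1}(z_t^h)$, the second-moment-minus-squared-mean piece reassembles into the target variance plus a cross term: $\calT_2 f_{t,1}^{h+1}(z_t^h)-B^2=\rV_{r^h,x^{h+1}}[r^h+f_{t,1}^{h+1}(x^{h+1})\mid z_t^h]+A^2-B^2$. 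The cross term is handled by $A^2-B^2=(A+B)(A-B)\le 4(A-B)$, using $r^h,f_{t,1}^{h+1}\le1$ and $f_{t,-2}^{h+1}\ge0$ for $A+B\le 4$, and $f_{t,-2}^{h+1}\le f_{t,1}^{h+1}$ from \Cref{lem:mono} for $A\ge B$.

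It then remains to show $A-B\le f_{t,2}^h(z_t^h)-f_{t,-2}^h(z_t^h)$. For this I combine \Cref{lem:mono} (giving $f_{t,1}^{h+1}\le f_{t,2}^{h+1}$, hence $A-B\le \calT f_{t,2}^{h+1}(z_t^h)-B$) with the definitions \eqref{eq:def-f-t-j}: conditioning on $\calE_{\le t}$, the bonus property of \Cref{def:bonus-conditions} gives $f_{t,-2}^h(z_t^h)\le \calT f_{t,-2}^{h+1}(z_t^h)=B$ and $f_{t,2}^h(z_t^h)\ge \min(\calT f_{t,2}^{h+1}(z_t^h),2)$. A short case split on whether $\calT f_{t,2}^{h+1}(z_t^h)\le 2$ gives the claim in both cases (in the truncated case one invokes $A\le2$ together with $f_{t,-2}^h(z_t^h)\le B$). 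Collecting everything, $(\tilde\sigma_t^h)^2\le \rV[\,\cdot\,]+4\bigl(f_{t,2}^h(z_t^h)-f_{t,-2}^h(z_t^h)\bigr)+2P\cdot D_{\calF^h}(z_t^h;z_{[t-1]}^h,\1_{[t-1]}^h)+O((1+L)\epsilon)$, with the $\epsilon$-terms fitting inside $4(2+L)\epsilon$.

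The last step — and where I expect the real difficulty — is reconciling the two remaining discrepancies with the target: the outer clamp and the fact that the target carries the \emph{weighted} radius $D_{\calF^h}(z_t^h;z_{[t-1]}^h,\bsigma_{[t-1]}^h)$ under a $\min(1,\cdot)$, whereas my bound carries the \emph{unweighted} $D_{\calF^h}(\cdot,\1)$. The clamp resolves the easy regime: when $2P\,D_{\calF^h}(z_t^h;z_{[t-1]}^h,\bsigma_{[t-1]}^h)\ge1$ the target's $4\min(1,\cdot)$ term is $4\ge(\sigma_t^h)^2$ and nothing is needed. In the complementary regime one must show $D_{\calF^h}(\cdot,\1)\le 4\,D_{\calF^h}(\cdot,\bsigma)$ so that $2P\,D_{\calF^h}(\cdot,\1)\le 4\cdot 2P\,D_{\calF^h}(\cdot,\bsigma)$. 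This weighted-versus-unweighted Eluder-radius comparison is the crux: a naive term-by-term weight comparison fails as soon as some $\bsigma_s^h<1$, so the argument must instead exploit the self-normalizing construction of $\bsigma_s^h$ in \eqref{eq:def-bsigma} (notably $\bsigma_s^h\ge 2(\sqrt{\upsilon}+\iota)\sqrt{D_{\calF^h}(z_s^h;z_{[s-1]}^h,\bsigma_{[s-1]}^h)}$ and $\bsigma_s^h\ge\alpha$) to relate the two information measures at the query point $z_t^h$. I would structure the final write-up around this comparison, treating the monotonicity/variance-decomposition steps above as routine once \Cref{lem:mono}, \Cref{lem:RL-var-1}, and \Cref{lem:RL-var-2} are in hand.
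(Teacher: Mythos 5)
Your main derivation coincides, step for step, with the paper's own proof: the same use of \Cref{lem:RL-var-2} and \Cref{lem:RL-var-1} to pass from $\hat{g}_t^h$ and $\hat{f}_{t,-2}^h$ to $\psi_t^h$ and $\bar{f}_{t,-2}^h$ (doubling the $D_{\calF^h}(\cdot;\1)$ coefficient exactly as you compute), the same $\epsilon$-completeness substitutions, and the same cross-term bound $A^2-B^2\le 4(A-B)\le 4\bigl(f_{t,2}^h(z_t^h)-f_{t,-2}^h(z_t^h)+\epsilon\bigr)$. Your detour through $f_{t,2}^h(z_t^h)\ge\min\bigl(\calT f_{t,2}^{h+1}(z_t^h),2\bigr)$ with a case split merely re-derives the third inequality of \Cref{lem:mono} ($f_{t,2}^h\ge\calT f_{t,1}^{h+1}$), which the paper invokes directly; and your constant bookkeeping is slightly loose (the $4\epsilon$ for the squared-mean substitution should be $(2+L)\epsilon$ since $\bar{f}_{t,-2}^h\le L$), but this is harmless within the $4(2+L)\epsilon$ budget.

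The ``crux'' you flag at the end, however, is a red herring, and the resolution you propose would fail. The paper's proof terminates at the \emph{unweighted} bound $(\sigma_t^h)^2\le \rV\bigl[r^h+f_{t,1}^{h+1}(x^{h+1})|z_t^h\bigr]+4\bigl(f_{t,2}^h(z_t^h)-f_{t,-2}^h(z_t^h)\bigr)+D_{\calF^h}(z_t^h;z_{[t-1]}^h,\1_{[t-1]}^h)\cdot\bigl(2\sqrt{(\bbeta_t^h)^2+\lambda}+4L\sqrt{(\beta_{t,2}^h)^2+\lambda}\bigr)+4(2+L)\epsilon$; the $4\min(1,\cdot)$ wrapper then follows trivially from $(\sigma_t^h)^2\le 4$, exactly by the two-regime observation you make. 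Crucially, every downstream use of this lemma (see step $(ii)$ in the proof of \Cref{lem:regret-bound-I}, and its high-probability analogue) consumes $\min\bigl(1,D_{\calF^h}(z_t^h;z_{[t-1]}^h,\1_{[t-1]}^h)\bigr)$ --- the $\bsigma_{[t-1]}^h$ in the lemma statement is inconsistent with the paper's own proof and usage, i.e.\ it is an error in the statement, not a step you are expected to supply. Moreover, the comparison $D_{\calF^h}(\cdot;\1)\le 4\,D_{\calF^h}(\cdot;\bsigma)$ that you hope to extract from the self-normalizing construction of $\bsigma$ is false in general: the construction only guarantees $\bsigma_s^h\ge\alpha=\sqrt{1/TH}$, and when many past weights are near $\alpha$ the weighted denominator in $D^2_{\calF^h}$ inflates by a factor up to $1/\alpha^2=TH$, so $D_{\calF^h}(\cdot;\bsigma)$ can be smaller than $D_{\calF^h}(\cdot;\1)$ by a factor of order $\sqrt{TH}$ (up to the $\lambda$ term). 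You should therefore not structure the write-up around that comparison: stop at the unweighted bound, append the trivial $\min(1,\cdot)$ step, and note that the weight sequence in the stated inequality should read $\1_{[t-1]}$.
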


\begin{proof}

Condioning on $\calE_{\le t}$, we have $\bar{f}_{t,-2}^h\in\calF_{t,2}^h$ and $\psi_t^h\in\calG_t^h$ due to~\Cref{lem:RL-var-1} and~\Cref{lem:RL-var-2}. Thus, by definition of bonus oracle and definition of $\sigma_t^h$ we have
\begin{align}    (\sigma_t^h)^2 & \le \psi_t^h(z_t^h)-\left(\bar{f}_{t,-2}^h(z_t^h)\right)^2 +D_{\calF^h}(z_t^h; z_{[t-1]}^h, \1_{[t-1]}^h)\cdot\left(2\sqrt{\left(\bbeta_t^h\right)^2+\lambda}+4L\sqrt{\left(\beta_{t,2}^h\right)^2+\lambda}\right)+2(1+L)\epsilon.\label{eq:variance-ineq-2}	
\end{align}

Recall $\left|\psi_t^h(z_t^h) - \E\left[(r^h+f_{t,1}^{h+1}(x^{h+1}))^2|z_t^h\right]\right|\le \epsilon$ and $\left|\bar{f}_{t,-2}^h(z_t^h) - \E \left[r^h+f_{t,-2}^{h+1}(x^{h+1})|z_t^h\right]\right|\le \epsilon$, \Cref{eq:variance-ineq-2} implies
   \begin{equation}\label{eq:var-bound-main}
    \begin{aligned}
    \left(\sigma_t^h\right)^2 \le & \E \left[(r^h+f_{t,1}^{h+1}(x^{h+1}))^2|z_t^h\right]- \left(\E \left[r^h+f_{t,-2}^{h+1}(x^{h+1})|z_t^h\right]\right)^2\\
    & +D_{\calF^h}(z_t^h; z_{[t-1]}^h, \1_{[t-1]}^h)\cdot\left(2\sqrt{\left(\bbeta_t^h\right)^2+\lambda}+4L\sqrt{\left(\beta_{t,2}^h\right)^2+\lambda}\right) + 4(1+L)\epsilon.
    \end{aligned}
    \end{equation}
    
Note we have 
    \begin{equation}\label{eq:var-bound-helper-2}
    \begin{aligned}
    & \left(\E\left[r^h+f_{t,1}^{h+1}(x^{h+1})|z_t^h\right]\right)^2-\left(\E\left[r^h+f_{t,-2}^{h+1}(x^{h+1})|z_t^h\right]\right)^2\\
    & \hspace{10em} \stackrel{(i)}{\le} 4\E\left[r^h+f_{t,1}^{h+1}(x^{h+1})-\left(r^h+f_{t,-2}^{h+1}(x^{h+1})\right)|z_t^h\right] \\
    &  \hspace{10em} \stackrel{(ii)}{\le} 4\left(\E\left[r^h+f_{t,1}^{h+1}(x^{h+1})|z_t^h\right]-f_{t,-2}^h(z_t^h)+\epsilon\right)\\
    & \hspace{10em} \stackrel{(iii)}{\le} 4\left(f_{t,2}^h(z_t^h)-f_{t,-2}^h(z_t^h)+\epsilon\right).
    \end{aligned}
    \end{equation}
Here we use $(i)$ the size bounds that $r^h, f_{t,1}^{h+1}, f_{t,-2}^{h+1}\in[0,1]$, $(ii)$ $f_{t,-2}^{h}(z_t^h)\le \bar{f}_{t,-2}^{h}(z_t^h)\le \calT f_{t,-2}^{h+1}+\epsilon$ due to the definition of $b_{t,2}$ and $\bar{f}_{t,-2}^{h} \in \calF_{t,-2}^h$ conditioning on $\calE_{\le t}$, and $(iii)$ the inequality that $f_{t,2}^h(z_t^h)\ge \calT f_{t,1}^{h+1}(z_t^h)$ conditioning on $\calE_{\le t}$ due to the third inequality in~\Cref{lem:mono}.
    
    Plugging~\eqref{eq:var-bound-helper-2} back to~\eqref{eq:var-bound-main}, we have  
    \begin{align*}
    \left(\sigma_{t}^h\right)^2 
    \le & \rV\left[r^h+f_{t,1}^{h+1}(x^{h+1})|z_t^h\right]+4\left(f_{t,2}^h(z_t^h)-f_{t,-2}^h(z_t^h)+\epsilon\right)\\
    & +D_{\calF^h}(z_t^h; z_{[t-1]}^h, \1_{[t-1]}^h)\cdot\left(2\sqrt{\left(\bbeta_t^h\right)^2+\lambda}+4L\sqrt{\left(\beta_{t,2}^h\right)^2+\lambda}\right)+4(1+L)\epsilon\\
    \le & \rV\left[r^h+f_{t,1}^{h+1}(x^{h+1})|z_t^h\right]+4\left(f_{t,2}^h(z_t^h)-f_{t,-2}^h(z_t^h)\right)\\
    & +D_{\calF^h}(z_t^h; z_{[t-1]}^h, \1_{[t-1]}^h)\cdot\left(2\sqrt{\left(\bbeta_t^h\right)^2+\lambda}+4L\sqrt{\left(\beta_{t,2}^h\right)^2+\lambda}\right)+4(2+L)\epsilon.
    \end{align*}
\end{proof}

\subsection{Approximation Error of Optimistic, Overly Optimistic(Pessimistic) Values}\label{app:approx-error}

In this section, we will provide a few inequalities for bounding the optimistic values, overly optimistic values, and overly pessimistic values sequence. We hope to show they will not deviate much from the expected value $V_t$ under exploration rule as defined in~\Cref{def:explore-V}, and thus not deviate much from $V_\star$ as well.

We again recall the definition of $f_{t,j}^h$ for $j=1,\pm2$ from $\eqref{eq:def-f-t-j}$ and also the use of $\calTo$ and $\calToo$ for disjoint set of iterations such that $[T] = \calTo\cup\calToo$ depending on whether $h_t\in[H]$ or not, as in~\eqref{def:calT}.

We also define the martingale difference sequence so that $\xi_{t,j}^h \defeq r^{h}_t+f_{t,j}^{h+1}(x^{h+1}_{t})-\E_{r^h, x^{h+1}}\left[r^h+f_{t,j}^{h+1}(x^{h+1})|z_t^h\right]$ for $j=1,-2,2$, and also $\xi_{t}^h \defeq r^{h}_t+V_t^{h+1}-\E\left[r^h+V_t^{h+1}|z_t^{[h]}, f_{t,1}^{[H]}, f_{t,2}^{[H]}\right]$. Abusing notation again, we use $\rV[\cdot|z_t^h]=\rV_{r^h, x^{h+1}}[\cdot|z_t^h]$ and $\E[\cdot|z_t^h]=\E_{r^h, x^{h+1}}[\cdot|z_t^h]$ where the randomness is taken with respect to $r^h$ and $x^{h+1}$ conditioning on $z_t^h$ when the meaning is clear from context.

For the overly pessimistic sequence $f_{t,-2}^h$, we first have the following guarantee on its lower bound.

\begin{lemma}\label{lem:RL-f-tilde-op}
Suppose~\Cref{alg:fitted-Q-simpler} uses a consistent bonus oracle satisfying~\Cref{def:bonus-conditions}. Conditioning on good event $\calE_{\le t}$, recall $\xi_{t,-2}^h = r^{h}_t+f_{t,-2}^{h+1}(x^{h+1}_{t})-\E_{r^h, x^{h+1}}\left[r^h+f_{t,-2}^{h+1}(x^{h+1})|z_t^h\right]$, and $\xi^h_t = r^{h}_t+V_t^{h+1}-\E\left[r^h+V_t^{h+1}|z_t^{[h]}, f_{t,1}^{[H]}, f_{t,2}^{[H]}\right]$. Then we have for any $t\in\calT$ and any $h\in[H]$, it holds that,
\begin{align*}
    f_{t,-2}^h(z_{t}^h)-V_t^h & \ge -2\sum_{h\le h'\le H}b^{h'}_{t,2}(z_{t}^{h'})+\sum_{h\le h'\le H}\left(\xi_t^{h'}-\xi_{t,-2}^{h'}\right)-2(H-h+1)\epsilon.
\end{align*}
\end{lemma}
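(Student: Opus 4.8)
The plan is to prove the claim by backward induction on $h$, descending from $H+1$ down to the level of interest, establishing at each step a one-level version of the inequality and then chaining. The base case $h=H+1$ is immediate: the algorithm initializes $f_{t,-2}^{H+1}\equiv 0$ and the exploration value satisfies $V_t^{H+1}=0$ (no rewards remain), while both sums on the right-hand side are empty, so both sides equal $0$. For the recursion I would first record the two defining martingale identities. By the definition of $\xi_t^h$ and $V_t^h$ together with the tower property of the exploration value, $V_t^h = r_t^h + V_t^{h+1} - \xi_t^h$; likewise $\E[r^h+f_{t,-2}^{h+1}(x^{h+1})\mid z_t^h] = r_t^h + f_{t,-2}^{h+1}(x_t^{h+1}) - \xi_{t,-2}^h$. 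These let me convert conditional expectations into realized trajectory values plus martingale-difference terms.

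The heart of the argument is the per-level lower bound on $f_{t,-2}^h(z_t^h)$. Since $f_{t,-2}^h(\cdot)=\max(\hat f_{t,-2}^h(\cdot)-b_{t,2}^h(\cdot)-\epsilon,0)$, dropping the truncation only decreases the value, so $f_{t,-2}^h(z_t^h)\ge \hat f_{t,-2}^h(z_t^h)-b_{t,2}^h(z_t^h)-\epsilon$. Conditioning on $\calE_{\le t}$ (hence $\bar f_{t,-2}^h\in\calF_{t,-2}^h$), the second property of the bonus oracle in \Cref{def:bonus-conditions} gives the two-sided control $|\hat f_{t,-2}^h(z_t^h)-\bar f_{t,-2}^h(z_t^h)|\le b_{t,2}^h(z_t^h)$ — exactly the step used in the proof of \Cref{lem:mono} — which yields the factor of two $f_{t,-2}^h(z_t^h)\ge \bar f_{t,-2}^h(z_t^h)-2b_{t,2}^h(z_t^h)-\epsilon$. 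Applying the completeness bound $\bar f_{t,-2}^h\ge \calT f_{t,-2}^{h+1}-\epsilon$ from \Cref{ass:eps-realizability-RL} and then the martingale identity above, I obtain
$$f_{t,-2}^h(z_t^h) \ge r_t^h + f_{t,-2}^{h+1}(x_t^{h+1}) - \xi_{t,-2}^h - 2b_{t,2}^h(z_t^h) - 2\epsilon.$$
Subtracting $V_t^h = r_t^h + V_t^{h+1} - \xi_t^h$ cancels $r_t^h$ and produces the term $(\xi_t^h-\xi_{t,-2}^h)$, giving a one-level recursion in terms of $f_{t,-2}^{h+1}(x_t^{h+1})-V_t^{h+1}$.

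To close the induction I would pass from the $V$-value to the $Q$-value at level $h+1$ via $f_{t,-2}^{h+1}(x_t^{h+1})=\max_a f_{t,-2}^{h+1}(x_t^{h+1},a)\ge f_{t,-2}^{h+1}(z_t^{h+1})$, and then invoke the induction hypothesis at level $h+1$. Collecting the new bonus term $-2b_{t,2}^h(z_t^h)$, the new martingale term $\xi_t^h-\xi_{t,-2}^h$, and the extra $-2\epsilon$ into the inherited sums exactly reconstitutes the stated bound, with the summation ranges extended from $h+1$ to $h$ and the error term becoming $-2(H-h+1)\epsilon$. The main obstacle is the bookkeeping around the bonus: one must verify that $b_{t,2}^h$ — computed in the algorithm with the overly-optimistic center $\hat f_{t,2}^h$ — still dominates the predictive width of the pessimistic version space $\calF_{t,-2}^h$ around $\hat f_{t,-2}^h$, which is precisely what the oracle guarantee together with the event $\calE_{\le t}$ supply (mirroring the treatment of the second inequality in \Cref{lem:mono}); everything else is the routine accounting of the per-level $2\epsilon$ misspecification errors across the $H-h+1$ remaining levels. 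I note the recursion uses only the generic identity $V_t^h=r_t^h+V_t^{h+1}-\xi_t^h$, so it holds irrespective of which branch of the exploration rule is active.
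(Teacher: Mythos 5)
Your proposal is correct and follows essentially the same route as the paper's proof: backward induction on $h$, the per-level bound $f_{t,-2}^h(z_t^h)\ge \bar f_{t,-2}^h(z_t^h)-2b_{t,2}^h(z_t^h)-\epsilon$ obtained from the truncation, the oracle's width guarantee on $\calE_{\le t}$, and completeness, followed by rewriting conditional expectations via $\xi_t^h$, $\xi_{t,-2}^h$ and passing from $V$- to $Q$-values through $f_{t,-2}^{h+1}(x_t^{h+1})\ge f_{t,-2}^{h+1}(z_t^{h+1})$. The only cosmetic differences are starting the induction at $h=H+1$ rather than $h=H$ and stating the martingale identities $V_t^h=r_t^h+V_t^{h+1}-\xi_t^h$ upfront instead of inline, neither of which changes the argument.
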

\begin{proof}
We recall the definition of $\xi_t^h$ and $\xi_{t,-2}^h$. Similar to the previous lemma, for the base case we have  $f_{t,-2}^H(z_t^H)\ge \bar{f}_{t,-2}^H(z_t^H)-2b^H_{t,2}(z_t^H)-\epsilon \geq  V_t^H-2b^H_{t,2}(z_t^H)-2\epsilon$ using definition of $f_{t,-2}^H$ as in~\eqref{eq:def-f-t-j}. Now suppose the condition holds true for step $h+1$ where $2\le h+1\le H$. That is, it holds  $f_{t,-2}^{h+1}(z_t^{h+1})\ge V_t^{h+1}-2\sum_{h+1\le h'\le H}b^{h'}_{t,2}(z_{t}^{h'})+\sum_{h+1\le h'\le H}\left(\xi_{t}^{h'}-\xi_{t,-2}^{h'}\right)-2(H-h)\epsilon$. This implies that $f^{h+1}_{t,-2}(x_t^{h+1})\ge V_t^{h+1}-2\sum_{h+1\le h'\le H}b^{h'}_{t,2}(z_{t}^{h'})+\sum_{h+1\le h'\le H}\left(\xi_{t}^{h'}-\xi_{t,-2}^{h'}\right)-2(H-h)\epsilon$. Then for $z=z_t^h$ at level $h$, we have
\begin{align*}
 & f_{t,-2}^H(z_t^h)-V_t^h=  f^h_{t,-2}(z_t^h)-\bar{f}_{t,-2}^h(z_t^h)+\bar{f}_{t,-2}^h(z_t^h)-V_t^h\\
    & \hspace{2em} \stackrel{(i)}{\ge} -2b_{t,2}^{h}(z_t^{h})-2\epsilon+\E\left[r^h+f_{t,-2}^{h+1}(x^{h+1})-\left(r^h+V_t^{h+1}\right)|z_t^{[h]}, f_{t,1}^{[H]}, f_{t,2}^{[H]}\right]\\
    & \hspace{2em} \stackrel{(ii)}{=}-2b_{t,2}^{h}(z_{t}^{h})-2\epsilon+\xi_{t}^{h}-\xi_{t,-2}^h+f_{t,-2}^{h+1}(x^{h+1}_{t})-V_t^{h+1}\\
    & \hspace{2em}\stackrel{(iii)}{\ge}-2b_{t,2}^{h}(z_{t}^{h})+\xi_{t}^{h}-\xi_{t,-2}^h+\left(-2\sum_{h+1\le h'\le H}b^{h'}_{t,2}(z_{t}^{h'})+\sum_{h+1\le h'\le H}\xi_{t}^{h'}-\sum_{h+1\le h'\le H}\xi_{t,-2}^{h'}-2(H-h)\epsilon\right)-2\epsilon\\
    & \hspace{2em} = -2\sum_{h\le h'\le H}b^{h'}_{t,2}(z_{t}^{h'})+\sum_{h\le h'\le H}\left(\xi_{t}^{h'}-\xi_{t,-2}^{h'}\right)-2(H-h+1)\epsilon.
\end{align*}
Here we use $(i)$ the fact that $\bar{f}^h_{t,-2}\in\calF_{t,-2}^h$ by assumption and definition of $b_{t,2}^h$, $(ii)$ definition of $\xi_t^{h}$, $\xi_{t,-2}^{h}$, and $(iii)$ the recursion together with the definition that $f^{h+1}_{t,-2}(x^{h+1}_{t}) \ge f^{h+1}_{t,-2}(z_{t}^{h+1})$. 
\end{proof}

Next, we bound the optimistic sequence $f_{t,1}^h$ and the overly optimistic sequence $f_{t,2}^h$, depending on whether $t\in\calTo, h_t=H+1$ or $t\in\calToo, h_t\in[H]$.

\begin{lemma}\label{lem:RL-f-tilde-oo}
Suppose~\Cref{alg:fitted-Q-simpler} uses a consistent bonus oracle satisfying~\Cref{def:bonus-conditions}. Conditioning on good event $\calE_{\le t}$, recall $\xi_{t,2}^h = r^{h}_t+f_{t,2}^{h+1}(x^{h+1}_{t})-\E_{r^h, x^{h+1}}\left[r^h+f_{t,2}^{h+1}(x^{h+1})|z_t^h\right]$, $\xi^h_t = r^{h}_t+V_t^{h+1}-\E\left[r^h+V_t^{h+1}|z_t^{[h]},f_{t,1}^{[H]}, f_{t,2}^{[H]}\right]$. Then we have for any $h\ge h_t$,
\begin{align*}
    f_{t,2}^h(x_t^h)-V_t^h & \le  2\sum_{h\le h'\le H}b^{h'}_{t,1}(z_{t}^{h'})+2\sum_{h\le h'\le H}b^{h'}_{t,2}(z_{t}^{h'})+\sum_{h\le h'\le H}\left(\xi_{t}^{h'}-\xi_{t,2}^{h'}\right)+4(H-h+1)\epsilon.
\end{align*}
Further, recall $\xi_{t,1}^h = r^{h}_t+f_{t,1}^{h+1}(x^{h+1}_{t})-\E_{r^h, x^{h+1}}\left[r^h+f_{t,1}^{h+1}(x^{h+1})|z_t^h\right]$, for any $h\le h_t$ we have
\begin{align*}
    f_{t,1}(x_t^h)-V_t^h\le &  2\sum_{h\le h'\le H}b^{h'}_{t,1}(z_{t}^{h'})+2\sum_{h_t\le h'\le H}b^{h'}_{t,2}(z_{t}^{h'})\\
    & +\sum_{h_t\le h'\le H}\left(\xi_{t}^{h'}-\xi_{t,2}^{h'}\right)+\sum_{h\le h'<h_t}\left(\xi_{t}^{h'}-\xi_{t,1}^{h'}\right)+4(H-h+1)\epsilon.
\end{align*}
\end{lemma}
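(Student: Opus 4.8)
The plan is to establish both bounds by backward induction on the horizon level $h$, mirroring the argument already used for the overly pessimistic sequence in~\Cref{lem:RL-f-tilde-op}. Throughout I work on the good event $\calE_{\le t}$, which by~\Cref{coro:good-event-fbar} guarantees $\bar f_{t,1}^h\in\calF_{t,1}^h$ and $\bar f_{t,\pm2}^h\in\calF_{t,\pm2}^h$, so that the domination property of the bonus oracle (second bullet of~\Cref{def:bonus-conditions}) yields the one-step envelopes $\hat f_{t,1}^h(z_t^h)\le \bar f_{t,1}^h(z_t^h)+b_{t,1}^h(z_t^h)$ and $\hat f_{t,2}^h(z_t^h)\le \bar f_{t,2}^h(z_t^h)+b_{t,2}^h(z_t^h)$. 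Combining these with $|\bar f_{t,j}^h-\calT f_{t,j}^{h+1}|\le\epsilon$, dropping the truncating $\min$, and using the identity $\calT f_{t,j}^{h+1}(z_t^h)-V_t^h = \bigl(f_{t,j}^{h+1}(x_t^{h+1})-V_t^{h+1}\bigr)+\bigl(\xi_t^h-\xi_{t,j}^h\bigr)$ (which follows from $V_t^h=r_t^h+V_t^{h+1}-\xi_t^h$ and $\calT f_{t,j}^{h+1}(z_t^h)=r_t^h+f_{t,j}^{h+1}(x_t^{h+1})-\xi_{t,j}^h$), each induction step converts a level-$h$ gap into the level-$(h+1)$ gap plus one bonus/martingale increment.

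For the first inequality I would induct downward from $h=H+1$, where $f_{t,2}^{H+1}=V_t^{H+1}=0$ makes the base case the trivial $0\le 0$ (and at level $H$ the martingale difference $\xi_t^H-\xi_{t,2}^H$ vanishes, matching the empty sum). For the inductive step with $h\ge h_t$, the exploration rule~\eqref{eq:greedy-policy} acts greedily with respect to $f_{t,2}$, so $f_{t,2}^h(x_t^h)=f_{t,2}^h(z_t^h)$; unfolding $f_{t,2}^h(z_t^h)\le \hat f_{t,2}^h(z_t^h)+2b_{t,1}^h+b_{t,2}^h+3\epsilon\le \calT f_{t,2}^{h+1}(z_t^h)+2b_{t,1}^h+2b_{t,2}^h+4\epsilon$ produces exactly the per-level increment $2b_{t,1}^h+2b_{t,2}^h+(\xi_t^h-\xi_{t,2}^h)+4\epsilon$, which accumulates over $h'\ge h$ to the stated bound.

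The second inequality is where the two sequences must be glued, and this is the delicate part. I would again induct downward, but now starting the recursion at the switching level $h_t$. The base case at $h_t$ is the genuinely new step: by definition of $h_t$ the exploration rule has just failed, so $f_{t,1}^{h_t}(x_t^{h_t}) < f_{t,2}^{h_t}(x_t^{h_t}) - u_t \le f_{t,2}^{h_t}(x_t^{h_t})$, and since $V_t^{h_t}$ is the \emph{same} exploration value in both cases, the already-proven first inequality (applied at $h=h_t\ge h_t$) transfers verbatim to give the bound on $f_{t,1}^{h_t}(x_t^{h_t}) - V_t^{h_t}$ with an empty $\sum_{h\le h'<h_t}$ term. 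For $h<h_t$ the rule acts greedily with respect to $f_{t,1}$, so $f_{t,1}^h(x_t^h)=f_{t,1}^h(z_t^h)$, and the envelope $f_{t,1}^h(z_t^h)\le \calT f_{t,1}^{h+1}(z_t^h)+2b_{t,1}^h+2\epsilon$ adds $2b_{t,1}^h+(\xi_t^h-\xi_{t,1}^h)+2\epsilon$ on top of the level-$(h+1)$ hypothesis. The case $t\in\calTo$ (i.e.\ $h_t=H+1$) is then subsumed: the base case is the trivial $0\le0$ at level $H+1$, while the $b_{t,2}$ and $\xi_{t,2}$ sums are empty and the $\xi_{t,1}$ sum runs over all of $[h,H]$.

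I expect the main obstacle to be the careful bookkeeping at the boundary $h_t$ rather than any single hard estimate. One must verify that $V_t^h$ is defined consistently across the switch (so the first inequality may be reused for $f_{t,1}^{h_t}$), that the index ranges for the two bonus sums and for the two martingale families ($\xi_{t,1}$ strictly below $h_t$ versus $\xi_{t,2}$ at and above $h_t$) patch together correctly, and that the switching inequality $f_{t,1}^{h_t}(x_t^{h_t})\le f_{t,2}^{h_t}(x_t^{h_t})$ is precisely what allows the $b_{t,2}$-type error — which never appears in the $f_{t,1}$ update itself — to enter the bound for $f_{t,1}$ on the tail levels. Once the envelope inequalities and the decomposition of $\calT f_{t,j}^{h+1}(z_t^h)-V_t^h$ are in hand, the remaining per-level algebra (including the collapsing of the $\epsilon$ terms into $4(H-h+1)\epsilon$) is routine.
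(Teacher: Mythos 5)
Your proposal is correct and follows essentially the same route as the paper's proof: backward induction using the envelope inequalities $f_{t,2}^h(z_t^h)\le \calT f_{t,2}^{h+1}(z_t^h)+2b_{t,1}^h+2b_{t,2}^h+4\epsilon$ and $f_{t,1}^h(z_t^h)\le \calT f_{t,1}^{h+1}(z_t^h)+2b_{t,1}^h+2\epsilon$ from the good event and the bonus oracle, the same decomposition $\calT f_{t,j}^{h+1}(z_t^h)-V_t^h=\bigl(f_{t,j}^{h+1}(x_t^{h+1})-V_t^{h+1}\bigr)+\bigl(\xi_t^h-\xi_{t,j}^h\bigr)$, and the gluing of the two recursions at the switching level $h_t$. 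The only (harmless) deviation is at the base case $h=h_t$: you derive $f_{t,1}^{h_t}(x_t^{h_t})\le f_{t,2}^{h_t}(x_t^{h_t})$ from the failure of the switching condition in the exploration rule~\eqref{eq:greedy-policy}, whereas the paper invokes the pointwise monotonicity $f_{t,1}^h(\cdot)\le f_{t,2}^h(\cdot)$ from \Cref{lem:mono}; both are valid on $\calE_{\le t}$ and yield the identical bound.
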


\begin{proof}
Conditioning on good event $\calE_t$, we prove this by math induction on $s=1,2,..,t$ and $h=H,\cdots,1$. 

We first recall the definition of $\xi_t^h$ and $\xi_{t,2}^h$. It is obvious that for $h=H$, we have $\hat{f}_{t,2}^H(z)+2b_{t,1}^H(z)+b_{t,2}^H(z)\le \bar{f}^H_{t,2}(z)+2b^H_{t,1}(z)+2b_{t,2}^H(z)\le \E[r^H|z]+2b^H_{t,1}(z)+2b_{t,2}^H(z)+\epsilon$ for any $z\in\calS\times\calA$. Using definition of $f_{t,2}^H$ as in~\eqref{eq:def-f-t-j}, this means in particular we have $f^H_{t,2}(x_t^H) = f^H_{t,2}(z_t^H) \le  V_t^H+2b^H_{t,1}(z_t^H)+2b_{t,2}^H(z_t^H)+4\epsilon$. Now suppose the condition holds true for step $h+1$ where $h_t+1\le h+1\le H$. That is, it holds that  $f_{t,2}^{h+1}(x_t^{h+1}) = f_{t,2}^{h+1}(z_t^{h+1})\le V_t^{h+1}+2\sum_{h+1\le h'\le H}b^{h'}_{t,1}(z_{t}^{h'})+2\sum_{h+1\le h'\le H}b^{h'}_{t,2}(z_{t}^{h'})+\sum_{h+1\le h'\le H}\left(\xi_{t}^{h'}-\xi_{t,2}^{h'}\right)$. 
Then for $z=z_t^h$ at level $h\ge h_t$, we have
\begin{align*}
 & f_{t,2}^h(z_t^h)-V_t^h=  f_{t,2}^h(z_t^h)-\bar{f}_{t,2}^{h}(z_t^h)+\bar{f}_{t,2}^{h}(z_t^h)-V_t^h\\
    & \hspace{2em} \stackrel{(i)}{\le} 2b_{t,1}^{h}(z_t^{h})+2b_{t,2}^h(z_t^h)+4\epsilon+\E\left[r^h+f_{t,2}^{h+1}(x^{h+1})-\left(r^h+V_t^{h+1}\right)|z_t^{[h]}, f_{t,1}^{[H]}, f_{t,2}^{[H]}\right]\\
        & \hspace{2em} \stackrel{(ii)}{=}2b_{t,1}^{h}(z_t^{h})+2b_{t,2}^h(z_t^h)+4\epsilon+\xi_{t}^{h}-\xi_{t,2}^h+f_{t,2}^{h+1}(x^{h+1}_{t})-V_t^{h+1}\\
    & \hspace{2em}\stackrel{(iii)}{\le}2b_{t,1}^{h}(z_{t}^{h})+2b_{t,2}^h(z_t^h)+\xi_{t}^{h}-\xi_{t,2}^h+\left(2\sum_{h+1\le h'\le H}b^{h'}_{t,1}(z_{t}^{h'})+2\sum_{h+1\le h'\le H}b^{h'}_{t,2}(z_{t}^{h'})\right.\\&\qquad\qquad\qquad\qquad\left.+\sum_{h+1\le h'\le H}\xi_{t}^{h'}-\sum_{h+1\le h'\le H}\xi_{t,2}^{h'}\right)+4(H-h+1)\epsilon\\
    & \hspace{2em} = 2\sum_{h\le h'\le H}b^{h'}_{t,1}(z_{t}^{h'})+2\sum_{h\le h'\le H}b^{h'}_{t,2}(z_{t}^{h'})+\sum_{h\le h'\le H}\left(\xi_{t}^{h'}-\xi_{t,2}^{h'}\right)+4(H-h+1)\epsilon.
\end{align*}
Here we use $(i)$ the fact that $\bar{f}_{t,2}^{h+1}\in\calF_{t,2}^h$ by assumption and definition of $b_t^h$, $(ii)$ definition of $\xi_t^{h}$, $\xi_{t,2}^h$, and $(iii)$ the recursion. By noting due to choice of greedy policy $f_{t,2}^h(z_t^h) = \max_{a^h}f_{t,2}^h(x_t^h,a^h) = f_{t,2}^h(x_t^h)$ for all $h\ge h_t$ concludes the final inequality.

For the second inequality, we note that by~\Cref{lem:mono} it holds that $f_{t,1}^h(\cdot)\le f_{t,2}^h(\cdot)$ point-wise and consequently $f_{t,1}^{h}(x_t^{h})\le f_{t,2}^{h}(x_t^{h})$ for $h=h_t\in[H]$, which implies when $h=h_t\in[H]$,
\begin{align*}
  f_{t,1}^h(x_t^h)-V_t^h\le  2\sum_{h\le h'\le H}b^{h'}_{t,1}(z_{t}^{h'})+2\sum_{h\le h'\le H}b^{h'}_{t,2}(z_{t}^{h'})+\sum_{h\le h'\le H}\left(\xi_{t}^{h'}-\xi_{t,2}^{h'}\right)+4(H-h+1)\epsilon.
  \end{align*}
  The above case also holds true when $h_t = H+1$.
 Thus, this shows the base case holds true when $h=h_t\in[H+1]$. Now suppose the inequality we want to show for $f^h_{t,1}$ holds for $h+1$ where $1\le h+1\le h_t$, then for $z=z_t^h$ at level $h$, we have
\begin{align*}
 & f_{t,1}^h(z_t^h)-V_t^h=  f_{t,1}^h(z_t^h)-\bar{f}^h_{t,1}(z_t^h)+\bar{f}^h_{t,1}(z_t^h)-V_t^h\\
    & \hspace{2em} \stackrel{(i)}{\le} 2b_{t,1}^{h}(z_t^{h})+2\epsilon+\E\left[r^h+f_{t,1}^{h+1}(x^{h+1})-\left(r^h+V_t^{h+1}\right)|z_t^{[h]}, f_{t,1}^{[H]}, f_{t,2}^{[H]}\right]\\
    & \hspace{2em} \stackrel{(ii)}{=}2b_{t,1}^{h}(z_{t}^{h})+2\epsilon+\xi_{t}^{h}-\xi_{t,1}^{h}+f_{t,1}^{h+1}(x^{h+1}_{t})-V_t^{h+1}\\
    & \hspace{2em}\stackrel{(iii)}{\le}2b_{t,1}^{h}(z_{t}^{h})+\xi_{t}^{h}-\xi_{t,1}^{h}+2\sum_{h+1\le h'\le H}b^{h'}_{t,1}(z_{t}^{h'})+2\sum_{h_t\le h'\le H}b^{h'}_{t,2}(z_{t}^{h'}) \\
    & \hspace{5em} +\sum_{h_t\le h'\le H}\left(\xi_{t}^{h'}-\xi_{t,2}^{h'}\right)+\sum_{h+1\le h'<h_t}\left(\xi_{t}^{h'}-\xi_{t,1}^{h'}\right) +4(H-h+1)\epsilon\\
    & \hspace{2em} \le  2\sum_{h\le h'\le H}b^{h'}_{t,1}(z_{t}^{h'})+2\sum_{h_t\le h'\le H}b^{h'}_{t,2}(z_{t}^{h'}) +\sum_{h_t\le h'\le H}\left(\xi_{t}^{h'}-\xi_{t,2}^{h'}\right)+\sum_{h\le h'<h_t}\left(\xi_{t}^{h'}-\xi_{t,1}^{h'}\right)+4(H-h+1)\epsilon.
\end{align*}
Here we use $(i)$ the fact that $\bar{f}^h_{t,1}\in\calF_{t,1}^h$ by assumption and definition of $b_t^h$, $(ii)$ definition of $\xi_t^{h}$, $\xi_{t,1}^{h}$, and $(iii)$ the recursion. By noting due to choice of greedy policy $f_{t,1}^h(z_t^h) = \max_{a^h}f_{t,1}^h(x_t^h,a^h) = f_{t,1}^h(x_t^h)$ concludes the final inequality.
\end{proof}

\subsection{Bounding the Regret in Expectation}\label{app:regret}

From now on we will denote the good event $\calE_{\le T}$, which by~\Cref{coro:good-event-fbar} happens with probability $1-5\delta$.

When $\calE_{\le T}$ happens, the regret can be expressed as 
\begin{equation}\label{eq:regret-prelim}
\begin{aligned}
      R_T & = \sum_{t\in[T]}\left(f^1_\star(x_t^1)-V_t^1\right)  \le O(1)+\sum_{2\le t \le T}\left(f^1_{t,1}(x_t^1)-V_t^1\right)\\
    & \le O(1+TH\epsilon)+ 2\sum_{ t\in \calTo} \min\left(1+L,\sum_{h\in[H]}b_{t,1}^h(z_{t}^h)\right)+\sum_{ t\in\calTo}\sum_{h\in[H]}\left(\xi_{t}^{h}-\xi_{t,1}^h\right)\\
    & \hspace{3em} +2\sum_{ t\in \calToo} \min\left(1+L,\sum_{h\in[H]}b_{t,1}^h(z_{t}^h)\right)+2\sum_{ t\in \calToo} \min\left(1+L,\sum_{h_t\le h\le H}b_{t,2}^h(z_{t}^h)\right)\\
    & \hspace{3em} +\sum_{ t\in\calToo}\left(\sum_{1\le h<h_t}\left(\xi_{t}^{h}-\xi_{t,1}^h\right)+\sum_{h_t\le h\le H}\left(\xi_{t}^{h}-\xi_{t,2}^h\right)\right)\\
    & \le O(1+TH\epsilon)+ 2\underbrace{\sum_{ t\in [T]}\sum_{h\in[H]} \min\left(1+L, b_{t,1}^h(z_{t}^h)\right)}_{I}+2\underbrace{\sum_{ t\in \calToo}\sum_{h\in[H]} \min\left(1+L, b_{t,2}^h(z_{t}^h)\right)}_{III}\\
    & \hspace{3em} +\left[\sum_{t\in[T],h\in[H]}\xi_t^h-\sum_{ t\in[T]}\sum_{h\in[H]}\xi_{t,1}+\sum_{ t\in\calToo}\left(\sum_{h_t\le h\le H}\xi_{t,1}^h-\sum_{h_t\le h\le H}\xi_{t,2}^h\right)\right].
\end{aligned}
\end{equation}

We again recall a few notations that we heavily use throughout this section.
\begin{align*}
    I & \defeq \sum_{t\in[T]}\sum_{h\in[H]}\min\left(1+L,b_{t,1}^h(z_{t}^h)\right), \\
    II & \defeq \sum_{t\in[T]}\sum_{h\in[H]}\min\left(1+L,b_{t,2}^h(z_{t}^h)\right),\\
    III & \defeq \sum_{t\in\calToo}\sum_{h\in[H]}\min\left(1+L,b_{t,2}^h(z_{t}^h)\right).
\end{align*}

We also recall the following notations
\begin{align*}
    \calToo & \defeq\{t\in[T]:\text{there exists some}~h\in[H]~\text{that exploration is guided by}~f_{t,2}^h\},\\
    \sum_{t\in[T], h\in[H]}\xi_{t}^{h} & \defeq \left(\sum_{t\in[T], h\in[H]}r^{h}_t+V_t^{h+1}-\E\left[r^h+V_t^{h+1}|z_t^{[h]}, f_{t,1}^{[H]}, f_{t,2}^{[H]}\right]\right),\\
     \sum_{t\in[T], h\in[H]}\xi_{t,1}^{h} & \defeq \sum_{t\in[T], h\in[H]}\left(r^h_t+f_{t,1}^{h+1}(x^{h+1}_t) -\rE_{r^h, x^{h+1}}\left[r^h+f_{t,1}^{h+1}(x^{h+1}) |z_t^h\right]\right),\\
      \sum_{t\in[T], h\in[H]}\xi_{t,\pm2}^{h} & \defeq \sum_{t\in[T], h\in[H]}\left(r^h_t+f_{t,\pm2}^{h+1}(x^{h+1}_t) -\rE_{r^h, x^{h+1}}\left[r^h+f_{t,\pm2}^{h+1}(x^{h+1}) |z_t^h\right]\right),\\
      d_\alpha & \defeq H^{-1}\Par{\sum_{h\in[H]}\dim_{\alpha,T}(\calF^h)}.
\end{align*}
The last equation defining $d_\alpha$ will be used mainly for notational simplicity.

The next is a simple fact about the defined $\xi_{t}^h$ and $\xi_{t,j}^h$, which will be useful multiple times in the later analysis. The fact builds on the observation that $a_t^h$ is determined solely by the filtration of $\calH_t^{h-1}=\sigma(x_1^{1},r_1^1, x_1^2,\cdots, r^{H}_1,x^{H+1}_1; x_2^{1},r_2^1, x_2^2,\cdots, r^{H}_2,x^{H+1}_2;\cdots,x_t^1,r_t^1,\cdots, r_t^{h-1},x_t^{h})$, due to policy exploration rule~\eqref{eq:greedy-policy}.

\begin{lemma}\label{fact-xi}
By definition $\xi_{t}^h$ and $\xi_{t,j}^h$ for $j=1,\pm2$ are all adapted to filtration $\calH_t^{h-1}$. They are martingale difference sequence (MDS) satisfying $\E[\xi_t^h|\calH_t^{h-1}]=0$ and $\E[\xi_{t,j}^h|\calH_t^{h-1}]=0$. As an immediate consequence, we have for all $j=1,\pm2$,
\begin{align*}
    \E[\sum_{t\in[T]}\sum_{h\in[H]}\xi_{t}^h] & =\E[\sum_{t\in[T]}\sum_{h\in[H]}\xi_{t,j}^h]=0,\\
    \text{and}~\E[\sum_{t\in\calToo}\sum_{h_t\le h\le H}\xi_{t}^h] & =\E[\sum_{t\in\calToo}\sum_{h_t\le h\le H}\xi_{t,
j}^h]=0.
\end{align*}
\end{lemma}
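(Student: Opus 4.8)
The plan is to establish the one-step martingale-difference property $\E[\xi_t^h\mid\calH_t^{h-1}]=\E[\xi_{t,j}^h\mid\calH_t^{h-1}]=0$ directly from the definitions, and then deduce the vanishing of both sums by the tower property, the only nontrivial point being the random range of summation in the second identity.

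First I would check that everything entering $\xi_{t,j}^h$ except the fresh randomness $(r_t^h,x_t^{h+1})$ is $\calH_t^{h-1}$-measurable. The functions $f_{t,1}^{[H]},f_{t,2}^{[H]}$ are computed from $\calD_{[t-1]}^{[H]}$ alone, hence are fixed given $\calH_t^{h-1}$; and the action $a_t^h$, determined by the exploration rule~\eqref{eq:greedy-policy}, depends only on these functions and on the prefix states $x_t^{h'}$, $h'\le h$, all of which are contained in $\calH_t^{h-1}$. Thus $z_t^h=(x_t^h,a_t^h)$ is $\calH_t^{h-1}$-measurable. By the Markov property, conditional on $\calH_t^{h-1}$ the pair $(r_t^h,x_t^{h+1})$ is distributed only according to $z_t^h$, so $\E[r_t^h+f_{t,j}^{h+1}(x_t^{h+1})\mid\calH_t^{h-1}]=\E_{r^h,x^{h+1}}[r^h+f_{t,j}^{h+1}(x^{h+1})\mid z_t^h]$, giving $\E[\xi_{t,j}^h\mid\calH_t^{h-1}]=0$. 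The identical argument applies to $\xi_t^h$: here $V_t^{h+1}$ is a deterministic function of $(x_t^{[h+1]},f_{t,1}^{[H]},f_{t,2}^{[H]})$, hence of $(\calH_t^{h-1},x_t^{h+1})$, and conditioning on $z_t^{[h]}$ together with the functions coincides with conditioning on $\calH_t^{h-1}$ for the purposes of this expectation.

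Given the MDS property, the first identity follows immediately: $\E[\sum_{t,h}\xi_t^h]=\sum_{t,h}\E[\E[\xi_t^h\mid\calH_t^{h-1}]]=0$, and likewise for $\xi_{t,j}^h$. For the second identity I would rewrite the random-range sum as $\sum_{t\in[T]}\sum_{h\in[H]}\ind\{h_t\le h\}\,\xi_t^h$, since $h_t\le h\le H$ already forces $t\in\calToo$. The crux is then that $\ind\{h_t\le h\}$ is $\calH_t^{h-1}$-measurable: the event that the switch to $f_{t,2}$ has occurred at or before level $h$ is decided by the comparisons $f_{t,1}^{h'}(x_t^{h''})\ge f_{t,2}^{h'}(x_t^{h''})-u_t$ for $h''\le h'\le h$, which involve only prefix states up to $x_t^h$ and the fixed functions. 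With this measurability, the indicator pulls out of the conditional expectation, $\E[\ind\{h_t\le h\}\xi_t^h\mid\calH_t^{h-1}]=\ind\{h_t\le h\}\E[\xi_t^h\mid\calH_t^{h-1}]=0$, and the tower property finishes the claim (identically for $\xi_{t,j}^h$).

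The main obstacle is purely this measurability bookkeeping for the randomly indexed sum: one must confirm that the stopping level $h_t$ is a function of information available strictly before the reward and transition at level $h$ are revealed, so that $\ind\{h_t\le h\}$ may be treated as a constant under $\E[\cdot\mid\calH_t^{h-1}]$. Everything else is a routine application of the Markov property and the tower rule.
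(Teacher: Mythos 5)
Your proof is correct and follows essentially the same route as the paper: rewrite the randomly indexed sum as $\sum_{t\in[T]}\sum_{h\in[H]}\1_{\{h_t\le h\}}\xi_t^h$, observe that the switching indicator is $\calH_t^{h-1}$-measurable because the exploration rule~\eqref{eq:greedy-policy} only consults prefix states up to $x_t^h$ and functions computed from $\calD_{[t-1]}$, pull it out of the conditional expectation, and apply the tower property. Your version is if anything slightly more careful than the paper's, both in spelling out why $z_t^h$ and the MDS property are $\calH_t^{h-1}$-adapted (which the paper takes as given) and in writing the indicator with the correct orientation $\1_{\{h_t\le h\}}$, whereas the paper's display contains a typo ($\1_{\{h_t\ge h\}}$).
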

\begin{proof}
We only prove the last equation for $\xi_t^h$. Note we can write 
\[
\E\left[\sum_{t\in\calToo}\sum_{h_t\le h\le H}\xi_{t}^h\right]= \E\left[\sum_{t\in[T]}\sum_{1\le h\le H}\E[\1_{\{h_t\ge h\}}\xi_{t}^h|\calH_t^{h-1}]\right] \stackrel{(\star)}= \E\left[\sum_{t\in[T]}\sum_{1\le h\le H}\1_{\{h_t\ge h\}}\E[\xi_{t}^h|\calH_t^{h-1}]\right] = 0.
\]
Here for equation $(\star)$ we use the fact that random variable $\1_{\{h_t\ge h\}}$ is adapted to $\calH_t^{h-1}$. Similar for the proof of $\xi_{t,j}^h$.
\end{proof}

Now first of all, in the analysis we bound term $II$. To do so we rely on the definition of bonus oracle and the assumption of bounded generalized Eluder dimension.

\begin{lemma}[Crude bound on $II$]\label{lem:bound-E-III}
Given $b_{t,2}(\cdot)\le C\cdot \biggl(D_{\calF^h}(\cdot; z_{[t-1]}^h, \1_{[t-1]}^h)\sqrt{\left(\beta_{t,2}^h\right)^2+\lambda}+\epsc \cdot \beta_{t,2}^h\biggr)$, when $\lambda = \Theta(1)$, $\alpha\le 1$, we have for a subset $\calT\subseteq [T]$ the following inequality holds true:
\begin{align*}
    \sum_{t\in\calT}\sum_{h\in[H]}\min\left(1+L, b_{t,2}^h(z_t^h)\right) = O\left(\sqrt{\log\frac{\calN \calN_bTH}{\delta}+T\epsilon}\cdot\left(H\cdot d_\alpha+H\sqrt{|\calT|\cdot d_\alpha}+|\calT|H\epsc\right)\right),
\end{align*}
This immediately implies that
\begin{align*}
II & \defeq \sum_{t\in[T]}\sum_{h\in[H]}\min\left(1+L, b_{t,2}^h(z_t^h)\right) = O\left(\sqrt{\log\frac{\calN \calN_bTH}{\delta}+T\epsilon}\cdot\left(H\cdot d_\alpha+H\sqrt{T\cdot d_\alpha}+TH\epsc\right)\right).
\end{align*}
\end{lemma}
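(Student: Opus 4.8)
The plan is to reduce the double sum to the \emph{unweighted} predictive widths $D_{\calF^h}(z_t^h; z_{[t-1]}^h, \1_{[t-1]}^h)$ and then invoke the generalized Eluder dimension evaluated at the constant weight sequence $\bm{\sigma}\equiv\1$. First I would substitute the third-bullet guarantee of \Cref{def:bonus-conditions}, namely $b_{t,2}^h(z_t^h) \le C\bigl(D_{\calF^h}(z_t^h; z_{[t-1]}^h,\1_{[t-1]}^h)\sqrt{(\beta_{t,2}^h)^2+\lambda} + \epsc\,\beta_{t,2}^h\bigr)$, and peel off the misspecification piece using the elementary inequality $\min(c,a+b)\le \min(c,a)+b$ (valid for $a,b,c\ge 0$). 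This splits each summand into a \emph{width part} $\min\bigl(1+L,\, C\,D_{\calF^h}(z_t^h;z_{[t-1]}^h,\1_{[t-1]}^h)\sqrt{(\beta_{t,2}^h)^2+\lambda}\bigr)$ and an additive remainder $C\epsc\,\beta_{t,2}^h$.

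For the additive remainder I would bound $\beta_{t,2}^h$ uniformly: since $\beta_{t,2}^h$ is non-decreasing in $t$ and $\lambda=\Theta(1)$, \Cref{tbl:parameters} gives $\beta := \max_{t\in[T],h\in[H]}\sqrt{(\beta_{t,2}^h)^2+\lambda} = O\bigl(\sqrt{\log(\calN\calN_b TH/\delta)+T\epsilon}\bigr)$ using $L=O(1)$. Summing $C\epsc\,\beta_{t,2}^h$ over the at most $|\calT|H$ pairs produces the $O(\beta\cdot|\calT|H\epsc)$ term in the target bound.

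The core estimate is the width part. Setting $w_{t,h} := \min\bigl(1, D_{\calF^h}(z_t^h; z_{[t-1]}^h,\1_{[t-1]}^h)\bigr)$, the definition of the generalized Eluder dimension (\Cref{def:general-eluder-RL}) with $\bm{\sigma}\equiv\1\ge\alpha$ (here is where $\alpha\le 1$ is used) gives $\sum_{t\in[T]} w_{t,h}^2 = \sum_{t\in[T]}\min\bigl(1, D_{\calF^h}^2(z_t^h;\dots)\bigr) \le \dim_{\alpha,T}(\calF^h)$. I would then split the $t$-sum by whether the width saturates. On rounds with $D_{\calF^h}(z_t^h;\dots)>1$ each summand is at most $1+L=O(1)$, and there are at most $\dim_{\alpha,T}(\calF^h)$ such rounds (each contributes $1$ to $\sum_t w_{t,h}^2$), so after summing over $h$ these contribute $O(Hd_\alpha)$. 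On rounds with $D_{\calF^h}(z_t^h;\dots)\le 1$ each summand is at most $C\beta\,w_{t,h}$, and Cauchy--Schwarz gives $\sum_{t\in\calT} w_{t,h} \le \sqrt{|\calT|}\sqrt{\sum_{t} w_{t,h}^2} \le \sqrt{|\calT|\,\dim_{\alpha,T}(\calF^h)}$; summing over $h$ and applying Cauchy--Schwarz across levels, $\sum_{h\in[H]}\sqrt{\dim_{\alpha,T}(\calF^h)}\le \sqrt{H\sum_h \dim_{\alpha,T}(\calF^h)} = H\sqrt{d_\alpha}$, yields $O\bigl(\beta H\sqrt{|\calT|\,d_\alpha}\bigr)$. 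Collecting the three contributions and using $\beta\ge 1$ to absorb the $O(Hd_\alpha)$ piece under the common prefactor $\beta$ gives the stated bound; the inequality for $II$ then follows by taking $\calT=[T]$, so $|\calT|=T$.

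The main obstacle is extracting the two distinct shapes $Hd_\alpha$ (with no $\sqrt{|\calT|}$) and $H\sqrt{|\calT|\,d_\alpha}$ from a single sum, which is exactly what the saturation case-split accomplishes: one must count the large-width rounds through the Eluder dimension rather than naively by $|\calT|$, and must aggregate the per-level dimensions with the Cauchy--Schwarz/Jensen step so that the $\sqrt{H}$ arising from the level sum combines correctly with $\sqrt{d_\alpha}$ into $H\sqrt{d_\alpha}$. The remaining manipulations (the uniform bound on $\beta_{t,2}^h$ and the splitting of the $\min$) are routine.
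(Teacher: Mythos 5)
Your proof is correct and follows essentially the same route as the paper's: substitute the oracle's width bound, pull out the uniform factor $\sqrt{(\beta_{t,2}^h)^2+\lambda}=O\bigl(\sqrt{\log(\calN\calN_b TH/\delta)+T\epsilon}\bigr)$, split rounds by whether $D_{\calF^h}(z_t^h;z_{[t-1]}^h,\1_{[t-1]}^h)$ saturates (counting saturated rounds via $\dim_{\alpha,T}(\calF^h)$ with $\bm\sigma\equiv\1\ge\alpha$), and apply Cauchy--Schwarz on the rest. Your two cosmetic deviations---capping saturated summands at $1+L$ and absorbing $O(Hd_\alpha)$ via $\beta\gtrsim 1$ instead of bounding $\min(1,D)\le D^2$ inside the prefactored sum, and doing per-level then across-level Cauchy--Schwarz instead of one joint application---are equivalent to the paper's steps.
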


\begin{proof}
We first note that by assumption of $b_{t,2}$, 
\begin{align*}
     & \sum_{t\in\calT}\sum_{h\in[H]}\min\left(1+L,b_{t,2}^h(z_t^h) \right) \\
     & \hspace{6em} \stackrel{(i)}{=} O\left(\sum_{t\in\calT}\sum_{h\in[H]}\min\left(1,D_{\calF^h}(z_t^h; z_{[t-1]}^h, \1_{[t-1]})\cdot\sqrt{\left(\beta_{t,2}^h\right)^2+\lambda} \right)+|\calT|H\epsc\cdot\max_{t\in[T],h\in[H]}\beta_{t,2}^h\right) \\
    & \hspace{6em} \stackrel{(ii)}{=} O\left(\sqrt{\log\frac{\calN \calN_b TH}{\delta}+T\epsilon}\cdot \left(\sum_{t\in\calT}\sum_{h\in[H]}\min\left(1,D_{\calF^h}(z_t^h; z_{[t-1]}^h, \1_{[t-1]})\right)+|\calT|H\epsc\right)\right).
\end{align*}
Here we use $(i)$ the assumption on $b_{t,2}^h$ and $(ii)$ the definition of $\beta_{t,2}^h$ as in~\Cref{eq:def-beta-overly}.

Now we divide the indices of $(t,h)\in\calT\times[H]$ in two cases:
\begin{align*}
\calI_1 = \{(t,h)\in\calT\times[H]~|~D_{\calF^h}(z_t^h; z_{[t-1]}^h, \1_{[t-1]})\ge 1\},\\
\calI_2 = \{(t,h)\in\calT\times[H]~|~D_{\calF^h}(z_t^h; z_{[t-1]}^h, \1_{[t-1]})< 1\}.
\end{align*}
We then consider the summation of terms respectively, note
\[
\sum_{(t,h)\in\calI_1}\min\left(1,D_{\calF^h}(z_t^h; z_{[t-1]}^h, \1_{[t-1]})\right)\le \sum_{(t,h)\in\calI_1}D_{\calF^h}^2(z_t^h; z_{[t-1]}^h, \1_{[t-1]}) \le H\cdot d_\alpha,
\]
where the last inequality holds for any $\alpha\le 1$.

Also using  Cauchy-Schwarz inequality,
\begin{align*}
\sum_{(t,h)\in\calI_2}\min\left(1,D_{\calF^h}(z_t^h; z_{[t-1]}^h, \1_{[t-1]})\right) & \le \sqrt{\sum_{(t,h)\in\calI_2}1^2}\cdot\sqrt{\sum_{(t,h)\in\calI_2}D^2_{\calF^h}(z_t^h; z_{[t-1]}^h, \1_{[t-1]})} \le H\sqrt{|\calT|\cdot d_\alpha},
\end{align*} 
where the last inequality holds again for any $\alpha\le 1$.

Combining two terms together,
\begin{align}
\sum_{t\in\calT}\sum_{h\in[H]}\min(1,D_{\calF^h}(z_t^h; z_{[t-1]}^h, \1_{[t-1]}))\le O\left(H\cdot d_\alpha+H\sqrt{|\calT|\cdot d_\alpha}\right).\label{eq:bound-b-2-sum}
\end{align}
\end{proof}

Using the same idea we could get a similar crude bound for term $I$, formally as follows:

\begin{lemma}[Crude bound on $I$]\label{lem:bound-E-III-o}
Given $b_{t,1}(\cdot)\le C\bigl(\cdot D_{\calF^h}(\cdot; z_{[t-1]}^h, \bsigma_{[t-1]}^h)\sqrt{\left(\beta_{t,1}^h\right)^2+\lambda}+\epsc\cdot \beta_{t,1}^h\bigr)$, when $\lambda = \Theta(1)$,  $\alpha\le 1$, we have for a subset $\calT\subseteq [T]$ the following inequality holds true:
\begin{align*}
 & \sum_{t\in\calT}\sum_{h\in[H]}\min\left(1+L, b_{t,1}^h(z_t^h)\right)\\
 & \hspace{6em} = O\left(\sqrt{\log\frac{\calN TH}{\alpha\delta}+\frac{T}{\alpha^2}\epsilon}\cdot\left(\sqrt{\log\frac{\calN\calN_bTH}{\alpha\delta}}\cdot H\sqrt{|\calT|\cdot d_\alpha}+\log\frac{\calN\calN_bTH}{\alpha\delta}\cdot H d_\alpha+|\calT|H\epsc\right)\right).
\end{align*}
\end{lemma}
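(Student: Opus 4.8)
The plan is to mirror the proof of the crude bound on $II$ in~\Cref{lem:bound-E-III}, modifying it to accommodate that the optimistic bonus $b_{t,1}^h$ is controlled by the \emph{weighted} predictive width $D_{\calF^h}(z_t^h; z_{[t-1]}^h, \bsigma_{[t-1]}^h)$ rather than the unweighted one. Writing $D_t^h := D_{\calF^h}(z_t^h; z_{[t-1]}^h, \bsigma_{[t-1]}^h)$, I would start from the third property of $\oracle$ in~\Cref{def:bonus-conditions} together with $\min(1+L, x+y)\le \min(1+L,x)+y$ for $y\ge 0$ to obtain
\[
\min\bigl(1+L, b_{t,1}^h(z_t^h)\bigr) \le C\Bigl(\sqrt{(\beta_{t,1}^h)^2+\lambda}\cdot\min\bigl(1, D_t^h\bigr) + \epsc\,\beta_{t,1}^h\Bigr),
\]
where I pull the factor $\sqrt{(\beta_{t,1}^h)^2+\lambda}\ge 1+L$ out of the first $\min$. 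Summing over $t\in\calT,\,h\in[H]$ and bounding $\max_{t,h}\sqrt{(\beta_{t,1}^h)^2+\lambda} = O(\sqrt{\log(\calN TH/\alpha\delta)+\tfrac{T}{\alpha^2}\epsilon})$ via~\eqref{eq:def-beta-opti}, the $\epsc$-terms immediately contribute the claimed $|\calT|H\epsc$ summand, so the task reduces to bounding $\sum_{t\in\calT,h}\min(1, D_t^h)$. It is crucial here that $\beta_{t,1}^h$ depends only on $\log\calN$ (not $\log\calN\calN_b$), which is why the outer factor carries $\log(\calN TH/\alpha\delta)$ rather than $\log(\calN\calN_b TH/\alpha\delta)$.

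The core step is relating the weighted widths to the generalized Eluder dimension, which by~\Cref{def:general-eluder-RL} controls $\sum_{t\in[T]}\min(1,(u_t^h)^2) \le \dim_{\alpha,T}(\calF^h)$ for $u_t^h := D_t^h/\bsigma_t^h$ (the sequence is admissible since $\bsigma_t^h\ge\alpha$ by~\eqref{eq:def-bsigma}). Writing $D_t^h = \bsigma_t^h u_t^h$, I would split $\calT\times[H]$ into $\calI_1 = \{(t,h): u_t^h\ge 1\}$ and $\calI_2 = \{(t,h): u_t^h<1\}$. On $\calI_1$, $\min(1,D_t^h)\le 1 = \min(1,(u_t^h)^2)$, so $\sum_{\calI_1}\min(1,D_t^h)\le \sum_h\dim_{\alpha,T}(\calF^h) = Hd_\alpha$. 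On $\calI_2$, $\min(1,D_t^h)\le D_t^h = \bsigma_t^h u_t^h$, and since $(u_t^h)^2=\min(1,(u_t^h)^2)$ there, Cauchy--Schwarz gives $\sum_{\calI_2}u_t^h \le \sqrt{|\calT|H}\,\sqrt{\sum_{t,h}\min(1,(u_t^h)^2)} \le H\sqrt{|\calT|d_\alpha}$.

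The main obstacle, and the source of the $\log(\calN\calN_b TH/\alpha\delta)$ factors distinguishing this from~\Cref{lem:bound-E-III}, is a uniform upper bound $\bsigma_t^h \le \bsigma_{\max} = O(\iota(\delta_{t,h})) = O(\sqrt{\log(\calN\calN_b TH/\alpha\delta)})$, which then yields $\sum_{\calI_2}\min(1,D_t^h)\le \bsigma_{\max}\,H\sqrt{|\calT|d_\alpha}$. I would establish this by inspecting the four terms in the max defining $\bsigma_t^h$ in~\eqref{eq:def-bsigma}: $\sigma_t^h\le 2$ by its truncation at $4$; $\alpha\le 1$; $\sqrt2\iota(\delta_{t,h})\sqrt{f_{t,2}^h(z_t^h)-f_{t,-2}^h(z_t^h)}\le 2\iota(\delta_{t,h})$ since $f_{t,2}^h\le 2$ and $f_{t,-2}^h\ge 0$; and $2(\sqrt{\upsilon(\delta_{t,h})}+\iota(\delta_{t,h}))\sqrt{D_t^h} = O(\iota(\delta_{t,h}))$, using the deterministic bound $D_t^h\le L/\sqrt\lambda = O(1)$ (the numerator of $D_{\calF^h}$ is at most $L^2$, the denominator at least $\lambda$), $\sqrt{\upsilon}=O(\iota)$ from~\eqref{eq:def-iota-opti}, and $\lambda=\Theta(1),\,L=O(1)$. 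Combining the two cases gives
\[
\sum_{t\in\calT,h}\min(1,D_t^h) = O\Bigl(\sqrt{\log\tfrac{\calN\calN_b TH}{\alpha\delta}}\,H\sqrt{|\calT|d_\alpha} + \log\tfrac{\calN\calN_b TH}{\alpha\delta}\,Hd_\alpha\Bigr),
\]
where the $Hd_\alpha$ contribution from $\calI_1$ is absorbed into the looser $\log(\cdots)Hd_\alpha$ term; multiplying by the $\beta_{t,1}^h$ factor and adding back the $\epsc$-terms yields exactly the stated bound. The only genuinely delicate point is the uniform upper bound on $\bsigma_t^h$, since its definition is recursive through the $\sqrt{D_t^h}$ term; this is resolved by noting $D_t^h$ depends only on the past weights $\bsigma_{[t-1]}^h$ and is deterministically $O(1)$, so no circularity arises.
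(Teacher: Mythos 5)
Your proof is correct and follows the same skeleton as the paper's: bound $\min(1+L,b_{t,1}^h)$ by $O(\max_{t,h}\beta_{t,1}^h)\cdot\min(1,D_{\calF^h})$ plus the $\epsc$ term, split indices according to whether $u_t^h\defeq D_{\calF^h}(z_t^h;z_{[t-1]}^h,\bsigma_{[t-1]}^h)/\bsigma_t^h$ exceeds $1$, charge the large-ratio indices to the generalized Eluder dimension, and handle the small-ratio indices by Cauchy--Schwarz against $\sum_h\dim_{\alpha,T}(\calF^h)$. The one structural difference is that the paper splits the small-ratio indices into four further cases ($\calI_2$ through $\calI_5$) according to which branch of the max in~\eqref{eq:def-bsigma} attains $\bsigma_t^h$, bounding each branch separately (including a self-bounding step on the $\sqrt{D_{\calF^h}}$ branch that yields the $\log(\calN\calN_b TH/\alpha\delta)\cdot Hd_\alpha$ term), whereas you collapse all of them into a single case via the uniform bound $\bsigma_t^h=O(\iota(\delta_{t,h}))$, which you verify branch-by-branch using $\sigma_t^h\le2$, $f_{t,2}^h-f_{t,-2}^h\le2$, the deterministic bound $D_{\calF^h}\le L/\sqrt{\lambda}=O(1)$, and $\sqrt{\upsilon}=O(\iota)$; your observation that no circularity arises because $\bsigma_t^h$ depends only on past weights is exactly the right point to flag. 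Your shortcut is admissible here precisely because the lemma is crude: it gives $\bsigma_{\max}\cdot H\sqrt{|\calT|d_\alpha}$ for every small-ratio index, which the stated bound tolerates, and as a side benefit it only uses the stated hypothesis $\alpha\le1$ (the paper's $\calI_2$ step $\sqrt{\alpha^2TH}\cdot\sqrt{\sum(u_t^h)^2}\le\sqrt{\sum_h\dim_{\alpha,T}(\calF^h)}$ implicitly leans on the global choice $\alpha\le\sqrt{1/TH}$). What the paper's finer five-way split buys is reusability: the identical case structure is what makes the fine-grained bound in~\Cref{lem:regret-bound-I} work, where the $\bsigma_t^h=\sigma_t^h$ branch must be paired with the variance via the law of total variance rather than crushed to $\bsigma_{\max}$, so your uniform-bound trick would be too lossy there even though it is perfectly adequate for this lemma.
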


\begin{proof}
We first note that by assumption of $b_{t,1}$, 
\begin{align*}
    & \sum_{t\in\calT}\sum_{h\in[H]}\min\left(1+L,b_{t,1}^h(z_t^h) \right)\\
    & \hspace{5em} \stackrel{(i)}{=} O\left(\sum_{t\in\calT}\sum_{h\in[H]}\min\left(1,D_{\calF^h}(z_t^h; z_{[t-1]}^h, \bsigma_{[t-1]}^h)\cdot\sqrt{\left(\beta_{t,1}^h\right)^2+\lambda} \right)+ |\calT|H\epsc\cdot\max_{t\in\calT,h\in[H]}\beta_{t,1}^h\right) \\
    & \hspace{5em} \stackrel{(ii)}{=} O\left(\sqrt{\log\frac{\calN TH}{\alpha\delta}+\frac{T}{\alpha^2}\epsilon}\cdot\left( \sum_{t\in\calT}\sum_{h\in[H]}\min\left(1,\bsigma_t^h \cdot \Par{\bsigma_t^h}^{-1} D_{\calF^h}(z_t^h; z_{[t-1]}^h, \bsigma_{[t-1]}^h)\right)+|\calT|H\epsc\right)\right).
\end{align*}
Here we use $(i)$ the assumption on $b_{t,1}^h$ and $(ii)$ the definition of $\beta_{t,1}^h$ as in~\Cref{eq:def-beta-opti}.

Now we divide the indices of $(t,h)\in\calT\times[H]$ in the following cases similarly to the previous proof:
\begin{align*}
\calI_1 &= \{(t,h)\in\calT\times[H]~|~\Par{\bsigma_t^h}^{-1}D_{\calF^h}(z_t^h; z_{[t-1]}^h, \bsigma_{[t-1]}^h)\ge 1\},\\
\calI_2 &= \{(t,h)\in\calT\times[H]~|~(t,h)\notin\calI_1,~\bsigma_t^h = \alpha\},\\
\calI_3 &= \left\{(t,h)\in\calT\times[H]~|~(t,h)\notin\calI_1,~\bsigma_t^h = 2\Par{\sqrt{\upsilon(\delta_{t,h})}+\iota(\delta_{t,h})}\cdot\sqrt{ D_{\calF^h}(z^{h}_t; z^{h}_{[t-1]},\bsigma^{h}_{[t-1]})}\right\},\\
\calI_4 &= \left\{(t,h)\in\calT\times[H]~|~(t,h)\notin\calI_1,~\bsigma_t^h = \sigma_t^h\right\},\\
\calI_5 &= \left\{(t,h)\in\calT\times[H]~|~(t,h)\notin\calI_1,~\bsigma_t^h = \sqrt{2}\iota(\delta_{t,h})\sqrt{f^h_{t,2}(z_t^h)-f^h_{t,-2}(z_t^h)}\right\}.
\end{align*}
We then consider the summation of terms respectively, for $\calI_1$ we have
\begin{equation}\label{eq:crude-1}
\begin{aligned}
\sum_{(t,h)\in\calI_1}\min\left(1,\bsigma_t^h \cdot \Par{\bsigma_t^h}^{-1}D_{\calF^h}(z_t^h; z_{[t-1]}^h, \bsigma_{[t-1]}^h)\right) & \le \sum_{(t,h)\in\calI_1}\Par{\bsigma_t^h}^{-2}D^2_{\calF^h}(z_t^h; z_{[t-1]}^h, \bsigma^h_{[t-1]})\\
& \le \sum_{h\in[H]}\dim_{\alpha,T}(\calF^h).
\end{aligned}
\end{equation}
For $\calI_2$ we use Cauchy-Schwarz inequality to get 
\begin{equation}\label{eq:crude-2}
\begin{aligned}
\sum_{(t,h)\in\calI_2}\min\left(1,\bsigma_t^h \cdot \Par{\bsigma_t^h}^{-1}D_{\calF^h}(z_t^h; z_{[t-1]}^h, \bsigma_{[t-1]}^h)\right) & \le \sqrt{\alpha^2TH}\cdot\sqrt{\sum_{(t,h)\in\calI_2}\Par{\bsigma_t^h}^{-2}D^2_{\calF^h}(z_t^h; z_{[t-1]}^h, \bsigma^h_{[t-1]})}\\
& \le \sqrt{\sum_{h\in[H]}\dim_{\alpha,T}(\calF^h)}.
\end{aligned}
\end{equation}
For $\calI_3$ we have 
\begin{equation}\label{eq:crude-3}
\begin{aligned}
         & \sum_{(t,h)\in\calI_3}\min\left(1,\bsigma_{t}^h\cdot \Par{\bsigma_t^h}^{-1}D_{\calF^h}(z_{t}^h;z_{[t-1]}^h,\bsigma_{[t-1]}^h)\right) \\
         & \hspace{6em} \stackrel{(i)}{\le} \sum_{(t,h)\in\calI_3}\Par{8\upsilon(\delta_{t,h})+\iota^2(\delta_{t,h})}\cdot\min\left(1,\Par{\bsigma_{t}^h}^{-2}D^2_{\calF^h}(z_{t}^h;z_{[t-1]}^h,\bsigma_{[t-1]}^h)\right) 
         \\
        & \hspace{6em} \le O\left(\Par{\sqrt{\log\frac{\calN TH}{\alpha\delta}}+\log\frac{\calN\calN_b TH}{\alpha\delta} }\cdot\sum_{h\in[H]}\dim_{\alpha,T}(\calF^h)\right).
\end{aligned}
\end{equation}
Here for inequality $(i)$ we use the choice of $\bsigma_t^h$ so that $\bsigma_t^h= 2\Par{\sqrt{\upsilon(\delta_{t,h})}+\iota(\delta_{t,h})}\sqrt{D_{\calF^h}(z_t^h; z_{[t-1]}^h,\bsigma_{[t-1]}^h)}$, dividing both sides by
    $\sqrt{\bsigma_t^h}$ and rearranging gives  $\bsigma_t^h\le 8\Par{\upsilon(\delta_{t,h})+\iota^2(\delta_{t,h})}\Par{\bsigma_t^h}^{-1}D_{\calF^h}(z_t^h; z_{[t-1]}^h,\bsigma_{[t-1]}^h)$, and also the property that $\Par{\bsigma_t^h}^{-1}D_{\calF^h}(z_t^h; z_{[t-1]}^h,\bsigma_{[t-1]}^h)\le 1$ when $(t,h)\in\calI_3$ due to definition of $\calI_3$.
  
For $\calI_4$ we use Cauchy-Schwarz inequality and upper bound $\bsigma_t^h=\sigma_t^h= O(1)$ to get 
\begin{align*}
\sum_{(t,h)\in\calI_4}\min\left(1,\bsigma_t^h\cdot \Par{\bsigma_t^h}^{-1} D_{\calF^h}(z_t^h; z_{[t-1]}^h,\bsigma^h_{[t-1]})\right) & \le \sqrt{\sum_{(t,h)\in\calI_4}\left(\bsigma_t^h\right)^2}\cdot\sqrt{\sum_{(t,h)\in\calI_4}\Par{\bsigma_t^h}^{-2}D^2_{\calF^h}(z_t^h; z_{[t-1]}^h, \bsigma^h_{[t-1]})}\\
& \le \sqrt{|\calT|H\cdot\left(\sum_{h\in[H]}\dim_{\alpha,T}(\calF^h)\right)}.
\end{align*} 

For $\calI_5$ we use Cauchy-Schwarz inequality to get
\begin{align*}
& \sum_{(t,h)\in\calI_3}\min\left(1,\bsigma_t^h\cdot \Par{\bsigma_t^h}^{-1} D_{\calF^h}(z_t^h; z_{[t-1]}^h,\bsigma^h_{[t-1]})\right)\\
& \hspace{5em} \le O\Par{ \sqrt{\sum_{(t,h)\in\calI_3}\iota^2(\delta_{t,h})}\cdot\sqrt{\sum_{(t,h)\in\calI_2}\Par{\bsigma_t^h}^{-2}D^2_{\calF^h}(z_t^h; z_{[t-1]}^h, \bsigma^h_{[t-1]})}}\\
& \hspace{5em} \le O\Par{\sqrt{\log\frac{\calN\calN_bTH}{\alpha\delta}}\cdot\sqrt{|\calT|H\cdot\left(\sum_{h\in[H]}\dim_{\alpha,T}(\calF^h)\right)}},
\end{align*} 
where we use $f_{t,2}^h-f_{t,-2}^h=O(1)$ for the first inequality and the definition of $\iota(\delta_{t,h})$ for the second inequality.

Summing all terms above together we have
\begin{align*}
\sum_{t\in\calT}\sum_{h\in[H]}\min(1,\bsigma_t^h \Par{\bsigma_t^h}^{-1}D_{\calF^h}(z_t^h; z_{[t-1]}^h, \bsigma^h_{[t-1]})) \le O\left(\log\frac{\calN\calN_b TH}{\alpha\delta}\cdot H\cdot d_\alpha+\sqrt{\log\frac{\calN\calN_b TH}{\alpha\delta}}\cdot H\sqrt{|\calT|d_\alpha}\right).
\end{align*}
\end{proof}

\begin{corollary}[Corollary from adapted version using LTV]\label{coro:sum-variance}
 Recall the filtration definition
 \[\calH_{t-1}^H=\sigma(x_1^{1},r_1^1, x_1^2,\cdots, r^{H}_1,x^{H+1}_1; x_2^{1},r_2^1, x_2^2,\cdots, r^{H}_2,x^{H+1}_2;\cdots,x_{t-1}^1,r_{t-1}^1,\cdots, r_{t-1}^{H},x_{t-1}^{H+1}).\] Also we use $\rE[\cdot|z_t^h]=\rE_{r^h, x^{h+1}}[\cdot|z_t^h]$ and $\rV[\cdot|z_t^h]=\rV_{r^h, x^{h+1}}[\cdot|z_t^h]$ where the expectation is only taken over $r^h$ and $x^{h+1}$ due to model transition for shorthand. When $L=O(1)$ we have 
\begin{align*}
& \rE \left[\sum_{h=1}^H \rV \left[r^h+f_{t,1}^{h+1}(x^{h+1})| z_t^{h}\right]~|~\calH_{t-1}^H\right]\\
& \hspace{1em} \leq O\Par{1+H^2\delta+H^2\cdot \E\left[\1_{\{t\in\calToo\}}~|~\calH_{t-1}^H\right]+H\cdot \E\left[\sum_{h\in[H]}\Par{f_{t,2}^h(z_t^h)-f_{t,-2}^h(z_t^h)}~|~\calH_{t-1}^H\right]}.
\end{align*}

Consequently, we have 
\begin{align*}
   & \rE\left[ \sum_{t\in[T]}\sum_{h\in[H]} \rV \left[r^h+f_{t,1}^{h+1}(x^{h+1})| z_t^{h}\right]\right]\\
   & \hspace{3em} \le  O\left(T+TH^2\delta+H^2\rE\left[|\calToo|\right]+H\cdot\E\sum_{t\in[T]}\sum_{h\in[H]}\Par{f_{t,2}^h(z_t^h)-f_{t,-2}^h(z_t^h)}\right).
\end{align*}
\end{corollary}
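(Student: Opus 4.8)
The plan is to start from the adapted law of total variance in Proposition~\ref{prop:LTV-2}, which already holds unconditionally, and to control each of its three right-hand terms. Write $D_h := f_{t,1}^h(z_t^h) - \calT f_{t,1}^{h+1}(z_t^h)$ for brevity. The first term $2\rE[(\sum_h r^h - f_{t,1}^1(x_t^1))^2 \mid \calH_{t-1}^H]$ is $O(1)$ with no conditioning, since the truncation in~\eqref{eq:def-f-t-j} forces $f_{t,1}^1(x_t^1) \in [0,1]$ and the sparse-reward assumption gives $\sum_h r^h \in [0,1]$, so the squared difference is at most $1$. The third (the $\calToo$) term is likewise controlled crudely and unconditionally: each summand $f_{t,1}^h(x_t^h) - \calT f_{t,1}^{h+1}(z_t^h)$ lies in $[-2,2]$, using $f_{t,1}^h \in [0,1]$ and $\calT f_{t,1}^{h+1} = \rE[r^h + f_{t,1}^{h+1}] \in [0,2]$, so the indicator-weighted square is at most $4H^2 \1_{\{t \in \calToo\}}$, giving a contribution $O(H^2 \rE[\1_{\{t\in\calToo\}} \mid \calH_{t-1}^H])$.

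The crux is the second (the $\calTo$) term $2\rE[(\1_{\{t\in\calTo\}} \sum_h D_h)^2 \mid \calH_{t-1}^H]$, where I would invoke the pointwise monotonicity of Lemma~\ref{lem:mono}. Since the events $\calE_t^{h'}$ are measurable with respect to $\calH_{t-1}^H$ (the confidence sets and their centers are formed before episode $t$), the event $\calE_{\le t}$ is determined by the conditioning. On $\calE_{\le t}$, Lemma~\ref{lem:mono} (with $s=t$) gives, for every realized $z_t^h$, both $f_{t,1}^h(z_t^h) \le f_{t,2}^h(z_t^h)$ and $\calT f_{t,1}^{h+1}(z_t^h) \le f_{t,2}^h(z_t^h)$, while $\calT f_{t,1}^{h+1}(z_t^h) \ge \calT f_\star^{h+1}(z_t^h) = f_\star^h(z_t^h) \ge f_{t,-2}^h(z_t^h)$ and $f_{t,1}^h(z_t^h) \ge f_\star^h(z_t^h) \ge f_{t,-2}^h(z_t^h)$. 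Combining the two directions yields $|D_h| \le f_{t,2}^h(z_t^h) - f_{t,-2}^h(z_t^h) =: g_h \ge 0$. Using $|D_h| \le 2$ to write $D_h^2 \le 2g_h$ and then Cauchy--Schwarz, $(\sum_h D_h)^2 \le H\sum_h D_h^2 \le 2H \sum_h g_h$, so this term is at most $4H \rE[\sum_h(f_{t,2}^h(z_t^h) - f_{t,-2}^h(z_t^h)) \mid \calH_{t-1}^H]$. On the complementary event $\calE_{\le t}^c$ I fall back on the trivial bound $|\sum_h D_h| \le 2H$, contributing $O(H^2)\1_{\{\calE_{\le t}^c\}}$. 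Assembling the three pieces proves the per-step inequality, with the $H^2\delta$ term serving to absorb the bad-event remainder $O(H^2)\1_{\{\calE_{\le t}^c\}}$ once it is averaged.

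For the summed statement I would take the full expectation of the per-step bound and sum over $t \in [T]$. The first three terms accumulate directly to $O(T) + O(H^2)\rE[|\calToo|] + O(H)\rE[\sum_{t,h}(f_{t,2}^h - f_{t,-2}^h)]$. The bad-event remainder sums to $O(H^2)\sum_{t\in[T]}\P(\calE_{\le t}^c)$; since $\calE_{\le t}^c \subseteq \calE_{\le T}^c$ and $\P(\calE_{\le T}^c) \le 5\delta$ by Proposition~\ref{coro:good-event-fbar}, this is at most $O(H^2)\cdot 5T\delta = O(TH^2\delta)$, which yields the claimed bound.

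I expect the main obstacle to be the $\calTo$ term: establishing $|D_h| \le f_{t,2}^h - f_{t,-2}^h$ pointwise requires both directions of the sandwich from Lemma~\ref{lem:mono}, and in particular the overly optimistic inequality $\calT f_{t,1}^{h+1} \le f_{t,2}^h$, which holds because $f_{t,2}$ dominates the Bellman backup of the \emph{optimistic} function rather than merely $f_\star$. The other delicate point is the bookkeeping of the $\calH_{t-1}^H$-measurability of $\calE_{\le t}$, so that the conditional expectation may be split cleanly into a good-event part (where monotonicity applies to the random $z_t^h$) and a bad-event remainder absorbed into the $\delta$ terms.
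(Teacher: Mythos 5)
Your proof is correct and follows essentially the same route as the paper's: apply Proposition~\ref{prop:LTV-2}, bound the first term by $O(1)$ via boundedness, bound the $\calToo$ term crudely by $O(H^2)\1_{\{t\in\calToo\}}$, and control the $\calTo$ term via the two-sided sandwich $|f_{t,1}^h(z_t^h)-\calT f_{t,1}^{h+1}(z_t^h)|\le f_{t,2}^h(z_t^h)-f_{t,-2}^h(z_t^h)$ from Lemma~\ref{lem:mono}, with the bad event absorbed into the $\delta$ terms via Proposition~\ref{coro:good-event-fbar}. Your only deviations are cosmetic and, if anything, more careful than the paper's write-up: you use $D_h^2\le 2g_h$ plus Cauchy--Schwarz where the paper bounds $(\sum_h D_h)^2\le O(H)\,|\sum_h D_h|$, and you make explicit both the $\calH_{t-1}^H$-measurability of $\calE_{\le t}$ and the lower-bound direction of the sandwich, which the paper leaves implicit.
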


\begin{proof}
For the first inequality, applying~\Cref{prop:LTV-2} we get
\begin{equation*}\label{eq:coro-LTV-helper}
\begin{aligned}
& \rE \left[\sum_{h=1}^H \rV \biggl[r^h+f_{t,1}^{h+1}(x^{h+1})| z_t^{h}\biggr]~|~\calH_{t-1}^H\right]\\
& \hspace{1em} \le 2\rE\left[ \left(\sum_{h=1}^H r_t^h - f_{t,1}^1(x_t^1)\right)^2~|~\calH_{t-1}^H\right] + 2\rE\left[\left(\1_{\{t\in\calTo\}}\sum_{h=1}^H \left(f_{t,1}^{h}(z_t^h) - \rE [r^h+f_{t,1}^{h+1}(x^{h+1})|z_t^{h}]\right)\right)^2~|~\calH_{t-1}^H\right]\\
& \hspace{5em} +2\rE\left[\left(\1_{\{t\in\calToo\}}\sum_{h=1}^H \left(f_{t,1}^{h}(x_t^h) - \rE [r^h+f_{t,1}^{h+1}(x^{h+1})|z_t^{h}]\right)\right)^2|~\calH_{t-1}^H\right]\\
& \hspace{1em} \le O(1)+O\left(H\right)\cdot \E\left[\biggl|\1_{\{t\in\calTo\}}\sum_{h\in[H]}\left(f_{t,1}^{h}(z_t^h) - \rE [r^h+f_{t,1}^{h+1}(x^{h+1})|z_t^{h}]\right)\biggr|~|~\calH_{t-1}^H\right]\\
& \hspace{5em}+ O(H^2)\cdot\E\left[\1_{\{t\in\calToo\}}~|~\calH_{t-1}^H\right].
\end{aligned}
\end{equation*}
Here for the last inequality we use boundedness of $|f_{t,1}^h|\le 1$, so that $\sum_{h=1}^H \left(f_{t,1}^{h}(x_t^h) - \rE [r^h+f_{t,1}^{h+1}(x^{h+1})|z_t^h]\right)\le O(H)$.

Note  we can bound $f_{t,1}^{h}(z_t^h) - \rE [r^h+f_{t,1}^{h+1}(x^{h+1})|z_t^{h}]\le f_{t,2}^h(z_t^h)-f_{t,-2}^h(z_t^h)$ conditioning on $\calE_{\le T}$ due to~\Cref{lem:mono}, plugging this back into the above inequality  we have
\begin{align*}
& \rE \left[\sum_{h=1}^H \rV [r^h+f_{t,1}^{h+1}(x^{h+1})| z_t^{h}]~|~\calH_{t-1}^H\right]\\
& \le  O\Par{1+H^2\delta+H^2\cdot \E\left[\1_{\{t\in\calToo\}}~|~\calH_{t-1}^H\right]+H\cdot \E\left[\sum_{h\in[H]}\Par{f_{t,2}^h(z_t^h)-f_{t,-2}^h(z_t^h)}~|~\calH_{t-1}^H\right]}.
\end{align*}

The second inequality is an immediate consequence of this corollary together with definition of $\calToo$.
\end{proof}

Following the previous expression of regret in~\eqref{eq:regret-prelim} and using bound in~\Cref{fact-xi}, we have when $\delta\le 1/6$,
\begin{equation}\label{eq:regret-expected}
\begin{aligned}
\E R_T & = \E\left[\1(\calE_{\le T}) \E[R_T|\calE_{\le T}] + \1(\text{not}~\calE_{\le T})	\E[R_T|~\text{not}~\calE_{\le T}]\right] \\
& \le O( TH\delta) + (1-5\delta) \E\left[O\left(1+TH\epsilon\right)+2\cdot I+2\cdot III~|~\calE_{\le T}\right]\\\
& \le O\left(TH\delta +TH\epsilon+1\right)+2\E[I|\calE_{\le T}]+2\E[III|\calE_{\le T}].
\end{aligned}
\end{equation}

\begin{lemma}[Bounding size of $\calToo$]\label{lem:size-of-Too}
Suppose $\alpha\le 1$, we set 
\[u_t \ge C\cdot\Par{\frac{\sqrt{\log\frac{\calN TH}{\alpha\delta}+\frac{T}{\alpha^2}\epsilon}\cdot\Par{\log\frac{\calN\calN_bTH}{\alpha\delta}\cdot H^{5/2}\sqrt{d_\alpha}+\sqrt{t}H\epsc}}{\sqrt{t}}+H^2\epsilon+H\delta},
\]
for some large enough constant $C<\infty$ and $\epsilon\le 1$, then we have the following facts about $\calToo$ holds true: 
\begin{align*}
\E\left[|\calToo||\calE_{\le T}\right] \le O\left(\frac{T}{\log\frac{\calN\calN_bTH}{\alpha\delta}\cdot H^3}\right).
\end{align*}
\end{lemma}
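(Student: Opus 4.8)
The plan is a \emph{self-bounding} argument: each episode $t\in\calToo$ is forced to exhibit a gap of at least $u_t$ between the over-optimistic value and the value of the realized exploration policy; summing these gaps and controlling them by the crude bonus sums \emph{restricted to $\calToo$} produces an inequality in which $\E[|\calToo|\mid\calE_{\le T}]$ appears on the left linearly and on the right under a square root, which I then solve. First I would establish the per-episode inequality. Fix $t\in\calToo$; then $h_t\in[H]$, and at level $h_t$ the rule~\eqref{eq:greedy-policy} switched to the greedy policy of $f_{t,2}$, i.e.\ $f_{t,1}^{h_t}(x_t^{h_t})<f_{t,2}^{h_t}(x_t^{h_t})-u_t$. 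Conditioning on $\calE_{\le T}$, \Cref{lem:mono} gives $f_{t,1}^{h_t}(x_t^{h_t})\ge f_\star^{h_t}(x_t^{h_t})=V_\star^{h_t}(x_t^{h_t})\ge V_t^{h_t}$, the last step because the exploration scheme is a valid policy and $V_\star$ is optimal, so that $f_{t,2}^{h_t}(x_t^{h_t})-V_t^{h_t}>u_t$. Substituting this into the upper bound of \Cref{lem:RL-f-tilde-oo} at $h=h_t$ yields, pathwise on $\calE_{\le T}$,
\begin{align*}
u_t<2\sum_{h_t\le h\le H}b_{t,1}^h(z_t^h)+2\sum_{h_t\le h\le H}b_{t,2}^h(z_t^h)+\sum_{h_t\le h\le H}\Par{\xi_t^h-\xi_{t,2}^h}+4H\epsilon.
\end{align*}

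Next I would sum over $t\in\calToo$ and take $\E[\,\cdot\mid\calE_{\le T}]$. Because $f_{t,2}^{h_t}(x_t^{h_t})-V_t^{h_t}\le 2$, episodes in which some bonus exceeds the cap $1+L$ already satisfy $u_t<2\le 2\sum_h\min(1+L,b_{t,1}^h(z_t^h))$, and by the Eluder-potential bound such episodes number only $\widetilde O(H\sqrt{Td_\alpha})$, hence are lower order; for the remaining episodes each $b_{t,j}^h$ coincides with $\min(1+L,b_{t,j}^h)$ and the martingale remainder is retained. The retained remainder $\sum_{t\in\calToo}\sum_{h\ge h_t}(\xi_t^h-\xi_{t,2}^h)$ has unconditional mean zero by \Cref{fact-xi} (the event $\{t\in\calToo,\,h\ge h_t\}$ is $\calH_t^{h-1}$-measurable); since $\calE_{\le T}$ holds with probability $1-5\delta$ and the sum is $O(HT)$-bounded, conditioning perturbs its mean by at most $O(HT\delta)$, which together with the $4H\epsilon$ term is absorbed by the additive $H^2\epsilon+H\delta$ pieces built into $u_t$. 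This leaves
\begin{align*}
\E\!\left[\sum_{t\in\calToo}u_t\,\middle|\,\calE_{\le T}\right]\le O\!\left(\E\!\left[\sum_{t\in\calToo}\sum_{h\in[H]}\min\!\Par{1+L,b_{t,1}^h(z_t^h)}+\sum_{t\in\calToo}\sum_{h\in[H]}\min\!\Par{1+L,b_{t,2}^h(z_t^h)}\,\middle|\,\calE_{\le T}\right]\right).
\end{align*}

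Now the crucial step: apply the crude bonus bounds \Cref{lem:bound-E-III-o} and \Cref{lem:bound-E-III} with the \emph{random} index set $\calT=\calToo$. These are pathwise inequalities (the potential $\sum_{t,h}\min(1,D^2_{\calF^h})\le Hd_\alpha$ always holds), so with $A:=\sqrt{\log\frac{\calN TH}{\alpha\delta}+\frac{T}{\alpha^2}\epsilon}$ the right-hand side is $O$ of $A\sqrt{\log\frac{\calN\calN_bTH}{\alpha\delta}}\,H\sqrt{|\calToo|\,d_\alpha}$ plus terms not involving $\sqrt{|\calToo|}$ (an $\widetilde O(A\,Hd_\alpha)$ constant and an $O(A\,|\calToo|H\epsc)$ term). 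Taking $\E[\,\cdot\mid\calE_{\le T}]$, writing $m:=\E[|\calToo|\mid\calE_{\le T}]$ and using Jensen's inequality $\E\sqrt{|\calToo|}\le\sqrt m$, the right side is $O(A\sqrt{\log\frac{\calN\calN_bTH}{\alpha\delta}}\,H\sqrt{m\,d_\alpha})$ plus lower order. For the left side, the leading term of $u_t$ is $CAB/\sqrt t$ with $B:=\log\frac{\calN\calN_bTH}{\alpha\delta}\,H^{5/2}\sqrt{d_\alpha}$, so $u_t\ge CAB/\sqrt T$ for every $t\le T$ and hence $\E[\sum_{t\in\calToo}u_t\mid\calE_{\le T}]\ge CAB\,m/\sqrt T$.

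Combining the two sides yields the self-referential inequality $CAB\,m/\sqrt T\le O(A\sqrt{\log\frac{\calN\calN_bTH}{\alpha\delta}}\,H\sqrt{m\,d_\alpha})$ (after moving the $O(AmH\epsc)$ term to the left, which under $TH\epsc=O(1)$ costs only a factor $2$). Whichever of the two right-hand contributions dominates gives the claim: if the $\sqrt m$ term dominates, cancelling $A$ and one $\sqrt m$ and substituting $B$ gives $\sqrt m=O\big(\sqrt{\log\frac{\calN\calN_bTH}{\alpha\delta}}\,H\sqrt{d_\alpha}\,\sqrt T/B\big)=O\big(\sqrt T/(\sqrt{\log\frac{\calN\calN_bTH}{\alpha\delta}}\,H^{3/2})\big)$, i.e.\ $m=O\big(T/(\log\frac{\calN\calN_bTH}{\alpha\delta}\,H^3)\big)$; if instead the constant term dominates, $m=\widetilde O(\sqrt T\sqrt{d_\alpha}/H^{3/2})$, which is even smaller. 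Choosing $C$ large enough to dominate the hidden constant completes the proof. I expect the self-bounding step to be the main obstacle: one must apply the crude bound over the \emph{random} set $\calToo$ to get the favourable $\sqrt{|\calToo|}$ scaling rather than $\sqrt T$, and the coefficient of $u_t$ is reverse-engineered so that after cancelling $A$ and $\sqrt m$ the residual powers of $H$, $\sqrt{d_\alpha}$ and $\log(\calN\calN_b)$ land exactly on $T/(\log(\calN\calN_b)H^3)$; a secondary (but routine) difficulty is verifying that the truncation, $\epsilon$-, $\epsc$- and conditioning corrections are genuinely absorbed by the additive pieces of $u_t$.
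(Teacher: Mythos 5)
Your proposal is correct and follows essentially the same route as the paper's proof: both lower-bound $\sum_{t\in\calToo}\bigl(f_{t,2}^{h_t}(x_t^{h_t})-f_{t,1}^{h_t}(x_t^{h_t})\bigr)$ by $\sum_{t\in\calToo}u_t$ (using $u_t\ge u_T$), upper-bound the same quantity via the chain $f_{t,1}^{h_t}\ge f_\star^{h_t}\ge V_t^{h_t}$ together with \Cref{lem:RL-f-tilde-oo}, the crude bonus bounds of \Cref{lem:bound-E-III-o,lem:bound-E-III} applied pathwise to the random set $\calT=\calToo$, and \Cref{fact-xi} for the martingale remainder, and then resolve the resulting self-referential inequality in $m=\E[|\calToo|\,|\,\calE_{\le T}]$ (the paper phrases this last step as a contradiction, you solve it directly — same algebra). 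Even your slightly loose absorption of the $O(TH\delta)$ conditioning correction into the $H\delta$ piece of $u_t$ mirrors the paper's own treatment of that term, which in any case is negligible under the theorem's constraint $\delta<1/(T+10)$.
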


\begin{proof}
We will condition on $\calE_{\le T}$ throughout the arguments. Now we prove by contradiction, recall the definition of $h_t$, since for each $t\in\calToo$ we have $f_{t,2}^{h_t}(x_t^{h_t})\ge f_{t,1}^{h_t}(x_t^{h_t}) +u_t$, we have 
\begin{align*}
& \sum_{t\in\calToo}\Par{f_{t,2}^{h_t}(x_t^{h_t})-f_{t,1}^{h_t}(x_t^{h_t})}
\\
& \hspace{1em} \ge \frac{C}{4}\Par{\sqrt{\log\frac{\calN TH}{\alpha\delta}+\frac{T}{\alpha^2}\epsilon}\cdot\Par{\sqrt{\log\frac{\calN\calN_bTH}{\alpha\delta}}\cdot \sqrt{\log\frac{\calN\calN_bTH}{\alpha\delta}}\cdot H^{5/2} \sqrt{d_\alpha}\frac{|\calToo|}{\sqrt{T}}+|\calToo|H\epsc}}\\
& \hspace{3em} +\frac{C}{4}\Par{|\calToo|H^2\eps+|\calToo|H\delta}.
\end{align*}

Note we also have conditioning on $\calE_{\le T}$, since $f_{t,1}^h \geq f_\star^h \geq V_t^h$,  it holds that
\begin{align*}
    & \sum_{t\in\calToo}\left(f_{t,2}^{h_t}(x_t^{h_t})-f_{t,1}^{h_t}(x_t^{h_t})\right) \le \sum_{t\in\calToo}\left(f_{t,2}^{h_t}(x_t^{h_t})-V_t^{h_t}\right)\\
    & \hspace{3em} \stackrel{(i)}{\le} 2\sum_{t\in\calToo, h\in[H]}\min\left(4,b_{t,1}^{h}(z_{t}^h)\right)+2\sum_{t\in\calToo, h\in[H]}\min\left(4,b_{t,2}^{h}(z_{t}^h)\right)+\sum_{t\in\calToo}\sum_{h_t\le h\le H}\left(\xi_t^{h}-\xi_{t,2}^h\right)+O(|\calToo|H^2\epsilon)\\
    & \hspace{3em} \stackrel{(ii)}{\le} O\left(\sqrt{\log\frac{\calN TH}{\alpha\delta}+\frac{T}{\alpha^2}\epsilon}\cdot\Par{\sqrt{\log\frac{\calN\calN_bTH}{\alpha\delta}}H\sqrt{|\calToo|\cdot d_\alpha}+\log\frac{\calN\calN_bTH}{\alpha\delta}H\cdot d_\alpha+|\calToo|H\epsc}\right)\\
    & \hspace{5em}~+\sum_{t\in\calToo}\sum_{h_t\le h\le H}\left(\xi_t^{h}-\xi_{t,2}^h\right)+O(|\calToo|H^2\epsilon),
\end{align*}
where for $(i)$ we use~\Cref{lem:RL-f-tilde-oo}, and for  $(ii)$ we use~\Cref{lem:bound-E-III} and~\Cref{lem:bound-E-III-o} with $\calT = \calToo$.
Thus, conditioning on $\calE_{\le T}$, taking expectation and note 
\begin{align*}
    \E\left[\sum_{t\in\calToo}\sum_{h_t\le h\le H}(\xi_t^{h}-\xi_{t,2}^h)|\calE_{\le T}\right] & \le O\biggl(\E[|\calToo||\calE_{\le T}]H\delta+\E\biggl[\sum_{t\in\calToo}\sum_{h_t\le h\le H}(\xi_t^{h}-\xi_{t,2}^h)\biggr]\biggr) = O\biggl(\E[|\calToo||\calE_{\le T}]H\delta\biggr)
\end{align*} 
due to~\Cref{fact-xi}. Thus, in order for the two inequalities hold true simultaneously it must hold that $\E[|\calToo||\calE_{\le T}] \le O\left(T/(H^3\cdot\log(\calN\calN_bTH/\alpha\delta))\right)$.
\end{proof}

Building on this bound of $|\calToo|$, we show the next corollary on a tighter bound for the summation terms in $III$.
\begin{corollary}[Fine-grained bound on $III$]\label{lem:bound-fine-grained-b2}
Given $b_{t,2}(\cdot)\le C\cdot \biggl(D_{\calF^h}(\cdot, z_{[t-1]}^h, \1_{[t-1]}^h)\sqrt{\left(\beta_{t,2}^h\right)^2+\lambda}+\epsc \cdot \beta_{t,2}^h\biggr)$ and using the particular choice of $u_t$ as in~\Cref{lem:size-of-Too}, when $\lambda = \Theta(1)$, $\alpha\le 1$, we have the following inequality holds true:
\begin{align*}
    \E[III~|~\calE_{\le T}] & \defeq \E\left[\sum_{t\in\calToo}\sum_{h\in[H]}\min\left(1+L, b_{t,2}^h(z_t^h)\right)~|~\calE_{\le T}\right]\\
    & \hspace{3em} = O\left(\sqrt{\log\frac{\calN TH}{\delta}+T\epsilon}\cdot \sqrt{T\cdot d_\alpha}+\sqrt{\log\frac{\calN \calN_bTH}{\delta}+T\epsilon}\cdot\left(H\cdot d_\alpha+T\epsc\right)\right).
\end{align*}
\end{corollary}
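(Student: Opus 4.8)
The plan is to bound $III$ pathwise using the crude estimate already established in \Cref{lem:bound-E-III}, then take the conditional expectation and insert the tight control on $|\calToo|$ from \Cref{lem:size-of-Too}. Concretely, \Cref{lem:bound-E-III} is a deterministic (per-realization) bound valid for an \emph{arbitrary} subset $\calT\subseteq[T]$, since it only uses the oracle upper bound on $b_{t,2}^h$ from \Cref{def:bonus-conditions} together with the definition of $\beta_{t,2}^h$. Hence, conditioning on a realization, $\calToo$ is a fixed subset of $[T]$ and I would apply that lemma with $\calT=\calToo$ to get
\[
III \;\le\; O\!\left(\sqrt{\log\tfrac{\calN\calN_b TH}{\delta}+T\epsilon}\cdot\Bigl(H d_\alpha + H\sqrt{|\calToo|\,d_\alpha} + |\calToo|H\epsc\Bigr)\right).
\]
Taking $\E[\,\cdot\mid\calE_{\le T}]$ and noting that the prefactor and the $Hd_\alpha$ term are deterministic leaves three pieces; the middle one is handled by Jensen's inequality in the concave direction, $\E[\sqrt{|\calToo|}\mid\calE_{\le T}]\le\sqrt{\E[|\calToo|\mid\calE_{\le T}]}$, and the last by linearity, so both reduce to the single quantity $\E[|\calToo|\mid\calE_{\le T}]$.

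Second, I would substitute $\E[|\calToo|\mid\calE_{\le T}]=O\!\bigl(T/(\log\tfrac{\calN\calN_b TH}{\alpha\delta}\,H^3)\bigr)$ from \Cref{lem:size-of-Too}, which holds under the stated choice of $u_t$. For the middle term this yields
\[
\sqrt{\log\tfrac{\calN\calN_b TH}{\delta}+T\epsilon}\cdot H\sqrt{d_\alpha}\cdot\sqrt{\tfrac{T}{\log\tfrac{\calN\calN_b TH}{\alpha\delta}\,H^3}}
= O\!\left(\sqrt{\tfrac{\log\tfrac{\calN\calN_b TH}{\delta}+T\epsilon}{\log\tfrac{\calN\calN_b TH}{\alpha\delta}}}\cdot\sqrt{\tfrac{T d_\alpha}{H}}\right).
\]
The key observation is that the two $\log(\calN\calN_b\cdots)$ factors agree up to constants (since $\log(1/\alpha)=\Theta(\log TH)$ for $\alpha=\mathrm{poly}(1/TH)$), so their ratio is $O(1)$ except for the genuinely needed $T\epsilon$ contribution, while the surviving $H^{-1/2}\le 1$ only helps. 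Thus the middle term collapses to $O(\sqrt{\log\tfrac{\calN TH}{\delta}+T\epsilon}\cdot\sqrt{T d_\alpha})$ — matching the first summand of the claim, and crucially the leading $\sqrt{T}$ term carries no $\log\calN_b$. The $Hd_\alpha$ term feeds directly into the second summand, and the $|\calToo|H\epsc$ term, using even the trivial $\E[|\calToo|\mid\calE_{\le T}]\le O(T)$, is at most $O(\sqrt{\log\tfrac{\calN\calN_b TH}{\delta}+T\epsilon}\cdot T\epsc)$, which completes the second summand.

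The main obstacle — and the reason this fine-grained corollary is separated from \Cref{lem:bound-E-III} — is the bookkeeping of logarithmic and horizon factors through this cancellation, so that the leading $\sqrt{Td_\alpha}$ term is purged of $\log\calN_b$. This is exactly where the particular threshold $u_t$ of \Cref{lem:size-of-Too} earns its keep: it is tuned so that $\E[|\calToo|\mid\calE_{\le T}]$ scales like $T/(\log(\calN\calN_b)\,H^3)$, and the $\log\calN_b$ in the denominator is precisely what cancels the $\sqrt{\log\calN_b}$ arising from $\beta_{t,2}^h$ in \eqref{eq:def-beta-overly}, while the factor $H^{-3}$ absorbs the extra $H$ coming from the $H\sqrt{|\calToo|}$ prefactor. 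Everything else is routine, as long as one is careful that \Cref{lem:bound-E-III} holds pathwise (legitimizing its application to the random set $\calToo$ before taking expectations) and that Jensen is applied in the correct direction for the square root.
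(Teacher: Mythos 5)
Your proposal is correct and is essentially the paper's own argument: the paper proves this corollary as an ``immediate corollary by combining~\Cref{lem:bound-E-III} and~\Cref{lem:size-of-Too}'', i.e.\ exactly your route of applying the pathwise crude bound with $\calT=\calToo$, taking conditional expectation with Jensen for the $\sqrt{|\calToo|}$ term, and substituting the $u_t$-tuned bound on $\E[|\calToo|\,|\,\calE_{\le T}]$ so that the $\log\calN_b$ and $H$ factors cancel in the leading $\sqrt{Td_\alpha}$ term. One minor slip: for the last term, the trivial bound $\E[|\calToo|\,|\,\calE_{\le T}]\le T$ yields $TH\epsc$ rather than the claimed $T\epsc$, so you should use the \Cref{lem:size-of-Too} bound there as well (giving $O(T\epsc/H^2)\le O(T\epsc)$), just as you already do for the middle term.
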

\begin{proof}
This is an immediate corollary by combining~\Cref{lem:bound-E-III} and~\Cref{lem:size-of-Too}.
\end{proof}

Next, we proceed to bound $\E[I|\calE_{\le T}]$ properly. To do so, we will provide an additional helper lemma before bounding $I$. 

\begin{lemma}\label{lem:difference-f-bar}
When $\lambda = \Theta(1)$,  $\alpha\le 1$, $\epsilon\le 1$, $\delta\le 1/10$, we have 
\begin{align*}
& \E\left[\sum_{t\in[T]}\sum_{h\in[H]}\left[f_{t,2}^h(z_t^h)-f_{t,-2}^h(z_t^h)\right]~|~\calE_{\le T}\right]\\
& \hspace{2em}\le O\left(H\cdot \E[I|\calE_{\le T}]+H\cdot\E[II|\calE_{\le T}]+ H^2\cdot\E\left[|\calToo||\calE_{\le T}\right]+TH^2\epsilon+TH^2\delta\right)+H\cdot\sum_{t\in\calTo} u_t.
\end{align*}
\end{lemma}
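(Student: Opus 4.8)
The plan is to bound each episode's contribution $\sum_{h\in[H]}\bigl(f_{t,2}^h(z_t^h)-f_{t,-2}^h(z_t^h)\bigr)$ separately, splitting according to whether $t\in\calTo$ or $t\in\calToo$. For $t\in\calToo$ I would simply use the uniform bound $f_{t,2}^h(z_t^h)-f_{t,-2}^h(z_t^h)\le 2$ (valid since $f_{t,2}^h\in[0,2]$ and $f_{t,-2}^h\in[0,1]$ by the truncations in~\eqref{eq:def-f-t-j}), giving $\sum_{t\in\calToo}\sum_h(\cdots)\le 2H|\calToo|$ and hence $O(H\,\E[|\calToo|\mid\calE_{\le T}])$ in expectation, comfortably inside the $H^2\E[|\calToo|]$ budget; the episode $t=1$ is absorbed by the same crude bound into an $O(H)$ term.

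The substance is the $\calTo$ part. Here the agent acts greedily with respect to $f_{t,1}$ throughout, so $f_{t,1}^h(z_t^h)=f_{t,1}^h(x_t^h)$, and since $t\in\calTo$ forces $h_t=H+1$, the exploration rule~\eqref{eq:greedy-policy} gives $f_{t,1}^h(x_t^h)\ge f_{t,2}^h(x_t^h)-u_t$ at every level. Combined with $f_{t,2}^h(z_t^h)\le f_{t,2}^h(x_t^h)$ this yields
\begin{align*}
f_{t,2}^h(z_t^h)-f_{t,-2}^h(z_t^h)\le u_t+\bigl(f_{t,1}^h(z_t^h)-V_t^h\bigr)+\bigl(V_t^h-f_{t,-2}^h(z_t^h)\bigr),
\end{align*}
whose $u_t$ term, summed over $h$, produces the $H\sum_{t\in\calTo}u_t$ contribution. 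Writing $P_t^h:=f_{t,1}^h(z_t^h)-V_t^h$ and $N_t^h:=V_t^h-f_{t,-2}^h(z_t^h)$, the single-step recursions underlying Lemma~\ref{lem:RL-f-tilde-oo} and Lemma~\ref{lem:RL-f-tilde-op} give, conditioned on $\calE_{\le T}$, the increment bounds $P_t^h-P_t^{h+1}\le 2b_{t,1}^h(z_t^h)+O(\epsilon)+(\xi_t^h-\xi_{t,1}^h)$ and $N_t^h-N_t^{h+1}\le 2b_{t,2}^h(z_t^h)+O(\epsilon)-(\xi_t^h-\xi_{t,-2}^h)$, using $f_{t,1}^{h+1}(x_t^{h+1})=f_{t,1}^{h+1}(z_t^{h+1})$ and $f_{t,-2}^{h+1}(x_t^{h+1})\ge f_{t,-2}^{h+1}(z_t^{h+1})$.

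To convert raw bonuses into the truncated bonuses appearing in $I$ and $II$, I would combine each increment bound with the a priori bounds $|P_t^h|,|N_t^h|\le 1$ and $|\xi_t^h|,|\xi_{t,j}^h|=O(1)$, so that the non-martingale part of each increment is at most $\min\bigl(O(1+L),\,2b_{t,1}^h+O(\epsilon)\bigr)=O(\min(1+L,b_{t,1}^h))+O(\epsilon)$, and similarly for $b_{t,2}$. Telescoping through $\sum_h P_t^h=\sum_{h'}h'\bigl(P_t^{h'}-P_t^{h'+1}\bigr)$ and bounding the nonnegative (bonus $+\,\epsilon$) part of the increments by $H\sum_{h'}(\cdots)$ then gives $\sum_h P_t^h\le O(H)\sum_{h'}\min(1+L,b_{t,1}^{h'})+O(H^2\epsilon)+\sum_{h'}h'(\xi_t^{h'}-\xi_{t,1}^{h'})$, and the analogous bound for $\sum_h N_t^h$ with $b_{t,2}$ and $-(\xi_t^{h'}-\xi_{t,-2}^{h'})$. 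Summing over $t\in\calTo$ and using $\sum_{t\in\calTo}\sum_{h'}\min(1+L,b_{t,j}^{h'})\le I$ (resp. $II$) yields the $H\,\E[I]$, $H\,\E[II]$, and $O(TH^2\epsilon)$ terms.

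It remains to dispose of the weighted martingale sums $\sum_{t\in\calTo}\sum_{h'}h'(\xi_t^{h'}-\xi_{t,1}^{h'})$ and $\sum_{t\in\calTo}\sum_{h'}h'(\xi_t^{h'}-\xi_{t,-2}^{h'})$ under the conditioning on $\calE_{\le T}$, and I expect this to be the main obstacle, since the indicator $\1_{\{t\in\calTo\}}$ is \emph{not} adapted to $\calH_t^{h'-1}$ (membership in $\calTo$ depends on the whole trajectory of episode $t$). My device is to write $\1_{\{t\in\calTo\}}=1-\1_{\{t\in\calToo\}}$ and split each sum into an unrestricted sum over $[T]$ and a $\calToo$-restricted remainder. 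The unrestricted sum is a genuine zero-mean martingale (as in Lemma~\ref{fact-xi}, the deterministic weights $h'$ causing no harm) bounded in absolute value by $O(TH^2)$, so conditioning on $\calE_{\le T}$, whose probability is $\ge 1-5\delta\ge 1/2$, via $\E[X\mid\calE_{\le T}]=\E[X\1_{\calE_{\le T}}]/\P(\calE_{\le T})$ costs only $O(TH^2\delta)$. The $\calToo$-restricted remainder is bounded crudely in absolute value by $O(H^2|\calToo|)$, contributing $O(H^2\,\E[|\calToo|\mid\calE_{\le T}])$. Collecting the $\calTo$ and $\calToo$ contributions gives the claimed inequality; the key insight is that expanding $\calTo=[T]\setminus\calToo$ isolates an honest martingale and absorbs the uncontrolled, non-adapted part into the already-small $\E[|\calToo|]$ term.
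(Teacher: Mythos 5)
Your proposal is correct and follows essentially the same route as the paper's proof: the crude $O(H\,\E[|\calToo|\mid\calE_{\le T}])$ bound on $\calToo$ episodes, the exploration-rule gap $u_t$ combined with the recursions behind \Cref{lem:RL-f-tilde-oo,lem:RL-f-tilde-op} on $\calTo$ episodes, extension of the $\xi$-sums from $\calTo$ to $[T]$ at cost $O(H^2|\calToo|)$, and \Cref{fact-xi} together with $\P(\calE_{\le T})\ge 1-5\delta$ to price the conditioning at $O(TH^2\delta)$. If anything, your per-increment truncation (using $\min(c,B+M)\le \min(c+\sup|M|,B)+M$ with the per-step martingale terms bounded by $O(1)$) is executed more carefully than the paper's corresponding step, which caps the aggregated bonus sums as $\min\bigl(1,\sum_{h\le h'\le H}b_{t,j}^{h'}\bigr)$ while leaving the signed, size-$O(H)$ martingale sums outside the minimum — a manipulation whose pointwise justification is precisely the per-step argument you spell out.
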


\begin{proof}
For $t\in\calToo$, it holds that 
\[
\sum_{t\in\calToo}\sum_{h\in[H]}\left[f_{t,2}^h(z_t^h)-f_{t,-2}^h(z_t^h)\right]= O(|\calToo|H),
\]
and consequently $\E\left[\sum_{t\in\calToo}\sum_{h\in[H]}\left[f_{t,2}^h(z_t^h)-f_{t,-2}^h(z_t^h)\right]~|~\calE_{\le T}\right]= O(\E[|\calToo|~|~\calE_{\le T}]\cdot H)$

Otherwise, for iterations $t\in\calTo$, we know that it always holds true that $f_{t,2}^h(x_t^h)\le f_{t,1}^h(x_t^h)+u_t$, which implies
\begin{align*}
    & f_{t,2}^h(z_t^h)-f_{t,-2}^h(z_t^h)  \stackrel{(i)}{\le} f_{t,1}^{h}(z_t^{h})-f_{t,-2}^{h}(z_t^{h})+u_t \\
    &  \stackrel{(ii)}{\le}  u_t + O\left(\sum_{h\le h'\le H}b_{t,1}^{h'}(z_t^{h'})+\sum_{h\le h'\le H}b_{t,2}^{h'}(z_t^{h'})+\sum_{h\le h'\le H}\left(-\xi_{t,1}^{h'}+\xi_{t,-2}^{h'}\right)+H\epsilon\right),
\end{align*}
where we use $(i)$ the fact that $f_{t,2}^h(z_t^h)\le f_{t,1}^h(z_t^h)$ by \Cref{lem:mono} and $(ii)$~the upper bound on $f_{t,1}^h$ as in \Cref{lem:RL-f-tilde-oo} when $h_t=H+1$ and the lower bound on $f_{t,-1}^h$ as in~\Cref{lem:RL-f-tilde-op} conditioning on $\calE_{\le T}$. Also, we note $f_{t,2}^h(z_t^h)-f_{t,-2}^h(z_t^h)\le O(1)$. 

Now conditioning on $\calE_{\le T}$, we have 
\begin{align*}
& \sum_{t\in\calTo}\sum_{h\in[H]}\left[f_{t,2}^h(z_t^h)-f_{t,-2}^h(z_t^h)\right] \le O\left(\sum_{t\in\calTo}\sum_{ h\in[H]}\min\left(1,\sum_{h\le h'\le H}b_{t,1}^{h'}(z_t^{h'})\right)+\sum_{t\in\calTo}\sum_{h\in[H]}\min\left(1,\sum_{h\le h'\le H}b_{t,2}^{h'}(z_t^{h'})\right)\right)\\
 & \hspace{15em} +\sum_{t\in\calTo}H\cdot u_t+O\left(\sum_{t\in\calTo}\sum_{h\in[H]}\sum_{h\le h'\le H}\xi^{h'}_{t,-2}-\sum_{t\in\calTo}\sum_{h\in[H]}\sum_{h\le h'\le H}\xi^{h'}_{t,1}+TH^2\epsilon\right)\\
 & \hspace{3em}\le O\left(\sum_{t\in\calTo}\sum_{ h\in[H]}\min\left(1,\sum_{h\le h'\le H}b_{t,1}^{h'}(z_t^{h'})\right)+\sum_{t\in\calTo}\sum_{h\in[H]}\min\left(1,\sum_{h\le h'\le H}b_{t,2}^{h'}(z_t^{h'})\right)\right)\\
 & \hspace{12em} +\sum_{t\in\calTo}H\cdot u_t+O\left(\sum_{t\in[T]}\sum_{h\in[H]}\sum_{h\le h'\le H}\xi^{h'}_{t,-2}-\sum_{t\in[T]}\sum_{h\in[H]}\sum_{h\le h'\le H}\xi^{h'}_{t,1}+|\calToo|H^2+TH^2\epsilon\right).
\end{align*}
This implies that 
\begin{align*}
& \E\left[\sum_{t\in\calTo}\sum_{h\in[H]}\left[f_{t,2}^h(z_t^h)-f_{t,-2}^h(z_t^h)\right]~|~\calE_{\le T}\right]\\
& \hspace{3em} \stackrel{(i)}{\le} O\left(\E\left[\sum_{t\in\calTo}\sum_{ h\in[H]}\min\left(1,\sum_{h\le h'\le H}b_{t,1}^{h'}(z_t^{h'})\right)+\sum_{t\in\calTo}\sum_{h\in[H]}\min\left(1,\sum_{h\le h'\le H}b_{t,2}^{h'}(z_t^{h'})\right)~\big|~\calE_{\le T}\right]\right)\\
&  \hspace{8em}+\sum_{t\in\calTo}H\cdot u_t+O(H^2\E[|\calToo||\calE_{\le T}]+TH^2\epsilon+TH^2\delta)\\
 & \hspace{3em} \stackrel{(ii)}{\le} O\left(H\cdot\E[I~|~\calE_{\le T}]+H\cdot\E[II~|~\calE_{\le T}]+H\sum_{t\in\calTo}u_t+H^2\E[|\calToo||\calE_{\le T}]+TH^2\epsilon+TH^2\delta\right).
  \end{align*}
Here we use $(i)$ since $\calE_{\le T}$ happens with probability $1-5\delta\ge 1/2$, and $|\xi_{t,j}^h|\le O(1)$ for any $t,h$ and any $j=-2,1$, so that $\E\bigl[\sum_{t\in[T]}\sum_{h\in[H]}\sum_{h\le h'\le H}\xi^{h'}_{t,-2}~|~\calE_{\le T}\bigr]=O(TH^2\delta)$ and $\E\bigl[\sum_{t\in[T]}\sum_{h\in[H]}\sum_{h\le h'\le H}\xi^{h'}_{t,1}~|~\calE_{\le T}\bigr]=O(TH^2\delta)$ using~\Cref{fact-xi}. For $(ii)$ we simply use the definition of $I$ and $II$ and the fact that all bonus terms are non-negative.

Thus summing the two cases gives the claimed bound.
\end{proof}

\begin{lemma}[Fine-grained bound on $I$]\label{lem:regret-bound-I}
Recall the definition of $b_{t,1}$ and $b_{t,2}$ as in~\Cref{lem:bound-E-III-o} and~\Cref{lem:bound-E-III}. When $\lambda = 1$, $\alpha= 1/\sqrt{TH}$, $\epsilon\le 1$ and $\delta\le 1/10$, conditioning on the event $\calE_{\le T}$, we have the following inequality holds true:
\begin{align*}
    & \E[I|\calE_{\le T}] \defeq \E\left[\sum_{t\in[T]}\sum_{h\in[H]}\min\left(1+L, b_{t,1}^h(z_t^h)\right)|\calE_{\le T}\right]\\
    & \hspace{1em} = O\left(\sqrt{\log\frac{\calN TH}{\delta}+T^2H\epsilon}\cdot \sqrt{T}\cdot\sqrt{H d_\alpha}\right)\\
    & \hspace{3em} +O\left(\sqrt{\log\frac{\calN TH}{\delta}+T^2H\epsilon}\sqrt{\log\frac{\calN\calN_bTH}{\delta}}\sqrt{TH^3(\epsilon+\delta)+H^3\E[|\calToo||\calE_{\le T}]+H^2\sum_{t\in\calTo}u_t}\cdot\sqrt{Hd_\alpha}\right)\\
         & \hspace{3em}+O\left( \Par{\log\frac{\calN TH}{\delta}+T^2H\epsilon}\log^{1.5} \frac{\calN\calN_bTH}{\delta}\cdot H^{7/2}d_\alpha+\sqrt{\log\frac{\calN TH}{\delta}+T^2H\epsilon}\cdot TH\epsc\right).
\end{align*}
\end{lemma}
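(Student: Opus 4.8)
The plan is to reduce $\E[I\mid\calE_{\le T}]$ to a weighted generalized-Eluder sum and then peel off a ``total variance'' factor that can be controlled in expectation through the law of total variance. First I would invoke the upper-bound property of the oracle in~\Cref{def:bonus-conditions} to write $\min(1+L,b_{t,1}^h(z_t^h)) = O\big(\min(1, D_{\calF^h}(z_t^h;z_{[t-1]}^h,\bsigma_{[t-1]}^h)\sqrt{(\beta_{t,1}^h)^2+\lambda}) + \epsc\,\beta_{t,1}^h\big)$. Plugging $\alpha=1/\sqrt{TH}$ and $\lambda=1$ into~\eqref{eq:def-beta-opti} gives the uniform bound $M_\beta \defeq \max_{t,h}\beta_{t,1}^h = O(\sqrt{\log\frac{\calN TH}{\delta}+T^2H\epsilon})$, so the additive oracle-error term contributes exactly $O(\sqrt{\log\frac{\calN TH}{\delta}+T^2H\epsilon}\cdot TH\epsc)$, the final summand in the statement. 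Since $M_\beta\ge1$, the identity $\min(1,M_\beta x)\le M_\beta\min(1,x)$ then leaves me to bound $M_\beta\cdot\sum_{t,h}\min\!\big(1, D_{\calF^h}(z_t^h;z_{[t-1]}^h,\bsigma_{[t-1]}^h)\big)$.

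For this Eluder sum I would follow the five-way partition $\calI_1,\dots,\calI_5$ of $[T]\times[H]$ used in~\Cref{lem:bound-E-III-o}, split according to whether $(\bsigma_t^h)^{-1}D_{\calF^h}>1$ and which term of~\eqref{eq:def-bsigma} attains the maximum defining $\bsigma_t^h$. On $\calI_1,\calI_2,\calI_3$ the crude bounds already produce only low-order $\widetilde{O}(\log(\calN\calN_b)\,H d_\alpha)$ contributions, so I would keep them verbatim; the entire improvement over~\Cref{lem:bound-E-III-o} lives on $\calI_4$ (where $\bsigma_t^h=\sigma_t^h$) and $\calI_5$ (where $\bsigma_t^h\propto\iota(\delta_{t,h})\sqrt{f_{t,2}^h(z_t^h)-f_{t,-2}^h(z_t^h)}$). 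On each of these I would write $D_{\calF^h}=\bsigma_t^h\cdot(\bsigma_t^h)^{-1}D_{\calF^h}$ and apply Cauchy--Schwarz to split the sum as $\sqrt{\sum (\bsigma_t^h)^2}\cdot\sqrt{\sum (\bsigma_t^h)^{-2}D_{\calF^h}^2(z_t^h;\cdot)}$, bounding the second factor by $\sqrt{\sum_h\dim_{\alpha,T}(\calF^h)}=\sqrt{Hd_\alpha}$ straight from~\Cref{def:general-eluder-RL}. The point is that~\Cref{lem:bound-E-III-o} pays $\sqrt{\sum(\sigma_t^h)^2}\le\sqrt{TH}$ by using $\sigma_t^h=O(1)$ pointwise, whereas the refined analysis shows $\E\sum_{t,h}(\sigma_t^h)^2 = O(T)$ up to low-order terms, which is precisely what saves the advertised $\sqrt{H}$ factor in the leading $\sqrt{T}$ term.

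The heart of the argument is therefore controlling $\E\sum_{t,h}(\sigma_t^h)^2$ (and the analogous $\E\sum_{t,h}(f_{t,2}^h-f_{t,-2}^h)$ governing $\calI_5$). I would first apply~\Cref{lem:RL-variance-bounds-upper} to dominate $(\sigma_t^h)^2$ by the true one-step variance $\rV[r^h+f_{t,1}^{h+1}(x^{h+1})\mid z_t^h]$, plus the difference $f_{t,2}^h-f_{t,-2}^h$, plus an information-gain term of the same $\min(1,D\sqrt{\beta^2+\lambda})$ shape absorbed into $I$ and $II$, plus $O(\epsilon)$ slack. Then~\Cref{coro:sum-variance} converts $\E\sum_{t,h}\rV[\cdots]$ into $O(T + TH^2\delta + H^2\E[|\calToo|] + H\,\E\sum_{t,h}(f_{t,2}^h-f_{t,-2}^h))$, and~\Cref{lem:difference-f-bar} bounds the remaining difference sum by $O(H\E[I]+H\E[II]+H^2\E[|\calToo|]+TH^2\epsilon+TH^2\delta)+H\sum_{t\in\calTo}u_t$. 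The main obstacle is that $\E[I]$ reappears on the right-hand side (through the $f_{t,2}-f_{t,-2}$ differences and the LTV correction), and crucially it sits inside the factor $\sqrt{\sum(\sigma_t^h)^2}$, so the estimate has the self-referential shape $\E[I]\le A + B\sqrt{\E[I]+(\text{l.o.})}$. I would close this by AM--GM (from $x\le A+B\sqrt{x}$ one gets $x\le 2A+2B^2$), feeding in the separate estimate on $\E[II]$ from~\Cref{lem:bound-E-III}. The resulting $B^2$ term is what produces the $\Par{\log\frac{\calN TH}{\delta}+T^2H\epsilon}\log^{1.5}\frac{\calN\calN_bTH}{\delta}\,H^{7/2}d_\alpha$ contribution, while the $O(T)$ part of the total-variance bound gives the leading $\sqrt{\log\frac{\calN TH}{\delta}+T^2H\epsilon}\cdot\sqrt{T}\cdot\sqrt{Hd_\alpha}$ term and the $\E[|\calToo|]$, $\sum_{t\in\calTo}u_t$, $\epsilon$, $\delta$ pieces assemble into the middle summand.
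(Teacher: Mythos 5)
Your proposal is correct and follows essentially the same route as the paper's proof: the same oracle reduction with $\max_{t,h}\beta_{t,1}^h = O\bigl(\sqrt{\log\frac{\calN TH}{\delta}+T^2H\epsilon}\bigr)$, the same five-way partition $\calI_1,\dots,\calI_5$ with crude bounds on $\calI_1\cup\calI_2\cup\calI_3$ and Cauchy--Schwarz on $\calI_4,\calI_5$, and the same chain Lemma~\ref{lem:RL-variance-bounds-upper} $\to$ Corollary~\ref{coro:sum-variance} $\to$ Lemma~\ref{lem:difference-f-bar} to control $\E\sum_{t,h}(\sigma_t^h)^2$. The only deviation is how you close the self-reference: you solve $x\le A+B\sqrt{x}$ directly via AM--GM, whereas the paper substitutes the crude bounds of Lemmas~\ref{lem:bound-E-III} and~\ref{lem:bound-E-III-o} for $\E[I|\calE_{\le T}]+\E[II|\calE_{\le T}]$ inside the square root (Eq.~\eqref{eq:lem-E-I-helper}); the two are equivalent here, and your variant yields a low-order term $\Par{\log\frac{\calN TH}{\delta}+T^2H\epsilon}\log\frac{\calN\calN_bTH}{\delta}H^3d_\alpha$ that is in fact no worse than the stated $H^{7/2}\log^{1.5}$ term.
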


\begin{proof}
We first note that by assumption and definition, 
\begin{equation}\label{eq:lem-E-I-initial}
\begin{aligned}
   & \sum_{t\in[T]}\sum_{h\in[H]} \min\left(1+L,b_{t,1}^h(z_t^h) \right)
    = O\left(\sum_{t\in[T]}\sum_{h\in[H]}\min\left(1,D_{\calF^h}(z_t^h; z_{[t-1]}^h, \bsigma_{[t-1]}^h)\cdot\sqrt{\left(\beta_{t,1}^h\right)^2+\lambda} \right)+TH\epsc\cdot\max_{t,h}\beta_{t,1}^h\right) \\
    & \hspace{10em} = O\left(\sqrt{\log\frac{\calN TH}{\delta}+T^2H\epsilon}\cdot \Par{\sum_{t\in[T]}\sum_{h\in[H]}\min\left(1,D_{\calF^h}(z_t^h; z_{[t-1]}^h, \bsigma_{[t-1]}^h)\right)+TH\epsc}\right).
\end{aligned}
\end{equation}

Treating $L=O(1)$ as defined (see~\Cref{ass:eps-realizability-RL}), we now bound the summation terms \[\sum_{t\in[T]}\sum_{h\in[H]}\min\left(1,D_{\calF^h}(z_t^h; z_{[t-1]}^h, \bsigma^h_{[t-1]})\right) = \sum_{t\in[T]}\sum_{h\in[H]}\min\left(1,\bsigma_t^h \cdot \Par{\bsigma_t^h}^{-1} D_{\calF^h}(z_t^h; z_{[t-1]}^h, \bsigma^h_{[t-1]})\right)\] by dividing into cases.

We consider separating the index set $\{(t,h):t\in[T], h\in[H]\}$ as follows, same as the cases we consider in~\Cref{lem:bound-E-III-o}.
\begin{align*}
\calI_1 &= \{(t,h)\in\calT\times[H]~|~\Par{\bsigma_t^h}^{-1}D_{\calF^h}(z_t^h; z_{[t-1]}^h, \bsigma_{[t-1]}^h)\ge 1\},\\
\calI_2 &= \{(t,h)\in\calT\times[H]~|~(t,h)\notin\calI_1,~\bsigma_t^h = \alpha\},\\
\calI_3 &= \biggl\{(t,h)\in\calT\times[H]~|~(t,h)\notin\calI_1,~\bsigma_t^h = 2\bigl(\sqrt{\upsilon(\delta_{t,h})}+\iota(\delta_{t,h})\bigr)\cdot\sqrt{ D_{\calF^h}(z^{h}_t; z^{h}_{[t-1]},\bsigma^{h}_{[t-1]})}\biggr\},\\
\calI_4 &= \left\{(t,h)\in\calT\times[H]~|~(t,h)\notin\calI_1,~\bsigma_t^h = \sigma_t^h\right\},\\
\calI_5 &= \left\{(t,h)\in\calT\times[H]~|~(t,h)\notin\calI_1,~\bsigma_t^h = \sqrt{2}\iota(\delta_{t,h})\sqrt{f^h_{t,2}(z_t^h)-f^h_{t,-2}(z_t^h)}\right\}.
\end{align*}
    
For the terms restricting on $\calI_1$, $\calI_2$, $\calI_3$ we recall the bounds in~\Cref{eq:crude-1,eq:crude-2,eq:crude-3} respectively proven in~\Cref{lem:bound-E-III-o} such that
\begin{align*}
    & \sum_{(t,h)\in\calI_1\cup\calI_2\cup\calI_3}\min\left(1,\bsigma_{t}^h\cdot \Par{\bsigma_{t}^h}^{-1}D_{\calF^h}(z_{t}^h;z_{[t-1]}^h,\bsigma_{[t-1]}^h)\right) \le O\left(\log\frac{\calN\calN_b TH}{\delta} \cdot H d_\alpha\right).
    \end{align*}
     
    For terms restricting on $\calI_4$ and $\calI_5$ we do a tighter analysis different from~\Cref{lem:bound-E-III-o}. For summations terms in $\calI_5$, we have
    \begin{equation}\label{eq:regret-I-calI-5}
    \begin{aligned}
         & \sum_{(t,h)\in\calI_5}\min\left(1,\bsigma_{t}^h\cdot\Par{\bsigma_{t}^h}^{-1}D_{\calF^h}(z_{t}^h;z_{[t-1]}^h,\bsigma_{[t-1]}^h)\right)  \le\sum_{(t,h)\in\calI_5}\bsigma_{t}^h\cdot\Par{\bsigma_{t}^h}^{-1}D_{\calF^h}(z_{t}^h;z_{[t-1]}^h,\bsigma_{[t-1]}^h) \\
         & \hspace{5em} = \sum_{(t,h)\in\calI_5}\sqrt{2}\iota(\delta_{t,h})\cdot\sqrt{f_{t,2}^h(z_t^h)-f_{t,2}^h(z_t^h)}\cdot \Par{\bsigma_{t}^h}^{-1} D_{\calF^h}(z_{t}^h;z_{[t-1]}^h,\bsigma_{[t-1]}^h)
         \\
        & \hspace{5em} \le O\left(\sqrt{\log\frac{\calN\calN_b TH}{\delta}}\sqrt{\sum_{t,h}\Par{f_{t,2}^h(z_t^h)-f_{t,-2}^h(z_t^h)}}\cdot\sqrt{H\cdot d_\alpha}\right).
    \end{aligned}
    \end{equation}
    Here for the last inequality we use Cauchy-Schwarz inequality together with the definition of $\iota(\delta)$ as in~\eqref{eq:def-iota-opti}.
    
    Restricting on $\calI_4$, by Cauchy-Schwarz inequality and Jensen's inequality, we have 
    \begin{align*}
        & \E\left[\sum_{(t,h)\in\calI_4}\min\left(1,\bsigma_{t}^h\cdot \Par{\bsigma_t^h}^{-1}D_{\calF^h}(z_{t}^h;z_{[t-1]}^h,\bsigma_{[t-1]}^h)\right)|\calE_{\le T}\right] \\
        & \hspace{1em}\le \E\left[\sum_{(t,h)\in\calI_4}\bsigma_{t}^h\cdot \Par{\bsigma_t^h}^{-1}D_{\calF^h}(z_{t}^h;z_{[t-1]}^h,\bsigma_{[t-1]}^h)|\calE_{\le T}\right]\\
         & \hspace{1em} \stackrel{(o)}{\le} \sqrt{\E\left[\sum_{t,h\in\calI_4}\left(\sigma_t^h\right)^2|\calE_{\le T}\right]}\cdot \sqrt{\E\left[\sum_{t,h\in\calI_4}\Par{\bsigma_{t}^h}^{-2}D_{\calF^h}^2(z_{t}^h;z_{[t-1]}^h,\bsigma_{[t-1]}^h)|\calE_{\le T}\right]}\\
         & \hspace{1em} \stackrel{(i)}{\le} \sqrt{\E\left[\sum_{t,h\in\calI_4}\left(\sigma_t^h\right)^2|\calE_{\le T}\right]}\cdot \sqrt{\E\left[\sum_{t,h\in\calI_4}\min\left(1,\Par{\bsigma_{t}^h}^{-2}D_{\calF^h}^2(z_{t}^h;z_{[t-1]}^h,\bsigma_{[t-1]}^h)\right)|\calE_{\le T}\right]}\\
        & \hspace{1em}  \stackrel{(ii)}{\le} O\left(\sqrt{\E\left[\sum_{t,h}\rV_{r^h, x^{h+1}}\left[r^h+f^{h+1}_{t,1}(x^{h+1})|z_{t}^{h}\right]+\sum_{t,h}\left(f_{t,2}^h(z_{t}^h)-f_{t,-2}^h(z_{t}^h)\right)|\calE_{\le T}\right]}\cdot\sqrt{H\cdot d_\alpha}\right)\\
        & \hspace{3em} +O\left( \sqrt{\E\left[\sum_{t,h}\min\left(1,D_{\calF^h}(z_t^h; z_{[t-1]}^h, \1_{[t-1]}^h)\right)\sqrt{\log\frac{\calN\calN_b TH}{\delta}+T\epsilon}+TH\epsilon|\calE_{\le T}\right]}\cdot\sqrt{H\cdot d_\alpha}\right)\\
        & \hspace{1em} \stackrel{(iii)}{\le} O\left(\sqrt{2\E\sum_{t,h}\rV_{r^h, x^{h+1}}\left[r^h+f^{h+1}_{t,1}(x^{h+1})|z_{t}^{h}\right]+\E\left[\sum_{t,h}\left(f_{t,2}^h(z_{t}^h)-f_{t,-2}^h(z_{t}^h)\right)|\calE_{\le T}\right]}\cdot\sqrt{H\cdot d_\alpha}\right)\\
        & \hspace{3em} +O\left( \sqrt{\E\left[\sum_{t,h}\min\left(1,D_{\calF^h}(z_t^h; z_{[t-1]}^h, \1_{[t-1]}^h)\right)\sqrt{\log\frac{\calN\calN_b TH}{\delta}+T\epsilon}+TH\epsilon|\calE_{\le T}\right]}\cdot\sqrt{H\cdot d_\alpha}\right).
        \end{align*}
        Here we use $(o)$ the condition that $\Par{\bsigma_{t}^h}^{-1}D_{\calF^h}(z_{t}^h;z_{[t-1]}^h,\bsigma_{[t-1]}^h)\le 1$ by definition of $\calI_4$, $(i)$ definition of Eluder dimension, $(ii)$~\Cref{lem:RL-variance-bounds-upper} and $(iii)$ the fact that event $\calE_{\le T}$ happens with at least $1-5\delta$ probability so that 
 \[\E\left[\sum_{t,h}\rV_{r^h, x^{h+1}}\left[r^h+f^{h+1}_{t,1}(x^{h+1})|z_{t}^{h}\right]|\calE_{\le T}\right]\le \frac{1}{1-5\delta}\E\sum_{t,h}\rV_{r^h, x^{h+1}}\left[r^h+f^{h+1}_{t,1}(x^{h+1})|z_{t}^{h}\right].\]

Further, we have 
        \begin{align*}
         & \E\left[\sum_{(t,h)\in\calI_4}\min\left(1,\bsigma_{t}^h\cdot \Par{\bsigma_t^h}^{-1}D_{\calF^h}(z_{t}^h;z_{[t-1]}^h,\bsigma_{[t-1]}^h)\right)|\calE_{\le T}\right]\\
         & \hspace{1em}  \stackrel{(i)}{\le} O\left(\sqrt{T+H^2\cdot\E|\calToo|+TH^2(\epsilon+\delta)+H\cdot\E\left[\sum_{t,h}\left(f_{t,2}^h(z_{t}^h)-f_{t,-2}^h(z_{t}^h)\right)\right]}\cdot\sqrt{H\cdot d_\alpha}\right)\\
         & \hspace{3em} +O\left(\sqrt{\E\left[\sum_{t,h}\left(f_{t,2}^h(z_{t}^h)-f_{t,-2}^h(z_{t}^h)\right)|\calE_{\le T}\right]}\cdot\sqrt{H\cdot d_\alpha}\right)\\
         & \hspace{3em} +O\left( \sqrt{\Par{H\cdot d_\alpha+H\sqrt{T\cdot d_\alpha}}\sqrt{\log\frac{\calN\calN_b TH}{\delta}+T\epsilon}}\cdot\sqrt{H\cdot d_\alpha}\right)\\
         & \hspace{1em} \stackrel{(ii)}{\le} O\left(\sqrt{T+H^2\E[|\calToo|~|~\calE_{\le T}]+TH^2(\epsilon+\delta)}\cdot\sqrt{H\cdot d_\alpha}\right)\\
         & \hspace{3em} +O\left(\sqrt{H\cdot\E\left[\sum_{t,h}\left(f_{t,2}^h(z_{t}^h)-f_{t,-2}^h(z_{t}^h)\right)|\calE_{\le T}\right]}\cdot\sqrt{H\cdot d_\alpha}\right)\\
         & \hspace{3em} +O\left( H\sqrt{d_\alpha\cdot\Par{\log\frac{\calN\calN_b TH}{\delta}+T\epsilon}}\cdot\sqrt{H\cdot d_\alpha}\right).
    \end{align*}
     Here for $(i)$ we have plugged in bounds in~\Cref{coro:sum-variance} and~\Cref{eq:bound-b-2-sum}. For $(ii)$
in the first line we note $\calE_{\le T}$ happens with probability $1-5\delta$ so that $\E[\sum_{t,h}(f_{t,2}^h-f_{t,-2}^h)]\le O(\delta TH+\E[\sum_{t,h}(f_{t,2}^h-f_{t,-2}^h)|\calE_{\le T}])$ and $\E|\calToo|\le O(\delta T+\E[|\calToo||\calE_{\le T}])$ since with probability 1 we have $\sum_{t,h}(f_{t,2}^h-f_{t,-2}^h)\le O(TH)$ and $|\calToo|\le T$. In the third line of $(ii)$ we also use AM-GM inequality such that \begin{equation}\label{eq:the-AM-GM-pre}
    H \sqrt{T\cdot d_\alpha\Par{\log\frac{\calN\calN_bTH}{\delta}+T\epsilon}}\le T+H^2d_\alpha\cdot\Par{\log\frac{\calN\calN_bTH}{\delta}+T\epsilon}.
\end{equation}

Summing all terms together and taking conditional expectation we have
\begin{align*}
        & \E\left[\sum_{t\in[T]}\sum_{h\in[H]}\min\left(1,D_{\calF^h}(z_t^h; z_{[t-1]}^h, \bsigma_{[t-1]}^h)\right)|\calE_{\le T}\right]\\
        & \hspace{1em} =  O\left(\sqrt{T+H^2\E[|\calToo|~|~\calE_{\le T}]+TH^2(\epsilon+\delta)}\cdot\sqrt{H d_\alpha}\right)\\
         & \hspace{3em} + O\Par{\sqrt{\log\frac{\calN\calN_bTH}{\delta}}\cdot\sqrt{H d_\alpha}\cdot\sqrt{H\cdot\E\left[\sum_{t,h}\Par{f_{t,2}^h(z_t^h)-f_{t,-2}^h(z_t^h)}|\calE_{\le T}\right]}}\\
         & \hspace{3em} + O\Par{\Par{\log\frac{\calN\calN_b TH}{\delta}+T\epsilon}\cdot H^{1.5}\cdot d_\alpha}.
\end{align*}
Now plugging in the bounds proven in \Cref{lem:difference-f-bar} we have
\begin{equation}\label{eq:lem-E-I-step}
\begin{aligned}
& \E\left[\sum_{t\in[T]}\sum_{h\in[H]}\min\left(1,D_{\calF^h}(z_t^h; z_{[t-1]}^h, \bsigma_{[t-1]}^h)\right)|\calE_{\le T}\right]\\
        & \hspace{1em} =  O\left(\sqrt{T}\cdot\sqrt{H d_\alpha}+\sqrt{\log\frac{\calN\calN_b TH}{\delta}}\sqrt{H^3\cdot\E[|\calToo|~|~\calE_{\le T}]+TH^3(\epsilon+\delta)+H^2\sum_{t\in[T]}u_t}\cdot\sqrt{H\cdot d_\alpha}\right)\\
        & \hspace{3em} + O\Par{\Par{\log\frac{\calN\calN_b TH}{\delta}+T\epsilon}\cdot H^{1.5}\cdot d_\alpha}\\
         & \hspace{3em} + O\Par{\sqrt{\log\frac{\calN\calN_bTH}{\delta}}\cdot\sqrt{H d_\alpha}\cdot\sqrt{H^2\cdot\E\left[I|\calE_{\le T}\right]+H^2\cdot\E\left[II|\calE_{\le T}\right]}}.
\end{aligned}
\end{equation}
We can further bound the last term in the RHS of above inequality as
\begin{equation}\label{eq:lem-E-I-helper}
\begin{aligned}
	& H^2\cdot\E\left[I|\calE_{\le T}\right]+H^2\cdot\E\left[II|\calE_{\le T}\right] \\
	& \hspace{1em} \le  O\Par{\sqrt{\log\frac{\calN TH}{\delta}+T^2H\epsilon}\cdot\sqrt{\log\frac{\calN\calN_bTH}{\delta}}\cdot H^3\sqrt{Td_\alpha}}\\
	& \hspace{3em} +O\Par{\sqrt{\log\frac{\calN TH}{\delta}+T^2H\epsilon}\cdot \log\frac{\calN\calN_b TH}{\alpha\delta}\cdot H^2d_\alpha+\sqrt{\log\frac{\calN\calN_b TH}{\delta}+T^2H\eps}\cdot TH^2\epsc}\\
   &\hspace{1em} \le O\Par{\frac{T}{\log\frac{\calN\calN_bTH}{\delta}}+\Par{\log\frac{\calN TH}{\delta}+T^2H\epsilon}\Par{\log\frac{\calN\calN_bTH}{\delta}}^2H^6d_\alpha +\sqrt{\log\frac{\calN\calN_b TH}{\delta}+T^2H\epsilon}\cdot TH^2\epsc}.
\end{aligned}
\end{equation}
Here for the first inequality we plug in crude bounds in~\Cref{lem:bound-E-III} and~\Cref{lem:bound-E-III-o} given the definition of bonus terms, and absorb low-order terms. For the second inequality we use the  AM-GM inequality such that
\[
 \sqrt{\log\frac{\calN TH}{\delta}+T^2H\epsilon}\cdot\sqrt{\log\frac{\calN\calN_bTH}{\delta}}\cdot H^3\sqrt{Td_\alpha} \le \frac{T}{\log\frac{\calN\calN_bTH}{\delta}}+\Par{\log\frac{\calN TH}{\delta}+T^2H\epsilon}\Par{\log\frac{\calN\calN_bTH}{\delta}}^2H^6d_\alpha,
\]
and absorb other low-order terms.

Plugging~\eqref{eq:lem-E-I-helper} back to the original bound in~\eqref{eq:lem-E-I-step}, rearranging terms and absorbing low-order terms we get
\begin{align*}
& \E\left[\sum_{t\in[T]}\sum_{h\in[H]}\min\left(1,D_{\calF^h}(z_t^h; z_{[t-1]}^h, \bsigma_{[t-1]}^h)\right)|\calE_{\le T}\right]\\
         & \hspace{1em} \le O\left(\sqrt{T}\cdot\sqrt{H d_\alpha}+\sqrt{\log\frac{\calN\calN_b TH}{\delta}}\sqrt{H^3\cdot\E[|\calToo|~|~\calE_{\le T}]+TH^3(\epsilon+\delta)+H^2\sum_{t\in[T]}u_t}\cdot\sqrt{H\cdot d_\alpha}\right)\\
         & \hspace{3em} + O\left(\sqrt{ \Par{\log\frac{\calN TH}{\delta}+T^2H\epsilon}\log^3\frac{\calN\calN_bTH}{\delta}}\cdot H^{7/2}d_\alpha+TH\epsc\right).
    \end{align*}
Here the low-order $TH\epsc$ term comes from applying AM-GM inequality on the $\mathrm{poly(\epsc)}$ term and absorbing other low-order terms by $O\left(\sqrt{ \Par{\log\frac{\calN TH}{\delta}+T^2H\epsilon}\log^3\frac{\calN\calN_bTH}{\delta}}\cdot H^{7/2}d_\alpha\right)$.

Plugging the bound back to~\eqref{eq:lem-E-I-initial} and rearranging terms, we have the claimed bounds.
\end{proof}

\theoremregret*

\begin{proof}
    Following~\Cref{eq:regret-expected}, we have 
\begin{align*}
    \E R_T & = O\left(1+\delta HT+\epsilon HT \right) +2\E\left[I|\calE_{\le T}\right]+ 2\E\left[III|\calE_{\le T}\right].
\end{align*}

Now plugging in the guarantees in~\Cref{lem:regret-bound-I} for bounding $\E[I|\calE_{\le T}]$, and~\Cref{lem:bound-fine-grained-b2} for bounding $\E\left[III|\calE_{\le T}\right]$, we have
\begin{align*}
    \E R_T & = O\Par{1+TH(\epsilon+\delta)+\sqrt{\log\frac{\calN TH}{\delta}+T^2H\epsilon}\cdot \sqrt{T}\cdot\sqrt{Hd_\alpha}}\\
    & \hspace{3em} +O\left(\sqrt{\log\frac{\calN TH}{\delta}+T^2H\epsilon}\sqrt{\log\frac{\calN\calN_bTH}{\delta}}\sqrt{1+TH^3(\epsilon+\delta)+H^3\E[|\calToo||\calE_{\le T}]+H^2\sum_{t\in\calTo}u_t}\cdot\sqrt{Hd_\alpha}\right)\\
         & \hspace{3em}+O\left( \Par{\log\frac{\calN TH}{\delta}+T^2H\epsilon}\log^{1.5} \frac{\calN\calN_bTH}{\delta}\cdot H^{7/2}d_\alpha+\sqrt{\log\frac{\calN\calN_b TH}{\delta}+T^2H\epsilon}\cdot TH\epsc\right).
\end{align*}
Now further plugging in the choice of $u_t$ and $\E[|\calToo||\calE_{\le T}]$ due to~\Cref{lem:size-of-Too}, we have 
\begin{align*}
    & \sqrt{\log\frac{\calN\calN_bTH}{\delta}}\sqrt{H^2\E[|\calToo||\calE_{\le T}]+H^2\sum_{t\in\calTo}u_t}\\
    & \hspace{2em} =O\Par{\sqrt{\log\frac{\calN\calN_bTH}{\delta}}\sqrt{\frac{T}{\log\frac{\calN\calN_bTH}{\delta}}+\sqrt{\log\frac{\calN TH}{\delta}+T^2H\epsilon}\cdot TH^3\epsc+TH^4\eps+TH^3\delta}}\\
    & \hspace{5em} +O\Par{\sqrt{\log\frac{\calN\calN_bTH}{\delta}}\cdot\sqrt{\sqrt{\log\frac{\calN TH}{\delta}+T^2H\epsilon}\cdot\log\frac{\calN\calN_bTH}{\delta}H^{4.5}\sqrt{d_\alpha}\sqrt{T}}}\\
    & \hspace{2em} \le O\Par{\sqrt{\log\frac{\calN\calN_bTH}{\delta}}\sqrt{\frac{T}{\log\frac{\calN\calN_bTH}{\delta}}+\sqrt{\log\frac{\calN TH}{\delta}+T^2H\epsilon}\cdot TH^3\epsc+TH^{4}\eps+TH^3\delta}}\\
    & \hspace{5em} +O\Par{\sqrt{\log\frac{\calN\calN_bTH}{\delta}}\cdot\sqrt{\frac{T}{\log\frac{\calN\calN_b TH}{\delta}}+\Par{\log\frac{\calN TH}{\delta}+T^2H\epsilon}\cdot\log^3\frac{\calN\calN_bTH}{\delta}H^{9}d_\alpha}}\\
    & \hspace{2em} = O(\sqrt{T})+O\Par{\sqrt{\log\frac{\calN\calN_bTH}{\delta}}\cdot\sqrt{\Par{\log\frac{\calN TH}{\delta}+T^2H\epsilon}\cdot\Par{\log^3\frac{\calN\calN_bTH}{\delta}H^{9}d_\alpha+TH^3(\epsc+\delta)}}}
\end{align*}
which by multiplying both sides with $\sqrt{\log\frac{\calN TH}{\delta}+T^2H\epsilon}\cdot\sqrt{Hd_\alpha}$, plugging back, using AM-GM inequality to simplify the $\mathrm{poly}(\epsb,\delta)$ terms gives
\begin{align*}
    \E R_T & = O\Par{1+TH(\epsilon+\delta)+\sqrt{\log\frac{\calN TH}{\delta}+T^2H\epsilon}\cdot \sqrt{T}\cdot\sqrt{H d_\alpha}}\\
    & \hspace{3em} +O\left(\Par{\log\frac{\calN TH}{\delta}+T^2H\epsilon}\cdot\Par{\log^2\frac{\calN\calN_bTH}{\delta}H^{5}d_\alpha+T^2\epsc^2+T\delta}\right).
\end{align*}
Adjusting the constant $\delta\leftarrow \tfrac{\delta}{5}$, using the range of $\delta$ and omitting low-order terms of $\mathrm{poly}(\eps)$ conclude the final bound for regret.
\end{proof}

\section{Regret with High Probability.}\label{app:regret-hp}

In this section, we provide the full analysis of the high-probability guarantee stated in~\Cref{thm:regret-genera.} of~\Cref{alg:fitted-Q-simpler}. We first provide a complete statement of the guarantee as follows.

\begin{restatable}[Regret bound with high probability]{theorem}{thmhp}\label{thm:regret-hp}
Suppose function class $\{\calF^h\}_{h\in[H]}$ satisfy~\Cref{ass:eps-realizability-RL} with $\epsilon\in[0,1]$ and~\Cref{def:general-eluder-RL} with $\lambda=1$, given consistent bonus oracle $\oracle$ satisfying~\Cref{def:bonus-conditions}, \Cref{alg:fitted-Q-simpler} with $\alpha= \sqrt{1/TH}$, $\delta\le 1/(H^2+11)$, $\epsilon\le 1$ and 
\[u_t=  C\cdot\Par{\frac{\sqrt{\log\frac{\calN TH}{\alpha\delta}+\frac{T}{\alpha^2}\epsilon}\cdot\Par{\log\frac{\calN\calN_bTH}{\alpha\delta}\cdot H^{5/2}\sqrt{d_\alpha}+\sqrt{t}H^2\epsc}}{\sqrt{t}}+H^2\epsilon}
\]
for sufficiently large constant $C<\infty$, 
 with high probability $1-\delta$  event $\calE = \calE_{\le T}\cap \calE_{\xi_1}\cap\calE_{\xi_{-2}}\cap\calE_{\xi_2}\cap\calE_{\xid}\cap\calE_{\rV}$ happens. Further, when conditioning on $\calE$ the algorithm achieves a total regret of
\begin{align*}
R_T &  = O\Par{\sqrt{\log\frac{\calN TH}{\delta}+T^2H\epsilon}\cdot \sqrt{T}\cdot\sqrt{H d_\alpha}+\Par{\log\frac{\calN TH}{\delta}+T^2H\epsilon}\cdot\Par{\log^2\frac{\calN\calN_bTH}{\delta}H^{5}d_\alpha+T^2\epsc^2}}.
\end{align*}
\end{restatable}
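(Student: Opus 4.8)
The plan is to follow the expectation-based analysis of~\Cref{thm:regret} essentially line by line, but to replace every step where a martingale sum was annihilated by taking its expectation (via~\Cref{fact-xi}) with a pathwise high-probability bound. Starting from the regret decomposition~\eqref{eq:regret-prelim}, which holds deterministically on $\calE_{\le T}$, the regret is governed by $2I+2III$ together with the residual martingale term
\[
\sum_{t\in[T],h\in[H]}\xi_t^h \;-\; \sum_{t\in[T]}\sum_{h\in[H]}\xi_{t,1}^h \;+\; \sum_{t\in\calToo}\Par{\sum_{h_t\le h\le H}\xi_{t,1}^h-\sum_{h_t\le h\le H}\xi_{t,2}^h}.
\]
First I would invoke~\Cref{coro:good-event-fbar} to secure $\calE_{\le T}$ with probability $1-5\delta$, and then define and union-bound the additional concentration events $\calE_{\xi_1},\calE_{\xi_{-2}},\calE_{\xi_2},\calE_{\xid},\calE_{\rV}$, so that the full event $\calE$ holds with probability at least $1-\delta$ after union-bounding over the $O(1)$ constituent failures and adjusting constants (this is where the $\delta\le 1/(H^2+11)$ restriction is used).

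For the residual martingale I would apply the Freedman-type inequality~\Cref{coro:Freedman-variant} to each of $\{\xi_t^h\},\{\xi_{t,1}^h\},\{\xi_{t,\pm2}^h\}$. By~\Cref{fact-xi} each is an MDS adapted to the step-level filtration, is bounded by $O(1)$, and has conditional second moment equal to the corresponding one-step variance; in particular the dominant sequence $\{\xi_{t,1}^h\}$ has conditional variance $\rV[r^h+f_{t,1}^{h+1}(x^{h+1})\mid z_t^h]$. The enabling ingredient is $\calE_{\rV}$, a pathwise law of total variance that upgrades~\Cref{coro:sum-variance} from an in-expectation statement. I would obtain it by noting the realized per-trajectory squared return $\Par{\sum_h r_t^h-f_{t,1}^1(x_t^1)}^2$ is $O(1)$, combining the conditional decomposition of~\Cref{prop:LTV-2} across all $t$, and concentrating the resulting martingale, yielding
\[
\sum_{t,h}\rV\bigl[r^h+f_{t,1}^{h+1}(x^{h+1})\mid z_t^h\bigr]=O\Bigl(T+H^2|\calToo|+H\sum_{t,h}\bigl(f_{t,2}^h(z_t^h)-f_{t,-2}^h(z_t^h)\bigr)\Bigr)
\]
plus lower-order logarithmic terms on $\calE_{\rV}$. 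With this, Freedman (and the companion lemmas governing $\calE_{\xi_j},\calE_{\xid}$, cf.~\Cref{lem:freedman-xi}) bound each residual martingale sum by $\widetilde{O}(\sqrt{T})$ plus lower-order terms.

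I would then re-derive pathwise the two auxiliary bounds previously established in expectation. The size bound $|\calToo|=\widetilde{O}\bigl(T/(H^3\log(\calN\calN_b))\bigr)$ (\Cref{lem:size-of-Too}) goes through once $\sum_{t\in\calToo}\sum_{h_t\le h\le H}(\xi_t^h-\xi_{t,2}^h)$ is controlled on $\calE_{\xi_2}\cap\calE_{\xid}$ instead of in expectation, using the crude Eluder-dimension bounds~\Cref{lem:bound-E-III} and~\Cref{lem:bound-E-III-o}, which already hold deterministically on $\calE_{\le T}$. Likewise the fine-grained bound on $I$ (\Cref{lem:regret-bound-I}) and the bound on $III$ (\Cref{lem:bound-fine-grained-b2}) transfer: the only expectation used in the proof of~\Cref{lem:regret-bound-I} is the Cauchy--Schwarz step over the index set $\calI_4$, where $\sum_{t,h}(\sigma_t^h)^2$ is bounded through~\Cref{lem:RL-variance-bounds-upper} and the variance sum; here I would simply substitute the pathwise bound from $\calE_{\rV}$ for the expected variance sum.

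The main obstacle is the self-referential structure among these bounds, which is more delicate to close pathwise than in expectation. The fine-grained bound on $I$ depends on $\sum_{t,h}(f_{t,2}^h-f_{t,-2}^h)$, which by~\Cref{lem:difference-f-bar} is itself controlled by $I$, $III$, and $\sum_t u_t$; simultaneously the variance sum in $\calE_{\rV}$ feeds back into the bound on $I$. In the expectation proof this loop is closed by taking total expectations and applying AM--GM, but here every inequality must remain valid on the single event $\calE$; so I would carry the logarithmic concentration corrections through the AM--GM splitting~\eqref{eq:the-AM-GM-pre}, verify they stay lower order, and confirm $\calE$ retains probability $\ge 1-\delta$ under the slightly enlarged $u_t$. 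Assembling $2I+2III$ with the concentrated residual martingale back into~\eqref{eq:regret-prelim} and absorbing the $\mathrm{poly}(\epsilon,\epsc)$ terms via AM--GM then yields the stated regret.
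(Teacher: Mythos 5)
Your plan coincides with the paper's own proof in Appendix~\ref{app:regret-hp} essentially step for step: the same Freedman-based events $\calE_{\xi},\calE_{\xi_1},\calE_{\xi_{\pm2}},\calE_{\xid}$ (\Cref{lem:freedman-xi,lem:concentration-indicator}), the same pathwise law-of-total-variance event $\calE_{\rV}$ built by concentrating the per-trajectory variance martingale on top of~\Cref{prop:LTV-2,coro:sum-variance} (\Cref{coro:sum-variance-hp}), the same pathwise re-derivations of the $|\calToo|$ bound and of the fine-grained bound on $I$ (\Cref{lem:size-of-Too-hp,lem:regret-bound-I-hp}), and the same closure of the self-referential loop among $I$, $II$, and $\sum_{t,h}(f_{t,2}^h-f_{t,-2}^h)$ via the crude Eluder-dimension bounds and AM--GM. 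This is correct and takes the same route as the paper.
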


The notation of this section is the same as in~\Cref{app:proofs}; see~\Cref{tbl:notation} and~\Cref{tbl:parameters} for the formal definitions. It also builds on~\Cref{app:notation,app:CI,app:variance,app:approx-error}. The section can be viewed as an alternative of the analysis in~\Cref{app:regret} for the high-probability setting. It is organized as follows: In~\Cref{app:concentrate-xi} we prove some concentration properties of the martingale sequences used in bounding the final regret. In~\Cref{app:size} we bound the size of $|\calToo|$, showing that with high probability the agent doesn't use $f_{t,2}$ too often in the exploration. In~\Cref{app:variance-hp}, we bound the summation of variances incurred in the total exploration in high probability, using the concentration property together with expectation bound shown in~\Cref{coro:sum-variance}. In~\Cref{app:diff-of-f}, we give the high-probability bound on the summation of differences $\sum_{t,h}(f_{t,2}^h-f_{t,-2}^h)$ which are used in the definition of $\bsigma_t^h$. Finally, we combine all parts together and bound the summation of bonus terms in~\Cref{app:regret-hp-subsec}  to prove the final high-probability regret bound.

\subsection{Concentration of Random Variables}\label{app:concentrate-xi}
To turn the in-expectation bound into a high probability argument, we first provide a few concentration results on the random variables $\xi$s, building on their MDS property as stated in~\Cref{fact-xi}.

\begin{lemma}\label{lem:freedman-xi}
   Recall the simplified notation of $\rV[\cdot|z_t^h] = \rV_{r^h, x^{h+1}}[\cdot|z_t^h]$. For given $\delta\in(0,1)$, we have:
    \begin{itemize}
        \item For $\{\xi_{t}^h\}_{t,h}$, we let 
        $\calE_{\xi}$ be the event such that \begin{align}
        \left|\sum_{t\in[T],h\in[H]}\xi_t^h\right|& \le \sqrt{\sum_{t\in[T],h\in[H]}\rV\left[r^h+V_t^{h+1}|z_t^{[h]}, f_{t,1}^{[H]}, f_{t,2}^{[H]}\right]\log\frac{2}{\delta}}+2\log\frac{2}{\delta}\nonumber\\
        & \le \sqrt{4TH\log\frac{2}{\delta}}+2\log\frac{2}{\delta},\label{eq:freedman-xi}
        \end{align}
        we thus have event $\calE_{\xi}$ happens with probability at least $1-\delta$.
        \item For $\{\xi_{t,1}^h\}_{t,h}$, we let $\calE_{\xi_1}$ be the event such that \begin{align}
        \left|\sum_{t\in[T],h' \ge h}\xi_{t,1}^{h'}\right| & \le \sqrt{\sum_{t\in[T],h'\ge h}\rV[r^{h'}+f_{t,1}^{h'+1}(x^{h'+1})|z_t^{h'}]\log\frac{2H}{\delta}}+2\log\frac{2H}{\delta}\nonumber\\
        & \le \sqrt{4TH\log\frac{2H}{\delta}}+2\log\frac{2H}{\delta}~\text{for all}~h\in[H],\label{eq:freedman-xi-1}
        \end{align}
        we thus have event $\calE_{\xi_1}$ happens with probability at least $1-\delta$.
        \item For $\{\xi_{t,2}^h\}_{t,h}$, we let $\calE_{\xi_2}$ be the event such that \begin{align}
        \left|\sum_{t\in[T],h\in[H]}\xi_{t,2}^{h}\right| & \le \sqrt{\sum_{t\in[T],h'\ge h}\rV[r^{h'}+f_{t,2}^{h'+1}(x^{h'+1})|z_t^{h'}]\log\frac{2}{\delta}}+2\log\frac{2}{\delta}\nonumber\\
        & \le \sqrt{4TH\log\frac{2}{\delta}}+2\log\frac{2}{\delta},\label{eq:freedman-xi-2-pos}
        \end{align}
        we thus have event $\calE_{\xi_2}$ happens with probability at least $1-\delta$.
        \item For $\{\xi_{t,-2}\}_{t,h}$, we let $\calE_{\xi_{-2}}$ be the event such that \begin{align}
        \left|\sum_{t\in[T],h'\ge h}\xi_{t,-2}^{h'}\right| & \le \sqrt{\sum_{t\in[T],h'\ge h}\rV[r^{h'}+f_{t,-2}^{h'+1}(x^{h'+1})|z_t^{h'}]\log\frac{2H}{\delta}}+2\log\frac{2H}{\delta}\nonumber\\
       & \le \sqrt{4TH\log\frac{2H}{\delta}}+2\log\frac{2H}{\delta} ~\text{for all}~h\in[H],\label{eq:freedman-xi-2}
        \end{align}
        we thus have event $\calE_{\xi_{-2}}$ happens with probability at least $1-\delta$.
    \end{itemize}
\end{lemma}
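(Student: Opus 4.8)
The plan is to treat each of the four families as a single scalar martingale difference sequence and invoke Freedman's inequality (Lemma~\ref{lem:Freedman}), together with a union bound that handles both the two-sided deviation and, for the $\xi_{t,1}$ and $\xi_{t,-2}$ claims, the simultaneity over all starting levels $h\in[H]$. The starting point is Fact~\ref{fact-xi}, which already records that $\{\xi_t^h\}$ and each $\{\xi_{t,j}^h\}$ are martingale difference sequences adapted to the filtration $\calH_t^{h-1}$, with $\E[\xi_t^h\mid\calH_t^{h-1}]=0$ and $\E[\xi_{t,j}^h\mid\calH_t^{h-1}]=0$. I would fix the lexicographic ordering in $(t,h')$ so that the relevant partial sums are genuine martingales under the refined filtration, and then read off the two ingredients Freedman requires: an almost-sure magnitude bound and a bound on the sum of conditional second moments.

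For the second moment, the key identity is that conditionally the second moment equals the variance, i.e.\ $\E[(\xi_t^h)^2\mid\calH_t^{h-1}]=\rV[r^h+V_t^{h+1}\mid z_t^{[h]},f_{t,1}^{[H]},f_{t,2}^{[H]}]$, and analogously $\E[(\xi_{t,j}^h)^2\mid\calH_t^{h-1}]=\rV[r^h+f_{t,j}^{h+1}(x^{h+1})\mid z_t^h]$; this is exactly the variance sum appearing on the right-hand side of each first inequality. The almost-sure magnitude is $O(1)$: since $r^h\in[0,1]$ and all the value estimates (and their conditional expectations) are bounded by a constant after truncation, each increment is bounded, which supplies the additive $\log$-term with constant $M$. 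For the $\xi_t^h$ statement (and identically for $\xi_{t,2}^h$) I would apply Freedman to the sequence and to its negation, each with failure probability $\delta/2$, and union bound; taking the variance proxy $V^2=\sum_{t,h}\rV[\cdots]$ gives the first displayed inequality, with the $\log(2/\delta)$ factor arising from the two-fold union. The crude second inequality then follows by bounding each conditional variance uniformly by its worst case (at most $4$, since every relevant random variable lies in a fixed bounded range), so that $\sum_{t,h}\rV[\cdots]\le 4TH$ and hence $\sqrt{(\sum\rV)\log(2/\delta)}\le\sqrt{4TH\log(2/\delta)}$.

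For $\calE_{\xi_1}$ and $\calE_{\xi_{-2}}$ the only new feature is that the bound must hold simultaneously for every starting level $h$; here I would not apply Freedman once but rather separately for each of the $H$ fixed values of $h$, summing $\xi_{t,1}^{h'}$ (resp.\ $\xi_{t,-2}^{h'}$) over $t\in[T]$ and $h'\ge h$, which remains a valid martingale because $h$ is deterministic. Assigning failure probability $\delta/(2H)$ to each of the $2H$ one-sided events and union bounding yields total failure probability $\delta$ and replaces $\log(2/\delta)$ by $\log(2H/\delta)$, while the variance sum for each fixed $h$ is again at most $4TH$. The steps are routine once the martingale structure is pinned down; the only point demanding care is precisely this simultaneity over $h$, where one must resist collapsing the $H$ level-dependent partial sums into a single application and instead union bound over them so the estimate holds uniformly in $h$, with the remaining boundedness constants read off directly from the $[0,1]$ reward normalization and the truncation of the value estimates.
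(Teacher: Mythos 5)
Your proposal is correct and follows exactly the paper's route: the paper's own proof is a one-line appeal to Freedman's inequality (Lemma~\ref{lem:Freedman}), and you fill in precisely the routine details it leaves implicit — the martingale-difference structure from Lemma~\ref{fact-xi}, the $O(1)$ increment bound from the $[0,1]$ normalization, the two-sided union giving $\log(2/\delta)$, and the union over the $H$ starting levels giving $\log(2H/\delta)$ for $\calE_{\xi_1}$ and $\calE_{\xi_{-2}}$. The only caveat, which you share with the paper itself, is that plugging the random variance sum $\sum_{t,h}\rV[\cdots]$ in as the proxy $V^2$ is not literally licensed by Lemma~\ref{lem:Freedman} (which requires an almost-sure deterministic bound) and strictly needs the self-normalized variant (Corollary~\ref{coro:Freedman-variant}) or the crude deterministic bound $4TH$ used in the second displayed inequality, so this does not count against you.
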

The proof of this lemma is an immdiate application of~\Cref{lem:Freedman}.

\subsection{Size of $|\calToo|$}\label{app:size}

We consider the following lemma due to martingale concentration, which will be useful to give a with high probability argument for bounding the size of $\calToo$.

\begin{lemma}[Concentration with indicators]\label{lem:concentration-indicator}
Let $\D_{t}^{h} = \left(\xi_{t}^h-\xi_{t,2}^h\right)\1_{\{h\ge h_t\}}$ for any $t\in[T]$, $h\in[H]$. We have $\D_{t}^{h}$ is a martingale difference sequence and with probability $1-\delta$, 
\begin{align*}
\sum_{t\in[T],h\in[H]}\left(\xi_{t}^h-\xi_{t,2}^h\right)\1_{\{h\ge h_t\}}\le O\left( \sqrt{|\calToo|H\log\frac{TH}{\delta}}+ \log\frac{TH}{\delta}\right),\\
\text{and also}~\sum_{t\in[T],h\in[H]}\left(\xi_{t,1}^h-\xi_{t,2}^h\right)\1_{\{h\ge h_t\}}\le O\left( \sqrt{|\calToo|H\log\frac{TH}{\delta}}+ \log\frac{TH}{\delta}\right).
\end{align*}
We call this event $\calE_{\xid}$.
\end{lemma}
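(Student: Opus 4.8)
The plan is to exhibit $\{\D_t^h\}$, ordered lexicographically in $(t,h)$, as a martingale difference sequence with respect to the filtration $\{\calH_t^h\}$ and then invoke the data-dependent Freedman inequality of Corollary~\ref{coro:Freedman-variant}. The crux of the MDS claim is that the indicator $\1_{\{h\ge h_t\}}$ is \emph{predictable}: by the exploration rule~\eqref{eq:greedy-policy}, the switching level $h_t$ is the first level at which the condition $f_{t,1}^{h'}(x_t^{h'})\ge f_{t,2}^{h'}(x_t^{h'})-u_t$ fails, so $\{h_t\le h\}$ is determined by $x_t^1,\dots,x_t^h$ together with $f_{t,1}^{[H]},f_{t,2}^{[H]}$ (themselves measurable with respect to $\calD_{[t-1]}^{[H]}$), all of which lie in $\calH_t^{h-1}$. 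First I would combine this with Lemma~\ref{fact-xi} (which gives $\E[\xi_t^h\mid\calH_t^{h-1}]=\E[\xi_{t,2}^h\mid\calH_t^{h-1}]=0$) to conclude $\E[\D_t^h\mid\calH_t^{h-1}]=\1_{\{h\ge h_t\}}\,\E[\xi_t^h-\xi_{t,2}^h\mid\calH_t^{h-1}]=0$, while $\D_t^h$ itself is $\calH_t^h$-measurable, so the ordered sequence is a genuine MDS.

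Next I would supply the two inputs required by Corollary~\ref{coro:Freedman-variant}: an almost-sure magnitude bound and a data-dependent conditional second-moment sum. Since $r_t^h,V_t^{h+1}\in[0,1]$ and $f_{t,2}^{h+1}\in[0,2]$, both $\xi_t^h$ and $\xi_{t,2}^h$ are bounded by $O(1)$, giving $|\D_t^h|\le M=O(1)$. For the conditional variance, $\E[(\D_t^h)^2\mid\calH_t^{h-1}]=\1_{\{h\ge h_t\}}\,\E[(\xi_t^h-\xi_{t,2}^h)^2\mid\calH_t^{h-1}]\le O(1)\,\1_{\{h\ge h_t\}}$, and summing over $h$ gives $\sum_{h}\1_{\{h\ge h_t\}}=(H-h_t+1)\1_{\{t\in\calToo\}}\le H\,\1_{\{t\in\calToo\}}$, so that $W:=\sum_{t,h}\E[(\D_t^h)^2\mid\calH_t^{h-1}]\le O(|\calToo|H)$. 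The crude deterministic ceiling $|\calToo|\le T$ supplies the almost-sure bound $V^2=O(TH)$ needed to instantiate the $\log(V/v)$ factor. Applying Corollary~\ref{coro:Freedman-variant} with $V=O(\sqrt{TH})$, $v=m=1$, $M=O(1)$ makes $\iota=O(\sqrt{\log(TH/\delta)})$, and its conclusion reads $\sum_{t,h}\D_t^h\le \iota\sqrt{2(2W+1)}+O(\iota^2)$.

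The step I expect to be the main obstacle is that the relevant variance proxy $W\le O(|\calToo|H)$ is itself a trajectory-dependent random quantity, so a vanilla Freedman bound (Lemma~\ref{lem:Freedman}) would only yield the weaker $\sqrt{TH\log(1/\delta)}$. The resolution is precisely that Corollary~\ref{coro:Freedman-variant} is the empirical/self-normalizing version, controlling the sum in terms of the realized $\sum_i\E[x_i^2\mid\calH_{i-1}]$ with the stratification over the variance scale already absorbed into the $\log(V/v)$ factor; plugging $W\le O(|\calToo|H)$ into its conclusion then gives $\sum_{t,h}\D_t^h\le O(\sqrt{|\calToo|H\log(TH/\delta)}+\log(TH/\delta))$, as claimed. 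The second inequality follows identically after replacing $\xi_t^h$ by $\xi_{t,1}^h$: Lemma~\ref{fact-xi} makes $\xi_{t,1}^h$ a zero-mean MDS adapted to $\calH_t^{h-1}$, and $f_{t,1}^{h+1}\in[0,1]$ keeps $|\xi_{t,1}^h-\xi_{t,2}^h|=O(1)$, so the same magnitude and conditional-variance estimates apply verbatim. A union bound over the two applications (each at failure level $\delta/2$) delivers both inequalities simultaneously on the event $\calE_{\xid}$ with probability $1-\delta$, and the only genuinely delicate points are the predictability of $\1_{\{h\ge h_t\}}$ and the lexicographic ordering that makes $\D_t^h$ adapted while its conditional mean vanishes; the remainder is a direct substitution into the stated corollary.
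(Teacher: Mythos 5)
Your proposal is correct and follows essentially the same route as the paper's proof: establish that $\D_t^h$ is an MDS via the predictability of $\1_{\{h\ge h_t\}}$ with respect to $\calH_t^{h-1}$ (exactly the observation made in the proof of Lemma~\ref{fact-xi}), bound the conditional variance sum by $O(|\calToo|H)$ using constant per-term variances, and conclude via a Freedman-type inequality for random variance proxies. The only cosmetic difference is that you invoke Corollary~\ref{coro:Freedman-variant} while the paper cites Lemma~\ref{lem:Freedman-variant} directly; since the corollary is derived from that lemma, the two are interchangeable here.
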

\begin{proof}
We first prove the first inequality. Recall the definition that $\xi_{t}^h = r^{h}_t+V_t^{h+1}-\E\left[r^h+V_t^{h+1}|z_t^{[h]}, f_{t,1}^{[H]},f_{t,2}^{[H]}\right]$ and $\xi_{t,2}^h = r^{h}_t+f_{t,2}^{h+1}(x^{h+1}_{t})-\E_{r^h, x^{h+1}}\left[r^h+f_{t,2}^{h+1}(x^{h+1})|z_t^h\right]$. We also recall the filtration defined earlier as \[\calH_t^{h}=\sigma(x_1^{1},r_1^1, x_1^2,\cdots, r^{H}_1,x^{H+1}_1; x_2^{1},r_2^1, x_2^2,\cdots, r^{H}_2,x^{H+1}_2;\cdots,x_t^1,r_t^1,\cdots, r_t^{h},x_t^{h+1})~~\text{for any}~t\in[T], h=0,1,\dots, H.\]

Thus following~\Cref{fact-xi} we have
\[
\E[\D_{t}^{h}|\calH_{t}^{h-1}] = \1_{\{ h\ge h_t\}}\cdot\E[\xi_t^h-\xi_{t,2}^h|\calH_{t}^{h-1}] = 0,
\]
which by definition shows that $\D_{t}^{h}$ as defined is a martingale difference sequence.

Further, applying~\Cref{lem:Freedman-variant} to $\{\D_{t}^{h}\}_{t\in[T]}^{h\in[H]}$, we have with probability at least $1-\delta/2$, it holds that
\[
\sum_{t\in[T], h\in[H]} \left(\xi_{t}^h-\xi_{t,2}^h\right)\1_{\{h\ge h_t\}}\le O\left( \sqrt{|\calToo|H\log\frac{TH}{\delta}}+ \log\frac{TH}{\delta}\right),
\]
where we use the variance of $\xi_t^h$ and $\xi_{t,2}^h$ are constants for each $t,h$.

Similarly we can show with probability at least $1-\delta/2$ the second inequality holds true too.
\end{proof}

\begin{lemma}[Bounding size of $\calToo$]\label{lem:size-of-Too-hp}
Suppose $\alpha\le 1$, we set 
\[u_t \ge C\cdot\Par{\frac{\sqrt{\log\frac{\calN TH}{\alpha\delta}+\frac{T}{\alpha^2}\epsilon}\cdot\Par{\log\frac{\calN\calN_bTH}{\alpha\delta}\cdot H^{5/2}\sqrt{d_\alpha}+\sqrt{t}H\epsc}}{\sqrt{t}}+H^2\epsilon},
\]
for some large enough constant $C<\infty$ and $\epsilon\le 1$, then we have the following facts about $\calToo$ holds true: 
\begin{align*}
|\calToo| \le O\left(\frac{T}{\log\frac{\calN\calN_bTH}{\alpha\delta}\cdot H^3}\right)~\text{when}~\calE_{\le {T}}\cap \calE_{\xid}~\text{happens}.
\end{align*}
\end{lemma}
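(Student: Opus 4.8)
The plan is to follow the in-expectation argument of \Cref{lem:size-of-Too} almost verbatim, changing only how the martingale terms are controlled: the sums $\sum_{t \in \calToo} \sum_{h_t \le h \le H}(\xi_t^h - \xi_{t,2}^h)$ that were previously handled in expectation via \Cref{fact-xi} are now controlled pathwise on the event $\calE_{\xid}$ supplied by \Cref{lem:concentration-indicator}. Throughout I would condition on $\calE_{\le T} \cap \calE_{\xid}$ and argue by contradiction, deriving two incompatible bounds on $\sum_{t \in \calToo}(f_{t,2}^{h_t}(x_t^{h_t}) - f_{t,1}^{h_t}(x_t^{h_t}))$ when $|\calToo|$ is too large.

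First I would establish the lower bound. By definition of $h_t$, every $t \in \calToo$ satisfies $f_{t,2}^{h_t}(x_t^{h_t}) \ge f_{t,1}^{h_t}(x_t^{h_t}) + u_t$, so $\sum_{t \in \calToo}(f_{t,2}^{h_t}(x_t^{h_t}) - f_{t,1}^{h_t}(x_t^{h_t})) \ge \sum_{t \in \calToo} u_t$. Since the leading term of $u_t$ is of order $1/\sqrt{t}$ and non-increasing, each summand is at least its value at the largest index, which is at most $T$; this produces a lower bound linear in $|\calToo|$, with leading coefficient proportional to $C\sqrt{\log(\calN TH/\alpha\delta) + T\eps/\alpha^2}\cdot \log(\calN\calN_b TH/\alpha\delta)\, H^{5/2}\sqrt{d_\alpha}/\sqrt{T}$, together with the $|\calToo| H\epsc$ and $|\calToo|H^2\eps$ contributions coming from the remaining terms of $u_t$.

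Next I would derive the upper bound. On $\calE_{\le T}$ the monotonicity \Cref{lem:mono} gives $f_{t,1}^h \ge f_\star^h \ge V_t^h$, so $f_{t,2}^{h_t} - f_{t,1}^{h_t} \le f_{t,2}^{h_t} - V_t^{h_t}$, which \Cref{lem:RL-f-tilde-oo} (at level $h = h_t$) bounds by $2\sum_{h_t \le h \le H} b_{t,1}^h(z_t^h) + 2\sum_{h_t \le h \le H} b_{t,2}^h(z_t^h) + \sum_{h_t \le h \le H}(\xi_t^h - \xi_{t,2}^h) + O(H\eps)$. Summing over $t \in \calToo$, the two bonus sums are handled by \Cref{lem:bound-E-III-o} and \Cref{lem:bound-E-III} with $\calT = \calToo$, giving terms of order $\sqrt{\log(\cdots)+T\eps}\,(\sqrt{\log(\cdots)}\,H\sqrt{|\calToo| d_\alpha} + \log(\cdots) H d_\alpha + |\calToo| H\epsc)$; the martingale sum equals $\sum_{t,h}(\xi_t^h - \xi_{t,2}^h)\1_{\{h \ge h_t\}}$ and is bounded on $\calE_{\xid}$ by $O(\sqrt{|\calToo| H \log(TH/\delta)} + \log(TH/\delta))$ via \Cref{lem:concentration-indicator}. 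The point is that the upper bound scales only like $\sqrt{|\calToo|}$, plus lower-order $|\calToo|\epsc$ and logarithmic pieces.

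Finally I would combine the two: the lower bound grows linearly in $|\calToo|$ (carrying the $1/\sqrt{T}$ factor) while the upper bound grows like $\sqrt{|\calToo|}$, so requiring lower $\le$ upper forces $|\calToo| = O(T/(\log(\calN\calN_b TH/\alpha\delta)\,H^3))$, the constant being absorbed by taking $C$ large enough. The main obstacle is the coefficient bookkeeping: one must verify that, for the stated $u_t$, the linear-in-$|\calToo|$ term on the left dominates the $\sqrt{|\calToo|}$ term on the right by precisely the right factor, so that after dividing by $\sqrt{|\calToo|}$ and squaring one lands exactly on the advertised $T/(\log(\cdots) H^3)$ scaling, while checking that the $\epsc$- and $\eps$-dependent pieces on the two sides cancel or are subsumed rather than obstructing the inequality. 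All the high-probability content sits in swapping the expectation of the $\xi$-terms for the pathwise bound on $\calE_{\xid}$; everything else mirrors \Cref{lem:size-of-Too}.
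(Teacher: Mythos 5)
Your proposal is correct and follows essentially the same route as the paper's proof: lower-bound $\sum_{t\in\calToo}(f_{t,2}^{h_t}(x_t^{h_t})-f_{t,1}^{h_t}(x_t^{h_t}))$ by $\sum_{t\in\calToo}u_t\ge |\calToo|\,u_T$, upper-bound it via \Cref{lem:mono}, \Cref{lem:RL-f-tilde-oo}, the crude bonus bounds of \Cref{lem:bound-E-III,lem:bound-E-III-o} with $\calT=\calToo$, and the pathwise martingale control on $\calE_{\xid}$ from \Cref{lem:concentration-indicator}, then compare the linear-in-$|\calToo|$ and $\sqrt{|\calToo|}$ scalings. The one substantive change from the in-expectation \Cref{lem:size-of-Too} — replacing the \Cref{fact-xi} expectation argument with the high-probability bound on $\calE_{\xid}$ — is exactly what the paper does.
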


\begin{proof}
Similar to the proof of~\Cref{lem:size-of-Too}, when the events $\calE_{\le T}$ and also $\calE_{\xid}$ by~\Cref{lem:concentration-indicator} happen, we have 
\begin{align*}
& \sum_{t\in\calToo}\Par{f_{t,2}^{h_t}(x_t^{h_t})-f_{t,1}^{h_t}(x_t^{h_t})}
\\
& \hspace{3em} \ge \frac{C}{4}\Par{\sqrt{\log\frac{\calN TH}{\alpha\delta}+\frac{T}{\alpha^2}\epsilon}\cdot\Par{\log\frac{\calN\calN_bTH}{\alpha\delta}\cdot H^{5/2} \sqrt{d_\alpha}\frac{|\calToo|}{\sqrt{T}}+|\calToo|H\epsc}+|\calToo|H^2\eps+|\calToo|H\delta}
\end{align*}
and also 
\begin{align*}
    & \sum_{t\in\calToo}\left(f_{t,2}^{h_t}(x_t^{h_t})-f_{t,1}^{h_t}(x_t^{h_t})\right) \le \sum_{t\in\calToo}\left(f_{t,2}^{h_t}(x_t^{h_t})-V_t^{h_t}\right)\\
    & \hspace{3em}\le O\left(\sqrt{\log\frac{\calN TH}{\alpha\delta}+\frac{T}{\alpha^2}\epsilon}\cdot\Par{\sqrt{\log\frac{\calN\calN_bTH}{\alpha\delta}}H\sqrt{|\calToo|\cdot d_\alpha}+\log\frac{\calN\calN_bTH}{\alpha\delta}H\cdot d_\alpha+|\calToo|H\epsc}\right)\\
    & \hspace{5em}~+\sum_{t\in\calToo}\sum_{h_t\le h\le H}\left(\xi_t^{h}-\xi_{t,2}^h\right)+O(|\calToo|H^2\epsilon)\\
    & \hspace{3em}\le O\left(\sqrt{\log\frac{\calN TH}{\alpha\delta}+\frac{T}{\alpha^2}\epsilon}\cdot\Par{\sqrt{\log\frac{\calN\calN_bTH}{\alpha\delta}}H\sqrt{|\calToo|\cdot d_\alpha}+\log\frac{\calN\calN_bTH}{\alpha\delta}H\cdot d_\alpha+|\calToo|H\epsc}\right)\\
    & \hspace{5em}~+O\left(\sqrt{|\calToo| H\log\frac{TH}{\delta}}+\log\frac{TH}{\delta}+|\calToo|H^2\epsilon\right).
\end{align*}
Here for the last inequality we use the bound in~\Cref{lem:concentration-indicator} conditioning on $\calE_{\xid}$. Thus for sufficiently large constant $C<\infty$, the above two inequalities hold true only when $|\calToo|\le O\left(T/(H^3\cdot\log\frac{\calN\calN_bTH}{\alpha\delta})\right)$.
\end{proof}

Building on this bound of $|\calToo|$, we show the next corollary on a tighter bound for the summation terms in $III$.
\begin{corollary}[Fine-grained bound on $III$]\label{lem:bound-fine-grained-b2-hp}
Given $b_{t,2}(\cdot)\le C\cdot \biggl(D_{\calF^h}(\cdot, z_{[t-1]}^h, \1_{[t-1]}^h)\sqrt{\left(\beta_{t,2}^h\right)^2+\lambda}+\epsc \cdot \beta_{t,2}^h\biggr)$ and using the particular choice of $u_t$ as in~\Cref{lem:size-of-Too-hp}, when $\lambda = \Theta(1)$, $\alpha\le 1$, we have when $\calE_{\le T}$ and $\calE_{\xid}$ happen, 
\begin{align*}
    III & \defeq \sum_{t\in\calToo}\sum_{h\in[H]}\min\left(1+L, b_{t,2}^h(z_t^h)\right)\\
    & \hspace{3em} = O\left(\sqrt{\log\frac{\calN TH}{\delta}+T\epsilon}\cdot \sqrt{T\cdot d_\alpha}+\sqrt{\log\frac{\calN \calN_bTH}{\delta}+T\epsilon}\cdot\left(H\cdot d_\alpha+T\epsc\right)\right).
\end{align*}
\end{corollary}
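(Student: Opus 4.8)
The plan is to reduce the claim to two ingredients already established: the crude per-subset bound on the $b_{t,2}$ terms in \Cref{lem:bound-E-III} and the high-probability control of $|\calToo|$ in \Cref{lem:size-of-Too-hp}. This mirrors exactly the in-expectation argument of \Cref{lem:bound-fine-grained-b2}, the only change being that the deterministic (conditional) size bound replaces the in-expectation one. Throughout I condition on the event $\calE_{\le T}\cap\calE_{\xid}$, which is precisely what \Cref{lem:size-of-Too-hp} requires.

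First I would invoke \Cref{lem:bound-E-III} with the subset $\calT = \calToo$. Since the oracle upper bound $b_{t,2}^h(\cdot) \le C\bigl(D_{\calF^h}(\cdot; z_{[t-1]}^h, \1_{[t-1]}^h)\sqrt{(\beta_{t,2}^h)^2 + \lambda} + \epsc\,\beta_{t,2}^h\bigr)$ holds by \Cref{def:bonus-conditions}, and $\lambda = \Theta(1)$, $\alpha \le 1$, this gives
\begin{equation*}
III = O\left(\sqrt{\log\tfrac{\calN\calN_bTH}{\delta}+T\epsilon}\cdot\left(H d_\alpha + H\sqrt{|\calToo|\, d_\alpha} + |\calToo| H\epsc\right)\right).
\end{equation*}
The first bracket term already matches the target, and the third is bounded by $\sqrt{\log\tfrac{\calN\calN_bTH}{\delta}+T\epsilon}\cdot T\epsc$ after using $|\calToo|/H^2 \le T$.

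Next I would substitute the high-probability bound $|\calToo| = O\bigl(T/(H^3 \log\tfrac{\calN\calN_bTH}{\alpha\delta})\bigr)$ from \Cref{lem:size-of-Too-hp}, valid on $\calE_{\le T}\cap\calE_{\xid}$ under the stated choice of $u_t$. For the middle term this yields
\begin{equation*}
H\sqrt{|\calToo|\, d_\alpha} = O\left(\sqrt{\frac{T d_\alpha}{H \log\tfrac{\calN\calN_bTH}{\alpha\delta}}}\right),
\end{equation*}
so after multiplying back by $\sqrt{\log\tfrac{\calN\calN_bTH}{\delta}+T\epsilon}$ and using $\alpha = \sqrt{1/TH}$ (so that $\log\tfrac{\calN\calN_bTH}{\alpha\delta} = \Theta(\log\tfrac{\calN\calN_bTH}{\delta})$), the $\sqrt{\log\calN\calN_b}$ factor in the numerator cancels against the one hidden in $|\calToo|$. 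The $\log$-part of this term is then $O(\sqrt{T d_\alpha / H})$, and the $T\epsilon$-part is $O(\sqrt{T^2\epsilon d_\alpha/H})$; both are dominated by $\sqrt{\log\tfrac{\calN TH}{\delta}+T\epsilon}\cdot\sqrt{T d_\alpha}$, recovering the leading term with the sharper $\calN$-only logarithmic coefficient.

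The main (and essentially the only) subtle point is this cancellation: I must track carefully that the $\sqrt{\log\calN\calN_b}$ arising from $\beta_{t,2}^h$ is absorbed by the $\log\tfrac{\calN\calN_bTH}{\alpha\delta}$ in the denominator of the $|\calToo|$ bound, and that $\log\tfrac{\calN\calN_bTH}{\alpha\delta}$ and $\log\tfrac{\calN\calN_bTH}{\delta}$ agree up to constants under $\alpha = \sqrt{1/TH}$. This is exactly what pushes the bonus-class complexity $\calN_b$ out of the leading $\sqrt{T}$-order term. The remaining bookkeeping—collecting the three bracket terms and absorbing lower-order contributions—is routine, so the statement follows as an immediate corollary of \Cref{lem:bound-E-III} and \Cref{lem:size-of-Too-hp}.
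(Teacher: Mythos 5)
Your proposal is correct and takes exactly the route of the paper, whose entire proof is the one-liner ``immediate corollary by combining \Cref{lem:bound-E-III} (with $\calT=\calToo$) and \Cref{lem:size-of-Too-hp}''; your expanded bookkeeping, including the observation that the $\sqrt{\log\calN\calN_b}$ from $\beta_{t,2}^h$ is absorbed by the $\log\frac{\calN\calN_bTH}{\alpha\delta}$ in the denominator of the $|\calToo|$ bound so that only $\log\calN$ survives in the leading $\sqrt{Td_\alpha}$ term, is sound. One cosmetic slip: the inequality you cite for the $\epsc$ term should read $|\calToo|H\le T$ (which follows from $|\calToo|=O\bigl(T/(H^3\log\frac{\calN\calN_bTH}{\alpha\delta})\bigr)$ on $\calE_{\le T}\cap\calE_{\xid}$), not ``$|\calToo|/H^2\le T$'', since the trivial bound $|\calToo|\le T$ would only give $TH\epsc$ rather than the claimed $T\epsc$.
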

\begin{proof}
This is an immediate corollary by combining~\Cref{lem:bound-E-III} and~\Cref{lem:size-of-Too-hp}.
\end{proof}

\subsection{Sum of Variances}\label{app:variance-hp}

Further, we will provide a lemma showing that the concentration of the summation variance terms in the exploration happens with high probability, which we denote as event $\calE_\rV$. The result builds on law of total variance due to~\Cref{prop:LTV-2} and the in-expectation bounds due to~\Cref{coro:sum-variance}. Recall the definition of $\calH_{t-1}^H = \sigma(x_1^{1},r_1^1, x_1^2,\cdots, r^{H}_1,x^{H+1}_1; x_2^{1},r_2^1, x_2^2,\cdots, r^{H}_2,x^{H+1}_2;\cdots,x_{t-1}^1,r_{t-1}^1,\cdots, r_{t-1}^{H},x_{t-1}^{H+1})$ and exploration rule~\eqref{eq:greedy-policy}. 

\begin{corollary}[Corollary from adapted version using LTV, high probability]\label{coro:sum-variance-hp}
Recall the simplified expression of $\rV[\cdot|z_t^h] = \rV_{r^h, x^{h+1}}[\cdot|z_t^h]$. When $L=O(1)$ we have with probability at least $1-\delta$, 
\begin{align*}
& \sum_{t\in[T]}\sum_{h\in[H]} \rV \left[r^h+f_{t,1}^{h+1}(x^{h+1})| z_t^h\right]\\
& \hspace{2em} \le O\left(H^4\log^2\frac{TH}{\delta}+T+TH^2\delta+H^2|\calToo|+H\cdot \sum_{t\in[T]}\sum_{h\in[H]}\Par{f_{t,2}^h(z_t^h)-f_{t,-2}^h(z_t^h)}\right).
\end{align*}
We denote such event as $\calE_{\rV}$ hereinafter.
\end{corollary}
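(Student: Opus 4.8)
The plan is to upgrade the in-expectation estimate of Corollary~\ref{coro:sum-variance} to a high-probability statement by inserting martingale concentration (Freedman's inequality, Lemma~\ref{lem:Freedman}, or its self-normalized form Corollary~\ref{coro:Freedman-variant}) between the per-episode law-of-total-variance bound and the cumulative sum. Write $W_t \defeq \sum_{h\in[H]} \rV[r^h + f_{t,1}^{h+1}(x^{h+1})\mid z_t^h]$, and note that since $r^h + f_{t,1}^{h+1}\in[0,2]$ each conditional variance is at most $1$, so $W_t\in[0,H]$. Because the confidence sets $\calF_{t,j}^h$ depend only on data from episodes $1,\dots,t-1$, the event $\calE_{\le t}$ is $\calH_{t-1}^H$-measurable, whereas $W_t$, $\1_{\{t\in\calToo\}}$ and $B_t\defeq \sum_{h\in[H]}(f_{t,2}^h(z_t^h)-f_{t,-2}^h(z_t^h))\in[0,2H]$ are $\calH_t^H$-measurable; this is precisely the setup needed to treat the associated centered quantities as martingale difference sequences.

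First I would decompose $\sum_{t\in[T]} W_t = (A) + (B)$ with $(A)\defeq \sum_t \E[W_t\mid\calH_{t-1}^H]$ and $(B)\defeq \sum_t (W_t-\E[W_t\mid\calH_{t-1}^H])$. For $(B)$, the increments are bounded by $H$ and have conditional second moment at most $H\cdot\E[W_t\mid\calH_{t-1}^H]$, so Corollary~\ref{coro:Freedman-variant} yields, with probability $1-\delta/3$, a bound of the form $2\iota\sqrt{H\cdot(A)} + O(\iota^2 H)$ with $\iota = \otil(\sqrt{\log(TH/\delta)})$; applying AM--GM to the first term absorbs it into $\tfrac12(A)$, giving $\sum_t W_t \le \tfrac32 (A) + O(H\log(TH/\delta))$.

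Next I would bound $(A)$ by invoking, on the event $\calE_{\le T}$, the per-episode conditional inequality of the first display of Corollary~\ref{coro:sum-variance} (which itself uses Proposition~\ref{prop:LTV-2} together with the monotonicity Lemma~\ref{lem:mono} to replace $f_{t,1}^h(z_t^h)-\E[r^h+f_{t,1}^{h+1}(x^{h+1})\mid z_t^h]$ by $f_{t,2}^h(z_t^h)-f_{t,-2}^h(z_t^h)$). This gives $(A) \le O\Par{T + TH^2\delta + H^2\sum_t \E[\1_{\{t\in\calToo\}}\mid\calH_{t-1}^H] + H\sum_t \E[B_t\mid\calH_{t-1}^H]}$. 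It remains to convert the two conditional-expectation sums into their realized counterparts $|\calToo|=\sum_t \1_{\{t\in\calToo\}}$ and $\sum_t B_t = \sum_{t,h}(f_{t,2}^h-f_{t,-2}^h)(z_t^h)$. Each is handled by a further Freedman bound on the corresponding martingale difference sequence: the indicator increments are bounded by $1$ and the $B_t$ increments by $2H$, and in both cases the conditional second moment is controlled by the conditional-expectation sum itself, so the inequality is self-referential and closes after solving the resulting quadratic (equivalently, AM--GM), giving $\sum_t \E[\1_{\{t\in\calToo\}}\mid\calH_{t-1}^H] \le 2|\calToo| + O(\log(TH/\delta))$ and $\sum_t \E[B_t\mid\calH_{t-1}^H] \le 2\sum_t B_t + O(H\log(TH/\delta))$.

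Finally I would collect terms: substituting these back gives $\sum_t W_t \le O\Par{T + TH^2\delta + H^2|\calToo| + H\sum_{t,h}(f_{t,2}^h-f_{t,-2}^h)} + O(\mathrm{poly}(H)\log^2(TH/\delta))$, and a union bound over the (at most) three concentration events, each at confidence $1-\delta/3$, defines $\calE_\rV$ and makes it hold with probability at least $1-\delta$; the residual additive term is comfortably dominated by $O(H^4\log^2(TH/\delta))$, yielding the stated bound. The main obstacle is the self-referential character of the concentration steps: because the almost-sure variance proxy in Freedman's inequality is the very sum being estimated, one must either peel over dyadic scales or use the self-normalized Corollary~\ref{coro:Freedman-variant} and then close the loop with AM--GM, all while carefully tracking which inequalities are unconditional and which require being on $\calE_{\le T}$ (and verifying the $\calH_{t-1}^H$-measurability of $\calE_{\le t}$ that legitimizes applying the law-of-total-variance bound inside the conditional expectation).
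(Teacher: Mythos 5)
Your proposal is correct and follows essentially the same route as the paper's proof: center $\sum_h \rV[r^h+f_{t,1}^{h+1}(x^{h+1})\,|\,z_t^h]$ against its $\calH_{t-1}^H$-conditional expectation and apply Freedman (Corollary~\ref{coro:Freedman-variant}) with variance proxy $H\cdot\sum_t\E[W_t\,|\,\calH_{t-1}^H]$ plus AM--GM, then invoke the conditional LTV bound of Corollary~\ref{coro:sum-variance}, and finally convert $\sum_t\E[\1_{\{t\in\calToo\}}\,|\,\calH_{t-1}^H]$ and $\sum_t\E[B_t\,|\,\calH_{t-1}^H]$ back to their realized sums. The only (immaterial) deviation is that last conversion, where you use a self-normalized multiplicative Freedman bound closing via AM--GM, while the paper uses Azuma--Hoeffding and then absorbs the resulting $H^2\sqrt{T}\log(TH/\delta)$ cross term by AM--GM into $T+H^4\log^2(TH/\delta)$ --- which is exactly where the $H^4\log^2$ term in the statement originates; both variants land within the claimed bound.
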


\begin{proof}
For the high probability bounds, we consider applying Freedman's inequality in~\Cref{coro:Freedman-variant} to \begin{align*}
    \D_t & = \left(\sum_{h\in[H]}\rV \left[r^h+f_{t,1}^{h+1}(x^{h+1})| x_t^h,a_t^h\right]- \rE\left[\sum_{h\in[H]}\rV\left[r^h+f_{t,1}^{h+1}(x^{h+1})| z_t^h\right]~|~\calH_{t-1}^H\right]\right),\\
    \text{with}~~M & = O(H),~~V^2 =  O(TH^2),\\
    \text{and}~\sum_{t\in[T]}\E\left[\D_t^2~|~\calH_{t-1}^H\right] & = ~H\cdot\sum_{t\in[T]}\rE\left[\sum_{h\in[H]}\rV \left[r^h+f_{t,1}^{h+1}(x^{h+1})| z_t^h\right]~|~\calH_{t-1}^H\right].
\end{align*}

Thus~\Cref{coro:Freedman-variant} gives that with probability at least $1-\delta$, it holds that 
\begin{align*}
& \sum_{t\in[T],h\in[H]}\rV\left[r^h+f_{t,1}^{h+1}(x^{h+1})|z_t^h\right] \stackrel{(i)}{\le} O\left(H\cdot \log(TH/\delta)+\sum_{t\in[T]}\rE\left[\sum_{h\in[H]}\rV \left[r^h+f_{t,1}^{h+1}(x^{h+1})| z_t^h\right]~|~\calH^H_{t-1}\right]\right)\\
& \hspace{1em} \stackrel{(ii)}{\le} O\left(H\log\frac{TH}{\delta}+T+TH^2\delta+H^2\cdot \sum_{t\in[T]}\E[\1_{\{t\in\calToo\}}~|~\calH_{t-1}^H]+H\cdot \sum_{t\in[T]}\E\biggl[\sum_{h\in[H]}\bigl(f_{t,2}^h(z_t^h)-f_{t,-2}^h(z_t^h)\bigr)~|~\calH_{t-1}^H\biggr]\right)\\
& \hspace{1em}\stackrel{(iii)}{\le} O\left((H+H^2\sqrt{T})\log\frac{TH}{\delta}+T+TH^2\delta+H^2\cdot|\calToo|+H\sum_{t\in[T]}\sum_{h\in[H]}\Par{f_{t,2}^h(z_t^h)-f_{t,-2}^h(z_t^h)}\right),
\end{align*}
where for $(i)$ we also use the AM-GM inequality, for $(ii)$ we use~\Cref{coro:sum-variance}, and for $(iii)$ we use the Azuma-Hoeffding concentration of martingale.  The claim follows by again applying the AM-GM inequality. 
\end{proof}

\subsection{Difference of Overly Optimistic Sequence}\label{app:diff-of-f}

\begin{lemma}\label{lem:difference-f-bar-hp}
When good events $\calE_{\le T}$, $\calE_{\xi_1}$, $\calE_{\xi_{-2}}$ and $\calE_{\xi_2}$ happen, and when $\lambda = \Theta(1)$, $\alpha\le 1$, $\epsilon\le 1$  we have 
\begin{align*}
& \sum_{t\in[T]}\sum_{h\in[H]}\left[f_{t,2}^h(z_t^h)-f_{t,-2}^h(z_t^h)\right]\\
& \hspace{3em} \le O\left(H\cdot [I]+H\cdot [II]+H\sqrt{HT\log(H/\delta)}+H\log(H/\delta)+|\calToo|H^2+TH^2\epsilon\right)+H\cdot\sum_{t\in\calTo} u_t.
\end{align*}
\end{lemma}

\begin{proof}
Similar to~the proof of~\Cref{lem:difference-f-bar}, for $t\in\calToo$ it holds that 
\[
\sum_{t\in\calToo}\sum_{h\in[H]}\left[f_{t,2}^h(z_t^h)-f_{t,-2}^h(z_t^h)\right]= O(|\calToo|H).
\]
Otherwise, for iterations $t\in\calTo$, we also have 
\begin{align*}
& \sum_{t\in\calTo}\sum_{h\in[H]}\left[f_{t,2}^h(z_t^h)-f_{t,-2}^h(z_t^h)\right] \le O\left(\sum_{t\in\calTo}\sum_{ h\in[H]}\min\left(1,\sum_{h\le h'\le H}b_{t,1}^{h'}(z_t^{h'})\right)+\sum_{t\in\calTo}\sum_{h\in[H]}\min\left(1,\sum_{h\le h'\le H}b_{t,2}^{h'}(z_t^{h'})\right)\right)\\
 & \hspace{15em} +\sum_{t\in\calTo}H\cdot u_t+O\left(\sum_{t\in\calTo}\sum_{h\in[H]}\sum_{h\le h'\le H}\xi^{h'}_{t,-2}-\sum_{t\in\calTo}\sum_{h\in[H]}\sum_{h\le h'\le H}\xi^{h'}_{t,1}+TH^2\epsilon\right),
\end{align*}
given event $\calE_{\le T}$ happens.

Summing over all iterations $t\in\calTo$, this implies 
\begin{align*}
& \sum_{t\in\calTo}\sum_{h\in[H]}\left[f_{t,2}^h(z_t^h)-f_{t,-2}^h(z_t^h)\right] \\
& \hspace{3em} \stackrel{(i)}{\le} O\left(H\cdot I+H\cdot II+TH^2\epsilon\right)+\sum_{t\in\calTo}H\cdot u_t\\
 & \hspace{6em} +O\left(\left|\sum_{t\in[T]}\sum_{h\in[H]}\sum_{h\le h'\le H}\xi^{h'}_{t,-2}\right|+\left|\sum_{t\in[T]}\sum_{h\in[H]}\sum_{h\le h'\le H}\xi^{h'}_{t,1}\right|+\left|\sum_{t\in[T]}\sum_{h\in[H]}\xi^{h}_{t,2}\right|\right)\\
 &\hspace{6em} +O\left(\left|\sum_{t\in\calToo}\sum_{h\in[H]}\sum_{h\le h'\le H}\xi^{h'}_{t,-2}\right|+\left|\sum_{t\in\calToo}\sum_{h\in[H]}\sum_{h\le h'\le H}\xi^{h'}_{t,1}\right|+\left|\sum_{t\in\calToo}\sum_{h\in[H]}\xi^{h}_{t,2}\right|\right)\\
 & \hspace{3em} \le O\left(H\cdot I+H\cdot II+TH^2\epsilon+|\calToo|H^2\right)+\sum_{t\in\calTo}H\cdot u_t\\
 & \hspace{6em} +O\left(\left|\sum_{t\in[T]}\sum_{h\in[H]}\sum_{h\le h'\le H}\xi^{h'}_{t,-2}\right|+\left|\sum_{t\in[T]}\sum_{h\in[H]}\sum_{h\le h'\le H}\xi^{h'}_{t,1}\right|+\left|\sum_{t\in[T]}\sum_{h\in[H]}\xi^{h}_{t,2}\right|\right).
\end{align*}

Now since also events $\calE_{\xi_1}$, $\calE_{\xi_2}$ and $\calE_{\xi_{-2}}$ happen, plugging in bounds in~\eqref{eq:freedman-xi-1},~\eqref{eq:freedman-xi-2-pos} and~\eqref{eq:freedman-xi-2} we have 
\begin{align*}
& \sum_{t\in\calTo}\sum_{h\in[H]}\left[f_{t,2}^h(z_t^h)-f_{t,-2}^h(z_t^h)\right] \\
 & \hspace{3em} \le O\left(H\cdot [I]+H\cdot [II]+H\sqrt{HT\log(H/\delta)}+H\log(H/\delta)+|\calToo|H^2+TH^2\epsilon\right)+H\cdot\sum_{t\in\calTo} u_t.
\end{align*}

Summing the two cases gives the claimed bound.
\end{proof}

\subsection{Bounding the Regret in High Probability}\label{app:regret-hp-subsec}

Now we will continue working on bounding $I = \sum_{t\in[T]}\sum_{h\in[H]}\min\left(1+L, b_{t,1}^h(z_t^h)\right)$.

\begin{lemma}[Fine-grained bound on $I$]\label{lem:regret-bound-I-hp}
Recall the definition of $b_{t,1}$ and $b_{t,2}$ as in~\Cref{lem:bound-E-III-o} and~\Cref{lem:bound-fine-grained-b2-hp}. When $\lambda = 1$, $\alpha= 1/\sqrt{TH}$, $\epsilon\le1$, when event $\calE = \calE_{\le T}\cap \calE_{\xi_1}\cap\calE_{\xi_{-2}}\cap\calE_{\xi_2}\cap\calE_{\xid}\cap\calE_{\rV}$ happens, we have the following inequality holds true:
\begin{align*}
    & I \defeq \sum_{t\in[T]}\sum_{h\in[H]}\min\left(1+L, b_{t,1}^h(z_t^h)\right)\\
    & \hspace{1em} =  O\left(\sqrt{\log\frac{\calN TH}{\delta}+T^2H\epsilon}\sqrt{T+TH^2\delta}\cdot\sqrt{H\cdot d_\alpha}\right)\\
    & \hspace{3em} +O\left(\sqrt{\log\frac{\calN TH}{\delta}+T^2H\epsilon}\sqrt{\log\frac{\calN\calN_b TH}{\delta}} \sqrt{H^3|\calToo|+TH^3\epsilon+H^2\sum_{t\in[T]}u_t}\cdot\sqrt{H\cdot d_\alpha}\right)\\
        & \hspace{3em}+O\left(\Par{\log\frac{\calN TH}{\delta}+T^2H\epsilon}\log^{1.5} \frac{\calN\calN_bTH}{\delta}\cdot H^{7/2}d_\alpha+\sqrt{\log\frac{\calN TH}{\delta}+T^2H\epsilon}\cdot TH\epsc\right).
\end{align*}
\end{lemma}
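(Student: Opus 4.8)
The plan is to transcribe the algebraic skeleton of the in-expectation argument in \Cref{lem:regret-bound-I}, but to execute every step deterministically on the good event $\calE = \calE_{\le T}\cap \calE_{\xi_1}\cap\calE_{\xi_{-2}}\cap\calE_{\xi_2}\cap\calE_{\xid}\cap\calE_{\rV}$, replacing each appeal to a martingale-expectation identity (\Cref{fact-xi}) or to an in-expectation law-of-total-variance estimate by its high-probability counterpart established in \Cref{app:concentrate-xi}--\Cref{app:diff-of-f}. Concretely, I would first invoke the third property of the bonus oracle in \Cref{def:bonus-conditions} together with the bound $\beta_{t,1}^h = O(\sqrt{\log(\calN TH/\delta)+T^2H\epsilon})$ from \eqref{eq:def-beta-opti} (using $\alpha=1/\sqrt{TH}$) to reduce $I$ to $O(\sqrt{\log(\calN TH/\delta)+T^2H\epsilon})\cdot(\sum_{t,h}\min(1,\bsigma_t^h\cdot(\bsigma_t^h)^{-1}D_{\calF^h}(z_t^h;z_{[t-1]}^h,\bsigma_{[t-1]}^h)) + TH\epsc)$, exactly as in \eqref{eq:lem-E-I-initial}.

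Next I would split the index set into the five cases $\calI_1,\dots,\calI_5$ according to which term attains the maximum defining $\bsigma_t^h$ in \eqref{eq:def-bsigma}. The contributions of $\calI_1,\calI_2,\calI_3$ are handled by the deterministic estimates \eqref{eq:crude-1}, \eqref{eq:crude-2}, \eqref{eq:crude-3} proven in \Cref{lem:bound-E-III-o}. For $\calI_5$, where $\bsigma_t^h=\sqrt{2}\iota(\delta_{t,h})\sqrt{f_{t,2}^h(z_t^h)-f_{t,-2}^h(z_t^h)}$, Cauchy--Schwarz and the definition of $\iota$ yield a bound of the form $O(\sqrt{\log(\calN\calN_b TH/\delta)\cdot\sum_{t,h}(f_{t,2}^h(z_t^h)-f_{t,-2}^h(z_t^h))}\cdot\sqrt{Hd_\alpha})$, as in \eqref{eq:regret-I-calI-5}. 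For $\calI_4$, where $\bsigma_t^h=\sigma_t^h$, Cauchy--Schwarz together with the variance upper bound \Cref{lem:RL-variance-bounds-upper} (valid on $\calE_{\le T}$) converts $\sum_{t,h}(\sigma_t^h)^2$ into $\sum_{t,h}\rV[r^h+f_{t,1}^{h+1}(x^{h+1})\,|\,z_t^h]$ plus the difference term $\sum_{t,h}(f_{t,2}^h(z_t^h)-f_{t,-2}^h(z_t^h))$ and a lower-order Eluder contribution.

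The substitution step is where the high-probability machinery enters. For $\sum_{t,h}\rV[r^h+f_{t,1}^{h+1}(x^{h+1})\,|\,z_t^h]$ I would use \Cref{coro:sum-variance-hp}, which on $\calE_{\rV}$ bounds it by $O(H^4\log^2(TH/\delta)+T+TH^2\delta+H^2|\calToo|+H\sum_{t,h}(f_{t,2}^h-f_{t,-2}^h))$; for $\sum_{t,h}(f_{t,2}^h-f_{t,-2}^h)$ I would use \Cref{lem:difference-f-bar-hp}, which on $\calE_{\le T}\cap\calE_{\xi_1}\cap\calE_{\xi_{-2}}\cap\calE_{\xi_2}$ bounds it by $O(H\cdot I+H\cdot II+H\sqrt{HT\log(H/\delta)}+\cdots)+H\sum_{t\in\calTo}u_t$; and for $|\calToo|$ I would use \Cref{lem:size-of-Too-hp}, valid on $\calE_{\le T}\cap\calE_{\xid}$. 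The essential difference from the in-expectation proof is that the cross terms $\sum_{t,h}\xi_{t,j}^h$ appearing inside \Cref{lem:difference-f-bar-hp} no longer vanish in expectation but are instead controlled deterministically on $\calE$ by the Freedman estimates of \Cref{lem:freedman-xi} and \Cref{lem:concentration-indicator}; a short check shows these contribute only the lower-order $\sqrt{TH\log(\cdot/\delta)}$-type terms already present in the statement.

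The main obstacle, exactly as in the in-expectation case, is that \Cref{lem:difference-f-bar-hp} reintroduces $I$ (and $II$) on the right-hand side through the difference term, so that the $\calI_4$ contribution takes the schematic form $I \lesssim \sqrt{(\log+T^2H\epsilon)\cdot(\cdots+H^2\cdot I)}\cdot\sqrt{Hd_\alpha}+(\text{lower order})$. I would break this circularity precisely as in \eqref{eq:lem-E-I-helper}--\eqref{eq:lem-E-I-step}: substitute the crude, self-contained bounds of \Cref{lem:bound-E-III-o} (for $I$) and \Cref{lem:bound-E-III} (for $II$) for the \emph{inner} occurrences, then apply AM--GM to the resulting product $\sqrt{(\log+T^2H\epsilon)\cdot\log\cdot H^3\sqrt{Td_\alpha}}\cdot\sqrt{Hd_\alpha}$, which splits into the leading $\sqrt{THd_\alpha}$ term plus a lower-order $(\log+T^2H\epsilon)(\log)^{3/2}H^{7/2}d_\alpha$ term. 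Finally I would plug in the stated choice of $u_t$ to ensure $\sum_{t\in[T]}u_t$ stays within the lower-order term, collect the surviving pieces, and simplify the $\mathrm{poly}(\epsc,\delta)$ contributions by AM--GM to obtain the claimed three-line bound; no genuinely new estimate beyond the bootstrapping-plus-AM--GM step is required.
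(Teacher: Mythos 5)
Your proposal is correct and mirrors the paper's own proof essentially step for step: the same reduction of $I$ via the bonus-oracle property and the bound on $\beta_{t,1}^h$, the same five-way case split $\calI_1,\dots,\calI_5$ on which term attains $\bsigma_t^h$, the same conversion of the $\calI_4$ contribution via \Cref{lem:RL-variance-bounds-upper} followed by the high-probability substitutions \Cref{coro:sum-variance-hp} and \Cref{lem:difference-f-bar-hp} on $\calE$, and the same resolution of the circular $I$-dependence by inserting the crude bounds of \Cref{lem:bound-E-III-o} and \Cref{lem:bound-E-III} into the inner occurrences and applying AM--GM. The only cosmetic difference is bookkeeping: the paper leaves $|\calToo|$ and $\sum_{t}u_t$ explicit in the lemma statement and defers \Cref{lem:size-of-Too-hp} and the choice of $u_t$ to the proof of \Cref{thm:regret-hp}, whereas you fold them in immediately, which is harmless since $\calE$ contains $\calE_{\le T}\cap\calE_{\xid}$.
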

\begin{proof}
Again we note that by assumption and definition, 
\begin{align*}
   & \sum_{t\in[T]}\sum_{h\in[H]} \min\left(1+L,b_{t,1}^h(z_t^h) \right)
    = O\left(\sum_{t\in[T]}\sum_{h\in[H]}\min\left(1,D_{\calF^h}(z_t^h; z_{[t-1]}^h, \bsigma^h_{[t-1]})\cdot\sqrt{\left(\beta_{t,1}^h\right)^2+\lambda} \right)+TH\epsc\cdot\max_{t,h}\beta_{t,1}^h\right) \\
    & \hspace{10em} = O\left(\sqrt{\log\frac{\calN TH}{\delta}+T^2H\epsilon}\cdot \Par{\sum_{t\in[T]}\sum_{h\in[H]}\min\left(1,D_{\calF^h}(z_t^h; z_{[t-1]}^h, \bsigma_{[t-1]}^h)\right)+TH\epsc}\right).
\end{align*}

Treating $L=O(1)$ as defined (see~\Cref{ass:eps-realizability-RL}), we now bound the summation terms \[\sum_{t\in[T]}\sum_{h\in[H]}\min\left(1,D_{\calF^h}(z_t^h; z_{[t-1]}^h, \bsigma^h_{[t-1]})\right) = \sum_{t\in[T]}\sum_{h\in[H]}\min\left(1,\bsigma_t^h \cdot \Par{\bsigma_t^h}^{-1} D_{\calF^h}(z_t^h; z_{[t-1]}^h, \bsigma^h_{[t-1]})\right)\] by dividing into cases, same as we do in~\Cref{lem:bound-E-III-o} and~\Cref{lem:regret-bound-I}.
\begin{align*}
\calI_1 &= \{(t,h)\in\calT\times[H]~|~\Par{\bsigma_t^h}^{-1}D_{\calF^h}(z_t^h; z_{[t-1]}^h, \bsigma_{[t-1]}^h)\ge 1\},\\
\calI_2 &= \{(t,h)\in\calT\times[H]~|~(t,h)\notin\calI_1,~\bsigma_t^h = \alpha\},\\
\calI_3 &= \biggl\{(t,h)\in\calT\times[H]~|~(t,h)\notin\calI_1,~\bsigma_t^h = 2\bigl(\sqrt{\upsilon(\delta_{t,h})}+\iota(\delta_{t,h})\bigr)\cdot\sqrt{ D_{\calF^h}(z^{h}_t; z^{h}_{[t-1]},\bsigma^{h}_{[t-1]})}\biggr\},\\
\calI_4 &= \left\{(t,h)\in\calT\times[H]~|~(t,h)\notin\calI_1,~\bsigma_t^h = \sigma_t^h\right\},\\
\calI_5 &= \left\{(t,h)\in\calT\times[H]~|~(t,h)\notin\calI_1,~\bsigma_t^h = \sqrt{2}\iota(\delta_{t,h})\sqrt{f^h_{t,2}(z_t^h)-f^h_{t,-2}(z_t^h)}\right\}.
\end{align*}
    
Now same as in~\Cref{lem:bound-E-III-o} and~\Cref{lem:regret-bound-I} we could bound the first three terms as
    \begin{align*}
    \sum_{(t,h)\in\calI_1\cup\calI_2\cup\calI_3}\min\left(1,\bsigma_{t}^h\cdot \Par{\bsigma_{t}^h}^{-1}D_{\calF^h}(z_{t}^h;z_{[t-1]}^h,\bsigma_{[t-1]}^h)\right) & \le O\Par{\log\frac{\calN\calN_bTH}{\delta}\cdot Hd_\alpha}.
    \end{align*}
    
For terms restricting on $\calI_4$ and $\calI_5$ we do a similar tighter analysis in correspondence to~\Cref{lem:regret-bound-I}. For summation terms in $\calI_5$, recall~\Cref{eq:regret-I-calI-5} already shows that
    \begin{align*}
        & \sum_{(t,h)\in\calI_5}\min\left(1,\bsigma_{t}^h\cdot\Par{\bsigma_{t}^h}^{-1}D_{\calF^h}(z_{t}^h;z_{[t-1]}^h,\bsigma_{[t-1]}^h)\right)  \\
        & \hspace{5em} \le O\left(\sqrt{\log\frac{\calN\calN_b TH}{\delta}}\sqrt{\sum_{t,h}\Par{f_{t,2}^h(z_t^h)-f_{t,-2}^h(z_t^h)}}\cdot\sqrt{H\cdot d_\alpha}\right).
    \end{align*}
    
Restricting on $\calI_4$, when  event $\calE = \calE_{\le T}\cap \calE_{\xi_1}\cap\calE_{\xi_{-2}}\cap\calE_{\xi_2}\cap\calE_{\xid}\cap\calE_{\rV}$  happens, by using Cauchy-Schwarz inequality,~\Cref{lem:RL-variance-bounds-upper},~\Cref{coro:sum-variance-hp}, and AM-GM inequality, similar to the in-expectation proof we have 
    \begin{align*}
          &\sum_{(t,h)\in\calI_4}\min\left(1,\bsigma_{t}^h\cdot \Par{\bsigma_t^h}^{-1}D_{\calF^h}(z_{t}^h;z_{[t-1]}^h,\bsigma_{[t-1]}^h)\right) \\
        & \hspace{1em}  \le O\left(\sqrt{T+H^2|\calToo|+TH^2(\eps+\delta)}\cdot\sqrt{H\cdot d_\alpha}\right)\\
        & \hspace{3em} +O\left( \sqrt{H\cdot \sum_{t,h}\left(f_{t,2}^h(z_{t}^h)-f_{t,-2}^h(z_{t}^h)\right)+H^2d_\alpha\cdot\Par{\log^2\frac{\calN\calN_b TH}{\delta}+T\epsilon}}\cdot\sqrt{H\cdot d_\alpha}\right).
    \end{align*}

    Summing all cases together we have
    \begin{align*}
    & \sum_{t\in[T]}\sum_{h\in[H]}\min\left(1,\bsigma_{t}^h\cdot \Par{\bsigma_t^h}^{-1}D_{\calF^h}(z_{t}^h;z_{[t-1]}^h,\bsigma_{[t-1]}^h)\right)\\
   & \hspace{1em}  \le O\left(\sqrt{T+H^2|\calToo|+TH^2(\eps+\delta)}\cdot\sqrt{H\cdot d_\alpha}\right)\\
   & \hspace{3em} + O\left(\sqrt{\log\frac{\calN\calN_b TH}{\delta}} \sqrt{H\sum_{t,h}\left(f_{t,2}^h(z_{t}^h)-f_{t,-2}^h(z_{t}^h)\right)}\cdot\sqrt{H\cdot d_\alpha}\right)\\
        & \hspace{3em} +O\left(\Par{\log\frac{\calN\calN_b TH}{\delta}+T\epsilon}\cdot H^{1.5}\cdot d_\alpha\right).
    \end{align*}
    
    Now plugging in the bound on $\sum_{t,h}\left(f_{t,2}^h(z_{t}^h)-f_{t,-2}^h(z_{t}^h)\right)$ in~\Cref{lem:difference-f-bar-hp}, we have
    \begin{align*}
    & \sum_{t\in[T]}\sum_{h\in[H]}\min\left(1,\bsigma_{t}^h\cdot \Par{\bsigma_t^h}^{-1}D_{\calF^h}(z_{t}^h;z_{[t-1]}^h,\bsigma_{[t-1]}^h)\right)\\
   & \hspace{1em}  \le O\left(\sqrt{T+TH^2\delta}\cdot\sqrt{H\cdot d_\alpha}\right)+O\left(\sqrt{\log\frac{\calN\calN_b TH}{\delta}} \sqrt{H^3|\calToo|+TH^3\epsilon+H^2\sum_{t\in[T]}u_t}\cdot\sqrt{H\cdot d_\alpha}\right)\\
  & \hspace{2em} +O\left(\Par{\log\frac{\calN\calN_b TH}{\delta}+T\epsilon}\cdot H^{1.5}\cdot d_\alpha\right)\\
  & \hspace{2em} + O\left(\sqrt{\log\frac{\calN\calN_b TH}{\delta}} \sqrt{H^2\cdot [I]+H^2\cdot [II]+H^2\sqrt{HT\log\frac{H}{\delta}}}\cdot\sqrt{H\cdot d_\alpha}\right).
    \end{align*}
    
    To bound the last term in the RHS of inequality above, we plug in crude bounds in~\Cref{lem:bound-E-III} and~\Cref{lem:bound-E-III-o}, note the crude bound of $I$ dominates that of $II$ and absorbing the low-order terms we have
     \begin{align*}
   &H^2\cdot [I]+H^2\cdot [II]+H^2\sqrt{HT\log\frac{H}{\delta}}\\
   & \hspace{1em} \le  O\Par{\sqrt{\log\frac{\calN TH}{\delta}+T^2H\epsilon}\cdot\sqrt{\log\frac{\calN\calN_bTH}{\delta}}\cdot H^3\sqrt{Td_\alpha}+\sqrt{\log\frac{\calN TH}{\delta}+T^2H\epsilon}\cdot \Par{\log\frac{\calN\calN_b TH}{\alpha\delta}\cdot H^2d_\alpha+TH^2\epsc}}\\
   &\hspace{1em} \le O\Par{\frac{T}{\log\frac{\calN\calN_bTH}{\delta}}+\Par{\log\frac{\calN TH}{\delta}+T^2H\epsilon}\Par{\log\frac{\calN\calN_bTH}{\delta}}^2H^6d_\alpha +\sqrt{\log\frac{\calN TH}{\delta}+T^2H\epsilon}\cdot TH^2\epsc}.
    \end{align*}

Plugging this back, rearranging terms, absorbing lower terms and again use AM-GM inequality we have
    \begin{align*}
        & \sum_{t\in[T]}\sum_{h\in[H]}\min\left(1,D_{\calF^h}(z_t^h; z_{[t-1]}^h, \bsigma_{[t-1]}^h)\right)\\
        & \hspace{1em}\le  O\left(\sqrt{T+TH^2\delta}\cdot\sqrt{H\cdot d_\alpha}\right)+O\left(\sqrt{\log\frac{\calN\calN_b TH}{\delta}} \sqrt{H^3|\calToo|+TH^3\epsilon+H^2\sum_{t\in[T]}u_t}\cdot\sqrt{H\cdot d_\alpha}\right)\\
        & \hspace{3em}+O\left(\sqrt{ \Par{\log\frac{\calN TH}{\delta}+T^2H\epsilon}}\log^{1.5} \frac{\calN\calN_bTH}{\delta}\cdot H^{7/2}d_\alpha+TH\epsc\right).
    \end{align*}
\end{proof}

With these bounds we are ready to prove formally the regret bounds for~\Cref{alg:fitted-Q-simpler}.

\thmhp*

\begin{proof}
When event $\calE = \calE_{\le T}\cap\calE_{\xi}\cap\calE_{\xi_1}\cap\calE_{\xi_{-2}}\cap\calE_{\xi_2}\cap\calE_{\xid}\cap\calE_{\rV}$  happen (with probability $1-11\delta$), we recall the upper bound on regret as:
\begin{align*}
R_T 
    & \le O(1+HT\epsilon)+ 2\underbrace{\sum_{ t\in [T]}\sum_{h\in[H]} \min\left(1+L,b_{t,1}^h(z_{t}^h)\right)}_{I}+2\sum_{ t\in \calToo} \min\left(1+L,\sum_{h_t\le h\le H}b_{t,2}^h(z_{t}^h)\right)\\
    & \hspace{3em} +\left[\sum_{t\in[T],h\in[H]}\xi_t^h-\sum_{ t\in[T]}\sum_{h\in[H]}\xi_{t,1}+\sum_{ t\in\calToo}\left(\sum_{h_t\le h\le H}\xi_{t,1}^h-\sum_{h_t\le h\le H}\xi_{t,2}^h\right)\right]\\
    & \le O\left(1+HT\epsilon+I+III\right)+\left[\sum_{t\in[T],h\in[H]}\xi_t^h-\sum_{ t\in[T]}\sum_{h\in[H]}\xi_{t,1}\right]+\sum_{ t\in\calToo}\sum_{h_t\le h\le H}\left(\xi_{t,1}^h-\xi_{t,2}^h\right).
\end{align*}

Now plugging in guarantees of~\Cref{lem:regret-bound-I-hp} for bounding $I$,~\Cref{lem:bound-fine-grained-b2-hp} for bounding III,~\Cref{eq:freedman-xi} for bounding  $\sum_{t,h}\xi_t^h$,~\Cref{eq:freedman-xi-1} for bounding~$\sum_{t,h}\xi_{t,1}^h$, ~\Cref{lem:concentration-indicator} for bounding $\sum_{ t\in\calToo}\sum_{h_t\le h\le H}\left(\xi_{t,1}^h-\xi_{t,2}^h\right)$, we have 
\begin{align*}
    R_T & = O\left(TH\epsilon+\sqrt{\log\frac{\calN TH}{\delta}+T^2H\epsilon}\cdot\Par{\sqrt{T+TH^2\delta}\cdot\sqrt{Hd_\alpha}}\right.\\
     & \hspace{3em} +\sqrt{\log\frac{\calN TH}{\delta}+T^2H\epsilon}\sqrt{\log\frac{\calN\calN_b TH}{\delta}} \sqrt{H^3|\calToo|+TH^3\epsilon+H^2\sum_{t\in[T]}u_t}\cdot\sqrt{H\cdot d_\alpha}\\
    & \hspace{3em}\left.+ \Par{\log\frac{\calN TH}{\delta}+T^2H\epsilon}\cdot \log^{1.5} \frac{\calN\calN_bTH}{\delta}\cdot H^{7/2}\cdot d_\alpha+\sqrt{\log\frac{\calN\calN_bTH}{\delta}+T^2H\epsilon}\cdot TH\epsc\right).
\end{align*}

Now further plugging in the assumption of $\delta$, choice of $u_t$ and the bound of $\calToo$ under such choice as in~\Cref{lem:size-of-Too-hp}, we have
\begin{align*}
    R_T & =   O\Par{1+\sqrt{\log\frac{\calN TH}{\delta}+T^2H\epsilon}\cdot \sqrt{T}\cdot\sqrt{H d_\alpha}}\\
    & \hspace{3em} +O\left(\Par{\log\frac{\calN TH}{\delta}+T^2H\epsilon}\cdot\Par{\log^2\frac{\calN\calN_bTH}{\delta}H^{5}d_\alpha+T^2 \epsc^2}\right).
\end{align*}
Adjusting the constant $\delta\leftarrow \tfrac{\delta}{11}$ and absorbing the low-order terms of $\mathrm{poly}(\epsilon)$ conclude the final bound for regret.
\end{proof}

\end{document}